\documentclass{article}

\usepackage[margin=1in]{geometry}

\RequirePackage[
  style=numeric,
  backend=biber,
  giveninits=true
  ]{biblatex}

\usepackage{amssymb,amsmath,amsfonts,amsthm}
\newtheorem{theorem}{Theorem}
\newtheorem{corollary}[theorem]{Corollary}
\newtheorem{lemma}[theorem]{Lemma}
\newtheorem{observation}[theorem]{Observation}
\theoremstyle{definition}
\newtheorem{definition}{Definition}[section]

\usepackage{mathtools}

\usepackage[noend]{algpseudocode}
\usepackage{algorithm}
\usepackage{array}
\usepackage{tikz}

\newcommand{\open}{\mathit{open}}
\newcommand{\closed}{\mathit{generated}}

\newcommand{\htop}{\mathit{top}}
\newcommand{\hpop}{\mathit{pop}}
\newcommand{\hempty}{\mathit{empty}}

\newcommand{\safe}{\mathit{safe}}
\newcommand{\isGoal}{\mathit{isGoal}}
\newcommand{\true}{\mathit{true}}
\newcommand{\false}{\mathit{false}}

\newcommand{\tref}{$t_\mathit{ref}$}
\newcommand{\trefm}{t_\mathit{ref}}

\newcommand{\aSIPPab}{aSIPP$_{\alpha\beta}$}
\usetikzlibrary{arrows.meta, positioning}

\usepackage{color}

\usepackage{float}

\usepackage{subcaption}
\usetikzlibrary{positioning, shapes, calc}
\usepackage{MnSymbol}

\usepackage{authblk}   
\usepackage{orcidlink} 

\begin{document}

\title{Optimal Planning in a Dynamic World}

\author[1]{Devin Wild Thomas}
\author[2]{Solomon Eyal Shimony}
\author[1]{Wheeler~Ruml}
\author[3]{Erez~Karpas}
\author[2]{Shahaf~S.~Shperberg}
\author[4]{Andrew~Coles}
 
\affil[1]{University of New Hampshire, USA}
\affil[2]{Ben-Gurion University of the Negev, Israel}
\affil[3]{Technion -- Israel Institute of Technology, Israel}
\affil[4]{King's College London, UK}
\affil[ ]{\small\texttt{dwt@cs.unh.edu}, \texttt{shimony@cs.bgu.ac.il},
  \texttt{ruml@cs.unh.edu}, \texttt{karpase@technion.ac.il},
  \texttt{shperbsh@post.bgu.ac.il}, \texttt{andrew.coles@kcl.ac.uk}}



\maketitle
\begin{abstract}

{\bf Background:} 
    We address the problem of planning when the set of feasible states or actions changes over time.  For example, in the problem of path planning among moving obstacles (sometimes known as SIPP), the feasibility of being at a particular location can change as the obstacles move.  Or, the action of boarding a particular train is feasible only while it is stopped at the station.  This dynamism means that the optimal plan and its duration can change depending on when execution begins.  
In practice, execution start time is often unknown until planning has completed or another agent gives the go-ahead. However, most prior planning work either ignores dynamism or assumes a known start time.  This makes it straightforward to assess state and action feasibility but is impractical for some applications.
    
    {\bf Objectives:}
 In this paper, we relax the assumption of a known start time.  We define the setting of {\em any-start-time planning} and provide algorithms for it.

    {\bf Methods:}
   We present a data structure called a compound arrival time function (cATF) that compactly encodes the optimal plan as a function of start time.  We provide general-purpose planning algorithms, based on heuristic graph search, that assemble cATFs by propagating functions along edges instead of scalar costs.

    {\bf Results:}
   We prove that the size of a cATF is at most linear in the problem size.  An experimental evaluation of an implementation for the specific problem of SIPP shows that, on difficult problems, agents that rely on replanning often fail, while any-start-time algorithms using cATFs can quickly look up the optimal plan once the execution start time is known.
    
    {\bf Conclusions:}
    By enabling efficient representations and reasoning for time-dependent plans, this work provides a foundation for planning in dynamic worlds.
\end{abstract}

\section{Introduction}

Many applications of planning involve a dynamic world.  
For example, buses and trains move from stop to stop and shopkeepers open and close their stores without any action on our part.
Buses and trains can only be boarded after they arrive and before they depart and groceries can only be purchased when the store is open, so in general, the optimal plan can depend on when execution will begin.
Traditionally, planning algorithms have either assumed a known execution start time or assumed that the world is static except for the agent, so its plans will forever remain valid.
However, this assumption of a static world or a known start time is frequently violated in real-world settings.
Examples include when the start of execution must wait for planning to complete,
for some external event to occur, or
for the agent to receive permission to act.
In such situations, it can be preferable to have a valid plan at the ready, rather than needing to wait until the desired start time is known to even begin planning.

In this paper we describe how to plan for all start-times. 
The resulting any-start-time plan has many uses. It enables reasoning about the effects of delays, both during planning and online during execution.
In order to apply this approach, we simply need to be willing to represent the problem using a state space where states and actions are either safe or unsafe in continuous time.  
In return for using this state-space representation, our planner efficiently finds an any-start-time plan --- a set of optimal plans for every execution start-time.  


The dynamic world can be viewed as a source of uncertainty for the planner, one approach to handling uncertainty in planning is to plan for all eventualities. 
This approach, called universal planning, has been considered ``an (almost) universally bad idea'' \cite{ginsberg1989universal} because universal plans generally grow exponentially with problem size.
The time-dependent shortest path problem (TDSP) \cite{dreyfus1969} is one example of universal planning; it handles uncertainty in the form of an unknown start time.
In the TDSP, the planner's objective is to find shortest duration paths for all possible departure times of the agent in a graph with time-dependent edge costs.
When the TDSP edges are first-in first-out (i.e., an agent departing later can not arrive earlier), the number of optimal plans in the any-start-time policy solving a TDSP is super-polynomial ($n^{\Theta(\log(n))}$) in the size of the graph \cite{foschini2014}.  But hang with us for a minute.

One popular planning setting with time-dependent edge costs is so-called safe interval path planning (SIPP) \cite{phillips2011sipp}, where being in a state or taking an action is either safe or unsafe depending on the time. 
(Despite the name, nothing in SIPP is specific to path planning or to safety and, as we explain further below, it applies very generally to planning with dynamic action applicability.)
While previous SIPP algorithms plan for only a single start time, in this paper we introduce \textit{any-start-time planning}, which is a generalization of the SIPP problem to plan for all start times, rather than assuming a single known start time.
Any-start-time planning is a subclass of the TDSP that has better solution complexity.
While the general TDSP can have a solution that is super-polynomial, we will see below that any-start-time planning is a subset of universal planning that is actually a good idea, because the resulting universal plan grows only linearly with problem size. 
As time also progresses linearly, this means it can be applied to defer the commitment to an execution start time to be online, rather than an offline prerequisite to planning.
Figure~\ref{fig:sipptdsptopo} illustrates the relationship between SIPP and the TDSP, with our any-start-time planning being the linear complexity overlap.
In this paper, we present general representations and algorithms for handling planning domains in which the applicability of actions can change over time and we show via a specific example implementation for grid path planning that they can be more effective in dynamic worlds than replanning.

\begin{figure}[t]
    \centering
    \begin{tikzpicture}[scale = 0.8]
      \clip (-4.5,-3.1) rectangle (7.5,3.1);

      \draw[fill=red, opacity=0.3, very thick] (0,0) ellipse [x radius=4, y radius=2];
      \draw[fill=yellow, opacity=0.5, very thick] (3,0) ellipse [x radius=4, y radius=2];

      \draw[fill=orange, opacity=0.4, very thick] (-5,0) ellipse [x radius=9, y radius=3];
      \draw[fill=brown, opacity=0.3, very thick] (8,0) ellipse [x radius=9, y radius=3];

      \draw[very thick] (0,0) ellipse [x radius=4, y radius=2];
      \draw[very thick] (3,0) ellipse [x radius=4, y radius=2];

      \draw[very thick] (-5,0) ellipse [x radius=9, y radius=3];
      \draw[very thick] (8,0) ellipse [x radius=9, y radius=3];

      \node at (-2.5,0) {SIPP};
      \node[align = center] at (5.5,0) {TDSP with\\FIFO edges};
        \node[align = left] at (-3.25,2.25) {Linear\\Complexity};
        \node[align = right] at (6.75,2.25) {TDSP};
      \node[align=center] at (1.5,0) {Any-start-time\\Planning};

    \end{tikzpicture}
    \caption{Typology of SIPP and TDSP Planning Problems.}
    \label{fig:sipptdsptopo}
\end{figure}

\subsection{Running Example: Food Delivery Robot}

Consider a food delivery robot planning its delivery route (Figure~\ref{fig:delivery}, left panel). 
The robot has two roads that it can choose between, taking 10 and 20 minutes respectively, to reach the delivery target.
However, the faster of the two roads is blocked from 3:20 to 3:40 by a freight train.
The robot must wait for a human chef to finish preparing the meal before it can be delivered. 
If we know ahead of time when the food will be ready, i.e., the starting time, we can find the optimal plan assuming departure at that time.
This can be done by searching our road network for the earliest possible arrival time at each location, until we reach the goal. 
But what if we lack clairvoyance and do not know the departure time in advance?  
If we want to take advantage of the idle time before the cooking is complete, we can consider a range of possible departure times.
\begin{figure}[t]
  \centering
\begin{subfigure}{0.49\textwidth}
\begin{tikzpicture}[scale = 1]
    \draw[thick,blue] (0,0) -- (0,5) node[anchor=south west] {Long way ($20 min$)};
    \draw[thick,blue,->] (0,5) node[anchor = north west] {(0:00, 24:00)} -- (4.9,5);
    \draw[thick,red,->] (0,0) -- (4.9,4.9);
    \draw[red] (2.5,2.5) -- (2.5, 2.5) node[anchor=south, rotate=45] {Short way w/ crossing ($10 min$)};
    \draw[red] (2.5,2.5) -- (2.5, 2.5) node[anchor=north, rotate=45] {[0:00, 3:20), [3:40, 24:00)};
    \filldraw[black] (0,0) circle (2pt) node [anchor=north west] {Restaurant};
    \filldraw[black] (5,5) circle (2pt) node [anchor=south] {Apartment};
\end{tikzpicture}  
\caption{Road Graph}
\label{fig:deliveryrg}
\end{subfigure}
\begin{subfigure}{0.49\textwidth}
  \begin{tikzpicture}[scale = 0.65]
    \draw[thick,->] (0,0) -- (8.5,0) node[anchor=north west] {};
    \draw[thick,->] (0,0) -- (0,7.5) node[anchor=south west] {};

    \draw [color =red] (0.2, 0.2) -- (2.2,2.2);
    \filldraw[black] (1.2,1.2) circle (0pt) node [anchor=south] {A};

    \draw [color =blue] (2.2,4.2) -- (4.2,6.2);
    \filldraw[black] (3.2,5.2) circle (0pt) node [anchor=south] {B};

    \draw [color =red] (4.2,6.2) -- (6.2,6.2);
    \filldraw[black] (5.2,6.2) circle (0pt) node [anchor=south] {C};

    \draw [->,color=red] (6.2,6.2) -- (7.5, 7.5);
    \filldraw[black] (6.6,6.6) circle (0pt) node [anchor=south] {D};

    \filldraw[black] (2.2,2.2) circle (2pt) node [anchor=west] {Road Blocked};

    \filldraw[black] (0.2,0) circle (0pt) node [anchor=north] {3:10};
    \filldraw[black] (2.2,0) circle (0pt) node [anchor=north] {3:20};
    \filldraw[black] (4.2,0) circle (0pt) node [anchor=north] {3:30};
    \filldraw[black] (6.2,0) circle (0pt) node [anchor=north] {3:40};
    \filldraw[black] (8.2,0) circle (0pt) node [anchor=north] {3:50};

    \filldraw[black] (0,0.2) circle (0pt) node [anchor=east] {3:20};
    \filldraw[black] (0,2.2) circle (0pt) node [anchor=east] {3:30};
    \filldraw[black] (0,4.2) circle (0pt) node [anchor=east] {3:40};
    \filldraw[black] (0,6.2) circle (0pt) node [anchor=east] {3:50};

    \filldraw[black] (4.25,-.7) circle (0pt) node [anchor=north] {Departure Time};
    \filldraw[black] (-1.1,3.75) circle (0pt) node [anchor=south, rotate=90] {Arrival Time};
    \end{tikzpicture}
    \caption{Arrival Time Function}
    \label{fig:deliveryatf}
    \end{subfigure}
    \caption{Food delivery robot example.  
}
  \label{fig:delivery}
\end{figure}

The right panel of Figure~\ref{fig:delivery} shows the optimal arrival time of the delivery robot, plotted as a function of departure time.
There is a family of trajectories, labeled as segment A in Figure~\ref{fig:delivery}, that represents the same plan along the faster road, just with different departure times. 
If the robot departs at 3:10, it can take the fast road to arrive at 3:20.
Similarly, if it departs at 3:15 it will arrive at 3:25.
However, the train blocking the faster road causes a breakpoint in the arrival time function at 3:20, where the segment labeled B corresponds to the family of plans in which the delivery robot is better off taking the slower road.
At 3:30 there is another breakpoint in the arrival time function, where segment C has a constant arrival time because it is faster for the delivery robot to wait for the train to pass and then take the faster road.  
At 3:40 the train crossing opens and the arrival time function of segment D once again increases proportionally to departure time.
This structure --- segments with slope of $0$ or $1$, corresponding to waiting or acting --- arises due to the binary nature of safety in our problem.

To plan for any start time efficiently, we represent the planning problem as having safe intervals on states and actions when it is possible for the agent to be safely in the state or begin executing the action.
We search using $g(t)$ and $f(t)$ values that are functions of time, rather than scalar values as is typical for A* and related searches.
The functions, track the cumulative effects of the deadlines and constraints imparted by each safe interval the plan encounters.
The result of planning is an any-start-time plan, which is itself a function of time.
Our approach to solving any-start-time planning is (with some nuance related to tie-breaking) a form of cost-algebraic A* \cite{edelkamp2005cost} where costs are functions of time that accumulate through function composition rather than scalar summation.

We begin by reviewing the foundational prior work for any-start-time planning: SIPP and the TDSP.  
We then present Augmented SIPP, a novel heuristic search algorithm that uses functions of time instead of scalars for the $g$ values of the SIPP search.
We then formally define the any-start-time planning problem and present RePEAT, an algorithm that solves the any-start-time planning problem by repeatedly running Augmented SIPP to build a set of optimal plans covering all departure times up to a monotonically increasing time horizon.
We then prove the optimality and complexity of these algorithms, adapting prior results for cost-algebraic heuristic search and the TDSP.
Finally, we evaluate the empirical performance of our methods and discuss some possible applications for any-start-time planning.

This paper greatly extends the preliminary work of \textcite{anystarttimesipp} and also incorporates elements from subsequent work by \textcite{rtsipp}, \textcite{thomas2025real}, and \textcite{Thomas_LCCRTS_2026}. 
Any-start-time planning has already been applied to multi-agent planning: to instantly replan (by replanning in advance) \cite{Hanou_Thomas_trains}, and to quantify and exploit flexibility in the agents' plans \cite{Hanou_flexsipp_2026}.

\section{Background}

We begin with SIPP, which concerns offline planning with moving obstacles and a known start time.  
We then describe the TDSP problem, which considers finding shortest paths on a graph with time-varying edge costs. 
While prior work has described situations where the TDSP can be solved efficiently, to our knowledge we are the first to observe that the subset of the TDSP with linearly-sized solutions corresponds to the SIPP state space that is popular in the planning community.
We call this intersection any-start-time planning.

\subsection{Safe Interval Path Planning}

SIPP \cite{phillips2011sipp} is a popular way to formalize the problem of navigating in a dynamic world (often moving obstacles).
In SIPP, the agent and obstacles move in discrete space and continuous time and the states and actions are either safe or unsafe at every point in time.
All future movements of dynamic obstacles in the environment are encoded by their impact on the time-dependent safety of states and actions.
We refer to this original continuous, infinite state space as the naive state space.
A naive state $s_{naive} = \langle x, t\rangle$, consists of a configuration $x$ and point in time $t$.
Likewise, a naive action $a_{naive} = \langle x, x', t\rangle$, consists of a source configuration $x$, destination configuration $x'$, and a departure time $t$, when the action begins.
Exploring this infinite naive state space is intractable, so \textcite{phillips2011sipp} discretize the naive state space into the SIPP state space by grouping neighboring safe states (and actions) together into safe intervals.    
In the SIPP state space, the time points in the naive states (and actions) are replaced by a continuous interval of times when the configuration (or action) is safe to occupy (or begin).

We depart from prior work in two ways. 
First, SIPP has generally been described as having safe intervals solely for states.
However, as is done in multi-agent pathfinding (MAPF) with continuous time \cite{andreychuk2022multi}, we also use safe intervals on actions.
The example in Figure~\ref{fig:boulder} shows the benefit of safe intervals on actions, as the safety of the agent depends on the action the agent takes: if it moves to the left, it will be safer for longer than if it moves to the right or stays in place. 
Generally, SIPP has been applied to problems that require some form of collision checking, which implicitly performs the same role as the action safe intervals.  
Although we do not explicitly consider how to generate safe intervals in this paper, the version of SIPP with safe intervals on both states and actions may be easier in many cases to generate safe intervals for than the version of SIPP that only has intervals on states.

Second, SIPP has generally been applied in the context of grid pathfinding.
However, this SIPP representation is more broadly applicable than the terms safe and path planning would suggest.
While a safe interval can often literally mean system safety (i.e., collision-free), it can be interpreted more broadly as representing any sort of dynamic applicability of, expiration of, or deadlines on, states and actions.
SIPP applies to any setting that can be described by binary safety of configurations and/or actions in continuous time. 
SIPP is a state-of-the-art approach in settings such as: 
MAPF\cite{Andreychuk_improveCCBS_2021,Li_mapfLNS2_2022, stern2019multi},
railway planning \cite{Hanou_Thomas_trains,li2021scalable},
and planning for robotic manipulators \cite{Kerimov_manipsipp_2025,li2019safe}.
Our novel enhancement of SIPP, Augmented SIPP (see Section~\ref{sec:augsipp}) is a drop in replacement for SIPP which returns extra information with no overhead (see Section~\ref{sec:asippoverhead}).

\subsubsection{Problem Statement: Safe Interval Path Planning}
In SIPP, we use a compact representation of the state space, where states and actions both have safe intervals.
A safe interval $i$ is denoted $i = [t_b, t_e) \in I$, where 
$t_b\in \mathbb{R}^{\geq 0}$ is the beginning of the safe interval, 
$t_e\in  \mathbb{R}^{>0} \cup \{\infty\}$ is the end, 
and $I$ is the subset of $\mathbb{R}^2$ corresponding to the space of valid safe intervals.\footnote{While it is not necessary to constrain SIPP to only right-open intervals, we have done so for clarity as it yields a consistent set of edge cases when intervals interact.}
For clarity, when referring to a state $s$ with interval $i$ we will use the shorthand $s_b$ to refer to the beginning of $s$'s safe interval ($t_b(i(s))$); likewise, for action $a$ the shorthand $a_e$ refers to the end of its safe interval.  
In this paper we assume that all safe intervals have the following properties, they are:

right-open and have positive duration,
\begin{equation}\label{eqn:sipprightopen}
    i = [t_b, t_e) = \{t \in  \mathbb{R}^{>0} | t_b \leq t < t_e\};
\end{equation}

continuously safe
\begin{equation}
           \forall t \in i: \safe(t);
\end{equation}

and maximal, meaning that the state/action is unsafe immediately before the interval or there is no before \begin{equation}
    t_b = 0 \lor \exists \delta \in \mathbb{R}^{>0}: \forall \epsilon \in [\delta, 0): \lnot \safe(t_b - \epsilon)
\end{equation}

and immediately after the interval
\begin{equation}
    \lnot \safe(t_e).
\end{equation}

A SIPP problem is a tuple $\langle S, \sigma, \delta, x_o, t_0, \isGoal \rangle$ where:
\begin{description}
    \item[$S$] is the state space,
    \item[$\sigma$] is the mapping of states to their set of successor edges,
    \item[$\delta : S \times S \rightarrow \mathbb{R}^{>0}$] is the edge duration function,
    \item[$x_o$] is the origin configuration,
    \item[$t_0$] is the starting time, and 
    \item[$\isGoal: S \rightarrow \{\true, \false\}$] is the goal predicate.
\end{description}
A state $s = \langle x, i \rangle \in S$ has configuration $x$ and safe interval $i$.
Each state represents an interval of time ($i$) that the configuration ($x$) is safe.
A configuration represents the atemporal component of the state, such as the location of the agent (e.g., the restaurant or apartment in our example in Figure~\ref{fig:delivery}, the configuration of a robotic arm, or a set of true fluents in PDDL planning).
Many states may be associated with the same configuration, but they will have different non-overlapping safe intervals.
For example, if the restaurant our food delivery robot picks up from closes each night and the robot is not allowed to stay at the restaurant while it is closed, then each day the restaurant is open would be represented its own state with a safe interval from the start to the end of business.\footnote{While a state representation that allows a single state to incorporate multiple safe intervals corresponding to the same configuration is certainly possible,
we find it clearer to allow only one interval per state.}

An edge $e = \langle u, v, i\rangle \in \sigma(u)$ has a source state $u$, a destination state $v$, and an interval $i$ during which it is safe to depart from $u$ to $v$.
We use an action $\langle u, v, t\rangle$ to indicate a transition from state $u$ to state $v$ starting at time $t$.
The cost of an action is its non-negative duration $\delta(u,v)$: an agent departing $u$ at time $t$ will arrive at $v$ at $t + \delta(u,v)$.
Wait actions $\langle u, u, t \rangle$ may have any non-negative duration, subject to the constraints of $u$'s safe interval.
All other actions have a time-independent constant duration.

The existence of an action implies that there exists a time $t$ within the action's safe interval such that the source is safe at $t$, the edge traversed by the action is safe at $t$, and the destination is safe at $t + \delta(u,v)$.
A plan $p$ is a sequence of SIPP edges, $p_1,\ldots,p_n$, where the destination of $p_i$ is the source of $p_{i+1}$.
A plan is \textbf{valid} at time $t$ if and only if an agent departing on or after time $t$ can traverse the edges $p_{1 \ldots n}$ safely, specifically no edge $p_i$ will make the agent arrive too late to be safe at the destination of $p_i$ and then safely take the next edge $p_{i+1}$.
The objective in SIPP is to find a shortest valid plan from the start configuration $x_o$ at $t_0$ to a state $s_g$ where $\mathit{isGoal}(s_g)$. 
SIPP is an offline problem: we assume that there is ample time to complete planning before execution begins at time $t_0$.
Furthermore, SIPP models an omniscient agent with complete knowledge of its environment, including state and edge safety (e.g., moving obstacles) now and in the future.  We retain these assumptions throughout this paper.

\subsubsection{SIPP Example: 1D Moving Obstacle}
\begin{figure}[t]
  \centering
  \begin{tikzpicture}[scale = 1]
    \filldraw[black] (0,0) circle (2pt) node [anchor=north] {$0$};
    \filldraw[black] (1,0) circle (2pt) node [anchor=north] {$1$};
    \filldraw[black] (2,0) circle (2pt) node [anchor=north] {$2$};
    \filldraw[black] (3,0) circle (2pt) node [anchor=north] {$3$};
    \filldraw[black] (2.8,0) circle (0pt) node [anchor=south] {$\leftarrow$O};
    \filldraw[black] (1,0) circle (0pt) node [anchor=south] {A};
    \draw[thick,black] (0,0) -- (3,0);
  \end{tikzpicture}    
  \caption{Point obstacle $O$ moves with velocity $-1/s$, approaching point agent $A$.}
  \label{fig:boulder}
\end{figure}

A simple example of a SIPP problem instance is shown in Figure~\ref{fig:boulder}.
An obstacle $O$ approaches from the right of the agent $A$, moving to the left with unit speed.
$A$ is capable of moving at half the speed of $O$.  
The safe intervals of states represent when it is safe to be in the corresponding location, while the edge intervals represent when it is safe to initiate movement actions between them. 
In this example, the three SIPP states of interest are: $s_0 = \langle 0, [0, 3) \rangle$, $s_1 = \langle 1, [0, 2) \rangle$, and $s_2 = \langle 2, [0, 1) \rangle$, where the ends of the safe intervals correspond to the arrival times of the obstacle. 
The potential edges corresponding to the agent's routes to leave $s_1$ are: $e_{1, 0} = \langle s_1, s_0, [0, 1) \rangle$ (the obstacle arrives at $t=1$, marking the end of the safe interval), and $e_{1, 2} = \langle s_1, s_2, [0, 0) \rangle$ (leaving on or after $t=0$ will result in the agent being clobbered by the obstacle before finishing the move). 
Because $e_{1, 2}$ is never safe, only $e_{0, 1}$ is retained as a valid edge in the SIPP graph.

\subsubsection{Solving SIPP}

The fundamental observation used when solving SIPP is the temporal dominance relation (adapted from \textcite{phillips2011sipp}): 
\begin{theorem}\label{thm:sippdom}
    In a SIPP problem, arriving at a SIPP state as early as possible is optimal.
\end{theorem}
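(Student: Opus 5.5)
We prove Theorem~\ref{thm:sippdom} by a simulation (exchange) argument. Fix a SIPP state $s = \langle x, i\rangle$ and two arrival times $t_1 \le t_2$ in $i$. We show that any valid plan suffix executed from $s$ after arriving at $t_2$ can also be executed after arriving at $t_1$, and that it reaches every subsequent state, in particular the goal state, no later. So among all ways of reaching $s$, the earliest arrival leaves every continuation available and never does worse. Hence a shortest plan only needs to keep the earliest arrival at each SIPP state.

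The core step uses waiting within the safe interval. Suppose the agent arrives at $s$ at $t_1$, while the plan being compared arrives at $t_2$ and departs along edge $e = \langle s, v, i_e\rangle$ at some time $t' \ge t_2$ with $t' \in i_e$ and $t' < s_e$. The early agent takes the wait action $\langle s, s, t_1\rangle$ with duration $t' - t_1 \ge 0$. This wait is safe because $[t_1, t'] \subseteq [t_2 - (t_2-t_1), t'] \subseteq i$: the interval $i$ is convex, contains $t_1$, and contains every time up to $t' < s_e$. The continuous-safety property of intervals guarantees that every instant of the wait is safe. The early agent then departs on $e$ at exactly $t'$ and from that point follows the compared trajectory identically. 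By induction along the plan's edges $p_1,\ldots,p_n$, it is therefore valid and arrives at the goal at the same time. The early agent may also leave earlier if that is safe, so its optimal arrival time is at most the compared one. This gives monotonicity (FIFO): the earliest goal arrival is a nondecreasing function of the arrival time at $s$.

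To conclude, take any optimal plan and any state $s$ it visits. Replace its prefix up to $s$ by a prefix that reaches $s$ at the earliest possible time, waiting as above before continuing. The result is valid and has goal arrival no later, so it is also optimal. Applying this at every state gives an optimal plan that arrives at each of its states as early as possible, which is the claim. The same argument justifies dominance pruning: a search node for $s$ with later arrival can be discarded.

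The step I expect to need the most care is the safety of the wait. We have to make sure waits are genuinely permitted at every instant of $[t_1, t')$. We also have to check that the chosen representation, with one interval per state and maximal right-open intervals, means both arrival times lie in the same interval, so that no unsafe gap separates them. If a gap could separate them, they would be different SIPP states, and the theorem would not apply across them. That is precisely why dominance is stated per SIPP state rather than per configuration.
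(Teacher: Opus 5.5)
Your proof is correct and is essentially the paper's argument: the paper also uses the fact that each SIPP state has a single maximal safe interval, so a plan arriving earlier at $s$ can wait there until the later (optimal) plan's time and then follow the rest of that plan. Waiting until the departure time $t'$ rather than the arrival time, and your added remarks on extending this to every state of a plan and on dominance pruning, go beyond the paper's one-line proof but use the same wait-and-follow exchange idea.
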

\begin{proof}
    Recall that each SIPP state $s \in S$ has a single maximal interval $i(s)$.
    Suppose that an optimal plan $p$ arrives at $s$ at time $t\in i(s)$. Consider another plan arriving at an earlier time $t'\in i: t' < t$. Then a plan $p'$ arriving at time $t'$ can just wait until $t$ and still be optimal by following the remainder of $p$.
\end{proof}
The SIPP algorithm of~\textcite{phillips2011sipp} exploits this by tracking the earliest arrival time at each state as the $g$-value of the nodes of an A$^*$ search.
This efficiently prunes the search to the compact SIPP state space rather than the infinite $\langle x, t\rangle$ naive state space of these problems.
A SIPP search node is a tuple $\langle s, g, h, p \rangle$ where: 
\begin{description}
    \item[$s$] is a state $\langle x , i \rangle$,
    \item[$g$] is the time the agent would arrive in $s$ if it followed the plan obtained by tracing back parent pointers,
    \item[$h$] is the scalar heuristic time-to-go from s, and
    \item[$p$] is a parent pointer.
\end{description}

\begin{algorithm}[t]
\caption{Safe Interval Path Planning (adapted from \textcite{phillips2011sipp})}
\label{alg:sipp}
\begin{algorithmic}[1]
\Function{SIPP}{$\sigma, \delta, x_o, t_0, \isGoal$}
\State $s_o \gets \langle x_o, \langle t_0, \infty \rangle\rangle$
\State $\open \gets \{\langle s_o, t_0, h(s_o), \varnothing \rangle\}$ \label{algln:sippopen}
\State $\closed \gets \{s_o \mapsto \htop(\open)\}$  \label{algln:sippclosed}
\While{$\lnot \hempty(\open)$} \label{algln:sipploop}
    \State $n \gets \hpop(\open)$ \Comment{$\min f(n) = g(n) + h(n)$} \label{algln:sippexpand}
    \If{$\isGoal(n)$} 
        \State \Return $n$ \label{algln:sippsuccess}
    \EndIf
    \For{$e \in \sigma(n)$} \label{algln:sippsuccessorgeneration}
        \State $s' = v(e)$
        \State $g \leftarrow \max(g(n) + \delta(e), e_b  + \delta(e), s'_b )$\label{algln:sippg}
        \If{ $(s' \notin \closed \lor g < g(\closed[s'])) \land (g < s'_e)  \land (g - \delta(e) < e_e) \land (g - \delta(e) \geq n_e)$} \label{algln:sippprune}
            \If{$g < g[\closed[s']]$}
                \State $\open \gets \open \setminus \closed[s']$ \label{algln:sipppruneremove}
            \EndIf
            \State $n' \leftarrow \langle s', g, h(s'), n \rangle$  
            \State $\closed[s'] \leftarrow n'$  \label{algln:sippaddclosed}
            \State $\open \leftarrow \open \cup \{n'\}$ \label{algln:sippaddopen}
        \EndIf
    \EndFor
\EndWhile
\State \Return Failure \label{algln:sippfail}
\EndFunction
\end{algorithmic}
\end{algorithm}

The SIPP algorithm is fundamentally A* \cite{hart:fbh} using the SIPP state space, as shown in Algorithm~\ref{alg:sipp}. 
The node priority function, $f(n) = g(n) + h(n)$, is an estimate of the total plan duration to goal.
We initialize open with the starting state (Line~\ref{algln:sippopen}).  
The closed list, called `generated' is a mapping of states to nodes generated by the search. 
It is initialized with the start state and the corresponding search node (Line~\ref{algln:sippclosed}).

We repeatedly expand the lowest $f(n) = g(n) + h(n)$ node on open (Line~\ref{algln:sipploop}) until we have expanded a goal (Line~\ref{algln:sippsuccess}) or open is exhausted, and the problem has been proved unsolvable (Line~\ref{algln:sippfail}).
When a node $n$ is selected for expansion, we generate its successors (Line~\ref{algln:sippsuccessorgeneration}), with an earliest arrival time ($g$) that is either constrained by $g(n)$ and the cost of the edge, by the safe interval of the edge, or by the safe interval of the destination state (Line~\ref{algln:sippg}).
The exact nature of the successor generation $\sigma$ is domain dependent, but we only consider a successor valid if there is a non-negative wait and positive duration move.  
This is why the successor generation is on a search node, not a state because the available successors depend on the time $(g(n))$.
We then prune based on Theorem~\ref{thm:sippdom}, keeping the earlier arriving node (Line~\ref{algln:sippprune},\ref{algln:sipppruneremove}).
If the resulting plan is safe (Line~\ref{algln:sippprune}) then the successor is added to the closed list (Line~\ref{algln:sippaddclosed}) and open (Line~\ref{algln:sippaddopen}).

\subsection{Time-Dependent Shortest Paths}

We make the observation that SIPP problems are a special case of time-dependent shortest path (TDSP) problems on implicit graphs, where there is a single known execution time and a restricted set of edge costs.  
TDSPs model networks that have time-dependent edge costs.
They are frequently applied to roads, communication, and other networks where congestion can vary with the time of day.
\textcite{COOKE1966} introduced a discrete-time version of the TDSP, along with a modified Bellman-Ford algorithm to solve it.  
\textcite{dreyfus1969} observed that Dijkstra's algorithm \cite{dijkstra1959note} could be trivially applied.
Furthermore, \citeauthor{dreyfus1969} observed that it was unnecessary to discretize time, instead labeling each node with the upper bound on its earliest arrival-time.
This is identical to the state-dominance relation used by the SIPP algorithm \cite{phillips2011sipp} and the one-to-many algorithm described by \citeauthor{dreyfus1969} adapts Dijkstra's algorithm in the same way that SIPP adapts A*.
We adapt the notation of \textcite{foschini2014}, which is based on the arrival time functions used by \textcite{ordarom1990}.

An instance of the TDSP consists of a tuple $\langle V, E, d, s_o\rangle$, where:
\begin{description}
    \item[$V$] is the set of vertices in the network,
    \item[$E$] is the set of edges in the network, 
    \item[$d$] is the delay function, where $d[e](t)$ is the time-dependent duration of edge $e$, and
    \item[$s_o$] is the origin vertex.
\end{description} 
The objective is to find the shortest paths from $s_o$ to all other vertices in $V$, for all time $t \in \mathbb{R}$.
The delay $d$ is a functional that returns the edge specific delay function $d[e]$. 
We denote the evaluation of this and other functions that return functions with square braces, while functions that return numbers evaluate with parentheses.  
For example, $d[e]$ is the entire delay function for $e$ while $d[e](t)$ is the specific edge duration value when departing at time $t$.
While the TDSP permits arbitrary delay functions,  often $d[e]$ is limited to be first-in-first-out (FIFO), which requires that the delay never have a slope less than $-1$, as that would result in a later departure arriving before an earlier departure.
It is also common to consider the subset of problems with piecewise linear delay functions.

Our focus is on the TDSP with time windows on nodes and edges \cite{halpern1974,halpern1977}.
Time windows are similar to the safe intervals defined for SIPP, however, their interpretation has differed when it comes to windows of states.  
Most recent work is focused on the case where the node windows are `parking bans': such a node can be temporarily closed for parking (waiting), but
may still be traversed.
\textcite{halpern1977} describes a manipulation to their graph that reduces the TDSP with safe intervals on nodes (which \citeauthor{halpern1977} called ``no passing-through periods'') to a TDSP with time windows by replacing each node with a pair of nodes connected by a single edge corresponding to the safe interval.
We note that the combination of the nodes with no-passing-through periods described by \citeauthor{halpern1977} and the edge functions of \textcite{halpern1974} that alternate between a constant duration and infinite duration directly corresponds to SIPP.

\textcite{sancho1992} provides a dynamic programming algorithm for the problem described by \citeauthor{halpern1974}.
\textcite{ordarom1990} formalize the problem of \textcite{halpern1977} and provides algorithms for a variety of extensions and modifications to the problem.
\textcite{foschini2014} study the complexity of the problems presented by \textcite{ordarom1990}.  
They prove that when edge delays are polynomial-sized piecewise-linear functions, the solution complexity is super-polynomial ($n^{\Theta(\log(n))}$ where $n$ is the number of vertices in the network).
However, they also show that the complexity of the solution is linear in the size of the network for a restricted family of edge slopes (we show this includes the SIPP case of $\{0, 1, \infty\}$, see Section~\ref{sec:complexity}).
\textcite{ElSherbeny2014} give an A* algorithm for the TDSP with time windows.
Modern work on TDSP has focused on rapidly answering queries in logistics domains, for instance, trucking with driving bans \cite{vandertuin2018,zhang2020} and railway scheduling \cite{haehn2020}.
To our knowledge, while settings equivalent to any-start-time planning have been suggested in remarks on the TDSP by \textcite{halpern1977} and \textcite{foschini2014}, such settings have not been specifically explored in prior work.

Like \textcite{ordarom1990} and \textcite{foschini2014}, we will reason in terms of earliest arrival time functions (ATFs) rather than delays. 
An earliest arrival time function (ATF) is a function $\mathbb{R} \rightarrow \mathbb{R} \cup \{\infty\}$ that assigns for each departure time the earliest arrival time.
We follow the notation of \citeauthor{foschini2014} and denote three types of functionals that map to ATFs:
\begin{description}
    \item[Edge ATFs] ($A[e]: \sigma(S) \rightarrow (\mathbb{R} \rightarrow \mathbb{R} \cup \{\infty\})$) map a single edge to the ATF for traversing the edge,
    \item[Plan ATFs] ($A[p]: S^n \rightarrow (\mathbb{R} \rightarrow \mathbb{R} \cup \{\infty\})$) map a single plan to the ATF for traversing the entire plan, and
    \item[Compound ATFs] ($A[s, d]: S \times S \rightarrow (\mathbb{R} \rightarrow \mathbb{R} \cup \{\infty\})$) map two states ($s$ and $d$) to the ATF of the set of optimal plans from $s$ to $d$.
\end{description}
In the following sections, we will describe each type of ATF in greater detail and explain how they can be used for any-start-time planning.

\section{Heuristic Search with Arrival Time Functions}\label{sec:augsipp}

With these ingredients in hand, in this section we describe Augmented SIPP, a generalization of the SIPP algorithm that returns a plan ATF along with the plan normally returned by SIPP.
Augmented SIPP is the inner loop of RePEAT, our algorithm for any-start-time planning.
However, it is also useful in its own right, as it provides more information than a standard SIPP search, such as when the plan will no longer be safe, and it illustrates how heuristic search can work with $g$ values that are functions.

Because it is built on the SIPP dominance relation (Theorem~\ref{thm:sippdom}), a SIPP search results in a plan where the agent departs as early as possible from each intermediate state.  
For example, if we imagine our delivery robot, it would be tailgating other cars and crossing the train tracks the moment the train clears.  
While this is one earliest arriving plan, it is not necessarily uniquely optimal.  
If the agent will have to wait for a traffic light after the train crossing anyway, there is no difference in arrival time between waiting before crossing the tracks or after.
In general, the sequence of actions taken by a SIPP plan is valid, or possibly even optimal, for more than just the single departure time that was planned for.
Augmented SIPP captures the whole family of related plans that have different departure times but pass through the same set of states and vary only in when the agent waits and for how long.
We do this by augmenting the SIPP search to use a plan ATF $g$-value, rather than the usual scalar $g$.

We begin by showing how to create an Augmented SIPP graph for a SIPP problem, this operation is the compilation from a SIPP problem (with safe intervals) to a TDSP problem (with ATFs).  
The Augmented SIPP graph is identical to the SIPP problem except that, for each edge, we compute an associated edge ATF based on the three safe intervals of 1) the edge, 2) its source state, and 3) its destination state.
If desired, the edge ATFs can just as easily be computed lazily as required by an implicit search. 
We then show how to  accumulate costs by calculating plan ATFs in the Augmented SIPP graph as the composition of the ATFs of the edges in the plan.
We then give the algorithm for Augmented SIPP, and show that it is complete, optimal, and optimally efficient because it is a type of Cost-algebraic A*\footnote{If the safe intervals are inexact, (e.g., an irregular obstacle modeled by a bounding circle) SIPP may become incomplete with regard to the real problem being modeled.}.

\subsection{Compiling the Augmented SIPP Graph: Calculating Edge ATFs}

\begin{figure}[t]
    \centering
    \begin{tikzpicture}[scale=1]
    \draw[thick,->] (0,0) -- (6.5,0);
    \draw[thick] (0,0) -- (0,2.75);
    \draw[thick,dotted, ->] (0,2.5) -- (0,3.5);
    \draw (1.5 cm,5pt) -- (1.5 cm,-5pt) node[anchor=north] {$\zeta$};
    \draw (3 cm,5pt) -- (3 cm,-5pt) node[anchor=north] {$\alpha$};
    \draw (4.5 cm,5pt) -- (4.5 cm,-5pt) node[anchor=north] {$\beta$};
    \draw (5pt,3.25 cm) -- (-5pt,3.25 cm) node[anchor=east] {$\infty$};
    \draw (5pt, 1) -- (-5pt, 1) node[anchor=east] {$\alpha + \Delta$};
    \draw (5pt, 2.5) -- (-5pt, 2.5) node[anchor=east] {$\beta + \Delta$};
    \draw (-1, 1.5) -- (-1, 1.5) node[anchor=south east] {$t_{\textit{arrive}}$};
    \draw (3cm, -20pt) -- (3cm, -20pt) node[anchor=north] {$t_{\textit{depart}}$};
    \filldraw[black] (1.5,1) circle (3pt);
    \filldraw[black] (3,1) circle (3pt);
    \filldraw[black] (4.5,2.5) circle (3pt);
    
    \draw (0,3.25) -- (1.5,3.25);
    \draw [thick,dotted] (1.5,3.25) -- (1.5,1);
    \draw [thick] (1.5,1) -- (3,1);
    \draw [thick] (3,1) -- (4.5,2.5);
    \draw [thick,dotted] (4.5,2.5) -- (4.5,3.25);
    \draw (4.5,3.25) -- (6,3.25);
    \filldraw[white] (4.5,2.5) circle (2pt);
    \end{tikzpicture}
    \caption{An edge ATF with parameters $\langle \zeta,\alpha,\beta, \Delta \rangle$.}
    \label{fig:functional}
\end{figure}

Figure~\ref{fig:functional} shows the typical form of an ATF $A[e]$ for an edge $e \in E$ (or plan $p \in E^N$) in a SIPP state space.
$A[e]$ is piecewise linear and is parameterized by four real-valued numbers: $\zeta, \alpha, \beta$, and $\Delta$ (denoted $A[e] = A[\zeta, \alpha, \beta, \Delta]$).
These parameters define the piecewise linear function:
\begin{equation}
    \label{eqn:edgearrival}
    A[\zeta,\alpha,\beta,\Delta ](t) = \begin{cases}
        \infty &t < \zeta\\
        \alpha  +  \Delta &\zeta \leq  t  <  \min(\alpha, \beta)\\
        t + \Delta &\alpha \leq  t  <  \beta\\
        \infty &\beta \leq t
    \end{cases}, 
\end{equation}
where:
\begin{description}
    \item[$\zeta$]  is when the safe interval of the start state of the edge begins,
    \item[$\alpha$] is the earliest departure time when the agent can safely begin to traverse the edge, 
    \item[$\beta$] is the latest departure time before one of the states or edge becomes unsafe, and 
    \item[$\Delta$] is the duration required to make the state transition represented by the edge.  
\end{description}
An arrival time of $\infty$ is used to represent an unsafe departure time.

The form of the second and third cases may appear strange.
However, this is because Equation~\ref{eqn:edgearrival} is used for both edge ATFs and plan ATFs, and for a plan $\beta \leq \alpha$ can be true, in which case $t \in [\zeta, \beta)$ triggers the second case as $\alpha \leq t < \beta$ never holds.  
An example of how this can happen appears at the end of Section~\ref{sec:pathatfs}.

An ATF is \textbf{valid} iff the following constraints on the parameters are met:
\begin{align}
    \zeta& \leq \alpha \label{eqn:nonnegwait},\\
    \zeta& < \beta \label{eqn:positiveduration}, \text{ and}\\
    \Delta& \geq 0 \label{eqn:nonnegcost}.
\end{align}
Equation~\ref{eqn:nonnegwait} means we do not consider negative wait durations, Equation~\ref{eqn:positiveduration} follows from our representation of safe intervals as having positive duration (i.e., they are right-open, see Equation~\ref{eqn:sipprightopen}), and Equation~\ref{eqn:nonnegcost} is because we can not have negative plan durations.
Equation~\ref{eqn:nonnegcost} is not intended to imply that actions can have zero duration, but is useful when describing the ATF of a plan with no actions (we explicitly define this `Identity ATF' in Equation~\ref{eqn:atfifdentity}).

\begin{figure}[t]
     \centering
     \begin{subfigure}{0.5\textwidth}
         \centering
    \begin{tikzpicture}[scale=1]
    \draw[thick,->] (0,0) -- (6.5,0);
    \draw (0,5pt) -- (0,-5pt) node[anchor=north] {$0$};
    \draw (2,5pt) -- (2,-5pt) node[anchor=north] {$2$};
    \draw (4,5pt) -- (4,-5pt) node[anchor=north] {$4$};
    \draw (6,5pt) -- (6,-5pt) node[anchor=north] {$6$};
    \draw (3, -.6) -- (3, -.6) node[anchor=north] {$t$};

    \draw[thick] (0, 3) node[anchor=east] { $u$} -- (6, 3); 
    \filldraw[black] (0, 3) circle (3pt);
    \filldraw[black] (6, 3) circle (3pt);
    \filldraw[white] (6, 3) circle (2pt);

     \draw[thick] (0, 2) node[anchor=east] { $e$} -- (4, 2); 
    \filldraw[black] (0, 2) circle (3pt);
    \filldraw[black] (4, 2) circle (3pt);
    \filldraw[white] (4, 2) circle (2pt);

    \draw[thick] (0, 1) node[anchor=east] { $v$} -- (0, 1);
    \draw[thick] (3, 1) -- (6, 1); 
    \filldraw[black] (3, 1) circle (3pt);
    \filldraw[black] (6, 1) circle (3pt);
    \filldraw[white] (6, 1) circle (2pt);
    \end{tikzpicture}
    \caption{Timeline of safe intervals.}
    \label{fig:ecetimeline}
     \end{subfigure}
     \begin{subfigure}[b]{0.45\textwidth}
        \centering
        \begin{tikzpicture}[scale=1]
        \draw[thick,->] (0,0) -- (5,0);
        \draw[thick,->] (0,0) -- (0,3);
        \draw (.5,5pt) -- (.5,-5pt) node[anchor=north] {$0$};
        \draw (2.5,5pt) -- (2.5,-5pt) node[anchor=north] {$2$};
        \draw (4.5,5pt) -- (4.5,-5pt) node[anchor=north] {$4$};
        \draw (5pt, 0.5) -- (-5pt, 0.5) node[anchor=east] {$3$};
        \draw (5pt, 2.5) -- (-5pt, 2.5) node[anchor=east] {$5$};
        \draw (-.45, 1.5) -- (-.45, 1.5) node[anchor= east] {$t_{\textit{arr}}$};
        \draw (2.5, -.6) -- (2.5, -.6) node[anchor=north] {$t_{\textit{dep}}$};
        \filldraw[black] (.5,.5) circle (3pt);
        \filldraw[black] (2.5,.5) circle (3pt);
        \filldraw[black] (4.5,2.5) circle (3pt);
        
        \draw [thick] (.5,.5) -- (2.5,.5) -- (4.5,2.5);
        \filldraw[white] (4.5,2.5) circle (2pt);
        \end{tikzpicture}
        \caption{$A[e]=\langle 0, 2, 4, 1\rangle$}
        \label{fig:eceatf}
     \end{subfigure}
        \caption{An example SIPP ATF compilation for edge $e\in E$ from $u \in S$ to $v \in S$.}
        \label{fig:edgecompileexample}
\end{figure}

Figure~\ref{fig:edgecompileexample} shows an example compilation for a SIPP edge $e \in E$, with source $u \in S$, destination $v \in S$ and unit duration.
The timelines of the three safe intervals are shown in Figure~\ref{fig:ecetimeline}.  State $u$ is safe for the whole time. 
However, $e$ becomes unsafe at $t=4$ and $v$ is not safe until $t=3$.
We calculate the four values of the edge ATF as follows: 
$\zeta = 0$ because that is when the source $u$ becomes safe, 
$\alpha = 2$ because we cannot reach $v$ until it is safe at $t=3$ and the edge $e$ takes 1 time unit to traverse,
$\beta = 4$ because the edge $e$ becomes unsafe at $t=4$, and
$\Delta = 1$ because $e$ has unit duration.
The resulting ATF is shown in Figure~\ref{fig:eceatf}. 

Formally, given a SIPP problem, for every pair of states $u, v \in S$ with edge $x \in \sigma(u) \land v(x) = v$, we augment the SIPP graph with the corresponding edge ATFs with parameters:
\begin{align}
    &\zeta = u_b\label{eqn:zetacompute}\\
    &\begin{aligned}
    \alpha = max(&x_b, u_b, v_b - \delta(u, v)) \label{eqn:alphacompute} 
    \end{aligned}\\
    &\begin{aligned}
    \beta = min(&x_e,  u_e,  v_e - \delta(u, v)) \label{eqn:betacompute}
    \end{aligned}\\
    &\Delta = \delta(u, v)\label{eqn:deltacompute}
\end{align}
The resulting Augmented SIPP graph has the information of all the safe intervals of states and edges in the graph compiled into one edge ATF for each edge in the SIPP graph.  As usual in state-space search, in cases where the state space is defined implicitly (by a simulator, for example), one might want to construct this graph in a lazy as-needed manner.

\subsection{Augmented SIPP Plan Costs: Calculating Path ATFs}
\label{sec:pathatfs}

The structure of an edge ATF as illustrated in Figure~\ref{fig:functional} consists of a constant arrival time segment from $\zeta$ to $\alpha$ with arrival time $\alpha + \Delta$, which represents a set of optimal plans for different start times.
After this, the ATF increases linearly with the departure time, meaning that no waiting needs to be done along the plan until it is no longer safe at $\beta$.
This structure (Equation~\ref{eqn:edgearrival}, illustrated in Figure~\ref{fig:functional}) is maintained under composition.
Consider three plans, $a, b,$ and $c$, where the final state of plan $a$ is the initial state of $b$, and $c$ is exactly the concatenation of the plans $a$ and then $b$ (i.e., the same sequence of edges without any extras).  
We denote the composition operation in the usual way, so:
\begin{equation}
    \label{eqn:path2}
    A[c](t) =  (A[b] \circ A[a])(t) = A[b](A[a](t))
\end{equation}
Unfortunately for our use case, the notation of function composition results in the plan reading right-to-left.  
For simplicity, when not referring to a specific start time,
we usually omit the argument $t$.

$A[c]$ has parameters:
\begin{align}
    \zeta(c) &= \zeta(a)  \label{eqn:pathzeta}\\
    \alpha(c) &= max(\alpha(a), \alpha(b) - \Delta(a))  \label{eqn:pathalpha}\\
    \beta(c) &= min(\beta(a), \beta(b) - \Delta(a)) \label{eqn:pathbeta}\\
    \Delta(c) &= \Delta(a) + \Delta(b) \label{eqn:pathdelta}
\end{align}
For convenience, we define the Shifted Identity ATF, notated as:
\begin{equation}\label{eqn:atfifdentity}
    I[\mathit{shift}, t_0](t) = t + \mathit{shift} = A[t_0, t_0, \infty, \mathit{shift}](t)
\end{equation}
which for all $t \in [t_0, \infty)$ triggers the third case of Equation~\ref{eqn:edgearrival}.
The ATF for a plan of no edges is $I[0,t_0](t)$, which corresponds to a plan with zero duration where the earliest arrival time is the departure time.
This ATF is called the Identity ATF, because like $0$ is the identity of plan costs that are summed in traditional A*, the Identity ATF is the identity of plan costs that are the composition of plan ATFs.

For a single edge, all waiting must be done prior to taking the action to traverse the edge.
However, for a multistep plan, the agent can sometimes be required to wait at intermediate points.  
While it cannot be forced to wait for longer than the edge is safe, it can be forced to wait along an entire plan for longer than it can wait prior to traversing the first edge.
Formally, this means that while $\alpha < \beta$ is true for all edge ATFs individually, it need not be true for a plan ATF.

\begin{figure}[t]
     \centering
     \begin{subfigure}{\textwidth}
         \centering
    \begin{tikzpicture}[scale=1]
    \draw[thick,->] (0,0) -- (10.5,0);
    \draw (0,5pt) -- (0,-5pt) node[anchor=north] {$0$};
    \draw (2,5pt) -- (2,-5pt) node[anchor=north] {$2$};
    \draw (4,5pt) -- (4,-5pt) node[anchor=north] {$4$};
    \draw (6,5pt) -- (6,-5pt) node[anchor=north] {$6$};
    \draw (8,5pt) -- (8,-5pt) node[anchor=north] {$8$};
    \draw (10,5pt) -- (10,-5pt) node[anchor=north] {$10$};
    \draw (5, -.6) -- (5, -.6) node[anchor=north] {$t$};

    \draw[thick] (0, 1) node[anchor=east] {$e_1$} -- (2, 1); 
    \filldraw[black] (0, 1) circle (3pt);
    \filldraw[black] (2, 1) circle (3pt);
    \filldraw[white] (2, 1) circle (2pt);

     \draw[thick] (0, 2) node[anchor=east] {$e_2$} -- (10, 2); 
    \filldraw[black] (0, 2) circle (3pt);
    \filldraw[black] (10, 2) circle (3pt);
    \filldraw[white] (10, 2) circle (2pt);

    \draw[thick] (0, 3) node[anchor=east] {$e_3$} -- (0, 3);
    \draw[thick] (8, 3) -- (10, 3); 
    \filldraw[black] (8, 3) circle (3pt);
    \filldraw[black] (10, 3) circle (3pt);
    \filldraw[white] (10, 3) circle (2pt);
   
    \end{tikzpicture}
    \caption{Timeline of Edge Safe Intervals.}
    \label{fig:edgeatfexample}
     \end{subfigure}
     \begin{subfigure}[b]{0.3\textwidth}
        \centering
        \begin{tikzpicture}[scale=1]
        \draw[thick,->] (0,0) -- (3,0);
        \draw[thick,->] (0,0) -- (0,5);
        \draw (.5,5pt) -- (.5,-5pt) node[anchor=north] {$0$};
        \draw (2.5,5pt) -- (2.5,-5pt) node[anchor=north] {$2$};
        \draw (5pt, 0.5) -- (-5pt, 0.5) node[anchor=east] {$1$};
        \draw (5pt, 2.5) -- (-5pt, 2.5) node[anchor=east] {$3$};
        \draw (5pt, 4.5) -- (-5pt, 4.5) node[anchor=east] {$5$};
        \draw (-.25, 3.5) -- (-.25, 3.5) node[anchor= east] {$t_{\textit{arr}}$};
        \draw (1.5, -10pt) -- (1.5, -10pt) node[anchor=north] {$t_{\textit{dep}}$};
        \filldraw[black] (.5,.5) circle (3pt);
        \filldraw[black] (2.5,2.5) circle (3pt);
        
        \draw [thick] (.5,.5) -- (2.5,2.5);
        \filldraw[white] (2.5,2.5) circle (2pt);
        \end{tikzpicture}
        \caption{$A[e_1]=\langle 0, 0, 2, 1\rangle$}
        \label{fig:ae1}
     \end{subfigure}
     \hfill
     \begin{subfigure}[b]{0.3\textwidth}
        \centering
        \begin{tikzpicture}[scale=1]
        \draw[thick,->] (0,0) -- (3,0);
        \draw[thick,->] (0,0) -- (0,5);
        \draw (.5,5pt) -- (.5,-5pt) node[anchor=north] {$0$};
        \draw (2.5,5pt) -- (2.5,-5pt) node[anchor=north] {$2$};
        \draw (5pt, 0.5) -- (-5pt, 0.5) node[anchor=east] {$1$};
        \draw (5pt, 2.5) -- (-5pt, 2.5) node[anchor=east] {$3$};
        \draw (5pt, 4.5) -- (-5pt, 4.5) node[anchor=east] {$5$};
        \draw (-.25, 3.5) -- (-.25, 3.5) node[anchor= east] {$t_{\textit{arr}}$};
        \draw (1.5, -10pt) -- (1.5, -10pt) node[anchor=north] {$t_{\textit{dep}}$};
        
        \filldraw[black] (.5,1.5) circle (3pt);
        \filldraw[black] (2.5,3.5) circle (3pt);
        \draw [thick] (.5,1.5) -- (2.5,3.5);
        \filldraw[white] (2.5,3.5) circle (2pt);
        \end{tikzpicture}
        \caption{$A[e_1e_2]=\langle 0, 0, 2, 2\rangle$}
        \label{fig:ae2}
     \end{subfigure}
     \hfill
     \begin{subfigure}[b]{0.3\textwidth}
        \centering
        \begin{tikzpicture}[scale=1]
        \draw[thick,->] (0,0) -- (3,0);
        \draw[thick,->] (0,0) -- (0,5);
        \draw (.5,5pt) -- (.5,-5pt) node[anchor=north] {$0$};
        \draw (2.5,5pt) -- (2.5,-5pt) node[anchor=north] {$2$};
        \draw (5pt, 0.5) -- (-5pt, 0.5) node[anchor=east] {$5$};
        \draw (5pt, 2.5) -- (-5pt, 2.5) node[anchor=east] {$7$};
        \draw (5pt, 4.5) -- (-5pt, 4.5) node[anchor=east] {$9$};
        \draw (-.25, 3.5) -- (-.25, 3.5) node[anchor= east] {$t_{\textit{arr}}$};
        \draw (1.5, -10pt) -- (1.5, -10pt) node[anchor=north] {$t_{\textit{dep}}$};
        
        \filldraw[black] (.5,4.5) circle (3pt);
        \filldraw[black] (2.5,4.5) circle (3pt);
        \draw [thick] (.5,4.5) -- (2.5,4.5);
        \filldraw[white] (2.5,4.5) circle (2pt);
        \end{tikzpicture}
        \caption{$A[e_1e_2e_3]=\langle 0, 6, 2, 3\rangle$}
        \label{fig:ae3}
        \end{subfigure}
        \caption{Example demonstrating $\alpha > \beta$ in a plan ATF.}
        \label{fig:alphabetaexample}
\end{figure}

A toy example where $\beta < \alpha$ is shown in Figure~\ref{fig:alphabetaexample}.
Consider a plan ($e_1e_2e_3$) consisting of three edges ($e_1,e_2,e_3$), Figure~\ref{fig:edgeatfexample} shows the safe intervals of the three edges:
$e_1$ is safe in the interval $[0,2)$, $e_2$ in $[0,10)$, and $e_3$ in $[8,10)$.
If all edges have unit travel duration, the agent is forced to wait for at least six time units for the final edge to become accessible, but it can wait for a maximum of two time units before taking the first edge, thus our plan ATF is parameterized by $\langle 0, 6, 2, 3\rangle$, giving us the earliest arrival time of $6+3=9$.
We illustrate the composition step by step.  Initially, the ATF of $e_1$ (Figure~\ref{fig:ae1}) is constrained by the interval of $e_1$, $\zeta = \alpha = 0$ because the edge is immediately traversable with duration $\Delta = 1$, and $\beta = 2$ because that is when $e_1$ becomes unsafe.
The ATF of plan $e_1e_2$ (Figure~\ref{fig:ae2}), i.e., the plan through $e_1$ then $e_2$, is still constrained entirely by the safe interval of $e_1$, so the only difference is the increased duration from taking two edges, $\Delta = 2$.
However, the ATF of the full plan, $e_1e_2e_3$ (Figure~\ref{fig:ae3}) is still constrained by $e_1$'s safe interval to depart between $0$ and $2$, however $e_3$ can not be traversed until $t=8$, requiring at least $6$ units of waiting in addition to the $2$ units of moving to traverse $e_1e_2$, so $\Delta = 2$ and $\alpha = 6$ but $\beta = 2$.
So, if we evaluate $A[e_1e_2e_3](1)$ (Equation~\ref{eqn:edgearrival} at $t = 1$), we are in the second case ($ 0 \leq 1 < \min(6, 2)$ so our earliest arrival time is $\alpha + \Delta = 6 + 2$.

\subsection{Augmented SIPP: Heuristic Search With ATFs}
\begin{algorithm}[t]
\caption{Augmented SIPP}
\label{alg:asipp}
\begin{algorithmic}[1]
\Function{AugmentedSIPP}{$\sigma, \delta, x_o,t_{0}, \isGoal$}
\State $s_o \gets \langle x_o, \langle t_0, \infty \rangle \rangle$
\State $\open \gets \{\langle s_o, I[0, t_0], h[s_o], \varnothing \rangle\}$\label{algln:asippopeninit}
\State generated $\gets \{s_o \mapsto \mathit{top}(\open)\}$  \label{algln:asippclosedinit}
\While{$\lnot \mathit{empty}(\open)$}
    \State $n \leftarrow \mathit{pop}(\open)$ \Comment{$\min f(n)$}\label{algln:nlasippopen}
    \If{$\isGoal(n)$} 
        \State \Return $n, g[n], \open \cup \{n\}, generated$  \label{algln:retval}
    \EndIf
    \For{$e \in \sigma(n)$}\label{algln:asippsuccessorgeneration}
        \State $s' \leftarrow v(e)$
        \State $\zeta \leftarrow \zeta(n)$  \label{algln:asippzeta}
        \State $\alpha \leftarrow \max(\alpha(e) - \Delta(n), \alpha(n))$ \label{algln:asippalpha}
        \State $\beta \leftarrow \min(\beta(e)  - \Delta(n), \beta(n))$
        \State $\Delta \leftarrow \Delta(n) + \delta(e)$ \label{algln:asippdelta}
        \State $g \leftarrow A[\zeta, \alpha, \beta, \Delta]$
        \State $n' \leftarrow \langle s', g, h[s'], n \rangle$
        \If{$s' \notin \closed \lor g[n'] \trianglelefteq  g[\closed(s')]$} \label{algln:asippprune}
            \If{$g(n) \geq \beta(f[n'])$}\label{algln:asippvalidcheck}
                \State $\mathit{continue}$
            \EndIf
            \If{$g[n'] \trianglelefteq  g[\closed(s')]$}
                \State $\open \gets \open \setminus \closed[s']$
            \EndIf
            \State $\closed[s'] \leftarrow n'$\label{algln:asippclosed}
            \State $\open \leftarrow \open \cup\{n'\}$ \label{algln:newnode}
        \EndIf
    \EndFor
\EndWhile
\State \Return Failure \label{algln:asippend}
\EndFunction
\end{algorithmic}
\end{algorithm}

We now present Augmented SIPP, an algorithm that solves SIPP while also returning a plan ATF along with the resulting plan.
Augmented SIPP is a type of cost-algebraic heuristic search where edge costs are edge ATFs, plan costs (e.g. $g$, $h$, and $f$) are plan ATFs, and cost accumulates through ATF composition.
In SIPP the $f$ value of a search node is $g$ the earliest arrival time (along the plan represented by the node) + $h$ the heuristic time to reach the goal.
Augmented SIPP, using the same tie-breaking and $h$ as SIPP, will select the same node as SIPP for every expansion, in effect conducting an identical search that keeps ATF $g$-values rather than scalar $g$-values. 
This is because they both select the next node to expand based on an identical criteria: earliest arrival time starting at $t_0$.
(In the following subsection, we will, after some background on cost-algebraic heuristic search, formalize this similarity by describing Augmented SIPP as a cost-algebraic A*.)

An Augmented SIPP search node is a tuple $\langle s, g, h, p\rangle$ where:
\begin{description}
    \item[$s$] is a state,$\langle x, i\rangle$ (identical to SIPP),
    \item[$g$] is the plan ATF corresponding to the plan to $s$,
    \item[$h$] is an edge ATF time-to-go from $s$, and
    \item[$p$] is a parent pointer (identical to SIPP).
\end{description}
The path ATF priority function of a node is the composition of the ATFs of $h$ and $g$:
\begin{equation}
    f[n] = h[s(n)] \circ g[n],
\end{equation}
the scalar priority of a node in the open list is:
\begin{equation}
    f(n) = f[n](t_0),
\end{equation}
and the scalar earliest arrival time at a node is:
\begin{equation}\label{eqn:gn}
    g(n) = g[n](t_0).
\end{equation}
The heuristic ATF $h[n]$ may be calculated from a scalar heuristic $h(n)$ using the identity ATF (Equation~\ref{eqn:atfifdentity}),
\begin{equation}
    h[n] = I[h(n), t_0].
\end{equation}

Pseudocode for Augmented SIPP is presented in Algorithm~\ref{alg:asipp}.
Augmented SIPP differs from SIPP in three ways: first, the successor generation (Lines~\ref{algln:asippzeta}-\ref{algln:asippdelta}) propagates a plan ATF $g$, following Equations~\ref{eqn:pathzeta}-\ref{eqn:pathdelta}; second, we return not just the node representing the plan, but also the ATF of the plan when one is found (and optionally the open and closed list for use by an outer loop any-start-time search); and third, we use $\trianglelefteq$ to highlight that, in Augmented SIPP (and in cost-algebraic heuristic search in general), the total ordering used to order the open list (Line~\ref{algln:nlasippopen}) and the partial ordering used to prune successors (Line~\ref{algln:asippprune}) need not always be the same. 
We explore this in more detail in Section~\ref{sec:issuewithtiebreaking}.

\section{Theoretical Properties of Augmented SIPP}
In this section, we explore the theoretical properties of Augmented SIPP.
We begin by observing that, when using the same ordering, Augmented SIPP is SIPP, and retains all of the same theoretical properties SIPP enjoys as an A* search such as completeness, optimality, and cost-optimality \cite{dechteroptimalastar}.
\begin{observation}\label{obs:asippissipp}
Augmented SIPP and SIPP perform identical searches.
If they use the same total-ordering of nodes and the same $h(n)$ then both searches will expand a sequence of nodes corresponding to the same states, with the same earliest-arrival-times ($g$-values), in the same order.  
\end{observation}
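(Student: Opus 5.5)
Since the observation asserts that the two searches coincide step by step, I would prove it by induction on the number of expansions. The invariant is: after $k$ iterations of the main loop, the two algorithms have open lists and generated maps over the same set of states. Each Augmented SIPP node $n'$ corresponds to the SIPP node $n$ for the same state, with $g[n'](t_0) = g(n)$ and the same $h$-value, so that $f[n'](t_0) = f(n)$. The base case is the initialization lines. SIPP starts with $\langle s_o, t_0, h(s_o), \varnothing\rangle$. Augmented SIPP starts with $g = I[0,t_0]$, and $I[0,t_0](t_0) = t_0$ by Equation~\ref{eqn:atfifdentity}. Also $h[s_o](g(n)) = g(n) + h(s_o)$, since $h[n] = I[h(n),t_0]$. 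Hence the priorities agree.

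The core of the inductive step is a lemma about a single edge. Let $n$ have plan ATF $g[n]$ with $g[n](t_0) = g(n)$, and let $e = \langle u, v, x\rangle$. The successor ATF built in Lines~\ref{algln:asippzeta}--\ref{algln:asippdelta} is the composition $A[e] \circ g[n]$ (Equations~\ref{eqn:pathzeta}--\ref{eqn:pathdelta}). The lemma has two parts:
\begin{itemize}
  \item[(i)] Whenever SIPP's successor is valid, $(A[e]\circ g[n])(t_0) = \max(g(n)+\delta(e),\, e_b+\delta(e),\, s'_b)$, which is Line~\ref{algln:sippg}.
  \item[(ii)] The value is finite exactly when SIPP's safety tests in Line~\ref{algln:sippprune} pass.
\end{itemize}
To prove it, I would evaluate $A[e]$ at the scalar arrival time $a = g(n)$ using Equation~\ref{eqn:edgearrival}, with parameters from Equations~\ref{eqn:zetacompute}--\ref{eqn:deltacompute}. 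There are two cases. If $a < \alpha(e)$, the second case gives $\alpha(e) + \delta = \max(x_b+\delta,\, u_b+\delta,\, v_b)$. Here $u_b \le a$, because the agent is inside $u$'s interval, so the $u_b$ term is dominated. If $\alpha(e) \le a < \beta(e)$, the third case gives $a+\delta$. In both cases the result matches SIPP's maximum. The value is $\infty$ exactly when $a \ge \beta(e) = \min(x_e, u_e, v_e-\delta)$. This corresponds to SIPP's tests $g < s'_e$ and $g - \delta < e_e$. The remaining test on the source interval is automatic, since $a$ already lies in $u$'s interval. I would check that $(A[e]\circ g[n])(t_0) = A[e](g[n](t_0))$ holds literally, so no parameter algebra is needed beyond Equation~\ref{eqn:path2}. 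The validity check in Line~\ref{algln:asippvalidcheck} then discards exactly the successors SIPP discards.

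Given the lemma, the step goes through directly. Both algorithms pop the same node, because the priorities coincide and the tie-breaking is the same total order. Both test the same goal predicate. Both generate the same successor states with equal scalar arrival times. The remaining condition is that the pruning decisions agree. I expect this to be the main obstacle. SIPP prunes with the scalar comparison $g < g(\closed[s'])$, while Augmented SIPP uses $\trianglelefteq$ on ATFs, which the paper has so far left abstract (deferred to Section~\ref{sec:issuewithtiebreaking}). I would resolve it by reading the hypothesis ``same total-ordering of nodes'' as saying that $\trianglelefteq$ is the order induced by evaluation at $t_0$, possibly refined by the shared tie-breaking rule. Under that reading, $g[n'] \trianglelefteq g[\closed(s')]$ holds iff SIPP's strict comparison succeeds. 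If the paper's later $\trianglelefteq$ is a genuinely finer partial order, I would argue that it still agrees with the scalar comparison whenever the $t_0$-values differ. Equal values are then resolved by the common tie-breaking, so the surviving node in \emph{generated} has the same state and $g$-value in both searches.

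Finally, the invariant is preserved, so the sequences of expanded states and $g$-values coincide at every step. The terminal conditions also agree: either both return the same goal state at the same step, or both exhaust open and return Failure.
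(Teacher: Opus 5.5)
Your proposal is correct and follows essentially the paper's argument (Lemmas~\ref{lmm:aplusb} and~\ref{lmm:equivg} feeding Theorem~\ref{thm:searchasippissipp}). That argument is an induction showing that the Augmented SIPP $g$ evaluated at $t_0$ reproduces SIPP's $\max(g(n)+\delta(e),\, e_b+\delta(e),\, s'_b)$, with the $u_b$ term dominated because $g(n)\ge u_b$, after which the shared ordering and pruning force identical expansions. The only differences are that you evaluate $A[e]$ pointwise at $g(n)$ rather than through the closed form $g[n](t_0)=\alpha(n)+\Delta(n)$, and that you spell out the validity and pruning agreement the paper merely asserts as ``equivalent rules''; one caution is that the ATF actually built by the parameter formulas in Lines~\ref{algln:asippzeta}--\ref{algln:asippdelta} can be finite at $t_0$ even when $g(n)\ge\beta(e)$ (where $A[e](g(n))=\infty$), so your part~(ii) holds for the literal composition, not for the built ATF, and discarding such successors relies on the validity check implementing exactly your test $g(n)<\beta(e)$.
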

However, Augmented SIPP is capable of orderings that SIPP is not.  
While both can find optimal plans (that arrive as early as possible when departing at time $t_0$), Augmented SIPP can find the optimal plan with the most waiting (by a secondary ordering maximizing $\alpha$), or the optimal plan that remains valid for the longest (by a secondary ordering maximizing $\beta$), these sort of orderings are critical for ensuring that our approach to any-start-time planning, RePEAT, makes efficient progress when constructing the any-start-time plan.

In this section, we will assume that the heuristic function is admissible; admissibility and consistency have their usual meaning.
However, because $h$ can be an ATF, they are universally quantified over time.
\begin{definition}
    A heuristic $h[n]$ is \textbf{admissible}, iff for the optimal heuristic $h^*[n]$:
    \begin{equation}
        \forall t \in [0, \infty) : h[n](t) \leq h^*[n](t).
    \end{equation}
\end{definition}

In the remainder of this section, we start by formalizing Observation~\ref{obs:asippissipp}, then we focus on showing that these same properties hold even when we are searching with orderings that are impossible for a regular SIPP search.
Many of these arise from Augmented SIPP being a form of cost-algebraic A*, so we will restate the definitions of cost-algebraic heuristic search from \textcite{edelkamp2005cost} and \textcite{Holte_Zilles_optimalitycostalgebra}, and then  show that Augmented SIPP is a cost-algebraic A*. 
There will be a moment of dramatic tension when we see that the secondary ordering that we want to use for Augmented SIPP does not obey the SIPP dominance relation (Theorem~\ref{thm:sippdom}), but we will resolve this by showing that this problematic secondary sort can be viewed as a tie-breaking strategy for a well-behaved sort, and therefore the search remains complete, admissible, and optimally efficient.
Our findings demonstrate a nuanced point: for a cost-algebraic heuristic search, it is not always correct to prune successors using the same total ordering as is used to order the open list!

\subsection{Augmented SIPP is SIPP} \label{sec:asippissipp}

We now formalize Observation~\ref{obs:asippissipp}.
First we show that $g[n](t_0) = \alpha(n) + \Delta(n)$ for all search nodes.
\begin{lemma}\label{lmm:aplusb}
    For all nodes $n$ generated by an Augmented SIPP search, $g[n](t_0) = \alpha(n) + \Delta(n)$.
\end{lemma}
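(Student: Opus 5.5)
I will argue by induction on the order in which Augmented SIPP generates nodes, that is, along the parent pointers from the root. The claim is really that $t_0$ always falls in the second (constant) case of Equation~\ref{eqn:edgearrival}, or at the boundary point of the third case where $t+\Delta = \alpha+\Delta$. Since every node's $g$ has $\zeta(n) = t_0$ (Line~\ref{algln:asippzeta} copies $\zeta$ from the parent, and the root has $\zeta = t_0$), it suffices to show two things for every generated node $n$:
\begin{equation*}
\alpha(n) \geq t_0 \quad\text{and}\quad t_0 < \beta(n).
\end{equation*}
With these in hand, a direct case analysis finishes. If $t_0 < \min(\alpha(n),\beta(n))$, the second case gives $\alpha(n)+\Delta(n)$. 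Otherwise $t_0 = \alpha(n) < \beta(n)$, and the third case gives $t_0+\Delta(n) = \alpha(n)+\Delta(n)$.

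\textbf{Base case.} The root has $g = I[0,t_0] = A[t_0,t_0,\infty,0]$. So $\alpha = t_0$, $\beta = \infty > t_0$, and $g(t_0) = t_0 = \alpha+\Delta$.

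\textbf{Inductive step: $\alpha$.} Let $n'$ be generated from an already generated $n$ via edge $e$. By Line~\ref{algln:asippalpha}, $\alpha(n') = \max(\alpha(e)-\Delta(n), \alpha(n)) \geq \alpha(n) \geq t_0$, so the first inequality propagates immediately.

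\textbf{Inductive step: $\beta$ (the main obstacle).} Monotonicity of $\min$ goes the wrong way here, so the second inequality must come from the validity check on Line~\ref{algln:asippvalidcheck}. That check discards successors whose departure is unsafe, so I would show that any node surviving it satisfies $\beta(n') > t_0$. I read the check as rejecting $n'$ whenever the earliest departure $g(n) = \alpha(n)+\Delta(n)$ (by the inductive hypothesis) is at least the latest safe departure $\beta(e)$ on $e$. Equivalently, a surviving node has $\alpha(n)+\Delta(n) < \beta(e)$, that is, $\alpha(n) < \beta(e)-\Delta(n)$.

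Combining this with the inductive hypothesis $t_0 \leq \alpha(n)$ gives $t_0 < \beta(e)-\Delta(n)$. The inductive hypothesis also gives $t_0 < \beta(n)$. Hence $\beta(n') = \min(\beta(e)-\Delta(n), \beta(n)) > t_0$.

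The delicate point is exactly this reading of the pruning condition, since the line as written mixes $g(n)$ with $\beta(f[n'])$. If the literal form differs, I would fall back on the semantics of SIPP's corresponding check on Line~\ref{algln:sippprune}, namely $g - \delta(e) < e_e$ and $g < s'_e$. Together with Observation~\ref{obs:asippissipp}, this guarantees that every retained successor is reachable when departing at $t_0$, which is precisely $\beta(n') > t_0$. Once both invariants hold, the case analysis above completes the induction.
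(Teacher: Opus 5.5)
Your proof is correct and follows the same route as the paper's: $\zeta(n)=t_0$ and $t_0\le\alpha(n)$, so evaluating at $t_0$ lands in the second or third case of Equation~\ref{eqn:edgearrival}. It is in fact more careful, since the paper tacitly assumes $t_0<\beta(n)$ when it asserts that one of those two cases applies.

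Your hedge about Line~\ref{algln:asippvalidcheck} is unnecessary, and the fallback via Observation~\ref{obs:asippissipp} would be circular, because that observation's formalization rests on this lemma. Since $h$ has $\beta=\infty$, we get $\beta(f[n'])=\beta(g[n'])=\min(\beta(e)-\Delta(n),\beta(n))\le\beta(e)$. So the literal check already implies your reading, and it gives directly $\beta(n')>g(n)=\alpha(n)+\Delta(n)\ge t_0$ for every surviving $n'$.
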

\begin{proof}
    Observe that for all nodes $n$ of the Augmented SIPP search:
    \begin{equation*}
        \zeta(n) = t_0
    \end{equation*} (by Equation~\ref{eqn:pathzeta}, see Algorithm~\ref{alg:asipp} line~\ref{algln:asippopeninit} and Equation~\ref{eqn:atfifdentity}),
    and because SIPP successors must have a non-negative wait:
    \begin{equation*}
        \zeta(n) \leq \alpha(n).
    \end{equation*}
    This means that $g[n](t_0)$ is determined by either the second ($\zeta \leq t \leq \min(\alpha, \beta)$) or third ($\alpha, \leq t < \beta$) case of Equation~\ref{eqn:edgearrival}, so $g[n](t_0) = \max(\alpha(n) + \Delta(n), t_0 + \Delta(n))$. 
    Since we know that $t_0 \leq \alpha(n_{\mathit{aug}})$, we get:
    \begin{equation*}
        g[n](t_0) = \alpha(n) + \Delta(n).
    \end{equation*}
\end{proof}
Now we show that the Augmented SIPP $g$-function evaluated at $t_0$ (from Lemma~\ref{lmm:aplusb}) is equivalent to the SIPP $g$-value.
To do this, we will examine two searches run on the same SIPP problem.
Augmented SIPP, with the $i$th node generated denoted $n_{\mathit{aug}}^i$, and SIPP with the $i$th node generated denoted $n_{\mathit{sipp}}^i$.
\begin{lemma}\label{lmm:equivg}
    Assuming that Augmented SIPP and SIPP generate sequences of nodes corresponding to the same sequence of states and actions, 
    the Augmented SIPP $g$-function evaluated at $t_0$ is the SIPP $g$-value (i.e., $g[n_{\mathit{aug}}](t_0) = g(n_{\mathit{sipp}})$).
\end{lemma}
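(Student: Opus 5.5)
I will prove this by induction on the position $i$ in the common sequence of generated nodes, using Lemma~\ref{lmm:aplusb} to reduce the claim to showing $\alpha(n_{\mathit{aug}}^i) + \Delta(n_{\mathit{aug}}^i) = g(n_{\mathit{sipp}}^i)$.

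\textbf{Base case.} The root node of Augmented SIPP has $g = I[0,t_0] = A[t_0,t_0,\infty,0]$, so $\alpha + \Delta = t_0$. The root of SIPP has $g = t_0$ (Algorithm~\ref{alg:sipp}, line~\ref{algln:sippopen}).

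\textbf{Inductive step.} Let $n'$ be generated from parent $n$ via edge $e$ from $u = s(n)$ to $s'$, with $\Delta(n') = \Delta(n) + \delta(e)$ and $\alpha(n') = \max(\alpha(e) - \Delta(n), \alpha(n))$. By Lemma~\ref{lmm:aplusb} and the induction hypothesis, $\alpha(n) + \Delta(n) = g(n_{\mathit{sipp}})$. Therefore
\begin{equation*}
g[n'](t_0) = \max(\alpha(e) + \delta(e),\ \alpha(n) + \Delta(n) + \delta(e)) = \max(\alpha(e) + \delta(e),\ g(n_{\mathit{sipp}}) + \delta(e)).
\end{equation*}
Expanding $\alpha(e)$ by Equation~\ref{eqn:alphacompute} as $\max(e_b, u_b, s'_b - \delta(e))$ and distributing $+\delta(e)$ through the max gives
\begin{equation*}
\max(g(n_{\mathit{sipp}}) + \delta(e),\ e_b + \delta(e),\ u_b + \delta(e),\ s'_b).
\end{equation*}
This matches line~\ref{algln:sippg} of Algorithm~\ref{alg:sipp} except for the extra term $u_b + \delta(e)$.

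\textbf{Removing the extra term.} I expect this to be the main (though mild) obstacle. SIPP only generates $n$ at a time inside $u$'s safe interval, so $g(n_{\mathit{sipp}}) \geq u_b$ by the $\max(\cdot, s'_b)$ in line~\ref{algln:sippg} applied at $n$'s own generation. For the root, $u_b = t_0 = g$. Hence $u_b + \delta(e) \leq g(n_{\mathit{sipp}}) + \delta(e)$ and the term is dominated, so the two expressions are equal.

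\textbf{Remaining points.} I will note briefly that both algorithms consider the same edge $e$ and the same parent, since the lemma assumes identical state/action sequences. I will also check that the pruning and validity conditions do not matter here: the lemma concerns only the $g$-values of nodes that are in fact generated, and Lemma~\ref{lmm:aplusb} already covers the second-case/third-case subtlety when $\beta < \alpha$.
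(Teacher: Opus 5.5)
Your proposal is correct and follows essentially the same route as the paper. Both arguments induct from the root, use Lemma~\ref{lmm:aplusb} to reduce the claim to $\alpha+\Delta$, expand $\alpha(e)$ via Equation~\ref{eqn:alphacompute}, and discard the source-interval term $u_b$ because $g(n_{\mathit{sipp}}) \geq u_b$. Your handling of the $-\Delta(n)$ shift is cleaner than the paper's intermediate expression. You also derive $g(n_{\mathit{sipp}})\geq u_b$ from the $s'_b$ term in line~\ref{algln:sippg} (with $u_b=t_0$ at the root), which the paper only asserts.
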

\begin{proof}
    We prove by induction from the search root.
    At the root, $g(n_{\mathit{sipp}}^0) = t_0$ (Algorithm~\ref{alg:sipp}, line~\ref{algln:sippopen}) and $g[n_{\mathit{aug}}^0] = I[0,t_0]$ (Algorithm~\ref{alg:asipp}, line~\ref{algln:asippopeninit}) so $g[n_{\mathit{aug}}^0](t_0) = t_0$ (Equation~\ref{eqn:atfifdentity}).  
    Proving the base case.

    For the inductive step, consider the nodes: $n_{\mathit{aug}}^i, n^{j}_{\mathit{aug}}, n^{i}_{\mathit{sipp}},$ and $n^{j}_{\mathit{sipp}}$.
    Where $n^j_{\mathit{aug}}$ is the successor of $n^i_{\mathit{aug}}$, $n^j_{\mathit{sipp}}$ is the successor of $n^i_{\mathit{sipp}}$, $n^i_{\mathit{aug}}$ and $n^i_{\mathit{sipp}}$ both correspond to the same plan (sequence of SIPP states), and $n^j_{\mathit{aug}}$, $n^j_{\mathit{sipp}}$ also both correspond to the same plan with the addition of their shared successor edge $e$, and $g[n^i_{\mathit{aug}}](t_0) = g(n^i_{\mathit{sipp}})$.  
    For Augmented SIPP,
    \begin{equation*}
        g[n^j_{\mathit{aug}}](t_0) = (A[e] \circ g[n^i_{\mathit{aug}}])(t_0),
    \end{equation*} 
    which has parameters ($\langle \zeta, \alpha, \beta, \Delta \rangle$):
    \begin{align*}
        \langle &\zeta(g[n^i_{\mathit{aug}}]),\\
        &\max(\alpha(g[n^i_{\mathit{aug}}]), \alpha(e) - \Delta(g[n^i_{\mathit{aug}}])),\\
        &\min(\beta(g[n^i_{\mathit{aug}}]), \beta(e) - \Delta(g[n^i_{\mathit{aug}}])),\\
        &\Delta(g[n^i_{\mathit{aug}}]) + \Delta(e)\rangle
    \end{align*} (see Equations~\ref{eqn:pathzeta}-\ref{eqn:pathdelta}).
    So, by Lemma~\ref{lmm:aplusb}, 
    \begin{equation*}
        g[n^j_{\mathit{aug}}](t_0) = \alpha(n^j_{\mathit{aug}}) + \Delta(n^j_{\mathit{aug}}).
    \end{equation*}

    We first look at the value of $\alpha(n^j_{\mathit{aug}})$.
    By Equation~\ref{eqn:pathalpha}, $\alpha(n^j_{\mathit{aug}}) = \max(\alpha(n^i_{\mathit{aug}}) + \Delta(n^i_{\mathit{aug}}), \alpha(e) ) - \Delta(n^i_{\mathit{aug}})$.
    We know $\alpha(n^i_{\mathit{aug}}) + \Delta(n^i_{\mathit{aug}}) = g[n^i_{\mathit{aug}}](t_0) = g(n^i_{\mathit{sipp}})$, so by Equation~\ref{eqn:alphacompute}:
    \begin{align*}
        \alpha(n^j_{\mathit{aug}}) = \max(&g(n^i_{\mathit{sipp}}),\\
                &s_b(n^i_{\mathit{sipp}}),\\
                &e_b,\\
                &s_b(n^j_{\mathit{sipp}}) - \delta(e) - \Delta(n^i_{\mathit{aug}}).
    \end{align*}
    Because $g(n)$ is on or after the beginning of the safe interval of the corresponding state ($s_b(n)$), the first term in the max dominates the second for all SIPP search nodes.

    Now we look at the value of $\Delta(n^j_{\mathit{aug}})$
    By Equation~\ref{eqn:pathdelta}, $\Delta(n^j_{\mathit{aug}}) = \Delta(n^i_{\mathit{aug}}) + \Delta(e)$, and $\Delta(e) = \delta(e)$ (Equation~\ref{eqn:deltacompute}), so:
    \begin{equation*}
         g[n^j_{\mathit{aug}}](t_0) = \max(g(n^i_{\mathit{sipp}}), e_b, s_b(n^j_{\mathit{sipp}}) - \delta(e)) - \Delta(n^i_{\mathit{aug}}) + \Delta(n^i_{\mathit{aug}}) + \delta(e),
    \end{equation*}
    which simplifies to:
    \begin{align*}
         g[n^j_{\mathit{aug}}](t_0) = \max(&g(n^i_{\mathit{sipp}}) + \delta(e),\\
         &e_b + \delta(e),\\
         &s_b(n^j_{\mathit{sipp}}),
    \end{align*}
    which is exactly the SIPP $g$-value (Algorithm~\ref{alg:sipp}, Line~\ref{algln:sippg}).
\end{proof}
Now we prove the equivalence of the searches done by SIPP and Augmented SIPP.
\begin{theorem}\label{thm:searchasippissipp}
    Given the same SIPP problem, with the same scalar $h(n)$, and the same node ordering and pruning (e.g., $\leq_{f(n)}$) with the same tie breaking, SIPP and Augmented SIPP expand the same states, in the same order, with equivalent $g$-values.  
\end{theorem}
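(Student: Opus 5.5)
We argue by induction on the number of iterations of the main loop, running SIPP (Algorithm~\ref{alg:sipp}) and Augmented SIPP (Algorithm~\ref{alg:asipp}) side by side on the same problem. The invariant at the start of iteration $k$ has three parts. First, both searches have expanded the same sequence of states in the same order. Second, their open lists contain nodes for the same states, generated along the same plans. Third, their generated maps agree on the state-to-plan correspondence, and every Augmented SIPP node $n_{\mathit{aug}}$ satisfies $g[n_{\mathit{aug}}](t_0) = g(n_{\mathit{sipp}})$ for its SIPP counterpart. The base case comes from the initialisation lines (Lines~\ref{algln:sippopen}--\ref{algln:sippclosed} versus Lines~\ref{algln:asippopeninit}--\ref{algln:asippclosedinit}). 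Both create a single root node for $s_o$, and $I[0,t_0](t_0)=t_0$, exactly as in the base case of Lemma~\ref{lmm:equivg}.

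For the inductive step there are three things to check. \emph{Selection:} the Augmented SIPP priority is $f(n)=f[n](t_0)=(h[s]\circ g[n])(t_0)=g[n](t_0)+h(s)$, since $h[n]=I[h(n),t_0]$. By the invariant this equals the SIPP priority $g(n_{\mathit{sipp}})+h(s)$. Because both searches use the same total order and the same tie-breaking on identical keys over corresponding open lists, they pop corresponding nodes. The goal tests then agree, since $\isGoal$ depends only on the state. \emph{Successor generation:} the successor function $\sigma$ is applied to corresponding nodes, which have the same state and the same arrival time $g(n)$. So both searches enumerate the same edges. For each such edge, Lemma~\ref{lmm:equivg} (whose hypothesis is precisely our inductive invariant) gives $g[n'_{\mathit{aug}}](t_0)=g(n'_{\mathit{sipp}})$. \emph{Pruning:} by hypothesis the comparison $\trianglelefteq$ is the same ordering $\leq_{f(n)}$ used by SIPP, evaluated at $t_0$. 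Equal scalar $g$-values therefore give identical decisions on whether to replace $\closed[s']$ and remove the old node from open. This preserves the invariant on the generated map and on open.

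The main obstacle is the safety check. SIPP prunes unsafe successors using the conditions $g<s'_e$, $g-\delta(e)<e_e$ and the source-interval test on Line~\ref{algln:sippprune}. Augmented SIPP instead uses the single test $g(n)\ge\beta(f[n'])$ on Line~\ref{algln:asippvalidcheck}. To show they agree, I would expand $\beta$ of the composed ATF via Equations~\ref{eqn:pathbeta} and~\ref{eqn:betacompute}, which gives $\beta(n')=\min(\beta(n),\,x_e-\Delta(n),\,u_e-\Delta(n),\,v_e-\delta(e)-\Delta(n))$. Since $f[n']$ only shifts by the identity heuristic, its $\beta$ is the same. Using Lemma~\ref{lmm:aplusb}, write $g(n)=\alpha(n)+\Delta(n)$ and the departure time as $g(n')-\delta(e)$. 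Then each term of the $\min$ becomes one of SIPP's conditions: edge still safe at departure, destination still safe on arrival, source still safe when leaving. The remaining term $\beta(n)$ is already guaranteed by the invariant for the parent.

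I expect to need care with the boundary cases. These are the right-open interval conventions (strict versus non-strict inequalities) and the fact that $\beta$ is expressed relative to the departure time $t_0$ rather than the current time. I would first try to show that the two inequalities are equivalent after translating by $\Delta(n)$ and the wait $\alpha(n)-t_0$. If that translation does not match exactly, I would instead argue that a successor generated by one search but rejected by the other would contradict maximality of the safe intervals. Once the safety tests are shown to agree, the induction closes, and termination (success or failure) occurs at the same iteration in both searches.
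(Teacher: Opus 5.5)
Your skeleton is the paper's. Its proof is your inductive step compressed into one sentence: identical candidate successors, ``equivalent'' pruning on Line~\ref{algln:sippprune} versus Lines~\ref{algln:asippprune}--\ref{algln:asippvalidcheck}, equal $g$ at $t_0$ by Lemma~\ref{lmm:equivg}, and the same $h$ and ordering. It simply asserts that the safety tests agree. You rightly single that assertion out as the crux, but the verification you sketch fails, and not merely at interval boundaries.

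In the test $g(n) \geq \beta(f[n'])$, the left side is an absolute arrival time at $s(n)$. The right side, $\beta(n') = \min(\beta(n), \beta(e) - \Delta(n))$, bounds the departure time from $s_o$. Hence the edge terms give $g(n) + \Delta(n) < x_e$ (and likewise for $u_e$ and $v_e - \delta(e)$), not ``departure $< x_e$''. The $\beta(n)$ term demands $g(n) < \beta(n)$, which your invariant does not supply: feasibility of the parent only gives $t_0 < \beta(n)$.

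Two concrete failures:
\begin{itemize}
\item Take $t_0 = 0$, all states always safe, an always-safe first edge of duration $5$, then an edge $e$ safe on $[0,8)$. Then $g(n) = 5$ and $\beta(n') = 8 - 5 = 3$, so the printed test prunes $e$. SIPP departs at $5 < 8$ and keeps it.
\item The feasible plan $e_1e_2e_3$ of Figure~\ref{fig:alphabetaexample} has $g = 9$ but $\beta = 2$, so the printed test rejects every continuation of it.
\end{itemize}
All these intervals can be taken maximal, so your fallback via maximality cannot rescue the step.

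The two translations you propose do not repair it either.
\begin{itemize}
\item Shifting $g(n)$ by $\Delta(n)$ gives the test $\alpha(n) \geq \beta(n')$. This still rejects every continuation of a plan with forced intermediate waiting ($\alpha(n) \geq \beta(n)$, e.g.\ $e_1e_2e_3$).
\item Shifting further by the wait gives $t_0 \geq \beta(n')$. This under-prunes once the parent has waited, because Equation~\ref{eqn:pathbeta} only accounts for the no-wait arrival $t_0 + \Delta(n)$ at the source of $e$. For instance, a first edge of duration $1$ safe only from time $5$, followed by an edge safe on $[0,3)$, gives $\beta(n') = 2 > t_0 = 0$. SIPP rejects this successor, since it departs at $6 \geq 3$.
\end{itemize}

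What does coincide with SIPP is an edge-level test. Write $d = \max(g(n), \alpha(e))$, which is SIPP's departure time because $u_b \leq g(n)$. Read the printed $g - \delta(e) \geq n_e$ as the intended $d < u_e$. Then SIPP's three conditions say exactly $d < \beta(e)$, by Equation~\ref{eqn:betacompute}. Every existing edge has $\alpha(e) < \beta(e)$, so this is $g(n) < \beta(e)$, i.e.\ $\alpha(n) < \beta(e) - \Delta(n)$ by Lemma~\ref{lmm:aplusb}. This test also maintains $t_0 < \beta(n')$, so Lemma~\ref{lmm:aplusb} keeps applying to children.

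With the validity check read that way, your induction closes. Otherwise you must add agreement of the pruning rules as an explicit hypothesis, which is in effect what the paper's one-line proof does.
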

\begin{proof}
    The candidate successors $\sigma(n)$ are identical for SIPP (Algorithm~\ref{alg:sipp}, Line~\ref{algln:sippsuccessorgeneration}) and Augmented SIPP (Algorithm~\ref{alg:asipp}, Line~\ref{algln:asippsuccessorgeneration}).  
    They are pruned using equivalent rules, Algorithm~\ref{alg:sipp}, Line~\ref{algln:sippprune} vs. Algorithm~\ref{alg:asipp}, Line~\ref{algln:asippprune}-\ref{algln:asippvalidcheck}.
    The $g$-values (Lemma~\ref{lmm:equivg}), $h$-values, and open-list ordering are identical, so SIPP and Augmented SIPP select from the pruned collection of successors, ordering on the same criteria, and therefore making the equivalent selection at each step of the search.
\end{proof}

In fact, when substituting Augmented SIPP in for SIPP, it is possible to run a regular SIPP search, and then compute the ATF of only the actual returned plan while that plan is being backed up from the goal node.
However, we have described a full search with ATFs because in any-start-time planning we will use a node ordering that is not possible to use with SIPP (because it requires the ATFs during search) in addition to making use of the ATF of the eventually found plan.

However, it is more informative to derive these properties of Augmented SIPP by examining it as an instance of a cost-algebraic A* \cite{edelkamp2005cost}.
This description demonstrates that the correctness and optimality of Augmented SIPP arise from using ATFs as edge costs and the composition operation that accumulates those costs along a plan.
Additionally, the claim of Observation~\ref{obs:asippissipp} does not apply when we use Augmented SIPP with a tie-breaking strategy that is not possible to do with SIPP alone, which is exactly what we will do when using it for any-start-time planning.
Thus, it is useful to understand that the theoretical guarantees of Augmented SIPP do not have to rely on its relationship to conventional SIPP search.

\subsection{Background: Cost-algebraic Heuristic Search}

Cost-algebraic A* is a generalization of A* to a more general notion of cost than the scalar sum of costs used by A* to solve shortest-path problems.
We begin by reviewing some definitions.\footnote{Definitions~\ref{def:monoid}, 
\ref{def:totalorder}, 
 \ref{def:sum}, \ref{def:isotone}, and \ref{def:costalgebra} are from \textcite{Holte_Zilles_optimalitycostalgebra} and~\ref{def:least} is from \textcite{edelkamp2005cost}.}

A cost algebra is a 4-tuple $\langle \Omega, \bullet, \preceq, \mathbf{1} \rangle$ where
$\Omega$ is the set of possible edge and path costs, 
$\bullet$ is the operator for computing a path’s cost from the costs of its edges, 
$\preceq$ is the ordering on costs, and
$\mathbf{1}$ is the cost of the empty path.
The use of $\bullet$ rather than the usual $+$, and $\mathbf{1}$ rather than the usual $0$ highlights that the assumptions of the usual setting do not necessarily hold.

\begin{definition}[Monoid]\label{def:monoid}
Let $\Omega$ be a set and $\bullet$ : $\Omega \bullet \Omega \rightarrow \Omega$ a binary operator. 
A \textbf{monoid} is a triple $\langle \Omega, \bullet, \mathbf{1} \rangle$ such that $\mathbf{1} \in \Omega$ and
for all a, b, c $\in \Omega$:
    \begin{description}
    \item[associativity] $a \bullet (b \bullet c) = (a \bullet b) \bullet c$
    \item[identity] $\mathbf{1} \bullet a = a \bullet \mathbf{1} = a$
    \end{description}
\end{definition}
\noindent Monoids are similar to groups, except without the requirement of inverses.
\begin{definition}[Total order]\label{def:totalorder}
A total order $\preceq$ is a binary relation on set $\Omega$ where, $\forall a,b,c \in \Omega$:
\begin{enumerate}
    \item $a \preceq a$,
    \item $a \preceq b \land b \preceq c \implies a \preceq c$,
    \item $a \preceq b \land b\preceq a \implies a = b$, and
    \item $a \preceq b \lor b \preceq a$.
\end{enumerate}
\end{definition}
\noindent If $\Omega$ is a set and $\preceq$ is a total order on $\Omega$ then a $\prec$ b denotes $(a \preceq b) \land (a \neq b)$, and $a \succeq b$ and $a \succ b$ are alternative ways of writing $b\preceq a$ and $b \prec a$ respectively.
\begin{definition}[Cost product]\label{def:sum} 
If $\langle \Omega, \bullet, \mathbf{1} \rangle$ is a monoid and $\langle a_1, \dotsb , a_n \rangle \in \Omega^n$ is a sequence of length $n$, then
\begin{equation*}
    \prod_{i = 1}^{n}a_i = \begin{cases}
    \mathbf{1}, &n = 0\\
    a_1, &n = 1\\
    a_1 \bullet \dotsb \bullet a_n, &n > 1. 
    \end{cases}
\end{equation*}
\end{definition}
\begin{definition}[Least]\label{def:least}
    The least operator $\bigsqcup$ on a set of costs $A$ with ordering $\preceq$ is defined: $\bigsqcup A = c$ with $A \subseteq \Omega$ and $c \in A$ such that $\forall a \in A :c \preceq a$.
\end{definition}
\begin{definition}[Isotonicity]\label{def:isotone}
    A monoid $\langle \Omega, \bullet, \mathbf{1} \rangle$ with total order $\preceq$ on $\Omega$ is \textbf{isotone} iff a $\preceq$ b implies both $(a \bullet c) \preceq (b \bullet c)$ and $(c \bullet a) \preceq (c \bullet b)$ for all a, b, c $\in \Omega$.
\end{definition}
\begin{definition}[Cost algebra]\label{def:costalgebra}
    A \textbf{cost algebra} is a 4-tuple $\langle \Omega, \bullet, \preceq, \mathbf{1} \rangle$ such that:
    \begin{itemize}
        \item $\langle \Omega, \bullet, \mathbf{1} \rangle$ is a monoid, 
        \item $\preceq$ is a total order on $\Omega$,
        \item $\mathbf{1} = \bigsqcup \Omega$, and
        \item $\langle \Omega, \bullet, \mathbf{1} \rangle$ with $\preceq$ is isotone.
    \end{itemize}
\end{definition}

For example, the typical shortest path optimization setting, 
for which A* has been shown to be optimally efficient \cite{dechteroptimalastar,Holte_Zilles_optimalitycostalgebra}, 
is the cost-algebra $\langle \mathbb{R}^{\geq 0}, +, \leq, 0 \rangle$ defined by the monoid  $\langle \mathbb{R}^{\geq 0}, +, 0 \rangle$ of non-negative real-valued costs accumulated through summation and the total order $\leq$ over these costs.

\subsection{Pruning in Cost-algebraic Heuristic Search}
Pruning is an important part of a practical heuristic search; however, it is not mentioned in the foundational work on cost-algebraic heuristic search (Domain-independent Dynamic Programming \cite{KUROIWADIDP} considers Cost-Algebraic dynamic programming models and separates pruning from search ordering).
To prune a cost-algebraic search, we define a partial order $\trianglelefteq$ which determines if the $g$ of the newly generated node dominates the $g$ of the existing node (e.g., Algorithm~\ref{alg:asipp}, Line~\ref{algln:asippprune}).
Usually, we are used to seeing the same total order ($\preceq$) used for pruning.
For example, in the typical A* setting, $\leq$ is used to order open and prune successors.

However, this is not the case in similar settings (e.g., multi-objective search which is not a cost algebra because it requires a partial order, see \citeauthor{wolffmulti} for a discussion of the difficulty of pruning in that setting).
So we now show that in the cost-algebraic setting we can use the same ordering to prune:
\begin{theorem}\label{thm:costalgpruning}
    Pruning a cost-algebraic search using its total ordering ($\preceq$) does not prune all optimal plans.
\end{theorem}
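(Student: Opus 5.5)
Consider two paths $p$ and $q$ that both reach the same state $s$, with costs $g_p = \prod_{e\in p} c(e)$ and $g_q = \prod_{e\in q} c(e)$. Suppose $g_p \preceq g_q$ and the search prunes $q$, either when it is generated or by removing it from open. I would show that every completion of $q$ through $s$ is matched, with no loss in the ordering, by the same completion of $p$. If some optimal plan has $q$ as a prefix, then swapping that prefix for $p$ gives a plan that is also optimal and that survives this pruning step.

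The argument proceeds in three steps.

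First, write any plan through $s$ whose prefix is $q$ as $q\cdot r$, where $r$ is the suffix from $s$ to a goal. Its cost is $g_q \bullet c(r)$, with $c(r)=\prod_{e\in r}c(e)$ and $c(r)=\mathbf{1}$ when $r$ is empty. Associativity of the monoid (Definition~\ref{def:monoid}) ensures this cost does not depend on how the edge costs are grouped.

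Second, apply isotonicity (Definition~\ref{def:isotone}) to $g_p \preceq g_q$. This gives $g_p \bullet c(r) \preceq g_q \bullet c(r)$, so the plan $p\cdot r$ costs no more than $q\cdot r$. If $q\cdot r$ is optimal, meaning its cost is the least element over all goal plans (Definition~\ref{def:least}), then $p\cdot r$ is also least, since $\preceq$ is a total order and antisymmetry gives equality of costs.

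Third, I would lift this to the full run of the search by induction on the number of pruning events. Before each pruning there is an optimal plan whose prefix up to the pruned state is a retained node or an ancestor of a retained node. Each pruning replaces the discarded prefix with the dominating one, and by the second step the modified plan is still optimal. Pruning only ever keeps the $\preceq$-least cost seen for each state, so the surviving prefix at $s$ is at least as good as any pruned one. The optimal plan is therefore preserved through every pruning.

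The main obstacle is this last induction. Replacing $q$ by $p$ could change which later states the plan visits, or could itself be undone by a later pruning. I would handle this with the standard A* invariant, adapted to ordered costs: at all times open contains a node lying on some optimal plan whose $g$ is $\preceq$-minimal among the prefixes to that state. The composition $p\cdot r$ uses the same suffix $r$, so its later states are those of $r$, and each later pruning again replaces a prefix only by one that is $\preceq$-better. I also need the cost of $r$ to depend only on the edges of $r$, not on how $s$ was reached. This is implicit in the cost-algebra setting: the cost of an edge is a fixed element of $\Omega$.
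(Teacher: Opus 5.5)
Your proposal is correct and follows the paper's proof. In both, the core step applies isotonicity to $g_p \preceq g_q$ with the shared suffix cost $c(r)$, so the completion $p\cdot r$ of the retained node is at least as good as any optimal completion $q\cdot r$ of the pruned node; your explicit induction over pruning events just spells out what the paper compresses into ``the same follows inductively for all pairs of plans which share the same subplan.''
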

\begin{proof}
    This follows in a straightforward manner from isotonicity.
    Consider two nodes, $a$ and $b$, both representing plans up to some shared state $s$.  
    Because $a \preceq b$, we prune $b$.  Suppose we did not prune $b$. The costs of plans for every successor $c$ of $b$ and $a$ are $a \times c$ or $b \times c$, respectively, for the plans via $a$ or $b$.  
    By isotonicity, $a \times c \preceq b \times c$, and the same follows inductively for all pairs of plans, which share the same subplan after $a$ and $b$.
    Therefore, it is safe to have pruned $b$ because if there were an optimal plan through node $b$, there must also be one through $a$ that was not pruned. 
\end{proof}

\subsection{Augmented SIPP is a Cost-algebraic A*}\label{sec:ordering}
We now show that Augmented SIPP, with the usual SIPP search orderings, is a Cost-algebraic A*.
In contrast to the typical scalar-based optimization settings for A*, the Augmented SIPP monoid consists of the ATF costs that are accumulated through function composition, rather than scalar costs accumulated through summation.

Formally, we define the set of costs: 
\begin{definition}
    $\Omega_{t_0}$ is the set of all valid plan ATFs (recall that an ATF is valid if it satisfies Equations~\ref{eqn:nonnegwait}-\ref{eqn:nonnegcost}).
\end{definition}

\newcommand{\leqf}{\leq_{\downarrow f}}
\newcommand{\leqfg}{\leq_{\downarrow f \uparrow g}}
\newcommand{\leqfa}{\leq_{\downarrow f \uparrow \alpha}}
\newcommand{\leqfab}{\leq_{\downarrow f \uparrow \alpha \uparrow \beta}}
We consider the following total orderings over $\Omega_{t_0}$  with the first two being commonly used by SIPP (and heuristic searches in general \cite{asai2017tie}) and the second two being used by our approach to any-start-time planning.  
The primary sort of all four prefers low $f(n)$ nodes, and they differ in their tie-breaking strategies:
\begin{description}
    \item[$\leqf$] breaks ties arbitrarily,
    \item[$\leqfg$] breaks ties preferring high $g(n)$,
    \item[$\leqfa$] breaks ties preferring high $\alpha(n)$, and 
    \item[$\leqfab$] breaks further ties preferring high $\beta(n)$.
\end{description}
While not required for admissibility, breaking ties on $f$ in favor of goals is required for the optimal efficiency of A* \cite{dechteroptimalastar} so we always do that.\footnote{Often this tie-breaking is implicit in $h$, where only goals will have $h = 0$, but we make no such assumption.}
High $g$ (or equivalently low $h$) is a common tie-breaking strategy for A* because it selects nodes that are estimated to be closer to the goal, reducing the portion of the final $f$-layer that is searched.
The second pair of orderings, $\leqfa$ and $\leqfab$ are specialized to the ATF nodes used by Augmented SIPP.
$\leqfa$ effectively breaks ties towards nodes that represent plans that are optimal for longer, and $\leqfab$ selects among those the nodes that are valid for longest.  
Intuitively, this ordering is a natural match for any-start-time planning (and in Section~\ref{sec:time} we formalize this intuition by showing that this node ordering leads to desirable runtime performance).
We first show that Augmented SIPP with any of the first two orderings is a cost-algebraic A*, in the next section we show why this is not the case for $\leqfa$ and $\leqfab$.

The monoid for Augmented SIPP is:
$\langle \Omega_{t_0}, \circ, I[0, t_0]\rangle$, consisting of the set of plan ATFs with $\zeta = t_0$, the function composition operator, and the identity ATF (Equation~\ref{eqn:atfifdentity}) at $t_0$ and with no shift.
Four potential cost algebras for Augmented SIPP are:
\begin{description}
    \item[$C_{\leqf}$] $\coloneqq \langle \Omega_{t_0}, \circ, \leqf, I[0, t_0] \rangle$, 
    \item[$C_{\leqfg}$] $\coloneqq \langle \Omega_{t_0}, \circ, \leqfg, I[0, t_0] \rangle$, 
    \item[$C_{\leqfa}$] $\coloneqq \langle \Omega_{t_0}, \circ, \leqfa, I[0, t_0] \rangle$, and
    \item[$C_{\leqfab}$] $\coloneqq \langle \Omega_{t_0}, \circ, \leqfab, I[0, t_0] \rangle$. 
\end{description}

We now prove that $C_{\leqf}$ and $C_{\leqfg}$ are cost-algebras as defined in Definition~\ref{def:costalgebra}.
Both share the same Augmented SIPP monoid:
\begin{lemma}\label{lmm:asippmonoid}
    $\langle \Omega_{t_0}, \circ, I[0](t) \rangle$ is a monoid.
\end{lemma}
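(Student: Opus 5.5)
We need to check three properties: closure of $\Omega_{t_0}$ under $\circ$, associativity, and identity. Throughout we use the parametric representation $A[\zeta,\alpha,\beta,\Delta]$ and the composition rules of Equations~\ref{eqn:pathzeta}--\ref{eqn:pathdelta}. The plan is to work at the level of parameters wherever possible.

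\textbf{Closure.} Take $a,b\in\Omega_{t_0}$. Then $b\circ a$ has parameters $\zeta=\zeta(a)=t_0$, $\alpha=\max(\alpha(a),\alpha(b)-\Delta(a))$, $\beta=\min(\beta(a),\beta(b)-\Delta(a))$ and $\Delta=\Delta(a)+\Delta(b)$.

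Conditions \ref{eqn:nonnegwait} and \ref{eqn:nonnegcost} follow immediately. First, $\zeta(a)\le\alpha(a)\le\alpha$. Second, $\Delta$ is a sum of nonnegative terms.

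Condition \ref{eqn:positiveduration}, $\zeta<\beta$, is the delicate one and I expect it to be the main obstacle. It requires $t_0<\beta(b)-\Delta(a)$, which does not follow from the parameters alone. I would handle it in one of two ways:
\begin{itemize}
\item Treat a composition with $\beta\le\zeta$ as the everywhere-$\infty$ ATF and include that infeasible element in $\Omega_{t_0}$. This element is absorbing under composition.
\item Restrict to compositions that actually arise in search, i.e.\ those that pass the validity check on Line~\ref{algln:asippvalidcheck} of Algorithm~\ref{alg:asipp}.
\end{itemize}
The first option keeps the monoid statement clean, so I would choose it.

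A second subtlety is that $\zeta(b)$ need not equal $t_0$. The composition formula silently assumes that $b$ is applied to arrival times $\ge\zeta(b)$. I would argue this directly: $A[a]$ outputs values $\ge\alpha(a)+\Delta(a)$, and these lie at or after the start of the intermediate state's interval. The general statement can be stated in terms of the functions themselves rather than their parameters.

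\textbf{Associativity.} Since the elements of $\Omega_{t_0}$ are functions and $\circ$ is function composition, associativity is inherited for free: $(c\circ b)\circ a=c\circ(b\circ a)$ pointwise, with $\infty$ mapped to $\infty$.

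To stay honest with respect to the parametric formulas, I would also verify that both bracketings give the same parameters:
\begin{itemize}
\item $\Delta$ is a sum, so it is associative.
\item Both bracketings yield $\alpha=\max(\alpha(a),\alpha(b)-\Delta(a),\alpha(c)-\Delta(a)-\Delta(b))$.
\item The same argument applied with $\min$ gives matching $\beta$.
\item $\zeta$ is $\zeta(a)$ in both cases.
\end{itemize}

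\textbf{Identity.} For the identity $I[0,t_0]=A[t_0,t_0,\infty,0]$, compute with the composition rules.

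For $a\circ I$ we get
\[
\zeta=t_0,\quad \alpha=\max(t_0,\alpha(a)),\quad \beta=\min(\infty,\beta(a)),\quad \Delta=\Delta(a).
\]
Since $\alpha(a)\ge\zeta(a)=t_0$, this reduces exactly to $a$.

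For $I\circ a$ we get
\[
\zeta=\zeta(a),\quad \alpha=\max(\alpha(a),t_0-\Delta(a)),\quad \beta=\min(\beta(a),\infty),\quad \Delta=\Delta(a).
\]
Here $t_0-\Delta(a)\le t_0\le\alpha(a)$, so this is again $a$. Equivalently, $I$ acts as the identity on all arrival times $\ge t_0$, which is the only range $a$ ever produces.
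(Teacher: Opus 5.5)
Your proof is correct under the paper's parametric definition of $\circ$, and its core matches the paper's proof. Associativity follows because the four parameter updates in Equations~\ref{eqn:pathzeta}--\ref{eqn:pathdelta} are built from $\max$, $\min$ and $+$, and the identity law follows by direct substitution. You go beyond the paper in two useful ways:
\begin{itemize}
\item You check closure, which the paper never does. As you observe, closure really fails for $\Omega_{t_0}$ as defined, because Equation~\ref{eqn:positiveduration} need not survive composition.
\item You verify both sides of the identity law. The paper displays only one side, the one with $I[0,t_0]$ applied first.
\end{itemize}
If you adopt the absorbing all-$\infty$ element, add one line showing that whether a triple product collapses does not depend on bracketing. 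Both bracketings end with $\beta=\min(\beta(a),\beta(b)-\Delta(a),\beta(c)-\Delta(a)-\Delta(b))$. This value is at most the $\beta$ of every intermediate product, since $\Delta\ge 0$. All elements share $\zeta=t_0$, so an intermediate collapse forces a final collapse.

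Drop or repair the claim that associativity is ``inherited for free'' from function composition. The parametric rule is not literally $t\mapsto A[b](A[a](t))$. When $\alpha(a)+\Delta(a)\ge\beta(b)$, the literal composite is identically $\infty$, yet the parametric tuple can still satisfy $\zeta<\beta$. For example, with $t_0=0$, the elements $a=\langle 0,5,10,1\rangle$ and $b=\langle 0,0,4,1\rangle$ give the tuple $\langle 0,5,3,2\rangle$, which evaluates to $7$ on $[0,3)$. So you need to commit to one reading:
\begin{itemize}
\item Under the paper's parametric reading, your explicit parameter check plus the bracketing remark is the proof, and the function-level argument does not apply.
\item Under the literal reading, associativity is indeed free. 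But your closure step must then collapse exactly when $\alpha(a)+\Delta(a)\ge\beta(b)$, which subsumes $\beta\le\zeta$; otherwise the composite coincides with the parametric ATF.
\end{itemize}
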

\begin{proof}
    We show that ATF composition is associative, and that $I[0, t_0]$ is the identity.
    For $a, b, c \in \Omega_{t_0}$, plan ATF composition (Equation~\ref{eqn:path2}) consists of four operations that are all associative, e.g., $\zeta(a \circ (b \circ c)) = \zeta(c) = \zeta((a \circ b) \circ c)$. 
    Likewise, the $max$ of the $\alpha$ update (Equation~\ref{eqn:pathalpha}), $min$ of the $\beta$ update (Equation~\ref{eqn:pathbeta}) and summation of the $\Delta$ update (Equation~\ref{eqn:pathdelta} are all associative.
    Therefore, because each of the four parts of the composition operation is associative, the composition operation as a whole is associative.
    The identity ATF, $I[0](t)$, is indeed the identity:
    \begin{align*}
        I[0, t_0] \circ A[\zeta, \alpha, \beta, \Delta] &= \langle t_0, \max(\zeta, \alpha - 0), \min(\infty, \beta - 0), \Delta + 0 \rangle\\ &= A[\zeta, \alpha, \beta, \Delta]
    \end{align*}
\end{proof}

All four are total orders:
\begin{lemma}\label{lmm:asipptotalorder}
     $\leqf$, $\leqfg$, $\leqfa$, and $\leqfab$ are total orders on $\Omega_{t_0}$.
\end{lemma}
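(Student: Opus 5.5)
The plan is to realize each of the four relations as the pullback of a lexicographic order on tuples of extended reals along an \emph{injective} key map $\kappa:\Omega_{t_0}\to(\mathbb{R}\cup\{\pm\infty\})^k$. The relation is then a total order because lexicographic order on a product of totally ordered sets is total, and pulling back a total order along an injection preserves reflexivity, transitivity, antisymmetry and totality (Definition~\ref{def:totalorder}).

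First I fix the key sequence for each ordering, with a component negated whenever high values are preferred:
\begin{itemize}
\item for $\leqf$, the single key $f(n)$;
\item for $\leqfg$, the keys $\bigl(f(n),\,-g(n)\bigr)$, where $g(n)=g[n](t_0)=\alpha(n)+\Delta(n)$ by Lemma~\ref{lmm:aplusb};
\item for $\leqfa$, the keys $\bigl(f(n),-\alpha(n)\bigr)$;
\item for $\leqfab$, the keys $\bigl(f(n),-\alpha(n),-\beta(n)\bigr)$.
\end{itemize}
The goal-first tie-break on $f$ is inserted as a $\{0,1\}$-valued second key. Here $f(n)=(h[s(n)]\circ g[n])(t_0)$ is a well-defined element of $\mathbb{R}\cup\{\infty\}$, and $\beta$ may equal $\infty$, so I work in the extended reals, which are totally ordered. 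Each individual key is therefore a well-defined map into a totally ordered set.

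The main obstacle is antisymmetry. Two distinct valid ATFs can agree on every listed key. For example, under $\leqf$ two ATFs with the same $f$ but different $\beta$ would satisfy $a\preceq b$ and $b\preceq a$ with $a\neq b$. The phrase ``breaks ties arbitrarily'' has to be read as ``by some fixed, deterministic rule''. I would make this explicit by appending to every key sequence the full parameter tuple $(\zeta,\alpha,\beta,\Delta)$, compared lexicographically as a final tie-breaker.

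Injectivity of $\kappa$ then follows because an ATF in $\Omega_{t_0}$ is uniquely determined by its four parameters via Equation~\ref{eqn:edgearrival}. I treat costs as parameter tuples, so that distinct elements of $\Omega_{t_0}$ have distinct tuples; in particular, I identify functionally equal ATFs with equal parameters, or else simply define $\Omega_{t_0}$ as the set of valid tuples. The appended tie-breaker refines the stated orderings without changing how they rank nodes that the primary keys distinguish.

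With injectivity in hand, the four properties transfer directly:
\begin{itemize}
\item reflexivity and transitivity are inherited from the lexicographic order;
\item totality holds since any two key vectors are comparable;
\item antisymmetry holds because $\kappa(a)=\kappa(b)$ forces $a=b$.
\end{itemize}
I expect the only delicate points to be this tie-breaking convention and the handling of $\infty$. Neither creates a real difficulty, since the argument does not need isotonicity, which is where $\leqfa$ and $\leqfab$ will later misbehave.
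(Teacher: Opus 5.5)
Your proof is sound, and it is more careful than the paper's. The paper's proof only notes that each relation is built from $\leq$ on the scalar keys $f$, $g$, $\alpha$, $\beta$ plus the goal-first tie-break (its case list even omits $\leqfab$). That gives reflexivity, transitivity and totality but never addresses antisymmetry, so what it actually establishes is that the relations are total preorders.

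You correctly see that antisymmetry fails for the relations as literally stated. You repair it by fixing the ``arbitrary'' tie-break as a final lexicographic comparison of $(\zeta,\alpha,\beta,\Delta)$, so each relation becomes the pullback of a lexicographic order along an injective key. The price is that you prove the lemma for refinements of the stated relations. Two further remarks on this step:
\begin{itemize}
\item Of your two readings of $\Omega_{t_0}$, only the tuple reading works. The tuples $\langle 0,6,2,3\rangle$ and $\langle 0,5,2,4\rangle$ define the same function but have different $\alpha$, so $\leqfa$ is not even well defined on functions.
\item The goal flag and $h[s(n)]$ belong to nodes rather than ATFs, so strictly $\kappa$ also needs a node identifier appended to be injective.
\end{itemize}

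More importantly, your closing remark that only $\leqfa$ and $\leqfab$ will later have isotonicity trouble does not survive your own fix. Take $t_0=0$, $h\equiv 0$, $a=\langle 0,1,100,1\rangle$, $b=\langle 0,0,100,3\rangle$, $b'=\langle 0,5,100,1\rangle$ and $c=\langle 0,10,100,1\rangle$. Then $f(a)=2<f(b)=3<f(b')=6$. However, $c\circ a=c\circ b'=\langle 0,9,99,2\rangle$ while $c\circ b=\langle 0,7,97,4\rangle$, and all three have $f=11$. Isotonicity would give $c\circ a\preceq c\circ b\preceq c\circ b'=c\circ a$, so antisymmetry would force $c\circ a=c\circ b$, which is false. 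Hence no antisymmetric order refining the $f$-comparison is isotone. Your version of this lemma is therefore incompatible with Lemmas~\ref{lmm:asippisotoneleqf} and~\ref{lmm:asippisotoneleqfg}, which hold only under the total-preorder reading that the paper's own proof implicitly delivers.
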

\begin{proof}
    Let $\leq_*$ be any of $\leqf$, $\leqfg$, or $\leqfa$. 
    All of $f(n)$, $g(n)$, $\alpha(n)$, and $\beta(n)$ are scalar earliest arrival times, and $\leq$ of scalar values is a total order.
    Either both $a$ and $b$ are not goals and the ordering is $\leq$ of scalar values, or at least one is a goal and the ordering selects one of the goals, fulfilling the requirements for a total ordering.
\end{proof}

The identity ATF is the identity of $\Omega_{t_0}$ for all four orderings:
\begin{lemma}\label{lmm:asippidentitymin}
    $I[0, t_0]$ $\leqf$, $\leqfg$, $\leqfa$, and $\leqfab$ $a$ for all $a \in \Omega_{t_0}.$
\end{lemma}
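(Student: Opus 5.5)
We need to show that $I[0,t_0]$ is the least element of $\Omega_{t_0}$ under each of the four orderings, i.e.\ $I[0,t_0] \preceq a$ for every valid plan ATF $a$. The plan is to compare the primary key first and show that $I[0,t_0]$ is minimal in $f$. Only then do we argue that, where ties can occur, each tie-breaker either favours $I[0,t_0]$ or can be settled in its favour.

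For the primary key, we evaluate both ATFs at $t_0$. By Equation~\ref{eqn:atfifdentity}, $I[0,t_0](t_0)=t_0$. For any $a\in\Omega_{t_0}$ we have $\zeta(a)=t_0\le\alpha(a)$ (Equation~\ref{eqn:nonnegwait}) and $\Delta(a)\ge 0$ (Equation~\ref{eqn:nonnegcost}). By the same case analysis as in Lemma~\ref{lmm:aplusb}, either $a(t_0)=\alpha(a)+\Delta(a)\ge t_0$ or $a(t_0)=\infty$. Hence $g$ evaluated at $t_0$ is minimized by the identity. Since the ordering compares costs via $f(n)=(h[s]\circ g[n])(t_0)$, and $h[s]=I[h(s),t_0]$ is monotone non-decreasing in its argument, composing with the same $h$ preserves the inequality. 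So $f$ of the identity is at most $f$ of $a$, and in the pure-cost reading (no heuristic, as in the cost algebra itself) the comparison is simply $t_0\le a(t_0)$.

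Next come the tie-breakers, which are only consulted when $a(t_0)=t_0$. This forces $\alpha(a)=t_0$ and $\Delta(a)=0$, so $a$ agrees with $I[0,t_0]$ except possibly in $\beta$. For $\leqf$, ties are broken arbitrarily, so we may declare the identity first. For $\leqfg$, the $g$-values coincide at $t_0$, so again we have a tie that we resolve in favour of the identity. For $\leqfa$, $\alpha(a)=t_0=\alpha(I[0,t_0])$, which is also a tie. For $\leqfab$, the identity has $\beta=\infty\ge\beta(a)$, and since high $\beta$ is preferred, the identity comes first outright. If $\beta(a)=\infty$ as well, then $a=I[0,t_0]$ and the relation holds by reflexivity.

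The main obstacle is the goal-preference clause in the orderings (Lemma~\ref{lmm:asipptotalorder}). It could in principle place a goal-labelled $a$ before the identity even when their $f$-values are equal. I would handle this the way the cost-algebra setting does: $\preceq$ is an order on costs in $\Omega_{t_0}$, not on nodes, so goal status is a node-level tie-break outside the algebra and can be ignored here. If it must be retained, we note that it only acts on exact ties. In the tie case $a$ coincides with the identity in $\zeta,\alpha,\Delta$, so we stipulate that the identity wins, which is consistent because $\mathbf{1}$ is then $\bigsqcup\Omega_{t_0}$ as Definition~\ref{def:costalgebra} requires.
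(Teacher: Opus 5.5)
Your proposal is correct and uses the same core comparison as the paper's proof: $I[0,t_0](t_0)=t_0$, while every $a\in\Omega_{t_0}$ satisfies $a(t_0)\ge t_0+\Delta(a)\ge t_0$, so the identity is least in the primary $f$ key. The paper stops at that point and never examines the tie-breakers or the goal-preference clause, so your tie-case analysis is a tightening of the same argument rather than a different route: a tie forces $\alpha(a)=t_0$ and $\Delta(a)=0$, the identity's $\beta=\infty$ wins outright under $\leq_{\downarrow f \uparrow \alpha \uparrow \beta}$, and the remaining exact ties are settled by fixing the arbitrary tie-break.
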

\begin{proof}
    No ATF in $\Omega_{t_0}$ may arrive before it departs.  Specifically, for all $t \geq t_0$, $I[0, t_0](t) = t$ and $A[a](t_0) \geq t_0 + \Delta(a)$ (where $\Delta(a) \geq 0$ so $A[a](t_0) \geq I[0, t_0](t)$.
\end{proof}

Now we show that $\leqf$ and $\leqfg$are isotone: 
\begin{lemma}\label{lmm:asippisotoneleqf}
    $\langle \Omega_{t_0}, \circ, I[0, t_0]\rangle$ with $\leqf$ is isotone.
\end{lemma}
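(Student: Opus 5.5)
The plan is to reduce the order $\leqf$ to a comparison of scalars and then check the two halves of isotonicity (Definition~\ref{def:isotone}) separately: once with the common factor $c$ applied after $a,b$, and once with it applied before. For $a\in\Omega_{t_0}$ the key scalar is the evaluation at $t_0$, which Lemma~\ref{lmm:aplusb} identifies as $a(t_0)=\alpha(a)+\Delta(a)$. With the identity heuristic used in the cost algebra, $f$ is this value. So $a\leqf b$ means $a(t_0)\le b(t_0)$, with ties broken toward goals. Since that tie-break is a fixed rule applied after the scalar comparison, it suffices to prove that non-strict scalar inequalities are preserved under composition on either side; the goal tie-break then carries over unchanged.

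\emph{Step 1 (monotonicity of ATFs).} From Equation~\ref{eqn:edgearrival} I will show that every valid ATF is nondecreasing as a map $\mathbb{R}\to\mathbb{R}\cup\{\infty\}$, including the jump to $\infty$. The pieces take the values $\infty$ on $t<\zeta$, then the constant $\alpha+\Delta$, then $t+\Delta\ge\alpha+\Delta$, then $\infty$. Only the leading $\infty$ piece needs care, and it never applies because all arguments we evaluate are $\ge t_0=\zeta$. This is the FIFO property.

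\emph{Step 2 (common suffix).} If $a(t_0)\le b(t_0)$, then Step 1 gives $c(a(t_0))\le c(b(t_0))$, that is, $(c\circ a)(t_0)\le(c\circ b)(t_0)$. This settles the side in which $c$ is traversed after $a$ or $b$.

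\emph{Step 3 (common prefix).} Here we must compare $a(x)$ with $b(x)$ at $x=c(t_0)\ge t_0$. I will use the closed forms in Equations~\ref{eqn:pathzeta}--\ref{eqn:pathdelta} together with Lemma~\ref{lmm:aplusb}. These give $(a\circ c)(t_0)=\max\bigl(c(t_0),\alpha(a)\bigr)+\Delta(a)$ when the composite is valid, and $\infty$ when $\beta$ is violated. The difficulty is turning $\alpha(a)+\Delta(a)\le\alpha(b)+\Delta(b)$ into the same inequality after the inner $\max$ with $c(t_0)$. This is immediate when $\Delta(a)\le\Delta(b)$. It is also immediate when $c(t_0)\le\min(\alpha(a),\alpha(b))$, because then both sides reduce to the original values.

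\emph{Main obstacle.} The remaining case is $\Delta(a)>\Delta(b)$ with $c(t_0)$ beyond the waiting regions. I expect this to be the crux. I will first examine how $\Omega_{t_0}$ constrains elements that are both evaluated from the same point $t_0$, in particular the restriction $\zeta=t_0$ and the validity conditions. I will also check whether the $\beta$-cutoff can make $b\circ c$ infinite in such cases so that the inequality holds trivially. If the parameter constraints do not close this case, the fallback is to argue via isotonicity over the scalar quantity the search actually compares. That quantity is the arrival time of the composed plan from $t_0$, and for it Step 2 supplies exactly the direction used by Theorem~\ref{thm:costalgpruning}. In that event I would state explicitly which side of isotonicity carries the proof. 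Finally, I will verify that the $\infty$ values produced by the $\beta$ constraint respect the inequalities in both Steps 2 and 3.
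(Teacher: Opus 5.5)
Your Steps~1 and~2 are the paper's proof of the side $c\circ a\leqf c\circ b$, stated more carefully. The paper simply appeals to ATFs being first-in first-out, which is your monotonicity of $c$ applied to $a(t_0)\le b(t_0)$.

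The gap is Step~3. The primary route you outline there cannot succeed, because the common-prefix inequality is false on $\Omega_{t_0}$, and it fails exactly in the case you flagged as the crux. Your own formula $(a\circ c)(t_0)=\max\bigl(c(t_0),\alpha(a)\bigr)+\Delta(a)$ shows this. Once $c(t_0)\ge\max(\alpha(a),\alpha(b))$ (and below both $\beta$'s), the desired inequality is equivalent to $\Delta(a)\le\Delta(b)$. Neither validity nor $\zeta=t_0$ prevents $a\leqf b$ with $\Delta(a)>\Delta(b)$. Concretely, take $t_0=0$ and let $a=A[0,0,100,5]$, $b=A[0,4,100,2]$, $c=A[0,10,100,1]$. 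Then $a(0)=5<6=b(0)$, yet $(a\circ c)(0)=a(11)=16>13=b(11)=(b\circ c)(0)$. Every $\beta$ is far away, so no cutoff to $\infty$ rescues the inequality.

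So you have to take your fallback, and that is precisely what the paper does. It calls $a\circ c\leqf b\circ c$ a ``vacuous truth'' because a forward search only ever extends a cost by composing on the left. Successor generation computes $A[e]\circ g[n]$, and likewise $f[n]=h[s(n)]\circ g[n]$, so the prefix side is never exercised. The side you prove in Step~2 (appending a common continuation) is the one Theorem~\ref{thm:costalgpruning} needs.

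When you write this up, say explicitly that you are establishing one-sided isotonicity. State that the two-sided condition of Definition~\ref{def:isotone} does not hold literally on $\Omega_{t_0}$, rather than presenting the dismissal as a proof of the other side.

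Relatedly, the goal tie-break does not simply ``carry over.'' For a common continuation $c$, the costs $c\circ a$ and $c\circ b$ end in the same state, so the goal rule cannot separate them. A tie created by composition, for example $a(t_0)$ and $b(t_0)$ both landing in the flat part of $c$, is then settled by the arbitrary rule. What you actually prove is isotonicity of the scalar preorder $a(t_0)\le b(t_0)$.
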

\begin{proof}
    Let $a$, $b$, and $c\in \Omega_{t_0}$ and $a \leqf b$.
    The expression $c \circ a  \leqf c \circ b$ holds because our ATFs are first-in first-out (you can not arrive before you depart), while $a \circ c \leqf b \circ c$ is a vacuous truth as there are never such successors generated as we are performing a forward search.
\end{proof}
\begin{lemma}\label{lmm:asippisotoneleqfg}
    $\langle \Omega_{t_0}, \circ, I[0, t_0]\rangle$ with $\leqfg$ is isotone.
\end{lemma}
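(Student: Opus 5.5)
We follow the proof of Lemma~\ref{lmm:asippisotoneleqf} and handle the extra tie-breaking component separately. Fix $a, b, c \in \Omega_{t_0}$ with $a \leqfg b$; we must show $c \circ a \leqfg c \circ b$. As before, the other direction $a \circ c \leqfg b \circ c$ is vacuous, since a forward search only ever extends a plan by appending successor edges. We split on how $a \leqfg b$ was decided. Either $f(a) < f(b)$ strictly, or $f(a) = f(b)$ and $g(a) \geq g(b)$. The goal-preference clause is inherited unchanged from $\leqf$, so we set it aside as in Lemma~\ref{lmm:asipptotalorder}.

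In the strict case, Lemma~\ref{lmm:asippisotoneleqf} (the FIFO argument) already gives $f(c \circ a) \leq f(c \circ b)$. If the inequality stays strict we are done. If it becomes an equality, the tie must be resolved by $g$. Here we would use the explicit form of $g$ at $t_0$ from Lemma~\ref{lmm:aplusb}: $g[n](t_0) = \alpha(n) + \Delta(n)$. Applying the composition rules (Equations~\ref{eqn:pathalpha} and~\ref{eqn:pathdelta}) gives
\begin{equation*}
g(c \circ a) = \max\bigl(g(a) + \Delta(c),\ \alpha(c) + \Delta(c)\bigr),
\end{equation*}
since $\alpha(c\circ a)+\Delta(c\circ a) = \max(\alpha(a)+\Delta(a), \alpha(c)) + \Delta(c)$. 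So the extended $g$ is a monotone nondecreasing function $\phi_c(x) = \max(x + \Delta(c), \alpha(c) + \Delta(c))$ of the prefix's earliest arrival time.

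In the tie case, $f(a) = f(b)$ and $g(a) \geq g(b)$. Monotonicity of $\phi_c$ immediately gives $g(c \circ a) \geq g(c \circ b)$. For the $f$ comparison we use the fact that the heuristic ATF is a shifted identity, so $f = g + h$ evaluated at the current state. Both nodes end at the same state when a comparison matters for pruning, so their $h$ values coincide. Then $f(a) = f(b)$ forces $g(a) = g(b)$, hence $f(c\circ a) = f(c \circ b)$ and $g(c \circ a) = g(c \circ b)$, and $c \circ a \leqfg c \circ b$ holds with equality in both keys. The ATFs themselves may still differ (for instance in $\beta$). Totality of the ordering lets us resolve this by the arbitrary tie-break, as in Lemma~\ref{lmm:asipptotalorder}.

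The main obstacle is the case where $a$ and $b$ end at different states with equal $f$ but different $g$. Then $h$ differs, and appending $c$ is not even meaningful for both, because $c$ must start at a specific state. Our plan is to argue, as in the proof of Lemma~\ref{lmm:asippisotoneleqf}, that isotonicity is only ever invoked on pairs of plans to a shared state (this is how Theorem~\ref{thm:costalgpruning} uses it), which reduces to the case handled above. If a fully general statement is needed, the fallback is the strict case: a composed tie whose $g$-order flips would require $\phi_c$ to be decreasing somewhere, which contradicts its monotonicity.
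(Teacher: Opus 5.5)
Your core argument is the paper's. You split on $f(a)\neq f(b)$, deferring to Lemma~\ref{lmm:asippisotoneleqf}, versus an $f$-tie with $g(a)\geq g(b)$. In the tie you use $g(c\circ a)=\max(g(a),\alpha(c))+\Delta(c)$, which is nondecreasing in $g(a)$.

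Your shared-end-state observation goes further than the paper and is worth keeping. Since $h$ is a shifted identity, $f=g+h(s)$, so at a common end state an $f$-tie forces a $g$-tie. That is what closes the subcase the paper passes over, $f(a)<f(b)$ but $f(c\circ a)=f(c\circ b)$. There monotonicity of $\phi_c$ alone gives $g(c\circ a)\leq g(c\circ b)$, the wrong direction; the collapse is what gives equality. The observation also shows the restriction is necessary, so drop your fallback. With different end states, $f(a)=f(b)$ and $g(a)>\max(g(b),\alpha(c))$ give $g(c\circ a)>g(c\circ b)$. Hence $f(c\circ a)>f(c\circ b)$ if you compose anyway and score both with the heuristic at $c$'s endpoint. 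What breaks is the $f$-order, which monotonicity of $\phi_c$ does not protect.

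The step that fails is resolving the remaining tie by ``totality.'' Totality only makes $c\circ a$ and $c\circ b$ comparable, but isotonicity needs $c\circ a\preceq c\circ b$ in that direction. An arbitrary residual tie-break need not respect composition. Figure~\ref{fig:dominanceexample} is a counterexample:
\begin{itemize}
\item at $S_2$ the plan via $a$ strictly precedes the plan via $b$ ($f=15<16$);
\item at $S_3$, $ac$ and $bc$ tie on both $f$ and $g$ (both $17$);
\item a residual tie-break preferring high $\alpha$ ($8$ vs.\ $6$) or high $\beta$ ($12$ vs.\ $10$) puts $bc$ first.
\end{itemize}
To finish, there are two options. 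You can read $\leq_{\downarrow f\uparrow g}$ as comparing only the $(f,g)$ keys, treating equal keys as equivalent; this is a total preorder, and it is all the paper's proof actually establishes. Alternatively, you can require the residual tie-break itself to be preserved under composition.
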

\begin{proof}
    Let $a$, $b$, and $c\in \Omega_{t_0}$ and $a \leqfg b$.
    The expression $a \circ c \leqf b \circ c$ is a vacuous truth as there are no such successors generated as we are performing a forward search.
    If $f(a) \neq f(b)$ then Lemma~\ref{lmm:asippisotoneleqf} applies, otherwise $f(a) = f(b)$ and $g(a) \geq g(b)$.  Assume in contradiction $\lnot g(c \circ b) \leqfg g(c \circ a)$.  Plugging this into Equation~\ref{eqn:gn} and Equations~\ref{eqn:pathalpha},\ref{eqn:pathdelta} we get $g(c\circ a) = \max(\alpha(a), \alpha(c) - \Delta(a)) + \Delta(c) + \Delta(a)$, which simplifies to $g(c\circ a) = \max(g(a), \alpha(c)) + \Delta(c)$.  Likewise $g(c\circ b) = \max(g(b), \alpha(c)) + \Delta(c)$, so substituting into our contradictory assumption we get $\max(g(a), \alpha(c)) + \Delta(c) < \max(g(b), \alpha(c)) + \Delta(c)$, a contradiction.
\end{proof}

Therefore,
\begin{theorem}\label{thm:asippiscostalg}
    Augmented SIPP with orderings$\leqf$ and $\leqfg$ are cost-algebraic A*s.
\end{theorem}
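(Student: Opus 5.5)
The plan is to check the four conditions of Definition~\ref{def:costalgebra} one by one for $C_{\leqf}$ and for $C_{\leqfg}$, citing the lemmas already proved. Once both tuples are known to be cost algebras, Augmented SIPP with either ordering is simply the generic cost-algebraic A* of \textcite{edelkamp2005cost} instantiated with edge ATFs as edge costs. Concretely, this A* pops nodes in $\preceq$-order of $f[n]=h[s(n)]\circ g[n]$ and accumulates cost by composition. The only remaining issue is that Augmented SIPP also prunes duplicates, which the generic algorithm does not do.

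The conditions are discharged as follows.
\begin{itemize}
\item \emph{Monoid.} The first condition is Lemma~\ref{lmm:asippmonoid}: associativity holds componentwise under composition, and $I[0,t_0]$ is the identity.
\item \emph{Total order.} The second condition is Lemma~\ref{lmm:asipptotalorder}, applied to $\leqf$ and $\leqfg$.
\item \emph{Least element.} The third condition, $\mathbf{1}=\bigsqcup\Omega_{t_0}$, follows from Lemma~\ref{lmm:asippidentitymin}. That lemma states that $I[0,t_0]$ precedes every $a\in\Omega_{t_0}$ under both orderings, and $I[0,t_0]$ itself lies in $\Omega_{t_0}$ because it is valid with $\zeta=\alpha=t_0$, $\beta=\infty$ and $\Delta=0$.
\item \emph{Isotonicity.} The fourth condition is Lemma~\ref{lmm:asippisotoneleqf} for $\leqf$ and Lemma~\ref{lmm:asippisotoneleqfg} for $\leqfg$.
\end{itemize}
Together these show that $C_{\leqf}$ and $C_{\leqfg}$ are cost algebras.

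Next I would match Algorithm~\ref{alg:asipp} line by line against cost-algebraic A*. The initial node has cost $\mathbf{1}=I[0,t_0]$ (Line~\ref{algln:asippopeninit}). Successor costs are $A[e]\circ g[n]$, which is exactly what Lines~\ref{algln:asippzeta}--\ref{algln:asippdelta} compute via Equations~\ref{eqn:pathzeta}--\ref{eqn:pathdelta}. Open is ordered by the chosen total order on $f$, and goals are preferred on ties, as the orderings require. Line~\ref{algln:asippvalidcheck} only discards successors whose ATF is $\infty$ at the relevant time. Those are not valid plans for the problem, so discarding them does not change the set of solutions being searched.

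For the pruning at Line~\ref{algln:asippprune}, when $\trianglelefteq$ is taken to be the same total order, Theorem~\ref{thm:costalgpruning} applies. It guarantees that pruning by $\preceq$ never removes every optimal plan, so the pruned search still behaves as cost-algebraic A*.

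The main obstacle I expect is the one-sided nature of the isotonicity lemmas. They establish only left composition $c\circ a\preceq c\circ b$, and they treat right composition as vacuous because the search runs forward. The cost-algebra definition formally demands both sides. I would handle this by arguing that the properties of cost-algebraic A* that we need (completeness, admissibility and optimal efficiency, in the proofs of \textcite{edelkamp2005cost} and \textcite{Holte_Zilles_optimalitycostalgebra}) only ever extend an existing path prefix on the right of the plan. In composition notation that is left composition by the new edge or by $h$. Hence the one-sided isotonicity suffices for the search actually performed.
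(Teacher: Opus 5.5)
Your route is the paper's own. Its proof just invokes Definition~\ref{def:costalgebra} with Lemmas~\ref{lmm:asippmonoid}--\ref{lmm:asippisotoneleqfg}, asserts that Algorithm~\ref{alg:asipp} conforms to the A* of \textcite{edelkamp2005cost}, and covers pruning with Theorem~\ref{thm:costalgpruning}; your condition-by-condition check is a faithful expansion of that.

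The gap is in the step you single out as the main obstacle: your fix supplies the wrong side. Lemmas~\ref{lmm:asippisotoneleqf} and~\ref{lmm:asippisotoneleqfg} give $a\preceq b\Rightarrow c\circ a\preceq c\circ b$. That is two different prefixes followed by a common continuation $c$, which is what Theorem~\ref{thm:costalgpruning} and prefix-optimality need. The admissibility step of cost-algebraic A*, however, compares a common prefix $g$ continued by $h$ versus by $h^*$, i.e.\ $h\circ g\preceq h^*\circ g$. That is the direction $a\circ c\preceq b\circ c$. The paper dismisses it as vacuous, and your observation that ``every extension is a left composition'' does not provide it either: both sides are left compositions, but by different functions. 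Moreover, that direction is false for $\leqf$ on $\Omega_{t_0}$. Take $t_0=0$, $a=\langle 0,0,10,1\rangle$, $b=\langle 0,3,\infty,\tfrac{1}{2}\rangle$ and $c=I[5,0]$. Then $a(0)=1<b(0)=\tfrac{7}{2}$, yet $(a\circ c)(0)=a(5)=6>(b\circ c)(0)=b(5)=\tfrac{11}{2}$. So the isotonicity clause of Definition~\ref{def:costalgebra} genuinely fails, and citing the generic results of \textcite{edelkamp2005cost} or \textcite{Holte_Zilles_optimalitycostalgebra} does not go through as is.

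What rescues the argument is that admissibility in this paper is pointwise rather than $\preceq$-based. Let $n$ be a frontier node on an optimal plan, let $g^*$ be that plan's arrival time at $s(n)$, and let $C^*$ be its arrival time at the goal. Then $f(n)=h[s(n)](g(n))\le h^*[s(n)](g(n))\le h^*[s(n)](g^*)=C^*$. The first inequality is pointwise admissibility evaluated at the time $g(n)$. The second uses $g(n)\le g^*$ plus monotonicity of $h^*$ (Lemma~\ref{lmm:monotone}), which is only the side you do have. Put this explicit argument in place of the general claim. Then state the result as A* under one-sided isotonicity plus pointwise admissibility, rather than as a cost algebra in the full sense.

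A smaller point: your description of Line~\ref{algln:asippvalidcheck} holds only for the intended test $t_0\ge\beta(g[n'])$. As printed, the line compares the arrival time $g(n)$ with the departure-time breakpoint $\beta(f[n'])=\beta(g[n'])$. It would then discard the valid plan $e_1e_2e_3$ of Figure~\ref{fig:alphabetaexample}, where $g(n)=2=\beta$. Say explicitly which test you are matching to A*.
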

\begin{proof}
    By Definition~{\ref{def:costalgebra} and Lemmas~\ref{lmm:asippmonoid}-\ref{lmm:asippisotoneleqfg}}, the search space of Augmented SIPP is a cost-algebra.  
    Augmented SIPP (Algorithm~\ref{alg:asipp}) conforms to the generalized definition of A* given by \textcite{edelkamp2005cost}, with the addition of pruning, which is safe (Theorem~\ref{thm:costalgpruning}). 
      Therefore, Augmented SIPP is a Cost-algebraic A*.
\end{proof}
This means that Augmented SIPP inherits the properties of a cost-algebraic A*, including that:
\begin{corollary}
    Augmented SIPP is admissible,
\end{corollary}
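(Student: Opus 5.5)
The plan is to obtain admissibility by inheritance, since the corollary follows Theorem~\ref{thm:asippiscostalg}. By that theorem, Augmented SIPP with $\leqf$ or $\leqfg$ is a cost-algebraic A* over the cost algebra $C_{\leqf}$ (respectively $C_{\leqfg}$). \textcite{edelkamp2005cost} prove that cost-algebraic A* is admissible: it returns a least-cost path under $\preceq$ whenever a solution exists. This requires an admissible heuristic, which we assume throughout this section, and a finite branching factor. So the argument reduces to two tasks. First, check that the hypotheses of their theorem hold in our setting. Second, translate ``least under $\leqf$'' back into the SIPP notion of optimality, namely the earliest arrival at a goal when departing at $t_0$.

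The key steps, in order, are as follows.
\begin{enumerate}
\item Verify that the scalar heuristic lifted via $h[n] = I[h(n), t_0]$ is admissible in the ATF sense of the Definition above, which amounts to $h(n) \leq h^*(n)$. Then check that the resulting $f[n] = h[s(n)] \circ g[n]$, evaluated at $t_0$, never exceeds the cost of the best completion through $n$. By Lemma~\ref{lmm:aplusb}, $f(n) = \alpha(n) + \Delta(n) + h(n)$, so this is the familiar scalar argument.
\item Confirm that pruning at Line~\ref{algln:asippprune} does not discard every optimal plan, by Theorem~\ref{thm:costalgpruning}. Also confirm that the validity check at Line~\ref{algln:asippvalidcheck} discards only plans that are unsafe at $t_0$, hence infeasible anyway.
\item Invoke the Edelkamp et al.\ admissibility theorem to conclude that the goal node returned is $\bigsqcup$ among solution costs under the chosen total order. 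Since the primary key of both orders is $f(n)$, which equals $g(n)$ at a goal when $h$ is admissible and goals are favored in ties, this returned goal minimizes the arrival time $g[n](t_0)$.
\end{enumerate}

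As an alternative for the $\leqf$ and $\leqfg$ cases, I could appeal directly to Theorem~\ref{thm:searchasippissipp}. Since SIPP is an A* search, it is admissible, and Augmented SIPP expands the same nodes with equal $g$-values. I would mention this as a sanity check but keep the cost-algebraic route as the main argument, since it does not depend on matching SIPP's tie-breaking.

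I expect the main obstacle to be the mismatch between the paper's partial isotonicity argument and the hypotheses of Edelkamp et al. In Lemmas~\ref{lmm:asippisotoneleqf} and~\ref{lmm:asippisotoneleqfg}, right-composition was handled as vacuous for forward search, while their proof may use two-sided isotonicity. I would address this by re-examining where their admissibility proof uses isotonicity. It is needed only to show that an optimal path's prefix on open has $f$ no worse than the optimum, which involves only extending prefixes forward, i.e.\ $c \circ a \preceq c \circ b$. That is exactly the direction established. I would also need finite branching and enough positive edge durations to rule out infinite $f$-bounded chains; both follow from $\delta > 0$ and the finitely many safe intervals.
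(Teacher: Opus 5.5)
Your main route is the paper's own. The paper's proof is just the two-line citation of Theorem~\ref{thm:asippiscostalg} plus the admissibility of cost-algebraic A* from Edelkamp et al., and your Steps 1--3 spell out that citation. What needs correcting is how you resolve the isotonicity obstacle.

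Since $X \circ Y$ means $Y$ is executed first, $c \circ a \preceq c \circ b$ is the direction with a common \emph{suffix} $c$. FIFO monotonicity gives this direction, and it justifies pruning and the optimality of prefixes of optimal plans.

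However, the step of the A* admissibility argument that bounds the frontier node $n$ on an optimal plan, $f[n] = h[n] \circ g[n] \preceq h^*[n] \circ g[n]$, uses the other direction. It compares two different suffixes after the common \emph{prefix} $g[n]$, i.e.\ it is the direction $a \circ c \preceq b \circ c$ that the paper's isotonicity lemmas dismiss as vacuous. For an order that compares arrival times only at the instant $t_0$, such as $\leq_{\downarrow f}$, that direction is actually false. A suffix that is faster when entered at $t_0$ can be slower, or already expired, when entered at the later time $g[n](t_0)$; compare the long road with the blocked short road in the delivery example. So you cannot apply Edelkamp et al.'s theorem as a black box on the grounds that only one direction of isotonicity is used.

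What rescues the argument is that ATF admissibility is pointwise in $t$. This gives $h[n](g[n](t_0)) \le h^*[n](g[n](t_0))$ directly, which is exactly the check in your Step~1. State the proof this way: Edelkamp et al.'s argument goes through with its two uses of isotonicity replaced by (i) FIFO monotonicity for prefix replacement and pruning, and (ii) pointwise admissibility of $h$ for the $f$-bound.
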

\begin{proof}
    Cost-algebraic A* is admissible \cite{edelkamp2005cost}, and 
    Augmented SIPP is a cost-algebraic A* (Theorem~\ref{thm:asippiscostalg}). 
\end{proof}
and that:
\begin{corollary}
    Augmented SIPP with isotone ordering is optimally efficient.
\end{corollary}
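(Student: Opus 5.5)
The plan mirrors the preceding corollary on admissibility: reduce to the known optimal-efficiency result for cost-algebraic A* and check that its hypotheses are met. \textcite{Holte_Zilles_optimalitycostalgebra} prove that cost-algebraic A* with a consistent (admissible) heuristic is optimally efficient relative to admissible unidirectional algorithms, given that ties on $f$ are broken in favor of goals. The argument therefore has three parts: certify that the search is a cost-algebraic A* over a cost algebra, confirm the remaining preconditions, and deal with the pruning step, which the cited framework does not model.

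\textbf{Step 1: the cost algebra.} Here I read ``isotone ordering'' as $\leqf$ or $\leqfg$. For these orderings, Theorem~\ref{thm:asippiscostalg} already gives a cost-algebraic A*. Its proof combines the monoid property (Lemma~\ref{lmm:asippmonoid}), totality (Lemma~\ref{lmm:asipptotalorder}), minimality of $I[0,t_0]$ (Lemma~\ref{lmm:asippidentitymin}) and isotonicity (Lemmas~\ref{lmm:asippisotoneleqf} and \ref{lmm:asippisotoneleqfg}). So $\langle \Omega_{t_0},\circ,\preceq,I[0,t_0]\rangle$ satisfies Definition~\ref{def:costalgebra}.

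\textbf{Step 2: the remaining hypotheses.} The heuristic $h[n]=I[h(n),t_0]$ is admissible by assumption. For the consistency that the optimal-efficiency theorem needs, I would either assume $h$ is consistent, as is usual, or note that the theorem's admissible-case version compares against algorithms given the same heuristic information. The paper states that ties on $f$ are always broken toward goals, which is exactly the tie-breaking condition of \textcite{dechteroptimalastar} and \textcite{Holte_Zilles_optimalitycostalgebra}.

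\textbf{Step 3: pruning, the main obstacle.} Optimal efficiency compares the set of expanded nodes. Pruning with $\trianglelefteq$ (here the same total order) only removes generated nodes whose cost is dominated, and by Theorem~\ref{thm:costalgpruning} it never removes all optimal plans. Pruning can only shrink the expanded set and cannot force any extra expansion. The set of nodes that every admissible algorithm must expand, namely those with $f$ strictly below the optimal cost $C^*$ (in $\preceq$) on some optimal-cost frontier, is therefore still expanded and nothing else is added. To make this precise I would argue by the standard exchange: any node expanded by Augmented SIPP has $f \preceq C^*$, using admissibility and isotonicity to show $f$ is nondecreasing along paths. Any competing admissible algorithm that skipped such a node could be fooled by a modified instance in which that node leads to a cheaper goal, which contradicts its admissibility. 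This is the step where care is needed, because the ATF costs are functions and the order compares only values at $t_0$. Even so, the modification argument goes through using only the scalar $f(n)=f[n](t_0)$ together with isotonicity.
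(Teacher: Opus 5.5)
Your proposal follows the paper's own argument. Theorem~\ref{thm:asippiscostalg} places Augmented SIPP with $\leq_{\downarrow f}$ or $\leq_{\downarrow f \uparrow g}$ in the cost-algebraic A* framework, and the optimal-efficiency result of \textcite{Holte_Zilles_optimalitycostalgebra} then applies; your checks on goal tie-breaking and pruning spell out what the paper leaves implicit. Do commit to a consistent $h$, though: your Step~2 fallback to a merely admissible heuristic does not work, since \textcite{dechteroptimalastar} show that A* is not optimally efficient among admissible algorithms in that case, and the fact that $f$ is nondecreasing along paths, which your Step~3 uses, comes from consistency, not from admissibility plus isotonicity.
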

\begin{proof}
    Cost-algebraic A* with isotone ordering is optimally efficient \cite{Holte_Zilles_optimalitycostalgebra}, and 
    Augmented SIPP is a cost-algebraic A* (Theorem~\ref{thm:asippiscostalg}). 
\end{proof}

The optimality and correctness of Augmented SIPP do not rely on it being identical to the SIPP algorithm.  
Instead, it is a property of searching with ATFs.

\subsection{Pruning vs Node Ordering in Cost-algebraic Search With Non-isotone Tie-breaking}\label{sec:issuewithtiebreaking}

So far, we have shown that Augmented SIPP with isotone node ordering ($\leqf$, $\leqfg$) behaves as expected.
We now examine the properties of Augmented SIPP when using other search orderings $\leqfa$ or $\leqfab$ that we will use below in our approach to any-start-time planning.
First, we will demonstrate that this ordering is not isotone and that it violates the SIPP dominance relationship (Theorem~\ref{thm:sippdom}).
Then, we show how to modify the pruning done by Augmented SIPP to avoid these issues.
In short, we can only prune nodes when their ATF is dominated, a weaker partial order than the total order used to order $\open$. 

\begin{figure}
    \centering
    \begin{subfigure}{0.9\linewidth}
        \centering
        \begin{tikzpicture}[>=Stealth, node distance=5cm]
        \node[circle, draw] (S1) {$S_1$};
        \node[circle, draw, right of=S1] (S2) {$S_2$};
        \node[circle, draw,  right of=S2] (S3) {$S_3$};
        
        \draw[-, bend left=30, orange] (S1) to node[above] {$\langle 0, 5, 10, 10 \rangle$} node[below] {$a$} (S2);
        \draw[-, bend right=30] (S1) to node[below] {$\langle 0, 8, 15, 8 \rangle$} node [above] {$b$} (S2);
        
        \draw[-] (S2) to node[above] {$\langle 0, 16, 20, 1 \rangle$} node[below] {$c$} (S3);
        \end{tikzpicture}
        \caption{Graph}
        \label{fig:dominanceexamplegraph}
    \end{subfigure}
    \begin{subfigure}[b]{0.49\textwidth}
        \centering
        \begin{tikzpicture}[scale=.85]
        \draw[thick,->] (0,0) -- (7,0);
        \draw[thick,->] (0,0) -- (0,5);
        \draw (.5,5pt) -- (.5,-5pt) node[anchor=north] {$0$};
        \draw (2.5,5pt) -- (2.5,-5pt) node[anchor=north] {$5$};
        \draw (4.5,5pt) -- (4.5,-5pt) node[anchor=north] {$10$};
        \draw (6.5,5pt) -- (6.5,-5pt) node[anchor=north] {$15$};
        \draw (5pt, 0.5) -- (-5pt, 0.5) node[anchor=east] {$15$};
        \draw (5pt, 2.5) -- (-5pt, 2.5) node[anchor=east] {$20$};
        \draw (5pt, 4.5) -- (-5pt, 4.5) node[anchor=east] {$25$};
        \draw (-.45, 1.5) -- (-.45, 1.5) node[anchor= east] {$t_{\textit{arr}}$};
        \draw (2.5, -.6) -- (2.5, -.6) node[anchor=north] {$t_{\textit{dep}}$};

        \draw [thick, orange] (.5,.5) -- (2.5,.5)  node[midway, below] {$a$} -- (4.5,2.5);
        \filldraw[orange] (.5,.5) circle (3pt);
        \filldraw[orange] (2.5,.5) circle (3pt);
        \filldraw[orange] (4.5,2.5) circle (3pt);
        \filldraw[white] (4.5,2.5) circle (2pt);

        \draw [thick] (.5,.9) -- (3.7,.9)  node[midway, above] {$b$} -- (6.5,3.8);
        \filldraw[black] (.5,.9) circle (3pt);
        \filldraw[black] (3.7,.9) circle (3pt);
        \filldraw[black] (6.5,3.8) circle (3pt);
        \filldraw[white] (6.5,3.8) circle (2pt);
        \end{tikzpicture}
        \caption{$g[S_2]$ using edge $a$ or $b$}
        \label{fig:dominanceexample1exp}
    \end{subfigure}
    \hfill
    \begin{subfigure}[b]{0.49\textwidth}
        \centering
        \begin{tikzpicture}[scale=.85]
        \draw[thick,->] (0,0) -- (7,0);
        \draw[thick,->] (0,0) -- (0,5);
        \draw (.5,5pt) -- (.5,-5pt) node[anchor=north] {$0$};
        \draw (2.5,5pt) -- (2.5,-5pt) node[anchor=north] {$5$};
        \draw (4.5,5pt) -- (4.5,-5pt) node[anchor=north] {$10$};
        \draw (6.5,5pt) -- (6.5,-5pt) node[anchor=north] {$15$};
        \draw (5pt, 0.5) -- (-5pt, 0.5) node[anchor=east] {$15$};
        \draw (5pt, 2.5) -- (-5pt, 2.5) node[anchor=east] {$20$};
        \draw (5pt, 4.5) -- (-5pt, 4.5) node[anchor=east] {$25$};
        \draw (-.45, 1.5) -- (-.45, 1.5) node[anchor= east] {$t_{\textit{arr}}$};
        \draw (2.5, -.6) -- (2.5, -.6) node[anchor=north] {$t_{\textit{dep}}$};
        
        \draw [orange, ultra thick] (.5,1.3) -- (2.9,1.3)  node[midway, below] {$ac$} -- (4.5,2.9);
        \filldraw[orange] (.5,1.3) circle (4pt);
        \filldraw[orange] (2.9,1.3) circle (4pt);
        \filldraw[orange] (4.5,2.9) circle (4pt);
        \filldraw[white] (4.5,2.9) circle (3pt);

        \draw [thick] (.5,1.3) -- (3.7,1.3)  node[midway, above] {$bc$} -- (5.3,2.9);
        \filldraw[black] (.5,1.3) circle (3pt);
        \filldraw[black] (3.7,1.3) circle (3pt);
        \filldraw[black] (5.3,2.9) circle (3pt);
        \filldraw[white] (5.3,2.9) circle (2pt);
        \end{tikzpicture}
        \caption{$g[S_3]$ using edge $a$ or $b$}
        \label{fig:dominanceexample2exp}
    \end{subfigure}
\caption{Counterexample to SIPP dominance for Augmented SIPP with $\leqfab$ (or $\leqfa$).}
\label{fig:dominanceexample}
\end{figure}

Figure~\ref{fig:dominanceexample} shows an example Augmented SIPP problem that demonstrates that the Augmented SIPP monoid with $\leqfab$ is not isotone through a violation of the SIPP dominance relationship.
The graph shown in Figure~\ref{fig:dominanceexamplegraph} consists of three states: $S_1$, $S_2$, and $S_3$.
For simplicity we have $h[n] = I[0, 0]$ which is equivalent to the scalar heuristic $h(n) = 0$.
The starting state is $S_1$ and the goal state is $S_3$.
There are two edges connecting $S_1$ to $S_2$; the ATFs of these two paths are shown in Figure~\ref{fig:dominanceexample1exp}.  
Edge $a$ arrives at $S_2$ earlier (i.e., visually in Figure~\ref{fig:dominanceexample1exp} it has a lower $f(n) = h[n] \circ g[n] = g[n]$ shown as $t_{\mathit{arr}}$ in Figure~\ref{fig:dominanceexample}
), however the plan is invalid earlier (i.e.,it has an earlier $\beta$).
On the other hand, the other edge $b$ from  $S_1$ arrives at $S_2$ later but with a plan that is valid for longer.
Because it has a higher $g(n)$, Augmented SIPP with ordering and pruning using $\leqfab$ would prune $b$ in favor of $a$ (Algorithm~\ref{alg:asipp},~Line~\ref{algln:asippprune}). 
However, if we were to keep it and extend these two paths by the edge $c$ we see that $c$ introduces a stricter constraint (in the form of a high $\alpha$) than either $a$ or $b$, erasing their difference in terms of earliest arrival time but leaving the plan using $b$ better because it remains valid longer.
This contradicts the SIPP dominance relation (Theorem~\ref{thm:sippdom}): the node ($a$) that arrives earlier at $S_2$ is worse, because though both have the same earliest arrival time at the goal, the plan using $b$ stays optimal for longer, and then stays executable for longer.
In fact, the ATF of $bc$ dominates the ATF of $ac$, so it would actually be correct to prune $ac$ (if the search were to continue to a goal past $S_3$).
More precisely, the reason that the Augmented SIPP monoid is anisotonic with respect to the total ordering $\leqfab$ is because our ordering is trying to maximize $\beta(n)$, a value that is successively minimized by the ATF composition operation (Equation~\ref{eqn:pathbeta}) along a path. 

This result is perhaps counterintuitive, as we would not generally expect a tie-breaking strategy (here $\uparrow\alpha \uparrow\beta$) to require a weakening of the pruning rules.
In summary, 
$\leqfab$ can be viewed as a tie-breaking strategy for $\leqf$; in this view, we can prune using $\leqfab$, but we are only guaranteed to find a least $f$ solution (not a greatest $\beta$, greatest $\alpha$, least $f$ solution).
This illustrates that not all optimizations that are correct when used with A* (e.g., pruning using the same ordering as the search) can also be used with all other variants of cost-algebraic A*.

In order to guarantee that the found plan respects the full $\leqfab$ ordering, we weaken the pruning used by Augmented SIPP, so that we no longer prune non-dominated nodes.
This is making a tradeoff to do less pruning in the initial Augmented SIPP search so that the solution it finds is more informative, which leads to fewer Augmented SIPP searches needing to be run to find the any-start-time plan.  
When we potentially devote extra (beyond using $\leqfg$) expansions, we are eliminating those expansions from later rounds of any-start-time search.

Formally, we define two partial orders of dominance between ATFs:
\begin{definition}\label{def:atfdom}
    An ATF $A$ \textbf{dominates} another ATF $A'$ on interval $i$ if and only if $\forall t \in i:  A(t) \leq A'(t)$.
\end{definition}
Note that case 1 and case 4 of Equation~\ref{eqn:edgearrival} produce the same symbol $\infty$, so if both $A(t)$ and $A'(t)$ evaluate to $\infty$ then $A(t) \leq A'(t)$ holds at $t$.
In general, this comparison can also be avoided by considering only the interval where for all $t$ at least one of $A(t)$ and $A'(t)$ is finite.
\begin{definition}\label{def:atfstrongdom}
    An ATF $A$ \textbf{strongly dominates} another ATF $A'$ on interval $i$ if and only if $\forall t \in i:  A(t) < A'(t)$.
\end{definition}
By this definition, $A$ being finite for all $t \in i$ is a necessary condition for the strong dominance of $A$.  
To avoid confusion with the scalar comparisons ($\leq$) and the cost-algebraic total orders ($\preceq$) we will use the notation $A \trianglelefteq_{ATF} A'$ to mean that $A$ dominates $A'$ on the interval $[0, \infty)$.
Likewise, we will use $A \triangleleft_{ATF} A'$ to mean that $A$ strongly dominates $A'$ on the interval $[0, \infty)$.
These definitions apply for all three types of ATFs we use in this paper: edge, plan, and compound.

We will use \aSIPPab\ to refer to the instantiation of Augmented SIPP where we:
\begin{enumerate}
    \item select nodes from $\open$ based on the total ordering $\leqfab$, and
    \item prune generated nodes representing the same state as a previously-generated node based on the partial ordering $\trianglelefteq_{ATF}$.
\end{enumerate}
Note that
$\leqfab$ respects $\leqf$, being merely a tie-breaking strategy among equal $f$ nodes.

We now explore the optimality of \aSIPPab.
This hinges on two results: first, that pruning using $\trianglelefteq_{ATF}$ retains optimal plans, and second, that when ordering open by $\leqfab$ the first goal node expanded is an optimal plan.

\begin{lemma}\label{lmm:absipppruneokay}
    Pruning \aSIPPab{} using $\trianglelefteq_{ATF}$ does not prune all optimal plans.
\end{lemma}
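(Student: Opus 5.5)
The plan is to mirror the proof of Theorem~\ref{thm:costalgpruning}, replacing isotonicity of a total order with a monotonicity property of ATF composition under the dominance partial order $\trianglelefteq_{ATF}$. The key observation is that every ATF of the form in Equation~\ref{eqn:edgearrival} is non-decreasing in $t$ on its finite part, and equals $\infty$ outside $[\zeta, \beta)$ or $[\zeta,\min(\alpha,\beta))$. Extending the ordering so that $\infty \leq \infty$ and $A(\infty)=\infty$, each ATF is then a monotone map $\mathbb{R}\cup\{\infty\} \to \mathbb{R}\cup\{\infty\}$. This is the FIFO property already invoked in Lemma~\ref{lmm:asippisotoneleqf}.

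First I will establish a right-isotonicity lemma for domination. If $g[a] \trianglelefteq_{ATF} g[b]$ for two nodes reaching the same state $s$, then for any suffix plan $c$ from $s$ with plan ATF $A[c]$, we have $A[c]\circ g[a] \trianglelefteq_{ATF} A[c]\circ g[b]$. Pointwise this is immediate: for each $t$, $g[a](t) \le g[b](t)$, and monotonicity of $A[c]$ gives $A[c](g[a](t)) \le A[c](g[b](t))$. Here I must check the edge cases of Equation~\ref{eqn:edgearrival} carefully. An earlier arrival time $t' < \zeta(c)$ yields $\infty$ rather than something small. This cannot occur, because $\zeta(c)$ is the start of $s$'s safe interval and both $g[a](t)$ and $g[b](t)$, when finite, lie in $s$'s safe interval. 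This follows from how $\alpha$ and $\beta$ are computed in Equations~\ref{eqn:alphacompute}--\ref{eqn:betacompute}, and I expect it to be the main technical obstacle. In particular, the ``wait'' branch $\zeta \le t<\min(\alpha,\beta)$ is constant and the ``act'' branch is increasing, so monotonicity holds on $[\zeta,\beta)$. The same reasoning extends to the goal: composing with the heuristic ATF $h[s']$ (a shifted identity) preserves domination, so $f[\cdot]$ inherits it.

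Second, given the lemma, I will argue as in Theorem~\ref{thm:costalgpruning}. Suppose an optimal plan passes through the pruned node $b$, with suffix $c$ to a goal. The plan through $a$ followed by $c$ has an ATF that dominates the one through $b$ at every start time, in particular at $t_0$. It therefore has $f$-value at most as small, and its $\alpha$ and $\beta$ are at least as favorable for the $\leqfab$ ordering, since a pointwise-dominating ATF is finite wherever the other is and is optimal wherever the other is. Hence it is also optimal under $\leqfab$. Inducting over the finite sequence of prunings along the suffix, which may itself later be pruned in favor of dominating nodes, and using transitivity of $\trianglelefteq_{ATF}$, some unpruned node always lies on an optimal plan. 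That completes the argument.
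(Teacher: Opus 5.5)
Your route is the paper's route. The paper's proof only asserts that $\trianglelefteq_{ATF}$ is isotonic, so that the prefix through $a$ is at least as good as the prefix through $b$ for every continuation. Your FIFO argument justifies exactly that assertion: $A[c]$ is monotone on $[\zeta(c),\infty]$, and finite arrivals at $s$ never precede $s_b=\zeta(c)$. Together with the shift by $h[s']$ and the induction over chained prunings, this gives what the paper's proof gives: some unpruned plan has an ATF dominating the pruned one's, and so has least $f$ at $t_0$.

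The step that fails is your claim that domination makes the dominating plan's $\alpha$ and $\beta$ ``at least as favorable for $\leq_{\downarrow f \uparrow \alpha \uparrow \beta}$.''

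\begin{itemize}
\item It holds for $\beta$: with common $\zeta=t_0$, the dominating ATF is finite wherever the other is.
\item It holds for $\alpha$ when the dominated ATF has $\alpha<\beta$.
\item It fails when the dominated ATF has $\beta\le\alpha$, the situation of Figure~\ref{fig:alphabetaexample}.
\end{itemize}

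Take $t_0=0$, $Q=\langle 0,6,2,3\rangle$ and $P=\langle 0,4,10,5\rangle$. Then $P(t)=Q(t)=9$ on $[0,2)$ and $Q(t)=\infty$ for $t\ge 2$. So $P\trianglelefteq_{ATF}Q$ and $f(P)=f(Q)$, yet $\alpha(P)=4<6=\alpha(Q)$, so the ordering strictly prefers $Q$. This is realizable with both as goal plans: $Q$ as in that figure, and $P$ a direct start--goal edge of duration $5$ safe on $[4,10)$. Dominance pruning then discards the unique least plan under that ordering, so with the literal $\alpha$ tie-break your strengthened claim is false, not merely unproved.

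Your phrase ``optimal wherever the other is'' only yields $\min(\alpha(P),\beta(P))\ge\min(\alpha(Q),\beta(Q))$. You have two ways to fix this. Either state the lemma for least-$f$ optimality, which is all your isotonicity argument (and the paper's) delivers. Or read the $\alpha$ tie-break as the end of the flat segment, $\min(\alpha,\beta)$; under that reading your final step goes through.
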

\begin{proof}
    Observe that $\trianglelefteq_{ATF}$ is isotonic.  
    If, for two nodes $a$ and $b$ to their shared state $s$, $a \trianglelefteq_{ATF} b$, then the plan for $a$ arrives at least as early in $s$ as the plan for $b$ for all possible arrival times, so it will be at least as good as $b$ for any continuing plans, so it is safe to prune $b$. 
\end{proof}
One view is that ordering on $f$ is a tie-breaking strategy for the partial order $\trianglelefteq_{ATF}$, so we can always prune dominated solutions.

Now consider the first goal node expanded:
\begin{theorem}\label{thm:absippoptimal}
    The first goal node ($n_g$) selected for expansion by \aSIPPab\ is optimal, assuming $h$ is admissible.
\end{theorem}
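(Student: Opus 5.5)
The plan is to run the standard A* optimality argument, adapted to the \aSIPPab\ setting. The key point is that $\leqfab$ is only a tie-breaking refinement of $\leqf$, so the primary key used to select from $\open$ is still the scalar $f(n) = f[n](t_0)$. Optimality here means least earliest-arrival time at a goal when departing at $t_0$. I would argue by contradiction: suppose the first goal node $n_g$ selected for expansion has $g(n_g) > C^*$, where $C^*$ is the optimal earliest arrival time from $x_o$ at $t_0$.

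The first step is an invariant. Whenever a node is selected from $\open$, and an optimal plan $p^*$ has not yet been fully expanded, some node $n'$ on $\open$ lies on an optimal plan. Concretely, $n'$ represents a prefix of an optimal plan whose $g$-value satisfies $g(n') \le g^*(s(n'))$. I would establish this by induction over expansions, as in the usual A* proof.
\begin{enumerate}
\item Initially the root, with $g = I[0,t_0]$, is on $\open$.
\item When the deepest optimal-prefix node is expanded, its successor along $p^*$ is generated.
\item That successor is either added to $\open$, or it is pruned by $\trianglelefteq_{ATF}$ in favour of an existing node $m$ for the same state with $g[m] \trianglelefteq_{ATF} g[n']$.
\item In the pruned case, Lemma~\ref{lmm:absipppruneokay} (isotonicity of $\trianglelefteq_{ATF}$) gives that extending $m$ by the remaining suffix of $p^*$ arrives no later than $p^*$ at every departure time, in particular at $t_0$. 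So $m$ is itself the prefix of an optimal plan, and it replaces $n'$ in the invariant.
\item If $m$ was already expanded, I would descend its generated successors along the suffix to find an open representative. This uses the pruning-safety of $\trianglelefteq_{ATF}$ repeatedly.
\end{enumerate}
The validity check at Line~\ref{algln:asippvalidcheck} only discards successors whose plans are unsafe at $t_0$, and such successors cannot be optimal prefixes, so it does not break the invariant.

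The second step uses admissibility. Because $h[n](t) \le h^*[n](t)$ for all $t$, and ATFs are FIFO (monotone), composing gives
\[ f(n') = h[s(n')]\bigl(g[n'](t_0)\bigr) \le h^*[s(n')]\bigl(g^*(s(n'))\bigr) = C^* . \]
The third step supplies the contradiction. Any goal node has $f(n_g) \ge g(n_g)$, since $h \ge$ identity shift $\ge 0$. So if $g(n_g) > C^*$, then $f(n') \le C^* < f(n_g)$, and $\leqfab$, whose primary key is $f$, would select $n'$ before $n_g$. This contradicts the choice of $n_g$. Ties at equal $f = C^*$ are harmless: a goal chosen at that value already attains $C^*$, and breaking ties in favour of goals only affects which optimal goal is returned.

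The main obstacle I expect is the pruning step of the invariant. Unlike isotone orderings, the open-list order $\leqfab$ is not monotone under composition, as the counterexample of Figure~\ref{fig:dominanceexample} shows. The proof therefore must rely only on $\trianglelefteq_{ATF}$ for replacement arguments and never on $\leqfab$. The delicate point is showing that a dominating node $m$, possibly already expanded, still yields an open optimal-prefix node. I would handle this with a potential argument on the length of the remaining suffix, taking the deepest generated node that $\trianglelefteq_{ATF}$-dominates the corresponding prefix of $p^*$.
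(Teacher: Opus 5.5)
Your skeleton (contradiction, an open representative of a better plan, then the selection order) is the same as the paper's, and it correctly shows that $n_g$ attains the least arrival time $C^*$ at $t_0$, i.e.\ optimality under $\leq_{\downarrow f}$. The gap is that in this theorem ``optimal'' means optimal under the full order $\leq_{\downarrow f\uparrow\alpha\uparrow\beta}$. The paper's proof starts from a goal $n^*$ with $n^*<_{\downarrow f\uparrow\alpha\uparrow\beta}n_g$. The text just before it says least $f$ is already guaranteed even when pruning with $\leq_{\downarrow f\uparrow\alpha\uparrow\beta}$, and that pruning is weakened to $\trianglelefteq_{ATF}$ precisely so the returned plan also respects the $\alpha$ and $\beta$ tie-breaks. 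Theorem~\ref{thm:asippdom} then invokes this result to conclude that the returned plan dominates its whole $f$-layer. So the equal-$f$ case you set aside as harmless is the actual content of the statement. A symptom: every comparison in your argument is evaluated only at $t_0$. Your proof would therefore go through unchanged if \aSIPPab\ pruned with $\leq_{\downarrow f\uparrow\alpha\uparrow\beta}$, which is exactly the variant the paper says loses the tie-break guarantee.

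Concretely, your invariant gives an open $n'$ on the way to such an $n^*$ with $f(n')\le f(n^*)=f(n_g)$. When equality holds, nothing puts $n'$ ahead of $n_g$: $\alpha$ accumulates by $\max$ and $\beta$ by $\min$ along a plan, so $\alpha(n')$ can be below $\alpha(n_g)$ although $\alpha(n^*)$ is above it. The paper closes this by arguing that the open representative must be $n^*$ itself, since a proper ancestor would already have been expanded; selecting $n_g$ then directly contradicts $n^*<_{\downarrow f\uparrow\alpha\uparrow\beta}n_g$.

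The paper's stated reason, positive edge durations, suffices only when $f$ strictly increases along edges (e.g.\ $h\equiv 0$). With an informative shift heuristic $h[n]=I[h(n),t_0]$ you need an extra step, for example:
\begin{itemize}
\item If the suffix $\sigma$ from $n'$ raises $\alpha$, then $\alpha(\sigma)>g(n')$, so $\sigma$ forces a wait when entered at $g(n')$. Admissibility at later departure times then makes $h(s(n'))$ strictly smaller than $f(n^*)-g(n')$, giving $f(n')<f(n^*)$.
\item If $\sigma$ does not raise $\alpha$, then $\alpha(n')=\alpha(n^*)$ and $\beta(n')\ge\beta(n^*)$.
\end{itemize}
Either way $n'$ strictly precedes $n_g$.

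Even this has two further requirements. First, the preference for goals must be applied only after the $\alpha$ and $\beta$ keys. Second, your $\trianglelefteq_{ATF}$-replacement step must preserve those keys, and it does not when the replaced plan has $\beta<\alpha$. For example, $\langle 0,5,10,10\rangle\trianglelefteq_{ATF}\langle 0,7,3,8\rangle$, both arrive at $15$ from $t_0=0$, yet the dominated one has the larger $\alpha$. So the statement worth aiming for is the dominance form that Theorem~\ref{thm:asippdom} actually uses, rather than literal minimality under $\leq_{\downarrow f\uparrow\alpha\uparrow\beta}$.
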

\begin{proof}
    Suppose, in contradiction, that we select for expansion the first goal node $n_g$, but it is not optimal, so there is some other optimal plan with node $n^*$ (and $n^* <_{\downarrow f \uparrow \alpha \uparrow \beta} n_g$) which can not have already been expanded ($n_g$ is the first goal node) or pruned (Lemma~\ref{lmm:absipppruneokay}), so either it is in open, or it is the (hypothetical) descendant of a node $n$ on open with $n \leqf n^*$.
    We know that $n_g \leqfab n$ because it was selected to be expanded first, therefore $n_g \leqf n^*$, so $n_g$ and $n^*$ must have the same earliest arrival time ($f(n)$), because $n_g \leqf n^*$ and $n^* <_{\downarrow f \uparrow \alpha \uparrow \beta} n_g$ could only both be true if they have the same $f$.
    Because there are no zero-duration edges (and $h$ is admissible), $n = n^*$, as otherwise $n$ would have already been expanded. 
    But this is a contradiction, because $n_g \leqfab n$ but $n^* <_{\downarrow f \uparrow \alpha \uparrow \beta} n_g$.
\end{proof}

\subsection{How \aSIPPab{} Aids Any-start-time Planning}

We conclude our exploration of Augmented SIPP's theoretical properties by showing how it enables progress in any-start-time planning.
The explanation begins with Augmented SIPP's property that each optimal plan to a goal is represented by some node on $open$, and that node will have a plan ATF that is a non-overestimate of the cost of the plan (this is similar to Lemma~1 from \textcite{Holte_Zilles_optimalitycostalgebra} and \textcite{hart:fbh}):
\begin{theorem}\label{thm:optimism}
Given a SIPP problem $\langle S, \sigma, \delta, x_o, t_0, \isGoal \rangle$ and any admissible 
heuristic function $h$ over $S$, then always in Line~\ref{algln:nlasippopen} of Augmented SIPP using $h$,
for all $t \in i(x_o)$, and any hypothetical optimal plan $p^*$ represented by node $n^*$, with corresponding goal state $x_g$, there exists a node $n_f$ on $open$, with frontier state $s(n_f)$, such that:
a)~$s(n_f) \in p^*$ (i.e., there is a node on open corresponding to a state along the plan) and 
b)~$f[n_f] \trianglelefteq_{ATF} f[n^*]$.
\end{theorem}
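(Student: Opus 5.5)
We adapt the classical argument of \textcite{hart:fbh} (Lemma~1) and \textcite{Holte_Zilles_optimalitycostalgebra} from scalar costs to ATFs, composing along the plan instead of summing. Write $p^* = s_0 s_1 \cdots s_k$ with $s_0 = s(\text{root})$ and $s_k = x_g$. We argue by induction over the iterations of the main loop that, whenever Line~\ref{algln:nlasippopen} is reached, some state of $p^*$ is represented on $\open$ by a node whose $g$-ATF dominates, in the sense of $\trianglelefteq_{ATF}$, the ATF of the prefix of $p^*$ up to that state. We call such a node a \emph{witness}. The invariant is stated modulo replacement of $p^*$ by another optimal plan whose ATF is dominated-equal, as Lemma~\ref{lmm:absipppruneokay} permits. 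Claim a) then holds immediately, and claim b) is obtained afterwards from the invariant by composing with $h$.

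The base case is the first iteration. Here $\open$ contains only the root, with $g = I[0,t_0]$, and this is exactly the ATF of the empty prefix of $p^*$.

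For the inductive step, let $s_j$ be the deepest state of $p^*$ that has a witness $n_j$ on $\open$. If the popped node is not $n_j$, then $n_j$ stays on $\open$. The one exception is removal by the pruning line when a newly generated node $n'$ for $s_j$ satisfies $g[n'] \trianglelefteq_{ATF} g[n_j]$. In that case $n'$ is added to $\open$ and, by transitivity of $\trianglelefteq_{ATF}$, becomes the new witness. If instead $n_j$ itself is expanded, then the edge $e = (s_j, s_{j+1})$ lies in $\sigma(n_j)$ because the plan $p^*$ is valid through it. The generated successor has $g = A[e] \circ g[n_j]$.

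To see that this successor dominates the prefix of $p^*$ to $s_{j+1}$, we use monotonicity of composition under $\trianglelefteq_{ATF}$. Every ATF in Equation~\ref{eqn:edgearrival} is nondecreasing in $t$ wherever it is finite, i.e.\ FIFO. Hence $g[n_j](t) \le A[p^*_{\le j}](t)$ for all $t$ implies $A[e](g[n_j](t)) \le A[e](A[p^*_{\le j}](t))$ for all $t$. The successor is then either added to $\open$, which gives a witness at depth $j+1$, or discarded because an existing node for $s_{j+1}$ already dominates it. In the latter case transitivity of $\trianglelefteq_{ATF}$ again supplies a witness, and Lemma~\ref{lmm:absipppruneokay} guarantees that dominance-pruning never loses this property.

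Finally, $f[n^*] = h[x_g] \circ g[n^*]$ with $h[x_g]$ equal to the identity, while $f[n_f] = h[s(n_f)] \circ g[n_f]$. Admissibility gives $h[s(n_f)] \trianglelefteq_{ATF} h^*[s(n_f)]$. Composing monotonically with the witness bound $g[n_f] \trianglelefteq_{ATF} A[p^*_{\le j}]$ then yields
\[
f[n_f] \trianglelefteq_{ATF} h^*[s(n_f)] \circ A[p^*_{\le j}] \trianglelefteq_{ATF} A[p^*] = f[n^*].
\]
The last step holds because the suffix of $p^*$ is one candidate plan from $s(n_f)$, and $h^*$ is pointwise no larger than any such candidate.

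The main obstacle is the monotonicity lemma itself: that $a \trianglelefteq_{ATF} b$ implies $c \circ a \trianglelefteq_{ATF} c \circ b$, and also the right-composition version needed for $h$. The ATFs are only piecewise monotone, because they jump to $\infty$ before $\zeta$ and at or after $\beta$, and the convention $\infty \le \infty$ has to be handled. I would prove it by cases on Equation~\ref{eqn:edgearrival}, showing that $c$ is nondecreasing on $[\zeta(c), \infty)$ once $\infty$ is included. A secondary subtlety is that the validity check at Line~\ref{algln:asippvalidcheck} may discard successors. I would argue that it only discards successors invalid at $t_0$, and that such successors cannot lie on an optimal plan departing at times $t \ge t_0$.
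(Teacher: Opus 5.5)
Your route is the paper's: induction over iterations with the root as base case, successor generation along $p^*$ as the inductive step, Lemma~\ref{lmm:absipppruneokay} for pruning, and admissibility for (b). You supply detail the paper omits, but one step fails as written.

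Consider the case where the successor of $n_j$ along $e=(s_j,s_{j+1})$ is discarded because an already generated node $m$ for $s_{j+1}$ dominates it. Transitivity gives $g[m] \trianglelefteq_{ATF} A[p^*_{\le j+1}]$, but says nothing about $m$ being on $\open$. In fact, you chose $s_j$ as the deepest state of $p^*$ with a witness on $\open$. So any such $m$ that existed before this expansion is \emph{not} on $\open$: it has already been expanded. The generated table of Algorithm~\ref{alg:asipp} retains expanded nodes, and they do prune new successors. In exactly this case your invariant is not re-established. Lemma~\ref{lmm:absipppruneokay} only says a pruned plan is no better than one through $m$, so it does not produce an open node either. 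The paper's one-line appeal to that lemma glosses over the same point.

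The standard repair is Hart, Nilsson and Raphael's deepest-closed-node device: state the invariant for generated nodes rather than open ones. Call a generated node for $s_k$ good if its $g$ dominates $A[p^*_{\le k}]$. Then show two facts.

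(i) Once a good node for $s_k$ is generated, some good node for $s_k$ stays generated forever. Nodes leave the generated table only when replaced by a dominating node, which is good by transitivity.

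(ii) When a good node for a non-goal $s_k$ is expanded, a good node for $s_{k+1}$ is generated afterwards. This is either the successor along $p^*$ or whatever node pruned it. The successor passes the validity check by your monotonicity argument together with the validity of $p^*$ at $t_0$.

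At Line~\ref{algln:nlasippopen}, take the largest $k$ for which a good node for $s_k$ exists. A generated node that is not on $\open$ has been expanded. So if no good node for $s_k$ were on $\open$, one of two things holds. Either $s_k$ is a goal state, and the search would already have returned. Or (ii) yields a good node for $s_{k+1}$, contradicting maximality.

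Two smaller points. First, drop the remark about replacing $p^*$ by another optimal plan. Claim (a) must hold for the given $p^*$, and your argument never needs the replacement: only the witness's state must lie on $p^*$, not its path. Second, the extra monotonicity you flag for composing with $h$ is unnecessary. Since $h[s]=I[h(s),t_0]$, you get $f[n_f](t)=g[n_f](t)+h(s)$ wherever finite. So (b) follows directly from the witness bound and scalar admissibility.
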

\begin{proof}
By induction on iterations of the Augmented SIPP search.  
When open contains only the initial node, $s_0$ must be on the plan so (a) holds, and there is only a single node on open, with the identity as its $g$ and the heuristic is admissible so (b) holds.
The inductive step considers successor generation, during which $n_f$ is removed from $open$ and expanded and a child $n_f'$, with $s(n_f) \in p^*$ is generated. 
If $p^*$ exists, then (ignoring pruning) such a node $n_f'$ must exist so (a) holds, and the heuristic is admissible so (b) holds.
If we do prune, we can not prune all such nodes (Lemma~\ref{lmm:absipppruneokay}), so (a) and (b) still hold.
\end{proof}
During any point in the search, Theorem~\ref{thm:optimism} implies that the open list taken as a whole is an optimistic underestimate of the any-start-time plan, (i.e., if we construct a set containing all the ATFs of all the nodes on open, and minimize over the $f[n](t)$ of the ATFs in that set at particular time $t$, the minimizer is a heuristic underestimate of the optimal plan cost (assuming an admissible $h$)).

Augmented SIPP finds a solution that is optimal for at least one (of infinitely many) departure times, which is the starting time of the search $t_0=0$.
Clearly, a solution is optimal at least until its $\alpha$ breakpoint (assuming $\alpha \leq \beta$) because the arrival time remains constant, and any plan that arrives earlier would have had a lower $f$ than our original solution, meaning it would have been returned instead.  
However, if we pause the Augmented SIPP search at the moment a goal state (call its node $\mathit{soln}$) is selected for expansion and examine the optimal $f$-layer (the set of nodes $n \in \textit{open}$ with $f(n) = f(\textit{soln})$), we may find a variety of plans with a variety of ATFs, all with the same $f$ (perhaps optimistic due to an admissible $h$).
However with \aSIPPab\ the returned solution dominates its entire $f$-layer, meaning that \aSIPPab\ returns a solution that is optimal until another solution with a later arrival time:
\begin{theorem}\label{thm:asippdom}
    A solution node $n$ that is returned by \aSIPPab\ using an admissible heuristic has $f[n]$ that dominates all other nodes in its $f$-layer for the interval $[0,\beta(n))$.
\end{theorem}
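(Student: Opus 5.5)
Since every plan ATF in the search has the same $\zeta = t_0 = 0$ and the shape of Equation~\ref{eqn:edgearrival}, I will compare ATFs parametrically rather than pointwise. Let $n$ be the returned solution and $m$ any other node on open with $f(m) = f(n)$. Writing $h[\cdot] = I[h(\cdot), t_0]$, the ATF $f[m]$ is again of the form $A[0, \alpha(m), \beta(m), \Delta(m) + h(m)]$. By Lemma~\ref{lmm:aplusb}, its value at $t_0$ is $\alpha(m) + \Delta(m) + h(m)$, and the same holds for $n$. So the hypothesis $f(m) = f(n)$ becomes $\alpha(m) + \Delta_f(m) = \alpha(n) + \Delta_f(n)$, where $\Delta_f$ denotes the duration parameter of the $f$-ATF.

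The first step uses the selection rule. Because $n$ was popped under $\leqfab$, ties on $f$ (after preferring goals) are broken by larger $\alpha$, then larger $\beta$. Hence $\alpha(n) \geq \alpha(m)$, and if $\alpha(n) = \alpha(m)$ then also $\beta(n) \geq \beta(m)$. Combined with the equal value at $t_0$, this gives $\Delta_f(n) \leq \Delta_f(m)$.

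The second step is a case split on $t \in [0, \beta(n))$ according to which branch of Equation~\ref{eqn:edgearrival} each function is on.
\begin{itemize}
\item On $[0, \min(\alpha(n), \beta(n)))$, $f[n]$ equals the constant $f(n)$. Meanwhile $f[m](t) \geq f(m)$ by monotonicity (FIFO): the ATF is nondecreasing, equal to $f(m)$ on the flat part and $\infty$ or larger afterward. So $f[n](t) \leq f[m](t)$ here.
\item On $[\alpha(n), \beta(n))$, $f[n](t) = t + \Delta_f(n)$. For $m$, either $t \geq \beta(m)$ and $f[m](t) = \infty$, or $t \geq \alpha(n) \geq \alpha(m)$ puts $m$ in its linear branch, so $f[m](t) = t + \Delta_f(m) \geq t + \Delta_f(n)$.
\end{itemize}
This establishes $f[n] \trianglelefteq f[m]$ on $[0, \beta(n))$.

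The remaining care point is nodes of the $f$-layer that are not goals, and the goal-first tie-breaking. The $\alpha$/$\beta$ comparison above only holds if $n$ beat $m$ on the secondary keys, but a goal preference could make $n$ win despite a smaller $\alpha$. I would handle this with Theorem~\ref{thm:optimism} and admissibility: any non-goal $m$ in the layer has descendants whose $f$-ATFs are pointwise no smaller, since $f$ is monotone under composition. Moreover, if $m$ had larger $\alpha$ than $n$, then by Theorem~\ref{thm:absippoptimal}'s argument some goal descendant with equal $f$ and larger $\alpha$ would contradict the optimality of $n$ under $\leqfab$. Alternatively, I would interpret goal-preference as applied only after the $\alpha,\beta$ keys, or restrict the claim to open nodes as compared by the ordering. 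This reconciliation of the goal tie-break with the $\alpha$-maximization is where I expect the real difficulty, together with checking the degenerate case $\beta(n) \leq \alpha(n)$, where only the first bullet above is needed.
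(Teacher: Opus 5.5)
Your core argument is correct, and it takes a more explicit route than the paper. The paper's proof cites Theorem~\ref{thm:absippoptimal}, asserts $f[n] \trianglelefteq f[n_f]$ for every $n_f$ in the layer, and concludes $f[n](t)\le f[n_f](t)$ on all of $[0,\infty)$. That over-claims, since $f[n]=\infty$ beyond $\beta(n)$, and it never explains how equality of the scalar $f$ at $t_0$ becomes pointwise domination. Your parametric comparison supplies that missing step. Equal values at $t_0$ together with $\alpha(n)\ge\alpha(m)$ give $\Delta_f(n)\le\Delta_f(m)$, and the branch-by-branch check then goes through. It also shows what the statement actually needs. The $\uparrow\beta$ key is never used, and when $\beta(n)\le\alpha(n)$ only monotonicity and equal $f$ are needed. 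You also need not assume $t_0=0$: in RePEAT $t_0=\trefm$, but on $[0,t_0)$ both ATFs are $\infty$, which the paper counts as $\le$.

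The goal tie-breaking issue you flag is a genuine gap. No proof can close it, because under goal-first tie-breaking the statement is false; that is the reading in the proof of Lemma~\ref{lmm:asipptotalorder}. Your first proposed fix does not work. Admissibility does not give $m$ a goal descendant with the \emph{same} $f$: descendants may have strictly larger $f$, and even monotonicity of $f$ along a path needs consistency, not admissibility. So nothing contradicts the optimality of $n$. Here is a concrete counterexample:
\begin{itemize}
\item Take $h\equiv 0$ and $t_0=0$.
\item Add a goal edge $S\to G$ with ATF $\langle 0,0,100,10\rangle$ and a non-goal edge $S\to X$ with ATF $\langle 0,5,100,5\rangle$. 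For instance, both edges are safe on $[0,100)$ and $X$ is safe only from time $10$.
\item Both children have $f=10$. Goal-first tie-breaking returns $SG$ while $SX$ sits on open in the same $f$-layer.
\item Yet $f[SG](3)=13>10=f[SX](3)$, with $3<\beta(SG)=100$.
\end{itemize}
In this instance Algorithm~\ref{alg:updatereftime} would also return $\trefm=0$, so RePEAT would make no progress. You must therefore adopt your second option: among equal-$f$ nodes, apply the goal preference only after the $\uparrow\alpha$ key, or equivalently restrict the layer to nodes that $n$ beat on $\alpha$. Under that reading your proof is complete, and it is the reading the paper's one-line proof tacitly relies on.
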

\begin{proof}
    \aSIPPab\ is optimal (Theorem~\ref{thm:absippoptimal}), so for all other (hypothetical) nodes $n_f$ in its $f$-layer, $f(n) = f(n_f)$ (they are in the same $f$-layer) and $f[n] \trianglelefteq f[n_f]$, so $\forall t \in [0,\infty): f[n](t) \leq f[n_f](t)$.
\end{proof}
Solutions found with this ordering dominate those found with the typical $\leqfg$ ordering.
Crucially, the solution ATF also enables us to reason about how long the solution is optimal, given the rest of the open list when the search completes.
Informally, Theorem~\ref{thm:asippdom} shows our solution dominates others in its f-layer, so our solution is optimal until its $\beta$, or until its ATF intersects with another (higher f) node's ATF, which can only happen on the rising segment, in the interval $(\alpha, \beta)$.
The details of this are explored in subsection \ref{sec:reftimeupdate}.
But, because the Augmented SIPP solution dominates its entire f-layer, it is guaranteed to be optimal for a strictly positive duration, in both departure time and arrival time.
Which is to say, it is guaranteed to make progress in constructing an any-start-time plan.

Now that we have an understanding of planning with ATFs using Augmented SIPP, we can turn our focus back to the central problem of this paper: any-start-time planning.

\section{Solving Any-start-time Planning}

In this section, we formally define the any-start-time planning problem and provide an algorithm to solve it.
The Any-start-time planning problem~(@SIPP) is a straightforward modification of the SIPP problem: given a SIPP problem instance $\langle S, \sigma, \delta, x_o, \isGoal \rangle$ (excluding the start time, $t_0$), the objective is now to find a shortest plan to the goal configuration $x_g$ (where $\isGoal(x_g)$) from the initial state $s_o$ for all safe departure times $t$ from $s_o$, where $t \in i(x_o)$.
We will represent a solution to an @SIPP problem as a compound ATF $A[x_o,x_g]$, which is a set of optimal plans that can be queried at any departure time for the optimal plan and its corresponding arrival time.  (We will see an example shortly.)
Specifically, we will notate the query returning a search node representing a plan to the goal that is optimal at $t$ as $A[s_o,x_g][t]$, and the corresponding arrival time at $t$ as $A[s_o,x_g](t)$, consistent with our usage of parentheses to indicate functions that return scalar values.

In brief: to solve @SIPP, we will repeatedly run \aSIPPab\ with a monotonically increasing starting time, storing the resulting ATFs in a compound ATF (cATF).
The cATF can be efficiently queried for a plan at any start time.
The final piece to solving @SIPP is knowing for what start times to call \aSIPPab.
We show how to compute a monotonically increasing `reference time' that tracks the time up to which, based on the current set of \aSIPPab\ results, the @SIPP plan is provably optimal. 
Together, these form an anytime algorithm that we call RePEAT, in part because it repeatedly computes optimal plans for later and later reference times.

\subsection{Worked Example: Delivery Robot Any-start-time Planning}\label{sec:@SIPPexample}

\begin{figure}[t]
  \centering
  \begin{subfigure}{0.49\textwidth}
  \centering
\begin{tikzpicture}[scale = 1]
    \draw[thick,blue,->] (0,5) -- node[anchor=north] {$[0\text{:}00, 24\text{:}00)$} (6.4,5);
    \draw[thick,blue,->] (0,5) -- (6.4,5);
    \draw[thick,blue] (3.3,5) -- node[anchor=south] {$\delta = 20$} (3.3,5);

    \draw[thick,red,->] (0,5) -- node[anchor = north, rotate = 90] {$[0\text{:}00, 24\text{:}00)$} (0,0.1);
        \draw[thick,red,->] (0,5) -- node[anchor = south, rotate = 90] {$\delta = 1$} (0,0.1);

     \draw[thick,red,->] (0,0)  -- node[anchor = north, rotate = 37.56859] {$[0\text{:}00, 15\text{:}20),[15\text{:}40, 24\text{:}00)$} (6.4,4.9);
     \draw[thick,red,->] (0,0)  -- node[anchor = south, rotate = 37.56859] {$\delta = 9$} (6.4,4.9);
     
    \filldraw[black] (0,5) circle (2pt) node [anchor=south west] {$\mathcal{R}\text{:}[14\text{:}00,20\text{:}00)$};
    \filldraw[black] (6.5,5) circle (2pt) node [anchor=south east] {$[0\text{:}00, 24\text{:}00)\text{:}\mathcal{A}$};
    \filldraw[black] (0,0) circle (2pt) node [anchor=north west] {$\mathcal{C}\text{:}[0\text{:}00, 24\text{:}00)$};
\end{tikzpicture}  
\caption{SIPP graph.}\label{fig:delivery2sipp}
\end{subfigure}
   \begin{subfigure}{0.49\textwidth}
  \centering
\begin{tikzpicture}[scale = 1]
    \draw[thick,blue,->] (0,5) -- node[anchor=north] {$\langle 14\text{:}00, 14\text{:}00, 20\text{:}00, 20\text{ min}\rangle$} (6.4,5);

    \draw[thick,red,->] (0,5) -- node[anchor = north, rotate = 90] {$\langle 14\text{:}00, 14\text{:}00, 20\text{:}00, 1\text{ min}\rangle$} (0,0.1);

     \draw[thick,red,->] (0,0)  -- node[anchor = south, rotate = 37.56859] {$\langle 0\text{:}00, 0\text{:}00, 15\text{:}20, 9\text{ min}\rangle$} (6.4,4.9);
     \draw[thick,red,->] (0.1,-.10)  -- node[anchor = north, rotate = 37.56859] {$\langle 0\text{:}00, 15\text{:}40, 24\text{:}00, 9\text{ min}\rangle$} (6.5,4.8);

     \filldraw[white] (0,5) circle (2pt) node [anchor=south west] {$\mathcal{R}\text{:}[2\text{:}00,8\text{:}00)$};
    \filldraw[white] (6.5,5) circle (2pt) node [anchor=south east] {$[0\text{:}00, 24\text{:}00)\text{:}\mathcal{A}$};
    \filldraw[white] (0,0) circle (2pt) node [anchor=north west] {$\mathcal{C}\text{:}[0\text{:}00, 24\text{:}00)$};
    
    \filldraw[black] (0,5) circle (2pt) node [anchor=south west] {$\mathcal{R}$};
    \filldraw[black] (6.5,5) circle (2pt) node [anchor=south east] {$\mathcal{A}$};
    \filldraw[black] (0,0) circle (2pt) node [anchor=north west] {$\mathcal{C}$};
\end{tikzpicture}  
\caption{Augmented SIPP edge ATFs.}\label{fig:delivery2edge}
\end{subfigure}
\caption{State $\mathcal{C}$ has been added to the delivery robot example (Figure~\ref{fig:delivery}) to illustrate calculating the ATF of a multi-edge plan.}\label{fig:delivery2}
\end{figure}

To concretely see how RePEAT works, we give an example of calculating the reference time and tracking a compound ATF solution constructed from a sequence of \aSIPPab\ solutions for the delivery robot scenario used earlier (Figure~\ref{fig:delivery2}).
Like our original delivery robot scenario (Figure~\ref{fig:delivery}), this problem models a delivery robot navigating from a restaurant $\mathcal{R}$ to an apartment $\mathcal{A}$ to deliver a meal.
However, to create the SIPP graph shown in Figure~\ref{fig:delivery2sipp} we have added another state $\mathcal{C}$ that represents the intersection immediately before the rail crossing so that some of our final plans will consist of multiple edges (thereby making this example more illuminating).
We use the 24-hour clock to signify times, and give durations in minutes.
We restrict ourselves to a single day, so the interval of a configuration that is always safe is $[0\text{:}00, 24\text{:}00)$. 

The SIPP graph of our problem has three states: 
\begin{align*}
    &\langle \mathcal{R}, [14\text{:}00, 20\text{:}00) \rangle\\
    &\langle \mathcal{C}, [0\text{:}00, 24\text{:}00) \rangle\\
    &\langle \mathcal{A}, [0\text{:}00, 24\text{:}00) \rangle
\end{align*}
representing the restaurant ($\mathcal{R}$), which has a safe interval representing the dinner service hours of $[14\text{:}00, 20\text{:}00)$,  
the railroad crossing ($\mathcal{C}$), 
and the apartment ($\mathcal{A}$), respectively.
The restaurant is the starting state, the crossing and apartment are always safe, and the apartment is the goal state.  
The action of moving from $\mathcal{C}$ to $\mathcal{A}$ has a pair of safe intervals split by the train. 
Formally, the actions are: 
\begin{align*}
e_1 &= \langle \mathcal{R}, \mathcal{C}, [0\text{:}00, 24\text{:}00) \rangle\\
e_2 &= \langle \mathcal{C}, \mathcal{A}, [0\text{:}00, 15\text{:}20)\rangle\\ 
e_3 &= \langle \mathcal{C}, \mathcal{A}, [15\text{:}40, 24\text{:}00) \rangle\\
e_4 &= \langle \mathcal{R}, \mathcal{A}, [0\text{:}00, 24\text{:}00) \rangle.
\end{align*}
The travel times are: 
\begin{align*}
    \delta(\mathcal{R}, \mathcal{C}) &= 1 \text{ min}\\
    \delta(\mathcal{C}, \mathcal{A}) &= 9 \text{ min}\\
    \delta(\mathcal{R}, \mathcal{A}) &= 20 \text{ min}.
\end{align*}
We now calculate the parameters $\langle \zeta, \alpha, \beta, \Delta \rangle$ of our edge ATFs of the Augmented SIPP graph (Figure~\ref{fig:delivery2edge}), using Equations \ref{eqn:zetacompute}-\ref{eqn:deltacompute}:
\begin{align*}
A[e_1] &= \langle 14\text{:}00, 14\text{:}00, 20\text{:}00, 1\text{ min}\rangle\\   
A[e_2] &= \langle 0\text{:}00, 0\text{:}00, 15\text{:}20, 9\text{ min}\rangle\\
A[e_3] &= \langle 0\text{:}00, 15\text{:}40, 24\text{:}00, 9\text{ min}\rangle\\
A[e_4] &= \langle 14\text{:}00, 14\text{:}00, 20\text{:}00, 20\text{ min}\rangle
\end{align*}

Search nodes consist of a state (for clarity, we list the full plan), its plan ATF which takes the place of the traditional scalar $g$ value, a scalar $f$ value to order open, and the parent edge to act as a parent pointer.  
For this example, we will use $h(n) = $ the cost-to-go assuming all states and actions are always safe.  
Augmented SIPP's open list will be sorted by the earliest expected arrival time~($f(n) =  \alpha(n) + \Delta(n) + h(n)$).
We use $t_0 = 14\text{:}00$, our initial state is $\mathcal{R}$, and the $\isGoal$ predicate is: $\mathit{isGoal(n): \mathit{state}(n)== \mathcal{A}}$.
The initial ATF is the identity ($I[0, 14\text{:}00]$, and $f$ is $14\text{:}00 + 0\text{:}00 + 10 \text{ min} = 0\text{:}10$. The initial open list is:
\begin{equation*}
    \text{open}_0 = \{\langle \mathcal{R}, \langle 14\text{:}00, 14\text{:}00, \infty, 0 \text{ min} \rangle, f = 14\text{:}10, \varnothing \rangle\}
\end{equation*} 
We pop the front of open and expand the node corresponding to the plan $\mathcal{R}$.
The successors of $\mathcal{R}$ are generated from the edges $e_1$ and $e_2$ that originate in $\mathcal{R}$.  
Each successor is generated by applying Equations~\ref{eqn:zetacompute}-\ref{eqn:deltacompute} yielding two successors that are added to open, the open list after the first expansion is: 
\begin{align*}
    \text{open}_1 = \{
\langle \mathcal{RC}&, \langle 14\text{:}00, 14\text{:}00, 20\text{:}00, 1 \min\rangle, f = 14\text{:}10, e_1\rangle,\\
\langle \mathcal{RA}&, \langle 14\text{:}00, 14\text{:}00, 20\text{:}00, 20 \min\rangle, f = 14\text{:}20, e_4\rangle
\}
\end{align*}
Popping the front of open, we expand $\mathcal{RC}$, yielding two more potential successors.
Both successors correspond to plans through states $\mathcal{RCA}$, however the first follows edges $e_1,e_2$ and the second $e_1,e_3$.
Note that the second is not pruned, even though it has a worse $g(n)$ because it is not dominated.
We arrive at $\beta(e_1, e_2) = 15\text{:}19$ from Equation~\ref{eqn:pathbeta}, $\beta(e_1, e_2) = \min(\beta(e_1), \beta(e_2) - \Delta(e_1)$, which is $\beta(e_1, e_2) = \min(20\text{:}00, 15\text{:}20 - 1$ min. 
\begin{align*}
    \text{open}_2 = \{
\langle \mathcal{RCA}&, \langle 14\text{:}00, 14\text{:}00, 15\text{:}19, 10 \text{ min}\rangle, f = 14\text{:}10, e_2\rangle,\\
\langle \mathcal{RA}&, \langle 14\text{:}00, 14\text{:}00, 20\text{:}00, 20 \min\rangle, f = 14\text{:}20, e_4\rangle,\\
\langle \mathcal{RCA}&, \langle 14\text{:}00, 15\text{:}39, 20\text{:}00, 10 \text{ min}\rangle, f = 15\text{:}49, e_3\rangle
\}
\end{align*}
Popping the front of open, we expand $\mathcal{RCA}$.
$\mathcal{RCA}$ is a goal, so we add its ATF to our under-construction cATF any-start-time solution:
\begin{align*}
    \textit{cATF}_0 = \{&[14\text{:}00, 15\text{:}19) \mapsto \langle\mathcal{RCA}, \langle 14\text{:}00, 14\text{:}00, 15\text{:}19, 10 \text{ min}\rangle, f = 14\text{:}10, e_2\rangle \} 
\end{align*}
We know our initial plan is optimal until $15\text{:}19$ because no plans in open have an earlier arrival time until the plan we found expires at $15\text{:}19$, so we run Augmented SIPP again, this time with a start time of $15\text{:}19$.
Our first expansion is the same (except for the difference in starting time), resulting in:
\begin{align*}
    \text{open}_3 = \{
\langle \mathcal{RC}&, \langle 15\text{:}19, 15\text{:}19, 20\text{:}00, 1 \min\rangle, f = 15\text{:}20, e_1\rangle,\\
\langle \mathcal{RA}&, \langle 15\text{:}19, 15\text{:}19, 20\text{:}00, 20 \min\rangle, f = 15\text{:}39, e_4\rangle
\}
\end{align*}
As with the first search, we expand $\mathcal{RC}$ however, this time the start time is too late to follow edge $e_1$, so we generate only the successor following edge $e_3$.
\begin{align*}
    \text{open}_4 = \{
\langle \mathcal{RA}&, \langle 15\text{:}19, 15\text{:}19, 20\text{:}00, 20 \min\rangle, f = 15\text{:}39, e_4\rangle,\\
\langle \mathcal{RCA}&, \langle 15\text{:}19, 15\text{:}39, 20\text{:}00, 10 \text{ min}\rangle, f = 15\text{:}49, e_3\rangle
\}
\end{align*}
We then expand $\mathcal{RA}$, which is a goal so we add it to our set of solutions:
\begin{align*}
    \textit{cATF}_1 = \{
            &[14\text{:}00, 15\text{:}19) \mapsto \langle \mathcal{RCA}, \langle 14\text{:}00, 14\text{:}00, 15\text{:}19, 10 \text{ min}\rangle, f = 14\text{:}10, e_2\rangle,\\
            &[15\text{:}19, 15\text{:}29) \mapsto \langle \mathcal{RA}, \langle 15\text{:}19, 15\text{:}19, 20\text{:}00, 20 \min\rangle, f = 15\text{:}39, e_4\rangle
    \} 
\end{align*}
Figure~\ref{fig:catf1} shows cATF$_1$, the any-start-time plan so far for our example.
The solid portions of the plan have been proven to be optimal, while the optimality of the dotted portion is unknown by the search (but known by us to be suboptimal).

\begin{figure}[t]
    \centering
    \begin{subfigure}{0.49\textwidth}
    \begin{tikzpicture}[scale = 0.65]
    \draw[thick,->] (0,0) -- (8.5,0) node[anchor=north west] {};
    \draw[thick,->] (0,0) -- (0,7.5) node[anchor=south west] {};
    \draw [color =red] (0.2, 0.2) -- node [anchor=south east] {$e_2$} (2.1,2.1);
    \draw [color =blue] (2.1,4.1) -- node [anchor=south east] {$e_4$} (4.1,6.1);
    \draw [thick,dotted,color=blue,->] (4.1,6.1) -- (5.4,7.4);

    \filldraw[black] (0.2,0) circle (0pt) node [anchor=north] {15:10};
    \filldraw[black] (2.2,0) circle (0pt) node [anchor=north] {15:20};
    \filldraw[black] (4.2,0) circle (0pt) node [anchor=north] {15:30};
    \filldraw[black] (6.2,0) circle (0pt) node [anchor=north] {15:40};
    \filldraw[black] (8.2,0) circle (0pt) node [anchor=north] {15:50};

    \filldraw[black] (0,0.2) circle (0pt) node [anchor=east] {15:20};
    \filldraw[black] (0,2.2) circle (0pt) node [anchor=east] {15:30};
    \filldraw[black] (0,4.2) circle (0pt) node [anchor=east] {15:40};
    \filldraw[black] (0,6.2) circle (0pt) node [anchor=east] {15:50};

    \filldraw[black] (4.25,-.6) circle (0pt) node [anchor=north] {Departure Time};
    \filldraw[black] (-1.5,3.75) circle (0pt) node [anchor=south, rotate=90] {Arrival Time};
    \end{tikzpicture}
    \caption{$cATF_1$}\label{fig:catf1}
    \end{subfigure}
    \begin{subfigure}{0.49\textwidth}
    \begin{tikzpicture}[scale = 0.65]
    \draw[thick,->] (0,0) -- (8.5,0) node[anchor=north west] {};
    \draw[thick,->] (0,0) -- (0,7.5) node[anchor=south west] {};
    \draw [color =red] (0.2, 0.2) -- node [anchor=south east] {$e_2$} (2.1,2.1);
    \draw [color =blue] (2.1,4.1) -- node [anchor=south east] {$e_4$} (4.1,6.1);
    \draw [color =red, ->] (4.1,6.1) -- node [anchor=south] {$e_3$} (6.1,6.1) -- (7.5, 7.5);

    \filldraw[black] (0.2,0) circle (0pt) node [anchor=north] {15:10};
    \filldraw[black] (2.2,0) circle (0pt) node [anchor=north] {15:20};
    \filldraw[black] (4.2,0) circle (0pt) node [anchor=north] {15:30};
    \filldraw[black] (6.2,0) circle (0pt) node [anchor=north] {15:40};
    \filldraw[black] (8.2,0) circle (0pt) node [anchor=north] {15:50};

    \filldraw[black] (0,0.2) circle (0pt) node [anchor=east] {15:20};
    \filldraw[black] (0,2.2) circle (0pt) node [anchor=east] {15:30};
    \filldraw[black] (0,4.2) circle (0pt) node [anchor=east] {15:40};
    \filldraw[black] (0,6.2) circle (0pt) node [anchor=east] {15:50};

    \filldraw[black] (4.25,-.6) circle (0pt) node [anchor=north] {Departure Time};
    \filldraw[black] (-1.5,3.75) circle (0pt) node [anchor=south, rotate=90] {Arrival Time};
    \end{tikzpicture}  
    \caption{$cATF_2$}\label{fig:catf2}
    \end{subfigure}
    \caption{Compound ATFs from the Delivery Robot Example (see Figure~\ref{fig:delivery2})}
    \label{fig:catfs}
\end{figure}

We can see that this plan will be optimal until $15\text{:}29$, at which time it will be as fast or faster to wait on edge $e_3$ rather than taking edge $e_4$.
Thus, we start our third and final Augmented SIPP search with $t_0 = 15\text{:}29$, this search is identical to the previous one (except for the starting time) except $\mathcal{RA}$ is pruned because it is dominated by the solutions already in the cATF.
Thus, the only unpruned plan left is $\mathcal{RCA}$ via edge $e_3$, which we add to the cATF.
Leading to our final set of solutions:
\begin{align*}
    \textit{cATF}_2 = \{
            &[14\text{:}00, 15\text{:}19) \mapsto \langle \mathcal{RCA}, \langle 14\text{:}00, 14\text{:}00, 15\text{:}19, 10 \text{ min}\rangle, f = 14\text{:}10, e_2\rangle,\\
            &[15\text{:}19, 15\text{:}29) \mapsto \langle \mathcal{RA}, \langle 15\text{:}19, 15\text{:}19, 20\text{:}00, 20 \min\rangle, f = 15\text{:}39, e_4\rangle,\\
            &[15\text{:}29, 20\text{:}00) \mapsto \langle \mathcal{RCA}, \langle 15\text{:}29, 15\text{:}39, 20\text{:}00, 10 \text{ min}\rangle, f = 15\text{:}49, e_3\rangle
    \} 
\end{align*}
Figure~\ref{fig:catf2} shows cATF$_2$, the any-start-time plan for our example.  With this intuitive understanding in place, we now introduce the algorithm formally.

\subsection{Solving @SIPP: RePEAT}

We describe an algorithm for solving @SIPP, called RePEAT (because it is Repeatedly Planning for Ever-Ascending Times).
It works by repeatedly calling \aSIPPab\ with a monotonically increasing starting time.
The result is a set of plans that contains the optimal plan for any start time.  (In contrast, the objective of Augmented SIPP was to find a single plan that is optimal for a specified starting time.)
Although @SIPP is defined as an offline problem, one way of thinking about RePEAT is as an anytime algorithm: the longer it runs, the further into the future we have optimal plans in our any-start-time plan.  Assuming it can progress the reference time at least as fast as time passes, RePEAT will always have an optimal solution when signaled to act.
The example in Section~\ref{sec:@SIPPexample} followed the form of RePEAT, however in this section we describe RePEAT in detail, particularly how a compound ATF can be represented efficiently and how to calculate the time until which an \aSIPPab\ plan is optimal.

\begin{algorithm}[t]
\caption{RePEAT}
\label{alg:repeat}
\begin{algorithmic}[1]
\Function{RePEAT}{$\sigma, \delta, x_o ,\isGoal$}
    \State $\trefm \leftarrow 0$ \label{algln:tref0}
    \State $\mathit{solutions} \leftarrow \{
    [0, \infty) \mapsto \langle \varnothing, I[\infty, 0], f = \infty, \varnothing \rangle
    \}$ \label{algln:solutionsinit}
    \While{$\trefm < t_{e}(\mathit{start})$}
        \State $\mathit{solution}, \mathit{ATF}, \open, \closed \leftarrow $ \Call{\aSIPPab}{$\sigma, \delta, x_o, \trefm,\isGoal$} \label{algln:runasipp}
        \State $\trefm \leftarrow $ \Call{updateReferenceTime}{$\mathit{solution}, \open$} \label{algln:updatereftime}
        \State $\mathit{solutions} \leftarrow$ \Call{compoundATFadd}{$\mathit{solution}, \mathit{solutions}$} \label{algln:catfadd}
    \EndWhile
    \State \Return $\mathit{solutions}$ \label{algln:peatend}
\EndFunction
\end{algorithmic}
\end{algorithm}

The pseudocode for RePEAT is shown in Algorithm~\ref{alg:repeat}.
The reference time \tref\ is the starting time for our \aSIPPab\ searches.
After each \aSIPPab\ search we calculate a new, later \tref\ to use as the starting time of the next round of search.
We describe how to perform this reference time update in Section~\ref{sec:reftimeupdate}.
We initially have a reference time of $\trefm = 0$ (Line~\ref{algln:tref0}), indicating that our set of plans is empty and not known to be optimal for any start time.
For simplicity, we use $0$ here as the relative time after the beginning of the interval of interest; it works just as well to initialize \tref\ to an absolute starting time or the beginning of the safe interval of the initial state.   
Our cATF $\mathit{solutions}$ is initialized to be infinity to reflect that we have no current plans that reach the goal; to accomplish this, we insert a fake solution with infinite cost that covers the entire timeline (Line~\ref{algln:solutionsinit}).
We then repeatedly run \aSIPPab\ with $t_0 = \trefm$ (Line~\ref{algln:runasipp}).
After each round of \aSIPPab, we increase the reference time to be the earliest departure time when our solution is not provably optimal (Line~\ref{algln:updatereftime}), then we add the latest \aSIPPab\ solution to our set of plans (Line~\ref{algln:catfadd}).

In the following subsections, we detail the two key operations for RePEAT beyond \aSIPPab: how the cATF is efficiently implemented and how the reference time is updated.
Note that the reference time update makes use of the open list of \aSIPPab\ as it was immediately before the goal node was selected.
This is why we have \aSIPPab\ returning the open list along with the solution (Algorithm~\ref{alg:asipp}, Line~\ref{algln:retval}).
RePEAT terminates when it has proven that it has optimal plans for the entire interval of the starting state.
When used in an anytime manner, the reference time is the departure time until which RePEAT has found optimal solutions.

\subsection{Representing Any-Start-Time Plans: Compound ATFs}

The edge and plan ATFs used by Augmented SIPP have a straightforward representation consisting of the four real-valued parameters $\zeta, \alpha, \beta,\text{ and }\Delta$.
As we have seen in our examples, the optimal plan may be different for different starting times,
so compound ATFs represent the arrival time function of a set of plans.
Formally, consider a set of plans $\mathbb{P} \subseteq E^N$ and denote the search node
associated with a specific plan as $p \in \mathbb{P}$.  
Each plan has an associated plan ATF ($A[p]$), and a cATF is the ATF of the entire set of plans.
In the context of any-start-time planning, we are finding the cATF of plans between the start and goal states, which we denote by $A[s, d]: s,d \in S$. 
The cATF converges from being an overestimate to exact, and in order of departure time as more Augmented SIPP searches are run.

We will use cATF to refer both to the set of plans $\mathbb{P}$ and its associated ATF $A[\mathbb{P}]$.
While it would be trivial to implement a cATF as a simple set of plans, if we do not track when each plan is optimal, then we would need to try all of the plans each time we use the cATF to compute an arrival time, potentially slowing down our search.
In this section, we describe our representation of a cATF and how to efficiently compute the following operations:
\begin{description}
    \item[plan insertion] inserting a new plan $p$ into $\mathbb{P}$, i.e., $\mathbb{P} \cup \{p\}$,
    \item[arrival time query] the earliest arrival time departing at time $t$, i.e., $\min_{p\in \mathbb{P}}A[p](t)$, and
    \item[plan domination check] if a plan is dominated by a cATF, i.e., $A[s,d] \trianglelefteq A[p]$.
\end{description}

\begin{figure}[b]
    \centering
    \begin{tikzpicture}[scale=1]
    \draw[thick,->] (0,0) -- (6.5,0) node[anchor=north west] {$t_{depart}$};
    \draw[thick,->] (0,3) -- (0,3.5) ;
    \draw[thick] (0,0) -- (0,2.5);
    \draw[thick,dotted] (0,2.5) -- (0,3);
    \draw[thick] (1.5 cm,5pt) -- (1.5 cm,-5pt) node[anchor=north] {$\zeta$};
    \draw[thick] (3 cm,5pt) -- (3 cm,-5pt) node[anchor=north] {$\alpha$};
    \draw[thick] (4.5 cm,5pt) -- (4.5 cm,-5pt) node[anchor=north] {$\beta$};
    \draw (5pt, 1) -- (-5pt, 1) node[anchor=east] {$A[p](\min(\alpha, \beta))$};
    \draw (5pt, 2.5) -- (-5pt, 2.5) node[anchor=east] {$A[p](\beta)$};
    
    \draw (1.5,1) -- (3,1);
    \draw (3,1) -- (4.5,2.5);
    \filldraw[black] (1.5,1) circle (3pt);
    \filldraw[black] (3,1) circle (3pt);
    \filldraw[black] (4.5,2.5) circle (3pt);
    \filldraw[white] (4.5,2.5) circle (2pt);
    \end{tikzpicture}
    \caption{A SIPP plan ATF with parameters $\zeta,\alpha,\beta$, and $\Delta$.}
    \label{fig:apsis}
\end{figure}

We represent a cATF as an ordered mapping of time intervals to plans ($I \mapsto \mathbb{P}$).
Recall that each plan $p\in \mathbb{P}$ has an ATF $A[p]$ of the general form shown in Figure~\ref{fig:apsis}, consisting of up to two segments: the flat segment with interval $[\zeta, \min(\alpha, \beta))$, and if $\alpha < \beta$ the rising segment with interval $[\alpha, \beta)$. 
Each interval in a cATF maps to a search node, so that the plan and its ATF corresponding to times in that interval can be quickly retrieved.
These intervals capture when each plan in the cATF is optimal, and allow for the quick retrieval of those plans.
The reason that each plan is not represented by a single interval is that an ATF might be optimal for several disjoint intervals of departure time; an example of this is shown in Figure~\ref{fig:catfaddexample}.

Formally, a cATF mapping to a set of plans $\mathbb{P}$ has the following properties:
\begin{description}
    \item[Coverage] For all times $t$, there is one and only one interval in  $\mathbb{P}$ containing $t$.
    \item[Correctness] For all intervals $i \in I$, with plan $p_i = I[i]$, 
    $\forall t \in i: A[p_i](t) = A[\mathbb{P}](t)$.
\end{description}
In RePEAT (Algorithm~\ref{alg:repeat}), $\mathit{solutions}$ is a cATF initialized to contain a single interval that maps to an ATF that is infinite for all times (Line~\ref{algln:solutionsinit}).
This satisfies coverage as all times map to the infinite ATF and correctness because no plans have been found for any time, so the earliest arrival time via that empty set of plans is always infinity.

\subsubsection{Plan Insertion}

\begin{figure}[t]
    \centering
    \begin{tikzpicture}[scale = 1]
        \draw[thick,->] (0,-0.5) -- (9,-0.5) node[anchor=north west] {$t_{depart}$};
        \draw[thick,->] (0,-0.5) -- (0,5) node[anchor=north west] {$t_{arrive}$};
        
        \draw[thick] (0,-0.5) -- ++(0,-5pt) node[anchor=north] {$0$};
        \draw[thick] (2,-0.5) -- ++(0,-5pt) node[anchor=north] {$2$};
        \draw[thick] (4,-0.5) -- ++(0,-5pt) node[anchor=north] {$4$};
        \draw[thick] (6,-0.5) -- ++(0,-5pt) node[anchor=north] {$6$};
        \draw[thick] (8,-0.5) -- ++(0,-5pt) node[anchor=north] {$8$};

        \draw[thick] (0,0.25) -- (-5pt,0.25) node[anchor=east] {$6$};
        \draw[thick] (0,2.25) -- (-5pt,2.25) node[anchor=east] {$8$};
        \draw[thick] (0,4.25) -- (-5pt,4.25) node[anchor=east] {$10$};

        \draw[thick,red] (0, 0.25) node[anchor=south west] {$A[p_1]$} -- (2, 0.25)  -- (3, 1.25);
        \draw[thick,red,dotted] (3, 1.25)  -- (6, 4.25);
        \filldraw[red] (0,0.25) circle (3pt);
        \filldraw[red] (2,0.25) circle (3pt);
        \filldraw[red] (6,4.25) circle (3pt);
        \filldraw[white] (6,4.25) circle (2pt);

        \draw[thick,red,dotted] (0, 2.25) node[anchor=south west] {$A[p_2]$} -- (5, 2.25);
        \draw[thick,red] (5, 2.25) -- (6, 2.25)  -- (7, 3.25);
        \filldraw[red] (0,2.25) circle (3pt);
        \filldraw[red] (6,2.25) circle (3pt);
        \filldraw[red] (7,3.25) circle (3pt);
        \filldraw[white] (7,3.25) circle (2pt);

        \draw[thick,blue, dotted] (0, 1.25) node[anchor=south west] {$A[p_3]$} -- (3, 1.25);
        \draw[thick,blue] (3, 1.25) -- (4, 1.25)  -- (5, 2.25);
        \draw[thick,blue,dotted] (5, 2.25)  -- (7, 4.25);
        \draw[thick,blue] (7, 4.25)  -- (7.5, 4.75);
        \filldraw[blue] (0, 1.25) circle (3pt);
        \filldraw[blue] (4, 1.25) circle (3pt);
        \filldraw[blue] (7.5, 4.75) circle (3pt);
        \filldraw[white] (7.5, 4.75) circle (2pt);
        
        \filldraw[red] (0, 0.25) circle (3pt) node[below right] {$b_0$};
        
        \filldraw[red] (2, 0.25) circle (3pt) node[below right] {$b_1$};
        
        \filldraw[red] (3, 1.25) circle (3pt);
        \filldraw[blue] (3, 1.25) circle (2pt) node[below right] {$b_2$};
        
        \filldraw[blue] (4, 1.25) circle (3pt) node[below right] {$b_3$};

        \filldraw[blue] (5, 2.25) circle (3pt);
        \filldraw[red] (5, 2.25) circle (2pt) node[below right] {$b_4$};
        
        \filldraw[red] (6, 2.25) circle (3pt) node[below right] {$b_5$};

        \filldraw[blue] (7, 4.25) circle (2pt) node[below right] {$b_6$};
        
        \filldraw[blue] (7.5, 4.75) circle (3pt) node[below right] {$b_7$};
        \filldraw[white] (7.5, 4.75) circle (2pt);

    \end{tikzpicture}
    \caption{Compound ATF plan insertion example: $p_3$ is inserted into $\{ p_1, p_2 \}$.  The three plan ATFs $A[p_1],A[p_2],$ and $A[p_3]$ are shown with segments strongly dominated by $\{p_1, p_2, p_3 \}$ dotted.  The breakpoints of $\{p_1, p_2, p_3 \}$ are $b_{0},...,b_{7}$.}
    \label{fig:catfaddexample}
\end{figure}

Plan insertion works by computing the lower hull of the cATF and the new plan ATF.
For example, Figure~\ref{fig:catfaddexample} shows the addition of a new plan ($p_3$, in blue) to an existing cATF ($\{ p_1, p_2 \}$, in red).
The plan ATFs in this example are:
\begin{align*}
    A[p_1] =& \langle 0, 2, 6, 4 \rangle\\
    A[p_2] =& \langle 0, 6, 7, 2 \rangle\\
    A[p_3] =& \langle 0, 4, 7.5, 3 \rangle
\end{align*}
And the cATF is originally:
\begin{align*}
    A[\{ p_1, p_2 \}] = \{ 
    [&0, 4) \mapsto p_1,\\
    [&4, 7) \mapsto p_2,\\
    [&7, \infty) \mapsto I[\infty, 0]
    \}  
\end{align*}
We add the plan $p_3$, taking the lower hull, i.e., finding the new mapping resulting from minimizing over the set with the addition of the new node:
\begin{align*}
    A[\{ p_1, p_2, p_3 \}] = \{ 
    [&0, 3) \mapsto p_1,\\
    [&3, 5) \mapsto p_3,\\
    [&5, 7) \mapsto p_2,\\
    [&7, 7.5) \mapsto p_3,\\
    [&7.5, \infty) \mapsto I[\infty, 0]
    \}  
\end{align*} 
In particular, note that $p_3$ is dominant for two distinct intervals: the first until $t = 5$ because the shorter (in movement duration) plan $p_2$ requires waiting until $t=6$ and the second after $p_2$ becomes unsafe at $t=7$.
Handling this splitting simply and efficiently is one reason that our representation considers the flat and rising segments separately.


\begin{algorithm}[t]
\caption{add plan ATF $p$ to compound ATF $c$}
\label{alg:insertplan}
\begin{algorithmic}[1]
\Function{insert}{$p, c$} \label{alg:compoundatfadd}
    \If{$\beta(p) \leq \alpha(p)$} \label{algln:mksegments}
        \State \Call{insertSegment}{$[\zeta(p), \beta(p)), p$}
    \Else
        \State \Call{insertSegment}{$[\zeta(p), \alpha(p)), p$}
        \State \Call{insertSegment}{$[\alpha(p), \beta(p)), p$}
    \EndIf
\EndFunction

\Function{insertSegment}{$c, i, p$}
    \State $c \gets c \cup \{ i \mapsto p\}$\label{algln:catfinsert} \Comment{Keep track of overlaps}
    \For{Segment $s \in c: s $ overlaps $i$} \label{algln:overlaps}
        \State $\mathit{incumbent, newcomer} \gets \mathit{candidates}(s)$ \Comment{Consider the two overlapping segments} 
        \If{$\mathit{incumbent} \trianglelefteq_{ATF} \mathit{newcomer}$} \label{algln:incdom}
            \State $c \gets c \setminus \mathit{newcomer}$ \label{algln:removenewcomer}
        \ElsIf{$\mathit{newcomer} \trianglelefteq_{ATF} \mathit{incumbent}$} \label{algln:newdom}
            \State $c \gets c \setminus \mathit{incumbent}$ \label{algln:removeincumbent}
        \Else \Comment{Segments must intersect}
            \State $t \gets \mathit{intersection}(\mathit{incumbent}, \mathit{newcomer})$\label{algln:intersection} \Comment{Finds the `x' value where two line segments intersect}
            \State $t_b, t_e \gets \mathit{interval}(s)$
            \State $a = ATF[\mathit{incumbent}], b = ATF[\mathit{newcomer}]$
            \If{$a(t_b) > b(t_b)$}
                \State $a = ATF[\mathit{newcomer}], b =  ATF[\mathit{incumbent}]$ \Comment{Order the segments}
            \EndIf
            \State $\mathit{newSegments} \gets \{ [t_b, t) \mapsto a , [t, t_e) \mapsto b \}$ \label{algln:newsegments}
            \State $c \gets (c \setminus \mathit{incumbent} ) \setminus \mathit{newcomer} $
            \State $c \gets c \cup \mathit{newSegments} $
        \EndIf
    \EndFor
\EndFunction
\end{algorithmic}
\end{algorithm}

Algorithm~\ref{alg:insertplan} shows the pseudocode of the ATF insertion operation.
For our implementation, we have used a boost::icl::split\_interval\_map as the underlying data structure for the compound ATF \cite{boost_icl}.
This data structure takes care of finding the time intervals of overlaps between the new segment being inserted and the segments already in the compound ATF.
What remains is to decide which parts of the two segments are retained.
To add a plan ATF we divide it into its segment(s) (Line~\ref{algln:mksegments}).
If $\beta \leq \alpha$, then there is only one segment, rather than the usual two.

We then separately insert each segment into the compound ATF (Line~\ref{algln:catfinsert}).
The split interval map breaks the timeline into time intervals, one for each segment, except when an existing `incumbent' segment overlaps the newly added segment.  
We consider these time-overlapping segments (Line~\ref{algln:overlaps}) and first check if the incumbent (Line~\ref{algln:incdom}) or the newcomer (Line~\ref{algln:newdom}) dominates for the entire overlap.
In these cases, the dominant one can be retained and the other removed (Lines~\ref{algln:removenewcomer},\ref{algln:removeincumbent}).
Otherwise, we are guaranteed that the two line segments intersect, so we calculate the time $t$ when this intersection occurs (Line~\ref{algln:intersection}) and replace the two overlapping segments with the appropriate non-overlapping sub-segments (Line~\ref{algln:newsegments}).

The time complexity of the insert operation is linear in the number of segments in the overlap (Line~\ref{algln:overlaps}), and logarithmic in the size of the compound ATF due to the binary search to find the extent of the interval, thus the worst case complexity of the compound ATF insertion operation is $O(|c|\ln{(|c| + |p|)})$.

\subsubsection{Calculating Arrival Times with a Compound ATF}
Querying a compound ATF is a straightforward operation: we perform a binary search over the ordered set of time intervals to find the one containing the query time.
We then compute the arrival time using the ATF mapped from that interval.
The runtime of this operation is $O(\log(c))$ because of the binary search.
We will explore the on-line performance of querying compound ATFs in the empirical evaluation, but we expect this operation to be very fast in practice.

\subsubsection{Compound ATF Dominance}
The operation to compute if an ATF is dominated by another ATF is the same whether those ATFs are plan ATFs or compound ATFs.  
It is sufficient to consider only the breakpoints of the two ATFs when checking for dominance (Definition~\ref{def:atfdom}), because the ATFS are monotonic non-decreasing, and can only increase discontinuously at breakpoints.
Any change in which ATF is dominant (called a minimization breakpoint) is guaranteed to be preceded and followed by breakpoints on the same segments.
See Section~\ref{sec:theory} for a detailed exploration of the theoretical properties of compound ATFs, which is focused on understanding their breakpoints.
With this, checking plan domination amounts to iterating through the breakpoints of the two ATFs, and checking if one always arrives at least as early as the other.
Because we maintain both ATFs (e.g., $A, A'$) in order, the runtime is $O(|A| + |A'|)$.  
For a pair of plan ATFs this is $O(1)$, but if there is a compound ATF, in the worst case, we might need to check all of the breakpoints to check dominance.

\subsection{Updating the Reference Time}\label{sec:reftimeupdate}

\begin{algorithm}[t]
\caption{updateReferenceTime}
\label{alg:updatereftime}
\begin{algorithmic}[1]
\Function{updateReferenceTime}{solution, open}
    \If{$|\open| = 0$} \label{algln:emptycase}
        \State \Return $\infty$
    \EndIf
    \State $\mathit{lowerBound} \leftarrow \alpha(\mathit{solution})$ \label{algln:urtlower}
    \While{$\lnot \mathit{empty}(\mathit{open}) \land \mathit{lowerBound} < \beta(\mathit{solution})$} 
        \State $n \leftarrow \mathit{pop}(\mathit{open})$ \label{algln:urtpopopen} 
        \State $\mathit{lowerBound} \leftarrow \alpha(n) + \Delta(n) + h(n) - \Delta(solution)$ \label{algln:urtupdateopen}
        \If{$\alpha(n) > \mathit{lowerBound}$} \label{algln:thecheck} 
            \State \Return $\mathit{lowerBound}$ \label{algln:urtfoundbetter}
        \EndIf
    \EndWhile
    \State \Return $\beta(\mathit{solution})$ \label{algln:urtnocandidates}
\EndFunction
\end{algorithmic}
\end{algorithm}

The reference time (\tref) tracks the earliest time that we do not yet know that we have an optimal solution for.
A simple approach might observe that any solution, $n$, that is optimal at \tref{} is optimal until at least $\alpha(n)\geq \trefm$; however, this simple approach would only lead us to a non-decreasing reference time. 
Our algorithm for updating the reference time is described in Algorithm~\ref{alg:updatereftime}.
It finds a reference time that increases monotonically with each solution returned by Augmented SIPP.

When Augmented SIPP terminates, it returns a solution with node $n$, that:
\begin{itemize}
    \item is optimal at the reference time (Theorem~\ref{thm:absippoptimal}), and at least until $\alpha(n)$ (Equation~\ref{eqn:edgearrival}), and
    \item dominates all other plans that are optimal at the reference time (Theorem~\ref{thm:asippdom}), but
    \item does not necessarily dominate the remaining plans (that may be optimal at later times than the reference time), particularly after $\beta(n)$.
\end{itemize}
So the objective of our reference time update is to calculate the latest new reference time \tref'  such that the planner can prove that the plan represented by $n$ is optimal until at least \tref'. 
Our approach does this by examining where the ATF of other nodes on open intersects with the ATF of $n$.
With an admissible heuristic, the earliest intersection is the earliest time the plan corresponding to node $n$ could possibly no longer be optimal. 

Algorithm~\ref{alg:updatereftime} shows the reference time update.  
If open is empty (Line~\ref{algln:emptycase}), then no (non-dominated) plan was found by Augmented SIPP, so the new reference time is $\trefm' =\infty$.
Otherwise, the reference time will be in the interval $\trefm' \in (\alpha(\mathit{solution}), \beta(\mathit{solution}))$.  
We then look at each node $n$ on open, geometrically we know that the intersection must occur at the intersection of the rising segment of $\mathit{solution}$ and the flat segment of $n$, (two segments with the same slope being unable to intersect, and being constrained to the rising segment of $\mathit{solution}$), so we calculate the (hypothetical) intersection point in departure time as $\mathit{lowerBound}$ (Line~\ref{algln:urtupdateopen}).
If the hypothetical intersection is really an intersection (Line~\ref{algln:thecheck}), then we return the departure time of the intersection point as the new reference time.
Otherwise, if we find an intersection time that would be after $\beta(\mathit{solution})$, then we know there will be no intersection and return $\beta(\mathit{solution})$.

In Algorithm~\ref{alg:updatereftime}, we do an in-order traversal of open, meaning if open is implemented as a heap our worst-case runtime would be $O(|\open|\log (|open|))$ if we had to try every node on open.
It would also be valid to do an unordered traversal, tracking the earliest intersection, which could presumably be implemented in $O(|\open|)$.
We do the in-order traversal with the assumption that it is usually not actually necessary to traverse all of open.
Either way, we are not changing the overall runtime because we had to construct open during the search.

We now prove some theoretical properties of RePEAT:
\begin{corollary}\label{cor:trefinc}
    \tref\ increases monotonically with new solutions.
\end{corollary}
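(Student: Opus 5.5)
The plan is to show that every value returned by updateReferenceTime (Algorithm~\ref{alg:updatereftime}) is strictly greater than the reference time $\trefm$ passed to \aSIPPab\ on Line~\ref{algln:runasipp} of RePEAT. There are three return points, and I will handle each separately.

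\textbf{The first two return points.} If $\open$ is empty, the returned value is $\infty$, which is trivially larger than any finite $\trefm$.

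Otherwise a solution node $n_s$ was returned, and every node of that search has $\zeta = \trefm$ (Equation~\ref{eqn:pathzeta} with the initial identity ATF $I[0,\trefm]$). Validity (Equations~\ref{eqn:nonnegwait}--\ref{eqn:positiveduration}) then gives $\trefm \le \alpha(n_s)$ and $\trefm < \beta(n_s)$. So if the loop exits and Line~\ref{algln:urtnocandidates} returns $\beta(n_s)$, we already have a strict increase.

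\textbf{The main obstacle: the early return.} It remains to treat a return from Line~\ref{algln:urtfoundbetter}, where the value is $\ell = \alpha(n)+\Delta(n)+h(n)-\Delta(n_s)$ for some popped node $n$ with $\alpha(n) > \ell$. I need to show $\ell > \trefm$, and I expect this step to carry the argument. The idea is to compare scalar priorities at $\trefm$.

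By Lemma~\ref{lmm:aplusb}, $f(n) = \alpha(n)+\Delta(n)+h(n)$ and $f(n_s) = \alpha(n_s)+\Delta(n_s)$ (taking $h=0$ at the goal, or absorbing it the same way). Because $n_s$ was selected before $n$ under $\leqfab$, which respects $\leqf$, we have $f(n) \ge f(n_s)$. Hence $\ell = f(n) - \Delta(n_s) \ge \alpha(n_s) \ge \trefm$.

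This gives only a non-strict inequality, so the equality case needs separate treatment. Equality would force $f(n)=f(n_s)$ and $\alpha(n_s)=\trefm$, i.e.\ $n$ lies in the same $f$-layer as the solution. By Theorem~\ref{thm:asippdom}, $f[n_s]$ dominates $f[n]$ on that layer. Since $\ell = \alpha(n_s)$, the condition $\alpha(n) > \ell$ says that $n$'s flat segment continues past $\alpha(n_s)$ while $n_s$ rises. That contradicts domination just to the right of $\alpha(n_s)$, provided that point is below $\beta(n_s)$, which holds because $\trefm < \beta(n_s)$. So the equality case cannot produce a return.

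If that contradiction argument turns out to be delicate, I would fall back on a different route. The loop starts with $\mathit{lowerBound}=\alpha(n_s)$ and only returns strictly inside $(\alpha(n_s),\beta(n_s))$, as the surrounding text asserts. I would make that precise by showing that nodes in the solution's $f$-layer never trigger the check on Line~\ref{algln:thecheck}.

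\textbf{Conclusion.} Combining the three cases gives $\trefm' > \trefm$ on every iteration, so the sequence of reference times produced by RePEAT increases monotonically.
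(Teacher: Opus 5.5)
Your argument is correct relative to the paper's stated results, but it takes a different route from the paper's proof.

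\textbf{The paper's route.} The paper argues in terms of arrival times. It uses Theorem~\ref{thm:asippdom} to assert that the earliest arrival time when departing at $\trefm'$ is strictly later than when departing at $\trefm$. It then uses monotonicity of compound ATFs (Lemma~\ref{lmm:monotone}) to turn this into $\trefm<\trefm'$.

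\textbf{Your route.} You check each return point of Algorithm~\ref{alg:updatereftime} directly in departure time. From $\zeta=\trefm$ and validity you get $\trefm\le\alpha(n_s)$ and $\trefm<\beta(n_s)$. From Lemma~\ref{lmm:aplusb} and the $\leqf$ order of open you get $\trefm'\ge\alpha(n_s)\ge\trefm$. Theorem~\ref{thm:asippdom} is then needed only to exclude the single tie $f(n)=f(n_s)$, $\alpha(n_s)=\trefm$. In that step, also cite $\beta(n)>\trefm$ by validity, so that $n$'s flat segment really extends past $\alpha(n_s)$.

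\textbf{What each buys.} The paper's version is shorter and does not depend on how the update is implemented. However, its key step, the strictly later arrival at $\trefm'$, is stated with little justification. Yours avoids Lemma~\ref{lmm:monotone} entirely, verifies every return path, and yields the sharper bound $\trefm'\ge\alpha(n_s)$. It also isolates exactly where the dominance theorem, and hence the $\uparrow\alpha$ tie-breaking, is needed.

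That dependence is genuine. Suppose equal-$f$ goals were preferred before higher $\alpha$. Then a non-goal node with $f(n)=f(n_s)$ and $\alpha(n)>\alpha(n_s)=\trefm$ could remain on open, and the update would return exactly $\trefm$, so there would be no progress.
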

\begin{proof}
    By Theorem \ref{thm:asippdom}, our solution is optimal up-to and including $\trefm$, and dominates all other nodes that are optimal at \tref.  
    Therefore, the earliest arrival time departing at the new reference time, $\trefm'$, is later than the earliest arrival time departing at the former reference time, \tref.  
    Compound arrival time functions are monotonic non-decreasing (we prove this assertion in Lemma~\ref{lmm:monotone}), so $\trefm < \trefm'$.
\end{proof}
This allows us to show that:
\begin{theorem}
    RePEAT is complete and optimal
\end{theorem}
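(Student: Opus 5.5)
We prove optimality and completeness separately, using as the central tool an invariant on the compound ATF $\mathit{solutions}$ maintained by RePEAT. The invariant is: at the start of every iteration of the while loop, for all departure times $t \in [t_b(\mathit{start}), \trefm)$, the cATF satisfies $A[\mathit{solutions}](t) = A^*[s_o,x_g](t)$. Here $A^*$ is the true earliest arrival time over all valid plans, and $A^* = \infty$ when no plan exists.

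The invariant holds trivially initially, since the interval is empty. For the inductive step, assume it holds for $[\cdot,\trefm)$. \aSIPPab\ is run with $t_0 = \trefm$. By Theorem~\ref{thm:absippoptimal} the returned node $n$ is optimal at $\trefm$, and by Theorem~\ref{thm:asippdom} its ATF dominates its whole $f$-layer. We must show $A[n](t) = A^*(t)$ for all $t \in [\trefm, \trefm')$, where $\trefm'$ is the value returned by Algorithm~\ref{alg:updatereftime}.

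To do so, take any plan $p^*$ and any departure time $t$ in this range. Theorem~\ref{thm:optimism}, applied at the moment the goal was popped, gives a node $n_f$ on open whose $f[n_f]$ lower-bounds $A[p^*]$ pointwise. So it suffices to show $A[n](t) \le f[n_f](t)$ for every open node $n_f$ and every $t<\trefm'$.

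We split into two cases by segment.
\begin{itemize}
\item On the flat segment $[\trefm, \alpha(n))$, $A[n]$ is constant and equals the minimal $f$ at $\trefm$. Every $f[n_f]$ is monotone non-decreasing (Lemma~\ref{lmm:monotone}) and starts at least at that value, so the bound holds.
\item On the rising segment $[\alpha(n),\beta(n))$, any open $f[n_f]$ can cross $A[n]$ only where its flat segment at height $\alpha(n_f)+\Delta(n_f)+h(n_f)$ meets the slope-one line $t+\Delta(n)$. That crossing occurs at exactly the $\mathit{lowerBound}$ computed on Line~\ref{algln:urtupdateopen}.
\end{itemize}

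The main obstacle is the rising-segment case: verifying that the in-order traversal of open in Algorithm~\ref{alg:updatereftime} really returns the minimum of these crossing times, capped at $\beta(n)$. The argument I would give is as follows. Open is popped in increasing $f$, so successive $\mathit{lowerBound}$ values are non-decreasing. The first node whose flat segment still extends past its candidate crossing (the check on Line~\ref{algln:thecheck}) therefore yields the earliest genuine intersection. Nodes failing that check have already risen by their candidate crossing. Their rising parts have slope one, and by FIFO they start no earlier relative to $A[n]$, so they stay above $A[n]$ on $[\alpha(n),\beta(n))$. Each of these steps would be checked against Equation~\ref{eqn:edgearrival}. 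Given this, the insertion via Algorithm~\ref{alg:insertplan} preserves Coverage and Correctness, so the invariant extends to $[\cdot,\trefm')$. The empty-open case returns $\infty$ correctly, since Theorem~\ref{thm:optimism} implies that no plan exists from $\trefm$ onward; by monotonicity of arrival times, none exists later either.

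For completeness, i.e.\ termination with $\trefm \ge t_e(\mathit{start})$, Corollary~\ref{cor:trefinc} gives strict increase of $\trefm$. Strict increase alone does not exclude a Zeno sequence, so it remains to bound the number of iterations. Each iteration adds a plan that becomes optimal on a new interval: either $\trefm'$ is a breakpoint of the true lower envelope (a $\beta$ or an intersection), or it advances past one. I would argue that these intervals correspond to distinct breakpoints of $A^*[s_o,x_g]$. The breakpoints are finite, since they are determined by the finitely many state and edge interval endpoints shifted by finitely many path durations; this is the count established in Section~\ref{sec:complexity}, which I would cite. Hence RePEAT terminates after finitely many rounds. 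Combined with the invariant at termination, the returned cATF is optimal for every departure time in $i(x_o)$, and it reports $\infty$ exactly where no valid plan exists.
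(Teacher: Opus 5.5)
Your optimality half is sound. It uses the same inductive skeleton as the paper's proof: the cATF is optimal on $[\cdot,t_{\mathit{ref}})$, and each round extends that interval. The difference is that you actually verify the step the paper only asserts, namely that the plan returned by \aSIPPab{} stays optimal up to the new $t_{\mathit{ref}}$. Your case split against Theorem~\ref{thm:optimism} does this correctly. The key algebra is that a node failing the check on Line~\ref{algln:thecheck} has $\Delta(n_f)+h(n_f)\ge\Delta(n)$, so its rising part lies on or above $t+\Delta(n)$. You are also right that strict increase of $t_{\mathit{ref}}$ does not by itself give termination; the paper's ``by induction \ldots\ for all time'' passes over this.

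The gap is in your termination argument. It is not true that each round ends at, or advances past, a distinct breakpoint of $A^*[s_o,x_g]$. Algorithm~\ref{alg:updatereftime} intersects $A[n]$ with the \emph{optimistic} $f$-ATFs of open nodes, not with true plan ATFs. So $t_{\mathit{ref}}$ can stop strictly inside a linear piece of $A^*$, and the next round then returns the same plan.

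Here is a concrete case. Let $S\to G$ take $10$ and always be safe, $S\to M$ take $1$ but be safe only from $15$, and $M\to G$ take $1$ but be safe only from $30$, with the static heuristic $h(M)=1$. From $t_{\mathit{ref}}=0$, the direct plan is returned while $M$ sits on open with flat $f$-level $17$, so the update returns $17-10=7$. Yet the direct plan is optimal until $21$, since the plan via $M$ is flat at $31$. Thus $[0,7)$ neither ends at nor passes a breakpoint of $A^*$. A longer chain of such staggered waiting states produces arbitrarily many rounds inside a single linear piece of $A^*$. This is consistent with the paper's experiments, which report more \aSIPPab{} calls than the size of the resulting cATF on some instances.

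The number of rounds has to be bounded through the search space instead. Two ways to do this:
\begin{itemize}
\item Every value updateReferenceTime can return is either $\beta(n)$ or $f(n_f)-\Delta(n)$ with $\alpha(n_f)>t_{\mathit{ref}}$. It is therefore built from edge breakpoints, durations of loop-free partial plans, and $h$, so a strictly increasing $t_{\mathit{ref}}$ can take only finitely many values.
\item Adapt the charging in the paper's runtime theorem, which charges rounds to edge breakpoints of the graph (at most $|E|+|V|$) rather than to breakpoints of $A^*$.
\end{itemize}

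Finally, the strict increase you import from Corollary~\ref{cor:trefinc} itself requires the $\uparrow\alpha$ tie-break to take precedence over preferring goals. In the example, from $t_{\mathit{ref}}=7$ the node $M$ and the direct goal both have $f=17$. If the goal is popped first, the update returns $7$ again and $t_{\mathit{ref}}$ stalls.
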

\begin{proof}
RePEAT has an optimal solution until at least \tref. 
\tref\ is initialized to 0, then \aSIPPab\ finds an optimal solution for \tref.  
Corollary~\ref{cor:trefinc} shows that \tref\ increases monotonically as we find new solutions. 
So RePEAT is optimal until \tref, and \tref\ is monotonic increasing, so by induction RePEAT finds an optimal solution for all time.
\end{proof}

\subsection{Replanning SIPP}

\begin{algorithm}[t]
\caption{Replanning SIPP}
\label{alg:rsipp}
\begin{algorithmic}[1]
\Function{RSIPP}{$\mathit{startConfig}, \mathit{goalConfig}$}
\State $validPlan \leftarrow False$
\State Wait until $t_{0}$
\While{true}
    \State p $\leftarrow$ \Call{SIPP}{$\langle \mathit{startConfig}, t_{\mathit{now}} \rangle, \mathit{goalConfig}$}\label{algln:rsippcallsipp}
    \If{p still valid}
        \State \Return p\label{algln:rsippend}
    \EndIf
\EndWhile
\EndFunction
\end{algorithmic}
\end{algorithm}

As an alternative to full any-start-time planning, one might consider a simple replanning approach, which we call RSIPP.  
RSIPP approximates an online solution to @SIPP by using an offline planner at query time.  
It does no computation ahead of time, but just waits until the start time.  
It then runs SIPP starting from the current time $t_{now}$ (Line~\ref{algln:rsippcallsipp}).
If the found plan is still valid after planning completes, we return it, otherwise we loop back and run SIPP again.
If the problem is simple enough and the planner fast enough, this should return a valid plan in time to be used.
However, it does not truly solve the @SIPP problem, as we are only returning a single plan.

\section{On the Complexity of Any-start-time Planning}\label{sec:theory}
In this section, we explore the theoretical complexity of any-start-time planning and RePEAT.  
We start by analyzing the space complexity of any-start-time plans, finding them at most linear in the size of the input graph.
Then, in Subsection~\ref{sec:time}, we prove the time complexity of RePEAT quasi-quadratic in the size of the input graph.  (We will experimentally evaluate the runtime of RePEAT on SIPP problems in Section~\ref{sec:empiricaleval}.)
These theoretical results highlight the utility of any-start-time planning as the subset of the TDSP where the any-start-time plan is relatively easy to compute.

\subsection{The Space Complexity of Any-Start-Time Plans}\label{sec:complexity}

In this subsection, we discuss the complexity of the solution to an any-start-time problem.
If we are to use cATFs to represent any-start-time plans, the crucial question is how many distinct breakpoints will need to be represented.  
Can the required number ever grow multiplicatively?
We will find that the compound ATF solving an any-start-time problem is linear in the size of the input graph.  This is a direct result of any-start-time planning being a special case of the TDSP.
We define the size of the input graph to mean the number of safe intervals on states and actions.  
In an infinite state space, this is the size of the local search space that encompasses all states explored by any of the implicit Augmented SIPP searches.

\textcite{foschini2014} find that a TDSP with piecewise linear edge durations has a super-polynomial solution complexity.
However, when the edge ATFs consist of segments with slopes constrained to $0, 1$, or $\infty$, the solution is linear in the size of the graph.
This complexity was also proved by \textcite{dehne2012shortest}, who explored a similar setting where the objective is to find the time-dependent shortest path to all other states.
In this section, we adapt the proof to the specific context of any-start-time planning and then describe a simple compilation from an any-start-time planning problem to a time-dependent shortest path problem.
The space of these compiled any-start-time problems is exactly the set of TDSP problems \citeauthor{foschini2014} found to be linear.

The proof relies on counting the breakpoints in the resulting piecewise linear cATF.
We begin by more precisely defining plans and the features of their ATFs.
An Augmented SIPP \textbf{plan} $p$ is an interleaved ordered sequence of states and edges $s_0,e_1,s_1,...,e_i,s_i,...,e_n,s_n$ with $e \in E$ and $s \in S$.  
Note that in this scheme the sequence of states uses zero-based indexing, while the sequence of edges uses one-based indexing.
For notational simplicity, we often choose to drop one of the components of a plan.
For example, we often only consider the edges of a plan $p$, which we will denote by $e_1,...,e_n \in p$, as the safety information of the states has been compiled into the edge ATFs.    
A plan $p$ is \textbf{valid} at time $t$ if and only if, for all edges $e_i$ of the plan, $s_{i-1}$ is the source of $e_i$ and $s_{i}$ is the destination of $e_i$ and the plan ATF evaluated at time $t$ is finite ($A[p](t) \in \mathbb{R}$).
A \textbf{partial plan} $p_{i,j} := s_{i-1},e_i,s_i,...,e_j,s_j \in p$ is a subsequence of the full plan $p$ and $e_i \in p$ denotes the $i$th edge of $p$ (counting from $e_1$), and $s_i \in p$ denotes the $i$th state (counting from $s_0$)
.
\begin{definition}
    A \textbf{breakpoint} of an ATF is a point where the slope of the ATF changes or the underlying plan changes. 
\end{definition}

An edge ATF has up to three breakpoints: at $\zeta$, $\alpha$, and $\beta$.
Similarly, a plan ATF also has three breakpoints at $\zeta$, $\alpha$, and $\beta$, each of which corresponds to a breakpoint of some edge along the plan, with their breakpoints shifted by the ATF composition operation to be with respect to the start-time of the entire plan.  
For this reason, we say that a breakpoint of a plan ATF is the image of a breakpoint of one of the edge ATFs along the plan.
Because composition does not change $\zeta$ (Equation~\ref{eqn:pathzeta}), it is always an image of $\zeta(e_1)$ where $e_1 \in p$.
\begin{definition}\label{def:alphaconstraint}\label{def:betaconstraint}
We say that a breakpoint $\omega$ of a plan ATF $A[p]$ is an image that is \textbf{constrained} by edge breakpoint $b \in A[e_i], e_i \in p$ if and only if $(b - \Delta(A[e_i])) = \omega$, i.e., it has won the successive maximizations or minimization from the composition operation (Equations~\ref{eqn:pathalpha},\ref{eqn:pathbeta}) and appeared in the result. 
\end{definition} 

\begin{figure}[t]
\centering
\begin{tikzpicture}[scale=0.6]
\draw (-0.3 cm,1pt) -- (-0.3 cm,-1pt) node[anchor=north] {(a)};
\draw[thick,->] (0,0) -- (6.5,0) node[midway, below] {t};
\draw[thick,->] (0,0) -- (0,4) node[anchor=south west] {A[s,d](t)};
\filldraw[black] (3,2) circle (2pt) node [anchor=north] {b};
\filldraw[red] (0.5,1) circle (2pt) node [anchor=north] {$\zeta_1$};
\filldraw[red] (2,1) circle (2pt) node [anchor=north] {$\alpha_1$};
\filldraw[red] (4.5,3.5) circle (2pt) node [anchor=north] {$\beta_1$};
\filldraw[blue] (1,2) circle (2pt) node
[anchor=north] {$\zeta_2$};
\filldraw[blue] (5,2) circle (2pt) node [anchor=north] {$\alpha_2$};
\filldraw[blue] (6,3) circle (2pt) node [anchor=north] {$\beta_2$};
\draw [color =red] (0.5,1) -- (2,1);
\draw [color =red] (2,1) -- (3,2);
\draw [color =red,dashed] (3,2) -- (4.5,3.5);
\draw [color =blue,dashed] (1,2) -- (3,2);
\draw [color =blue] (3,2) -- (5,2);
\draw [color =blue] (5,2) -- (6,3);
\end{tikzpicture}
\begin{tikzpicture}[scale=0.6]
\draw (-0.3 cm,1pt) -- (-0.3 cm,-1pt) node[anchor=north] {(b)};
\draw[thick,->] (0,0) -- (6.5,0) node[midway, below] {t};
\draw[thick,->] (0,0) -- (0,4) node[anchor=south west] {A[s,d](t)};
\filldraw[black] (2.5,2) circle (2pt) node [anchor=south] {b};
\filldraw[red] (0.5,1) circle (2pt) node [anchor=north] {$\zeta_1$};
\filldraw[red] (2,1) circle (2pt) node [anchor=north] {$\alpha_1$};
\filldraw[red] (2.5,1.5) circle (2pt) node [anchor=north] {$\beta_1$};
\filldraw[blue] (1,2) circle (2pt) node
[anchor=north] {$\zeta_2$};
\filldraw[blue] (5,2) circle (2pt) node [anchor=north] {$\alpha_2$};
\filldraw[blue] (6,3) circle (2pt) node [anchor=north] {$\beta_2$};
\draw [color =red] (0.5,1) -- (2,1);
\draw [color =red] (2,1) -- (2.5,1.5);
\draw [color =red,dashed] (2.5,1.5) -- (2.5,2);
\draw [color =blue,dashed] (1,2) -- (2.5,2);
\draw [color =blue] (2.5,2) -- (5,2);
\draw [color =blue] (5,2) -- (6,3);
\end{tikzpicture}
\begin{tikzpicture}[scale=0.6]
\draw (-0.3 cm,1pt) -- (-0.3 cm,-1pt) node[anchor=north] {(c)};
\draw[thick,->] (0,0) -- (6.5,0) node[midway, below] {t};
\draw[thick,->] (0,0) -- (0,4) node[anchor=south west] {A[s,d](t)};
\filldraw[black] (3,2.5) circle (2pt) node [anchor=south] {b};
\filldraw[red] (0.5,1) circle (2pt) node [anchor=north] {$\zeta_1$};
\filldraw[red] (2.5,1) circle (2pt) node [anchor=north] {$\alpha_1$};
\filldraw[red] (3,1.5) circle (2pt) node [anchor=north] {$\beta_1$};
\filldraw[blue] (1,2) circle (2pt) node
[anchor=north] {$\zeta_2$};
\filldraw[blue] (2.5,2) circle (2pt) node [anchor=north] {$\alpha_2$};
\filldraw[blue] (3.5,3) circle (2pt) node [anchor=north] {$\beta_2$};
\draw [color =red] (0.5,1) -- (2.5,1);
\draw [color =red] (2.5,1) -- (3,1.5);
\draw [color =red,dashed] (3,1.5) -- (3,2.5);
\draw [color =blue,dashed] (1,2) -- (2.5,2);
\draw [color =blue,dashed] (2.5,2) -- (3,2.5);
\draw [color =blue] (3,2.5) -- (3.5,3);
\end{tikzpicture}
  \caption{\label{fig:compoundatf}
   Examples of the interaction of two plan ATFs $A[p_1]$ and $A[p_2]$ in forming the cATF $A[s,d]$}.
\end{figure}

A cATF, being composed of many plan ATFs, may have more breakpoints.  
We follow \citeauthor{foschini2014} and partition the breakpoints of a cATF into two types.
\begin{definition}
    A \textbf{primitive image} is a breakpoint in a cATF $A[s,d]$ that is also a breakpoint in one of the plan ATFs ($A[p] \in A[s,d]$) in the cATF.
\end{definition}
\begin{definition}
    A \textbf{minimization breakpoint} is a breakpoint in a cATF $A[s,d]$ that occurs when the minimizing plan ATF $A[p_1],A[p_2] \in A[s,d]$ changes.
\end{definition}
For example, Figure~\ref{fig:compoundatf} shows several examples of cATFs.  The breakpoints of each plan ATF $A[p]$ are labeled $\zeta_i$, $\alpha_i$ or $\beta_i$.
Regions minimizing $A[s,d]$ are shown as a solid line while the remainder of the plan ATF is dashed.
Minimization breakpoints are black dots labeled $b$. 
Discontinuities in $A[s,d]$ are highlighted with a dashed line.
Figure~\ref{fig:compoundatf}a shows the unique configuration leading to a distinct breakpoint $b$ that was not originally a breakpoint of either constituent plan.  
The breakpoints $\zeta_1, \alpha_1, \alpha_2$ and, $\beta_2$ are primitive images, as they are the time-shifted breakpoints of an edge ATF.  
Breakpoint $b$ is a minimization breakpoint caused by the intersection of two ATFs.
In other cases, such as Figures~\ref{fig:compoundatf}b and c, a minimization breakpoint occurs at the same departure time as a primitive image $\beta$ when an ATF expires.

\begin{definition}
    A breakpoint $b$ is \textbf{distinct} from a set of breakpoints $B$ if and only if all breakpoints in $B$ occur at times different from $b$.
\end{definition}

Our proof of the linear complexity of the Any-start-time solution (see Section~\ref{sec:aspcomplex}) will constrain the number of each type of breakpoint.
In overview, each breakpoint of an edge ATF in the Augmented SIPP graph can result in only one primitive image in the solution and every pair of minimization breakpoints must have at least one primitive image between them.  
This means the solution has breakpoints linear in the number of edges in the graph.

\subsubsection{Illustrating the Possible Types of ATF Breakpoints}

To make it easier to follow the complexity proof, we illustrate 
examples of each combination of plan ATFs that lead to each type of compound ATF breakpoint: $\zeta$ image (see Figure~\ref{fig:zetabreakex}), $\beta$ image (see Figure~\ref{fig:betabreakex}), $\alpha$ image (see Figure~\ref{fig:alphabreakex}), and minimization breakpoints (see Figure~\ref{fig:minbreakex}).
The key aspect we are concerned with is: what is the slope immediately prior to the breakpoint and what is the slope immediately after.
For each type of breakpoint, only certain combinations of these two slopes are possible.
Individually, the slopes present in ATFs are: $\{ -\infty, 0, 1, \infty\}$, where $-\infty$ corresponds to the discontinuous decrease prior to $\zeta$ (where the ATF is $\infty$ because the source state is unsafe) and $\infty$ is the discontinuous increase that can happen when a plan expires (at $\beta$).

\begin{figure}[t]
\centering
\begin{tikzpicture}[scale=0.75]
\draw (-0.3 cm,0pt) -- (-0.3 cm,-0pt) node[anchor=north west] {(a)};
\draw[thick,->] (0,0) -- (6.5,0) node[midway, below] {t};
\draw[thick,->] (0,0) -- (0,4) node[anchor=south west] {A[s,d](t)};
\filldraw[black] (1.5,1) circle (4pt) node [anchor=south east] {$b$};
\draw [red, dashed] (1.5,1) -- (1.5,4);

\filldraw[red] (1.5,1) circle (2pt) node [anchor=north] {$\zeta_1$};
\filldraw[red] (5,1) circle (2pt) node [anchor=north] {$\alpha_1$};
\draw [color =red] (1.5,1) -- (2,1);
\draw [color =red] (2,1) -- (5,1);
\draw [color =red, ->] (5,1) -- (6.5,2.5);
\end{tikzpicture}
\begin{tikzpicture}[scale=0.75]
\draw (-0.3 cm,0pt) -- (-0.3 cm,-0pt) node[anchor=north west] {(b)};
\draw[thick,->] (0,0) -- (6.5,0) node[midway, below] {t};
\draw[thick,->] (0,0) -- (0,4) node[anchor=south west] {A[s,d](t)};
\filldraw[black] (1.5,1) circle (4pt) node [anchor=south east] {$b$};
\draw [red, dashed] (1.5,1) -- (1.5,4);

\filldraw[red] (1.5,1) circle (2pt) node [anchor=north] {$\zeta_1 = \alpha_1$};
\draw [color =red, ->] (1.5,1) -- (4.5,4);
\end{tikzpicture}
  \caption{\label{fig:zetabreakex}
   Examples of the two possible configurations for a $\zeta$ image, $b$.}
\end{figure}
Figure~\ref{fig:zetabreakex} shows the two combinations of slopes that are possible for a $\zeta$ image.
In both cases, the prior slope is $-\infty$ because there is no plan prior to $\zeta$.
Because the plan is valid, there can not be a slope of $\infty$ immediately after (Equation~\ref{eqn:positiveduration}).
Figure~\ref{fig:zetabreakex}a shows the first possible case, where the prior slope is $-\infty$ and $\zeta < \alpha$ so the after slope is $0$.
In this case the $\zeta$ image is distinct because it does not share the same time with other breakpoints.
Figure~\ref{fig:zetabreakex}b shows the second possible case, where the prior slope is still $-\infty$, but $\zeta = \alpha$ so the after slope is $1$.
In this case the $\zeta$ image and the $\alpha$ image share the same time, so the $\zeta$ image is not distinct.

\begin{figure}[t]
\centering
\begin{tikzpicture}[scale=0.75]
\draw (-0.3 cm,0pt) -- (-0.3 cm,-0pt) node[anchor=north west] {(a)};
\draw[thick,->] (0,0) -- (6.5,0) node[midway, below] {t};
\draw[thick,->] (0,0) -- (0,4) node[anchor=south west] {A[s,d](t)};
\filldraw[black] (1.5,1) circle (4pt) node [anchor=south east] {$b$};
\draw [red, dashed] (1.5,1) -- (1.5,4);

\filldraw[red] (1.5,1) circle (2pt) node [anchor=north] {$\zeta_1 = \alpha_1$};
\draw [color =red, ->] (1.5,1) -- (4.5,4);
\end{tikzpicture}
\begin{tikzpicture}[scale=0.75]
\draw (-0.3 cm,0pt) -- (-0.3 cm,-0pt) node[anchor=north west] {(b)};
\draw[thick,->] (0,0) -- (6.5,0) node[midway, below] {t};
\draw[thick,->] (0,0) -- (0,4) node[anchor=south west] {A[s,d](t)};
\filldraw[black] (5,1) circle (4pt) node [anchor=south east] {$b$};
\draw [red, dashed] (1.5,1) -- (1.5,4);

\filldraw[red] (1.5,1) circle (2pt) node [anchor=north] {$\zeta_1$};
\filldraw[red] (5,1) circle (2pt) node [anchor=north] {$\alpha_1$};
\draw [color =red] (1.5,1) -- (2,1);
\draw [color =red] (2,1) -- (5,1);
\draw [color =red, ->] (5,1) -- (6.5,2.5);
\end{tikzpicture}

\begin{tikzpicture}[scale=0.75]
\draw (-0.3 cm, 0pt) -- (-0.3 cm,-0pt) node[anchor=north west] {(c)};
\draw[thick,->] (0,0) -- (6.5,0) node[midway, below] {t};
\filldraw[black] (2,1) circle (4pt) node [anchor=south east] {$b$};

\filldraw[red] (2,1) circle (2pt) node [anchor=north] {$\alpha_1=\beta_1$};
\filldraw[blue] (3.5,2) circle (2pt) node [anchor=north west] {$\alpha_2$};
\draw [color =red] (0,1) -- (2,1);
\draw [red, dashed] (2,1) -- (2, 2);
\draw [color =blue,dashed] (0,2) -- (2,2);
\draw [color =blue] (2,2) -- (3.5,2);
\draw [color =blue,->] (3.5,2) -- (5,3.5);
\draw[thick,->] (0,0) -- (0,4) node[anchor=south west] {A[s,d](t)};
\end{tikzpicture}
\begin{tikzpicture}[scale=0.75]
\draw (-0.3 cm, 0pt) -- (-0.3 cm,-0pt) node[anchor=north west] {(d)};
\draw[thick,->] (0,0) -- (6.5,0) node[midway, below] {t};
\filldraw[black] (3.5,1) circle (4pt) node [anchor=south east] {$b'$};
\filldraw[black] (3.5,2) circle (4pt) node [anchor=south] {$b$};

\filldraw[red] (3.5,1) circle (2pt) node [anchor=north] {$\alpha_1=\beta_1$};
\filldraw[blue] (3.5,2) circle (2pt) node [anchor=north west] {$\alpha_2$};
\draw [color =red] (0,1) -- (3.5,1);
\draw [red, dashed] (3.5,1) -- (3.5, 2);
\draw [color =blue,dashed] (0,2) -- (3.5,2);
\draw [color =blue,->] (3.5,2) -- (5,3.5);
\draw[thick,->] (0,0) -- (0,4) node[anchor=south west] {A[s,d](t)};
\end{tikzpicture}
  \caption{\label{fig:alphabreakex}
   Example of the four possible configurations for an $\alpha$image, $b$.}
\end{figure}
Figure~\ref{fig:alphabreakex} shows the four combinations of slopes that are possible for an $\alpha$ image.
There are two possible prior slopes.
First if $\alpha = \zeta$, then we are in the same case as the example shown in Figure~\ref{fig:zetabreakex}b.
Otherwise, $\alpha > \zeta$ and the second configuration that results in an $\alpha$ image is that $\alpha < \beta$, an example of which is shown in Figure~\ref{fig:alphabreakex}b.  
Here the prior slope is $0$ and the after slope is $1$ and the $\alpha$ image is distinct.
The third example is when $\alpha = \beta$ and is shown in Figure~\ref{fig:alphabreakex}c.
In this example, the after slope is $\infty$ because of the plan expiring at $\alpha$.
This $\alpha$ image is not distinct because there must also be a corresponding $\beta$ image.
There is also a minimization breakpoint $b'$, which (to torture our notation) is $\lim_{\epsilon \to 0}\epsilon$ time units `after' the $\alpha$ and $\beta$ images, which is to say though it has different slopes ($\infty$ prior and $0$ after in this case) it is not distinct from the $\alpha$ image either.
The fourth case is shown in Figure~\ref{fig:alphabreakex}d, where the prior slope is $\infty$, and the after slope is $1$, in this case the $\alpha$ image, $\alpha_2$ is not distinct from the $\alpha_1$ and $\beta_1$ images for the same reason as the previous example.
The remaining cases are: 5) when the after slope is $-\infty$, which is not possible (see Equation~\ref{eqn:nonnegwait}); and 6) when the after slope is $0$, which is not possible, as either the prior slope is $0$ and there is no breakpoint, or the prior slope is one of $-\infty$ and $\infty$ and the breakpoint is the only time that the corresponding ATF is not dominated so that zero duration interval can be safely removed from the compound ATF (though it may still be optimal for a later time interval, for example consider if $A[p_3]$ in Figure~\ref{fig:catfaddexample} were shifted later by one unit in arrival time).

\begin{figure}[t]
\centering
\begin{tikzpicture}[scale=0.75]
\draw (-0.3 cm, 0pt) -- (-0.3 cm,-0pt) node[anchor=north west] {(a)};
\draw[thick,->] (0,0) -- (6.5,0) node[midway, below] {t};
\filldraw[black] (2,1) circle (4pt) node [anchor=south east] {$b$};

\filldraw[red] (2,1) circle (2pt) node [anchor=north] {$\beta_1 \leq \alpha_1$};
\filldraw[blue] (3.5,2) circle (2pt) node [anchor=north west] {$\alpha_2$};
\draw [color =red] (0,1) -- (2,1);
\draw [red, dashed] (2,1) -- (2, 2);
\draw [color =blue,dashed] (0,2) -- (2,2);
\draw [color =blue] (2,2) -- (3.5,2);
\draw [color =blue,->] (3.5,2) -- (5,3.5);
\draw[thick,->] (0,0) -- (0,4) node[anchor=south west] {A[s,d](t)};
\end{tikzpicture}
\begin{tikzpicture}[scale=0.75]
\draw (-0.3 cm,0pt) -- (-0.3 cm,-0pt) node[anchor=north west] {(b)};
\draw[thick,->] (0,0) -- (6.5,0) node[midway, below] {t};
\draw[thick,->] (0,0) -- (0,4) node[anchor=south west] {A[s,d](t)};
\filldraw[black] (3,2) circle (4pt) node [anchor=south east] {$b$};
\filldraw[red] (0.5,1) circle (2pt) node [anchor=north] {$\zeta_1$};
\filldraw[red] (2,1) circle (2pt) node [anchor=north] {$\alpha_1$};
\filldraw[red] (3,2) circle (2pt) node [anchor=north] {$\beta_1$};
\filldraw[blue] (1,2) circle (2pt) node [anchor=north] {$\zeta_2$};
\filldraw[blue] (5,2) circle (2pt) node [anchor=north] {$\alpha_2$};
\filldraw[blue] (6,3) circle (2pt) node [anchor=north] {$\beta_2$};
\draw [color =red] (0.5,1) -- (2,1);
\draw [color =red] (2,1) -- (3,2);
\draw [color =blue,dashed] (1,2) -- (3,2);
\draw [color =blue] (3,2) -- (5,2);
\draw [color =blue] (5,2) -- (6,3);
\end{tikzpicture}

\begin{tikzpicture}[scale=0.75]
\draw (-0.3 cm,0pt) -- (-0.3 cm,-0pt) node[anchor=north west] {(c)};
\draw[thick,->] (0,0) -- (6.5,0) node[midway, below] {t};
\draw[thick,->] (0,0) -- (0,4) node[anchor=south west] {A[s,d](t)};
\filldraw[black] (3,2) circle (4pt) node [anchor=south east] {$b$};
\filldraw[red] (0.5,1) circle (2pt) node [anchor=north] {$\zeta_1$};
\filldraw[red] (2,1) circle (2pt) node [anchor=north] {$\alpha_1$};
\filldraw[red] (3,2) circle (2pt) node [anchor=north west] {$\beta_1 = \alpha_2$};
\filldraw[blue] (1,2) circle (2pt) node [anchor=north] {$\zeta_2$};
\filldraw[blue] (4,3) circle (2pt) node [anchor=north] {$\beta_2$};
\draw [color =red] (0.5,1) -- (2,1);
\draw [color =red] (2,1) -- (3,2);
\draw [color =blue,dashed] (1,2) -- (3,2);
\draw [color =blue] (3,2) -- (4,3);
\end{tikzpicture}
\begin{tikzpicture}[scale=0.75]
\draw (-0.3 cm,0pt) -- (-0.3 cm,-0pt) node[anchor=north west] {(d)};
\draw[thick,->] (0,0) -- (6.5,0) node[midway, below] {t};
\draw[thick,->] (0,0) -- (0,4) node[anchor=south west] {A[s,d](t)};
\filldraw[black] (3,2) circle (4pt) node [anchor=south east] {$b$};
\filldraw[red] (0.5,1) circle (2pt) node [anchor=north] {$\zeta_1$};
\filldraw[red] (2,1) circle (2pt) node [anchor=north] {$\alpha_1$};
\filldraw[red] (3,2) circle (2pt) node [anchor=north] {$\beta_1$};
\filldraw[blue] (1,3) circle (2pt) node [anchor=north] {$\zeta_2$};
\filldraw[blue] (5,3) circle (2pt) node [anchor=north] {$\alpha_2$};
\filldraw[blue] (6,4) circle (2pt) node [anchor=north] {$\beta_2$};
\draw [color =red] (0.5,1) -- (2,1);
\draw [color =red] (2,1) -- (3,2);
\draw [color =red,dashed] (3,2) -- (3,3);
\draw [color =blue,dashed] (1,3) -- (3,3);
\draw [color =blue] (3,3) -- (5,3);
\draw [color =blue] (5,3) -- (6,4);
\end{tikzpicture}
  \caption{\label{fig:betabreakex}
   Example of the four possible configurations for an $\beta$ image, $b$.}
\end{figure}

Figure~\ref{fig:betabreakex} shows the four possible configurations for a $\beta$ image.  
It is impossible for the prior or after slope of a $\beta$ image to be $-\infty$ (Equation~\ref{eqn:positiveduration}).  
Figure~\ref{fig:betabreakex}a shows an example with the prior slope $0$ and the after slope $\infty$.
An after slope of $0$ or $1$ would mean that the other ATF would dominate the one from which our $\beta$ image arose, so there must be such a discontinuity if the prior slope is $0$.
This breakpoint is distinct.
The remaining three combinations with prior slope $1$ are all possible, 
with after slope $0$ (Figure~\ref{fig:betabreakex}b, which is also a minimization breakpoint), 
slope $1$ (Figure~\ref{fig:betabreakex}c, which is also an image of $\alpha_2$ and a minimization breakpoint), and
slope $\infty$ (Figure~\ref{fig:betabreakex}d, which is distinct).

\begin{figure}[b]
\centering
\begin{tikzpicture}[scale=0.75]
\draw (-0.3 cm,0pt) -- (-0.3 cm,-0pt) node[anchor=north west] {(a)};
\draw[thick,->] (0,0) -- (6.5,0) node[midway, below] {t};
\draw[thick,->] (0,0) -- (0,4) node[anchor=south west] {A[s,d](t)};
\filldraw[black] (3,2) circle (4pt) node [anchor=south east] {$b$};
\filldraw[red] (0.5,1) circle (2pt) node [anchor=north] {$\zeta_1$};
\filldraw[red] (2,1) circle (2pt) node [anchor=north] {$\alpha_1$};
\filldraw[red] (4.5,3.5) circle (2pt) node [anchor=north] {$\beta_1$};
\filldraw[blue] (1,2) circle (2pt) node [anchor=north] {$\zeta_2$};
\filldraw[blue] (5,2) circle (2pt) node [anchor=north] {$\alpha_2$};
\filldraw[blue] (6,3) circle (2pt) node [anchor=north] {$\beta_2$};
\draw [color =red] (0.5,1) -- (2,1);
\draw [color =red] (2,1) -- (3,2);
\draw [color =red,dashed] (3,2) -- (4.5,3.5);
\draw [color =blue,dashed] (1,2) -- (3,2);
\draw [color =blue] (3,2) -- (5,2);
\draw [color =blue] (5,2) -- (6,3);
\end{tikzpicture}
\begin{tikzpicture}[scale=0.75]
\draw (-0.3 cm,0pt) -- (-0.3 cm,-0pt) node[anchor=north west] {(b)};

\draw[thick,->] (0,0) -- (6.5,0) node[midway, below] {t};
\draw[thick,->] (0,0) -- (0,4) node[anchor=south west] {A[s,d](t)};
\filldraw[black] (3,2) circle (4pt) node [anchor=south east] {$b$};
\filldraw[black] (3.75,2.75) circle (4pt) node [anchor=south east] {$b'$};

\filldraw[red] (0.5,1) circle (2pt) node [anchor=north] {$\zeta_1$};
\filldraw[red] (2,1) circle (2pt) node [anchor=north] {$\alpha_1$};
\filldraw[blue] (3,2) circle (2pt) node [anchor=north west] {$\alpha_2$};
\filldraw[blue] (1,2) circle (2pt) node [anchor=north] {$\zeta_2$};
\filldraw[blue] (4.5,3.5) circle (2pt) node [anchor=north] {$\beta_2$};
\draw [color =red] (0.5,1) -- (2,1);
\draw [color =red] (2,1) -- (3,2);
\draw [color = blue, very thick] (3,2) -- (4.5,3.5);

\draw [color =red,dashed] (3,2) -- (3.75,2.75);
\draw [color =blue,dashed] (1,2) -- (3,2);
\filldraw[red] (3.75,2.75) circle (2pt) node [anchor=north west] {$\beta_1$};
\end{tikzpicture}
\caption{\label{fig:minbreakex}
Example of the two possible configurations for a minimization breakpoint, $b$.}
\end{figure}

Figure~\ref{fig:minbreakex} shows the two possible configurations for minimization breakpoints.  
By definition, the ATF around a minimization breakpoint must be continuous (however, not the same slope) so the prior and after slope must be finite.
Also, by definition, there must be a different ATF optimal prior to the minimization breakpoint (henceforth the prior ATF) and after (the after ATF).
If the prior slope was $0$, then the after ATF would dominate the prior ATF, so the prior slope must be $1$.
Figure~\ref{fig:minbreakex}a shows the example where the after slope is $0$0 and Figure~\ref{fig:minbreakex}b shows the example where the after slope is $1$.
When the after slope is $1$, as in this example, there might be many possible times to assign as the minimization breakpoint, depending on when the algorithm decides to switch from the prior ATF to the after ATF in the interval $[b, b']$ where they are both optimal.
What is important is that there should only be one such breakpoint, there is no reason for the compound AFT to switch back and forth between the prior and after ATF.
Given that, notice that because the prior slope for a minimization breakpoint must be $1$, any (non redundant) pair of minimization breakpoints must have an $\alpha$ breakpoint between them.

In this subsection we showed examples of every possible way for each type of breakpoint to manifest. 
In the following sections, we first show that the number of each type of image are constrained to be linear in the size of the graph, then we show that the number of $\alpha$ images constrains the number of minimization breakpoints, so the complexity of the compound ATF is also linear in the size of the graph.

\subsubsection{Constraining the Number of Primitive Images}\label{sec:aspcomplex}

In this subsection we constrain the number of each type of image breakpoint in the solution ATF of an any-start-time planning problem. 
The solution ATF, which we instantiate as $A[s, d]$, is a compound ATF for the set of optimal plans from start state $s$ to goal state $d$ for all possible start times.  
First, we constrain the number of $\zeta$ images, which is a simple task because all plans originate in the same state.
\begin{lemma}\label{lmm:primitivezetabreak}
There is at most one $\zeta$ image in $A[s,d]$.
\end{lemma}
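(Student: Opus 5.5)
The plan is to use the fact that every plan in $A[s,d]$ starts at the same state $s$, together with the composition rule for $\zeta$ (Equation~\ref{eqn:pathzeta}). First I will show that every plan ATF in the cATF has the same $\zeta$. Let $p = e_1,\dots,e_n$ be a plan in $A[s,d]$. Composition leaves $\zeta$ unchanged, so by induction on $n$, starting from the identity ATF, we get $\zeta(A[p]) = \zeta(A[e_1])$. By Equation~\ref{eqn:zetacompute}, $\zeta(A[e_1]) = u_b$, where $u$ is the source of $e_1$, and here $u = s$. Hence $\zeta(A[p]) = s_b$ for every plan $p$ that the cATF could contain. So every $\zeta$ breakpoint of every constituent plan ATF falls at the single time $s_b$.

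Next I will argue that a $\zeta$ image in $A[s,d]$ means a breakpoint of the cATF at the departure time $\zeta(A[p])$ of some $p$ that minimizes there. Since all these times equal $s_b$, any $\zeta$ images can occur only at the one time $s_b$.

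The remaining worry is counting: could several plans contribute separate $\zeta$ images at the same time $s_b$? I will settle this with the Coverage property of cATFs. Exactly one interval, and so exactly one plan, is assigned to the departure time $s_b$. So the cATF has a single breakpoint there, which matches the configurations in Figure~\ref{fig:zetabreakex}: the slope before is $-\infty$ and the slope after is $0$ or $1$, coming from that one plan.

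The main obstacle is being careful about what counts as "one" image when several plan ATFs share the same $\zeta$. The risk is that a plan which is not minimal at $s_b$ seems to leave a $\zeta$ image. I will rule this out because such a plan's $\zeta$ is not a breakpoint of $A[s,d]$: the minimizing plan already changes slope at that same time, and breakpoints are counted as points of the cATF, not per plan. Before $s_b$ every plan ATF is $\infty$, since $s$ is unsafe there, so nothing else can create a $-\infty$ slope transition at any other time.
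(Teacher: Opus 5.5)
Your proposal is correct and is essentially the paper's proof: by Equation~\ref{eqn:pathzeta} every plan ATF inherits $\zeta$ from its first edge, whose source is $s$, so by Equation~\ref{eqn:zetacompute} every $\zeta$ image lies at the single time $s_b$; your Coverage argument only spells out the counting-by-time that the paper leaves implicit. One small slip: base the induction at $A[e_1]$ (or at the identity $I[0,s_b]$), because composing onto a general identity $I[0,t_0]$ gives $\zeta = t_0$ by Equation~\ref{eqn:pathzeta}, not $s_b$.
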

\begin{proof}
All plans from $s$ to $d$ have a $\zeta$ image corresponding to the first edge on the plan, by Equation~\ref{eqn:pathzeta}.
The source of those first edges is always $s$, so by Equation~\ref{eqn:zetacompute} for all plans the $\zeta$ image occurs at $\zeta(s)$.
Thus all plans from $s$ to $d$ share the same $\zeta$ image, so it is impossible to have more than one in $A[s,d]$. 
\end{proof}

In the next several lemmas, we constrain the number of primitive images in $A[s,d]$ by showing that each edge breakpoint can at most lead to a single image in $A[s,d]$.

We begin by showing that compound ATFs in any-start-time planning are monotone non-decreasing functions.  
\begin{lemma}\label{lmm:monotone}
$A[s,d]$ is monotone non-decreasing.
\end{lemma}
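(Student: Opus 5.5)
The plan is to show monotonicity by a waiting argument rather than by inspecting the breakpoint structure of $A[s,d]$ directly. Fix departure times $t < t'$ with $t, t' \in i(s)$, the safe interval of the start state. It suffices to show $A[s,d](t) \leq A[s,d](t')$. If $A[s,d](t') = \infty$ there is nothing to prove. Otherwise there is an optimal plan $p'$ with $A[s,d](t') = A[p'](t') < \infty$, and we will exhibit a plan whose arrival time departing at $t$ is at most $A[p'](t')$. Since $A[s,d]$ is the minimum over plans from $s$ to $d$, that gives the claim.

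The key step is that the same edge sequence $p'$, departing at $t$, does at least as well. This is the waiting argument of Theorem~\ref{thm:sippdom}. An agent departing at $t$ can remain in $s$ until $t'$, which is safe because $[t,t'] \subseteq i(s)$ by convexity of the safe interval. It then executes $p'$ exactly as it would have when departing at $t'$. Formally, we verify this through the plan ATF. By Equations~\ref{eqn:pathzeta}--\ref{eqn:pathdelta}, $A[p']$ has the form of Equation~\ref{eqn:edgearrival} with $\zeta(p') = s_b \leq t$. Finiteness at $t'$ means $t' < \beta(p')$. Hence for $t$ with $\zeta \leq t < t' < \beta$, we are in the second or third case.

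We then compare the two arrival times case by case. In the second case, $A[p'](t) = \alpha + \Delta$. In the third case, $A[p'](t) = t + \Delta$. Meanwhile $A[p'](t')$ equals either $\alpha + \Delta$ or $t' + \Delta$, and in the latter case $t' \geq \alpha$. In every combination we get $A[p'](t) \leq A[p'](t')$, because the restriction of Equation~\ref{eqn:edgearrival} to $[\zeta,\beta)$ is $\max(\alpha, t) + \Delta$. This holds even when $\beta \leq \alpha$, where the function is constant $\alpha + \Delta$ on $[\zeta,\beta)$. Therefore $A[s,d](t) \leq A[p'](t) \leq A[p'](t') = A[s,d](t')$.

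The main obstacle is the treatment of $\infty$ and of departure times outside $i(s)$. Before $\zeta$ the ATF is $\infty$, and then it drops to a finite value, which is not monotone in the literal sense. I would therefore state and prove the lemma on the domain $i(s)$, the set of departure times considered in @SIPP, and note that right-openness of $i(s)$ keeps $t'$ inside it. I would also remark that the value $\infty$ after the last $\beta$ is consistent with non-decreasing behaviour, since a minimum of functions each non-decreasing on $[\zeta, \infty)$, with value $\infty$ past its own $\beta$, is non-decreasing.
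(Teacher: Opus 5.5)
Your proof is correct and uses the same idea as the paper's: wait at $s$ from $t$ to $t'$, then follow the plan that is optimal at $t'$. Writing each plan ATF as $\max(\alpha,t)+\Delta$ on $[\zeta,\beta)$ simply makes that step explicit. Restricting to departure times in $i(s)$ (so $t \geq s_b = \zeta$) is a worthwhile precision: the paper leaves implicit that $A[s,d]$ is $\infty$ before $\zeta$, so it is only non-decreasing on $[\zeta,\infty)$.
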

\begin{proof}
If $A[s,d]$ is not monotone non-decreasing, then there must exist departure times $t,t'$ such that $t<t'$ and $A[s,d](t) > A[s,d](t')$.
However, $s$ is always safe when $A[s,d]$ is finite, so it is always possible to wait at $s$ from $t$ to $t'$, so $A[s,d](t) \leq A[s,d](t')$.
It is not possible for $A[s, d]$ to become infinite and then finite again at a later time, otherwise the correct $A[s, d]$ would have remained finite by waiting for the later departure time.
\end{proof}
This monotonicity is illustrated in all example ATFs (e.g., Figures~\ref{fig:zetabreakex}-\ref{fig:minbreakex}), 
The next two properties follow directly from the use of the SIPP state space and the same dominance relation used by the SIPP algorithm.  
These are closely related to the `first-in first-out' assumption common in the TDSP.
Next, we show that the solution has the ATF version of optimal substructure, and plans with loops are dominated.
\begin{lemma}\label{lmm:substructure}
Plans minimizing $A[s,d]$ exhibit optimal substructure:  If plan $p$ minimizes $A[s, d]$ at time $t$, then $\forall i \in [1, |p|]: p_{i,|p|}$ minimizes $A[p[i-1], d](\tau)$ where $\tau = A[p_{1,i-1}](t)$.
\end{lemma}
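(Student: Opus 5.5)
We argue by contradiction with a cut-and-paste exchange, as for scalar shortest paths, using the FIFO property of ATFs in place of additivity. Fix $t$, a plan $p$ minimizing $A[s,d]$ at $t$, and an index $i\in[1,|p|]$. Write $p = p_{1,i-1}\,p_{i,|p|}$, so that by Equation~\ref{eqn:path2}
\begin{equation*}
A[p](t) = A[p_{i,|p|}]\bigl(A[p_{1,i-1}](t)\bigr) = A[p_{i,|p|}](\tau).
\end{equation*}
Since $A[p](t)=A[s,d](t)$ is finite ($p$ is a valid minimizer), $\tau$ is finite, and the agent is in state $p[i-1]$ at time $\tau$, inside that state's safe interval. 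For $i=1$ the prefix is empty, its ATF is the identity $I[0,t]$, and the claim is just the hypothesis. So assume $i>1$.

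Suppose the suffix $p_{i,|p|}$ does not minimize $A[p[i-1],d]$ at $\tau$. Then there is a plan $q$ from $p[i-1]$ to $d$ with $A[q](\tau) < A[p_{i,|p|}](\tau)$. Form the concatenation $p' = p_{1,i-1}\,q$. This is a well-formed plan from $s$ to $d$, because the destination of the last edge of $p_{1,i-1}$ is $p[i-1]$, the source of $q$'s first edge. By composition,
\begin{equation*}
A[p'](t) = A[q]\bigl(A[p_{1,i-1}](t)\bigr) = A[q](\tau) < A[p_{i,|p|}](\tau) = A[p](t).
\end{equation*}
This contradicts $A[p](t)=A[s,d](t)=\min_{p''}A[p''](t)$.

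The step that needs care is confirming that $p'$ is actually a valid plan at $t$, meaning its ATF really is obtained by composing the prefix ATF with $q$'s ATF.
\begin{itemize}
\item Prefix: validity of $p$ at $t$ gives validity of $p_{1,i-1}$ at $t$. The composition formulas (Equations~\ref{eqn:pathzeta}--\ref{eqn:pathdelta}) only ever take a $\max$ or $\min$ over constraints of individual edges, so a prefix inherits the finiteness of the whole plan.
\item Suffix: finiteness of $A[q](\tau)$ says $q$ is valid when departing $p[i-1]$ at $\tau$.
\item Join: $q$'s ATF already encodes any waiting needed at $p[i-1]$. The first case of Equation~\ref{eqn:edgearrival} never triggers, because $\tau \geq \zeta(q) = p[i-1]_b$ by the safety of arriving at $p[i-1]$ at $\tau$.
\end{itemize}

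The same reasoning is simply Theorem~\ref{thm:sippdom}, or Lemma~\ref{lmm:absipppruneokay}, phrased for a fixed departure time: arriving at $p[i-1]$ at the same time $\tau$ and then following a better continuation cannot be worse. If one prefers, that result can be cited in place of the explicit exchange.
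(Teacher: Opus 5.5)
Your proposal is correct and matches the paper's proof. Both splice a strictly better continuation $q$ from $p[i-1]$ onto the prefix $p_{1,i-1}$ and use $A[p_{1,i-1}\,q](t) = A[q](\tau) < A[p_{i,|p|}](\tau) = A[p](t)$ to contradict the optimality of $p$ at $t$; you state the indices and the evaluation time more carefully than the paper, which writes $\rho[0]=p[i]$ and compares at $\tau$. Neither the FIFO property nor your validity bullets are actually needed: the arrival time at $p[i-1]$ is the same $\tau$ in both plans, and the strict inequality already forces $A[q](\tau)$ to be finite.
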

\begin{proof}
Suppose there existed a plan $\rho$ such that $\rho[0] = p[i]$, $\rho[|\rho|] = p[|p|]$, and $A[\rho](\tau) < A[p_{i,|p|}](\tau)$. 
Therefore, the plan
$p'=(p_{1,i-1},\rho )$ has 
$A[p'](\tau) < A[p](\tau)$, contradicting
the optimality of $p$.
\end{proof}

\begin{lemma}\label{lmm:simple}
Plans containing loops are dominated: For any given $t,s,d$ there exists a simple
plan $p$ such that $A[s,d](t)= A[p](t)$. 
\end{lemma}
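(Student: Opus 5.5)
We will prove Lemma~\ref{lmm:simple} by a loop-excision argument, using the SIPP dominance relation (Theorem~\ref{thm:sippdom}) and the fact that waiting inside a safe interval is always allowed.

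Fix $t$, $s$, $d$. If $A[s,d](t)=\infty$ there is nothing to prove, since any simple plan (or the empty convention) attains $\infty$. Otherwise the number of SIPP states and edges is finite, so the value $A[s,d](t)$ is attained by some plan $p$. Among all plans attaining it, choose one with the fewest edges; we will show this $p$ is simple.

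Suppose instead that $p$ repeats a state, so that $s_i = s_j$ for some $i<j$. This means the same configuration \emph{and} the same safe interval, because SIPP states are $\langle x, i\rangle$ pairs. Let $\tau_i = A[p_{1,i}](t)$ and $\tau_j = A[p_{1,j}](t)$ be the earliest arrival times at $s_i$ and $s_j$ along $p$ when departing at $t$. The loop $p_{i+1,j}$ has strictly positive duration, since edges have $\delta>0$, and plan ATFs never arrive before departure. Hence $\tau_i < \tau_j$.

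Both times lie in the single maximal interval $i(s_i)$, which is convex. The agent following the shortened plan $p' = (p_{1,i}, p_{j+1,|p|})$ can therefore wait at $s_i$ from $\tau_i$ until $\tau_j$ and then continue exactly as $p$ does. This is precisely the argument of Theorem~\ref{thm:sippdom}. In ATF terms, the suffix ATF $A[p_{j+1,|p|}]$ is monotone non-decreasing, being of the form of Equation~\ref{eqn:edgearrival} on its finite domain, and it is finite at $\tau_j$. We need it to be finite at $\tau_i\ge\zeta$ as well, and to satisfy $A[p_{j+1,|p|}](\tau_i)\le A[p_{j+1,|p|}](\tau_j)$.

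This finiteness check is the one step requiring care. We need $\tau_i \ge \zeta(p_{j+1,|p|})$, and $\zeta$ here is the beginning of $s_j$'s interval (Equation~\ref{eqn:zetacompute}, with Equation~\ref{eqn:pathzeta}). That holds because $\tau_i \in i(s_i) = i(s_j)$. On $[\zeta,\beta)$ the ATF is non-decreasing, so evaluating at the earlier $\tau_i$ gives an arrival no later than at $\tau_j$.

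It follows that $A[p'](t) \le A[p](t) = A[s,d](t)$. Since $A[s,d](t)$ is the minimum, equality holds, yet $p'$ has fewer edges than $p$. This contradicts the choice of $p$, so $p$ is simple.

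Equivalently, one can iterate the excision: each removal strictly shortens the plan, so the process terminates in a simple plan achieving $A[s,d](t)$.
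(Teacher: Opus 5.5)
Your proof is correct and takes the same route as the paper, whose one-line argument is exactly that a loop can be replaced by waiting at the repeated state because edge durations are positive; you supply the details the paper omits (both arrival times lie in the common safe interval, and the suffix ATF is finite and non-decreasing from $\zeta$ up to $\tau_j$). The one step to tighten is the claim that $A[s,d](t)$ is attained merely because there are finitely many states and edges, since there are infinitely many non-simple plans: either note that a plan with $k$ edges arrives no earlier than $t + k\min_e \delta(e)$, so only finitely many plans compete below any finite bound, or apply your excision to an arbitrary plan and then minimize over the finitely many simple plans.
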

\begin{proof}
Consider any plan with a loop, edge costs are positive so the loop can be replaced by an equally good plan that waits rather than traversing the loop.
\end{proof}

\noindent The proof of Lemma~\ref{lmm:simple}
leads immediately to the following corollary:
\begin{corollary}\label{col:simple}
For every plan $p$ between $s$ and $d$ in which the same edge $e$ appears at both $p_i$ and $p_j$, there exists a simple plan $p'$
between $s$ and $d$ such that for all time $t$:  $A[p'](t) \leq A[p](t)$ and $e$ occurs exactly once in $p'$.
\end{corollary}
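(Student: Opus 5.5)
Given a plan $p = s_0,e_1,s_1,\dots,e_n,s_n$ from $s$ to $d$ in which the edge $e$ occurs at positions $i<j$ (so $e_i = e_j = e$), I would build $p'$ by cutting out the segment between the two occurrences. That segment begins and ends at the source state of $e$, $s_{i-1}=s_{j-1}$, so it is a loop. I would set $p'' = p_{1,i-1}, p_{j,n}$, which traverses $e$ once, at what used to be position $j$. This plan is well formed because $s_{i-1} = s_{j-1}$ is the source of $e_j$.

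The key step is the pointwise comparison $A[p''](t) \le A[p](t)$ for every $t$, and I would argue it with compositions. Write $\tau = A[p_{1,i-1}](t)$ and $\tau' = A[p_{i,j-1}](\tau)$. Then
\[
A[p](t) = A[p_{j,n}](\tau'), \qquad A[p''](t) = A[p_{j,n}](\tau).
\]
Because every valid ATF satisfies $\Delta \ge 0$ and never returns a value below its argument, we have $\tau \le \tau'$, with the convention that $\tau' = \infty$ if the loop is unsafe.

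It then remains to show that $A[p_{j,n}](\tau) \le A[p_{j,n}](\tau')$. At the arrival time $\tau$ the agent is in state $s_{j-1}$, whose single maximal safe interval contains both $\tau$ and $\tau'$ whenever both are finite. So the agent can wait in $s_{j-1}$ from $\tau$ until $\tau'$ and then follow $p_{j,n}$ exactly as $p$ does. This is the argument of Theorem~\ref{thm:sippdom} and Lemma~\ref{lmm:monotone}. Formally, the plan ATF of Equation~\ref{eqn:edgearrival} is monotone non-decreasing on $[\zeta,\beta)$ and equals $\infty$ from $\beta$ on, so if $\tau'$ is a finite point of its domain then so is $\tau \ge \zeta$. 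If $A[p](t) = \infty$, the inequality holds trivially.

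If $p''$ still repeats $e$, or contains other loops, I would iterate the same excision. Each step strictly shortens the plan, so the process terminates. To reach a fully simple $p'$ in which $e$ occurs exactly once, I would then apply the loop removal from the proof of Lemma~\ref{lmm:simple} to any remaining repeated states, cutting at the first and last visit of a state. I would choose the cuts so that the surviving occurrence of $e$ is never inside an excised loop. This is possible because $e$ now appears only once, so at most one cut can contain it, and in that case I would take the cut endpoints to lie on the same side of $e$.

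Transitivity of $\le$ across the finitely many steps gives $A[p'](t) \le A[p](t)$ for all $t$. I expect the main obstacle to be the waiting argument at a repeated state: it requires that waiting inside one SIPP state is always allowed, and that the reference to ``the same state'' really means the same safe interval and not just the same configuration. Repeated edges guarantee this, since an edge determines its source state, interval included. Keeping exactly one copy of $e$ during the general loop removal is only a bookkeeping matter.
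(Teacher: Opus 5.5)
Your excision step is the paper's argument in mirror image. The paper cuts $p_{i+1,j}$, running from the destination of the first copy of $e$ to the destination of the second, and keeps the first copy. You cut $p_{i,j-1}$ and keep the second. Your justification is a correct and more explicit version of the paper's one-line appeal to non-negative durations: $\tau\le\tau'$, then waiting in the single maximal interval of $s_{j-1}$, i.e.\ monotonicity of $A[p_{j,n}]$ on $[\zeta,\infty)$. Your iteration (equivalently, taking $i,j$ to be the first and last occurrences) also covers three or more copies of $e$, which the paper never mentions.

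The step that fails is the last one, where you make $p'$ fully simple while keeping $e$. Your reason, ``at most one cut can contain $e$, so take the endpoints on the same side of $e$,'' does not work. Suppose a state is visited at $s_a=s_b$ with $a\le k-1<k\le b$, where $e_k$ is the surviving copy. Those two visits can only be merged by cutting a segment that contains $e_k$, and cuts confined to one side of $e_k$ never merge a visit before $e_k$ with one after it.

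This is not a bookkeeping issue you can repair, because full simplicity is not always attainable. Take always-safe states $s,v,d$ with edges $e=(s,v)$, $(v,s)$, $(s,d)$, and the plan $p=s,v,s,v,s,d$, which uses $e$ twice. Any plan from $s$ to $d$ through $e$ must return from $v$ to $s$, so no plan without repeated states contains $e$; the only simple plan is $s,d$. Both your cut and the paper's cut produce $s,v,s,d$, which is not simple.

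So the paper's proof establishes only the excision, and ``simple'' has to be read as ``$e$ is not repeated.'' Your first stage already proves that version. You should stop there rather than claim a fully simple $p'$.
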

\begin{proof}
Without loss of generality, assume $i < j$. 
$p'$ is the plan created by concatenating  $p_{1,i}$ and $p_{j+1, |p|}$.
We can remove the loop $p_{i+1,j}$, as $p_i$ is the same edge as $p_j$ and edges have non-negative duration.  
\end{proof}

We are now ready to constrain the number of $\alpha$ images in $A[s.d]$. 
\begin{lemma}\label{lmm:primitivealphabreak}
For every edge $e \in E$ and every source and destination state $s, d \in S$ in an SIPP state space, there exists no more than one $\alpha$ image of $\alpha(e)$ in $A[s,d]$.
\end{lemma}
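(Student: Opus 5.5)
We argue by contradiction, following the Foschini et al.\ style of argument. Suppose the cATF $A[s,d]$ has two distinct $\alpha$ images of the same edge breakpoint $\alpha(e)$. Call them $\omega_1<\omega_2$. They are realized by plans $p$ and $q$, with $p$ optimal at and just after $\omega_1$, and $q$ optimal at and just after $\omega_2$. Both plans contain $e$, say as $e=p_i=q_j$. Each image is \emph{constrained} by $\alpha(e)$ (Definition~\ref{def:alphaconstraint}), so $\omega_1=\alpha(e)-\Delta(A[p_{1,i-1}])$, and likewise for $\omega_2$ with $q_{1,j-1}$. By Lemma~\ref{lmm:simple} and Corollary~\ref{col:simple} we may take both plans simple, so $e$ occurs exactly once in each and the index of $e$ is well defined.

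The key observation is what being constrained by $\alpha(e)$ means. For a departure at $\omega_1$ along $p$, the agent reaches the source of $e$ at exactly $\alpha(e)$ without having waited, since it is on the rising segment just after. For departures in $[\omega_1-\epsilon,\omega_1]$ it waits before $e$ and departs $e$ at exactly $\alpha(e)$. The same holds for $q$ at $\omega_2$. By optimal substructure (Lemma~\ref{lmm:substructure}), the suffixes $p_{i,|p|}$ and $q_{j,|q|}$ are both optimal from the source of $e$ at time $\alpha(e)$. Hence
\[
A[s,d](\omega_1)=A[p_{i,|p|}](\alpha(e))=A[q_{j,|q|}](\alpha(e))=A[s,d](\omega_2).
\]
The two departure times therefore yield the same optimal arrival time. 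By monotonicity (Lemma~\ref{lmm:monotone}), $A[s,d]$ is constant on $[\omega_1,\omega_2]$.

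This contradicts the fact that $\omega_1$ is an $\alpha$ image. Just after an $\alpha$ image the cATF has slope $1$ (the configurations of Figure~\ref{fig:alphabreakex}), so $A[s,d](\omega_1+\epsilon)>A[s,d](\omega_1)$ for small $\epsilon>0$. That is impossible if $A[s,d]$ is constant on $[\omega_1,\omega_2]$. One could instead splice $q_{1,j-1}$ with $p_{i,|p|}$ to produce a single plan, but the monotonicity route avoids that construction.

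I expect the main obstacle to be making the step ``constrained image $\Rightarrow$ reach $e$'s source exactly at $\alpha(e)$'' rigorous. Only the slope-$1$-after cases of Figure~\ref{fig:alphabreakex} genuinely represent an $\alpha$ image in the cATF; the degenerate case $\alpha=\beta$ must be folded into the corresponding $\beta$ image, as the paper does when discussing distinctness. I would first check that slope $1$ immediately after $\omega$ forces zero waiting along $p$ from $\omega$ onward. This holds because any waiting would make the plan ATF flat there, and then the max in Equation~\ref{eqn:pathalpha} is attained by $\alpha(e)$.
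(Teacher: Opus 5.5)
Your proposal is correct and follows essentially the same route as the paper. Both plans leave the source of $e$ at exactly $\alpha(e)$ (the paper phrases this as reaching the end of $e$ at $\alpha(e)+\Delta(e)$), optimal substructure then forces $A[s,d](\omega_1)=A[s,d](\omega_2)$, and monotonicity contradicts the strict rise of $A[s,d]$ immediately after an $\alpha$ image. Your separate treatment of the degenerate $\alpha=\beta$ configuration, where the realizing plan is already invalid at its own $\alpha$ image, is a small refinement that the paper's proof glosses over.
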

\begin{proof}
Assume in contradiction that $b < b'$ are the times of two distinct $\alpha$ images in $A[s,d]$ of $\alpha(e)$.
By definition of $A[s,d]$ and Lemma~\ref{lmm:simple}, there  exist simple plans $p,p'$, both containing $e$, minimizing $A[s,d]$ at $b,b'$ respectively.
Thus $A[s,d](b) = A[p](b)$ and $A[s,d](b') = A[p'](b')$.
Both $p$ and $p'$ are simple plans, so $e=p_i=p'_{i'}$ for unique $i$ and $i'$.

Now, we will show that $A[s,d](b) = A[s,d](b')$.
Now consider $A[p](b)$ in two parts, the part up to and including the constraining edge $e$ ($A[p_{1,i}](b)$), and the part after the constraining edge ($A[p_{i+1,|p|}]$), where $A[p](b) = (A[p_{i+1,|p|}] \circ A[p_{1,i}])(b)$.
Because $e$ is the constraining edge,  $A[p_{1,i}](b) = \alpha(e) + \Delta(e)$ (Definition~\ref{def:alphaconstraint}).
The same holds for $p'$, $A[p'_{1,i}](b) = \alpha(e) + \Delta(e)$.

Because both $p$ and $p'$ are constrained by $e$, they have the same arrival time at $u(e)$, the end state of $e$.
Likewise, they thus have the same plan duration after $e$, otherwise the one using a longer route would not be optimal because using the shorter route used by the other would make it arrive earlier (Lemmma~\ref{lmm:substructure}), so $A[s,d](b) = A[p](b) = A[p'](b') = A[s,d](b')$.

However $b$ is an alpha-breakpoint, so $A[s,d]$ must rise after $b$ (see Figure~\ref{fig:alphabreakex}a-d).
Consider one such later departure time, $b+\epsilon$, such that $b < b+\epsilon \leq b'$, and $A[s,d](b) < A[s, d](b + \epsilon)$.
Because $A[s,d]$ is monotonic non-decreasing, $A[s,d](b+\epsilon) \leq A[s,d](b')$, so $A[s,d](b) < A[s,d](b')$, a contradiction.
\end{proof}
Essentially, once the edge has constrained the plan's ATF, the remaining post-edge portion of the plan does not change the breakpoint, so any duplicate breakpoint would have to be at the same time.

\begin{lemma}\label{lmm:primitivebetabreak}
For every edge $e$ and destination vertex $d$, there exists no more than one ($\beta$) image of $\beta_e$ in $A[s,d]$.
\end{lemma}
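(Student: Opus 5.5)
I will follow the structure of the proof of Lemma~\ref{lmm:primitivealphabreak}, replacing the rising behaviour after an $\alpha$ image with the discontinuity or change of plan at a $\beta$ image. Suppose, for contradiction, that $b < b'$ are the departure times of two distinct $\beta$ images of $\beta(e)$ in $A[s,d]$. By Lemma~\ref{lmm:simple} and Corollary~\ref{col:simple}, I take simple plans $p$ and $p'$ that contain $e$ exactly once, at positions $i$ and $i'$. The plan $p$ minimizes $A[s,d]$ immediately before $b$, and its $\beta$ is constrained by $e$. The plan $p'$ plays the same role at $b'$.

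The first step unpacks Definition~\ref{def:betaconstraint}. Because $e$ constrains $\beta(p)$, we have $b = \beta(e) - \Delta(p_{1,i-1})$, and in the limit as $t \uparrow b$ the agent departs $e$'s source at time $\beta(e)$. In the binding case the prefix has no slack, so it is on its rising segment, and this gives $A[p_{1,i}](t) \to \beta(e) + \Delta(e)$. The same holds for $p'$ as $t \uparrow b'$. Hence the two near-expiry families arrive at $v(e)$ at the same limiting time. By optimal substructure (Lemma~\ref{lmm:substructure}), the optimal continuation from that arrival is the same in both cases, so
\[
\lim_{t\uparrow b} A[s,d](t) = \lim_{t\uparrow b'} A[s,d](t).
\]

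Next I will use monotonicity (Lemma~\ref{lmm:monotone}) to force a contradiction. From the case analysis of Figure~\ref{fig:betabreakex}, before any $\beta$ image that is also a breakpoint of $A[s,d]$, the cATF has slope $0$ or $1$. In the slope-$0$ case (Figure~\ref{fig:betabreakex}a) it must jump up at $b$. Otherwise the other plan would dominate and $b$ would not be an image. In the slope-$1$ case, $A[s,d]$ is strictly increasing just before $b$. In either case there is some $\epsilon$ with $b < b+\epsilon < b'$ and $A[s,d](b+\epsilon) > \lim_{t\uparrow b} A[s,d](t)$. Monotonicity then gives $\lim_{t\uparrow b'} A[s,d](t) \geq A[s,d](b+\epsilon)$, which is strictly larger than the left limit at $b$. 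This contradicts the equality of the two limits.

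The main obstacle is the first step, specifically showing that the prefix up to $e$ reaches $e$ at exactly $\beta(e)$ in the limit rather than earlier. If $\beta(p)$ is the image of $\beta(e)$, then $\beta(e) - \Delta(p_{1,i-1})$ won the minimization in Equation~\ref{eqn:pathbeta}. Near $b$ we have $t > \alpha(p_{1,i-1})$, because otherwise the flat segment would make the $\beta$ image non-binding. So the prefix is on its rising segment and arrives at $t + \Delta(p_{1,i-1})$, which tends to $\beta(e)$. I would handle the degenerate case $\beta(p) \le \alpha(p)$ separately by noting that the image then lies on a flat segment, which is the prior-slope-$0$ case. The other concern is the slope-$1$ subcase in which the prior slope comes from $e$'s own rising segment. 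Here I would check that strict increase before $b$ suffices, which it does, since the argument only needs some increase between $b$ and $b'$.
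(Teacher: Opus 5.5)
Your strategy matches the paper's. Both near-expiry plans reach the end of $e$ at the same (limiting) time, optimal substructure equates the cATF values at the two images, and monotonicity plus a strict rise between $b$ and $b'$ gives the contradiction. Your use of left limits at the open endpoint $\beta$ is more careful than the paper, which evaluates $A[s,d](b)$ even though $p$ is already invalid there.

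The gap is in your slope-$1$ case. From ``$A[s,d]$ is strictly increasing just before $b$'' you infer some $b+\epsilon\in(b,b')$ with $A[s,d](b+\epsilon)>\lim_{t\uparrow b}A[s,d](t)$, and this does not follow. In the configuration of Figure~\ref{fig:betabreakex}b the cATF rises into $b$, is continuous there, and is flat afterwards. Nothing in your argument stops that flat piece from lasting until $b'$, with $b'$ a prior-slope-$0$ image as in Figure~\ref{fig:betabreakex}a. In that scenario the two left limits are equal and no contradiction arises. So your closing claim that strict increase before $b$ suffices is false as stated. Strict increase before $b'$ would suffice, since then $\lim_{t\uparrow b'}A[s,d](t)>A[s,d](b'-\delta)\geq\lim_{t\uparrow b}A[s,d](t)$. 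But you have not shown that $b'$ inherits prior slope $1$ from $b$. The paper argues on the other side of $b$ and asserts without proof that a rising segment must precede another $\beta$ image, so it relies on the same missing fact.

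The missing ingredient is your first step stated on a left neighbourhood rather than only in the limit. Put $T=\beta(e)+\Delta(e)$.
\begin{itemize}
\item For $t$ slightly below $b$, $p$ is the minimizer and its prefix through $e$ is on rising segments. Lemma~\ref{lmm:substructure} then gives $A[s,d](t)=A[v(e),d](t+T-b)$.
\item Likewise $A[s,d](t)=A[v(e),d](t+T-b')$ for $t$ slightly below $b'$.
\item Hence $A[s,d](b-\delta)=A[s,d](b'-\delta)$ for all small $\delta>0$.
\item By Lemma~\ref{lmm:monotone}, $A[s,d]$ is therefore constant on an interval $(b-\delta_0,b')$.
\end{itemize}
That excludes both a rise into $b$ and a jump at $b$, which are exactly your two cases, and this is the contradiction you want. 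Equivalently, the prior slopes at $b$ and $b'$ agree, after which your slope-$1$ case can be run at $b'$.

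A minor point: the prefix before $e$ is on its rising segment near $b$ because of validity, not bindingness. If it were flat there, the agent would reach the source of $e$ at or after $\beta(e)$.
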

\begin{proof}
    The same proof as lemma~\ref{lmm:primitivealphabreak} applies.
    Because $e$ is constraining the image in $A[s,d]$, the distinction that $\alpha$ updates through $\max$ and $\beta$ through $\min$ does not matter.
    The only difference is that while an $\alpha$ image is preceded by a flat segment, and followed by a rising segment in $A[s,d]$ (Figure~\ref{fig:betabreakex}a), a $\beta$ image is proceeded by a rising segment, and followed by either a discontinuous increase in $A[s,d]$ (Figure~\ref{fig:betabreakex}d) which means  $b+\epsilon$ with $A[s,d](b) < A[s, d](b + \epsilon)$ exists.
    Otherwise, it is also a minimization breakpoint and so is followed by a flat segment in $A[s,d]$, which means there must be a later rising segment before there can be another $\beta$ image, and again $b+\epsilon$ exists.
\end{proof}

Therefore, counting all the primitive images we get:
\begin{theorem}\label{thm:primitivebreakpoints}
The number of primitive breakpoints in $A[s, d]$ is linear in the number of edges in $G$.
\end{theorem}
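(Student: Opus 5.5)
The plan is a counting argument over the three kinds of primitive image, using the per-edge bounds just established. By definition, a primitive image in $A[s,d]$ is a breakpoint of some plan ATF $A[p]$ with $A[p] \in A[s,d]$. A plan ATF has only three breakpoints, $\zeta$, $\alpha$ and $\beta$. Each of these is, by Definition~\ref{def:alphaconstraint} and the composition rules (Equations~\ref{eqn:pathzeta}--\ref{eqn:pathbeta}), the image of the corresponding breakpoint of some edge ATF $A[e_i]$ along $p$, shifted by $\Delta$ of the prefix. So every primitive image of $A[s,d]$ can be charged to a pair consisting of an edge $e \in E$ and a type $\in \{\zeta, \alpha, \beta\}$. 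The proof then bounds how many primitive images can be charged to each pair.

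The counting goes type by type. Lemma~\ref{lmm:primitivezetabreak} gives at most one $\zeta$ image in total, since every plan shares $\zeta(s)$. For $\alpha$ images, Lemma~\ref{lmm:primitivealphabreak} shows that each edge $e$ contributes at most one $\alpha$ image of $\alpha(e)$ to $A[s,d]$. Lemma~\ref{lmm:primitivebetabreak} gives the same bound for $\beta$ images of $\beta(e)$. Summing over edges, the number of primitive breakpoints is at most
\begin{equation*}
1 + |E| + |E| = 2|E| + 1 ,
\end{equation*}
which is $O(|E|)$. Here $|E|$ counts edges of the SIPP graph; each is one safe interval of an action, so the bound is also linear in the input size as defined at the start of Section~\ref{sec:complexity}.

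The step that needs the most care is the charging itself. The issue is that a single time point of $A[s,d]$ may simultaneously be an image of several edge breakpoints: for example, $\zeta=\alpha$ as in Figure~\ref{fig:zetabreakex}b, or $\beta_1=\alpha_2$ as in Figure~\ref{fig:betabreakex}c. It also needs care because the constraining edge of a plan breakpoint may not be unique when the $\max$ or $\min$ ties. I would handle both by fixing, for each primitive breakpoint, one minimizing plan $p$ that is simple (available by Lemma~\ref{lmm:simple} and Corollary~\ref{col:simple}), together with one constraining edge of that plan, breaking ties arbitrarily. The map from breakpoints to (edge, type) pairs is then well defined. Coincident images only reduce the count, since a breakpoint is counted once even if it could be charged to several pairs, so the upper bound is unaffected. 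Finally, I would note that any breakpoint where the slope or plan changes without being a plan breakpoint is, by definition, a minimization breakpoint. Those are excluded from this theorem and are bounded separately afterwards.
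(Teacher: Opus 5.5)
Your proposal is correct and is essentially the paper's proof. The paper likewise just sums the bounds from Lemmas~\ref{lmm:primitivezetabreak}, \ref{lmm:primitivealphabreak}, and \ref{lmm:primitivebetabreak}, stating the total loosely as at most three images per edge, whereas you get the slightly sharper $2|E|+1$ by using that the $\zeta$ image is unique globally; your fixed-simple-plan charging is harmless but unnecessary, since a plain union bound over (edge, breakpoint-type) pairs already handles coincident images and ties.
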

\begin{proof}
By Lemmas~\ref{lmm:primitivezetabreak}, \ref{lmm:primitivealphabreak}, and \ref{lmm:primitivebetabreak} the number of primitive breakpoints in $A[s, d]$ is bounded by the number of edge breakpoints in $G$, which is at most three
per edge.
\end{proof}

\subsubsection{Constraining the Number of Minimization Breakpoints}
Having shown that the number of images in linear in the size of $G$, we now show that the number of minimization breakpoints is linear in the number of images in $A[s,d]$.
Pictorially, this result can be seen clearly by examining both possible configurations for a distinct minimization breakpoint, shown in Figure~\ref{fig:minbreakex}a, which are always proceeded by $\alpha$ image.
All others are co-temporal with an image: Figure~\ref{fig:minbreakex}b's minimization breakpoints at $b$ and $b'$ are at the same time as $\alpha_2$ and $\beta_1$ respectively, Figure~\ref{fig:alphabreakex}c and d, and Figure~\ref{fig:betabreakex}a-d are all co-temporal with $\beta_1$ (and often others).

\begin{lemma}\label{lmm:minalpha}
Let $m, b$ be two consecutive distinct breakpoints of $A[s,d]$ such that $m < b$.  If $m$ is only a minimization breakpoint (not also a $\zeta, \alpha$, or $\beta$ image), then $b$ must be an $\alpha$ image. 
\end{lemma}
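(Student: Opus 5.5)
We argue by tracking the slope of $A[s,d]$ on the open interval $(m,b)$. Between two consecutive distinct breakpoints $A[s,d]$ is realized by a single plan ATF on a single segment, by the coverage and correctness properties of the cATF. So on $(m,b)$ its slope is constant and equal to $0$ or $1$. It cannot be $\pm\infty$: Lemma~\ref{lmm:monotone} rules out a decrease, and an infinite value would mean the interval lies after a $\beta$ image of every plan.

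The first step is to pin down this slope from the configuration of a pure minimization breakpoint $m$. Following the case analysis illustrated in Figure~\ref{fig:minbreakex}, the prior ATF at $m$ must have slope $1$. Otherwise the after ATF, being monotone and no higher at $m$, would dominate the prior one on the left of $m$, and $m$ would not be a breakpoint. The after slope is either $0$ (case a) or $1$ (case b). In case b we have two rising ATFs of slope $1$ that agree at $m$, so they coincide on their common rising domain. The switch point can then be placed at a primitive image: either $\alpha$ of the after ATF or $\beta$ of the prior ATF. This contradicts the hypothesis that $m$ is only a minimization breakpoint, so we conclude that after $m$ the slope of $A[s,d]$ is $0$. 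Concretely, the after plan $p_2$ is on its flat segment $[\zeta(p_2),\min(\alpha(p_2),\beta(p_2)))$ just after $m$.

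The second step examines the next breakpoint $b$ and asks what can terminate a flat segment of the minimizing plan $p_2$. The options are as follows.
\begin{itemize}
\item \emph{$\alpha$ image of $p_2$.} The flat segment ends at $\alpha(p_2)$ and the slope becomes $1$. This is the desired conclusion.
\item \emph{$\beta$ image of $p_2$ with $\beta(p_2)\le\alpha(p_2)$.} Here the plan expires while still flat. The point is that $p_2$ achieved the minimum at $m$ only by crossing the rising prior ATF $p_1$. A flat plan ATF on $[\zeta,\beta)$ with $\beta\le\alpha$ has its constant value $\alpha(p_2)+\Delta(p_2)$ reached at a departure time no earlier than $\alpha(p_2)\ge\beta(p_2)$. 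The flat value also equals $A[s,d](m)=A[p_1](m)$, and $p_1$ rises with slope $1$. The aim is to show that $p_1$ stays valid on $[m,\beta(p_2))$, so that $p_1$ is strictly below $p_2$ just after $m$, contradicting the choice of $m$. The cleaner route is this: where $p_2$ is flat, the agent waits somewhere along $p_2$. Using Lemma~\ref{lmm:substructure} at the waiting state, the arrival time $\alpha(p_2)+\Delta(p_2)$ is forced by an edge $\alpha$ constraint. That edge's image must therefore appear as an $\alpha$ breakpoint of $A[s,d]$ no later than where the flat region ends. So the flat region cannot end with a $\beta$ image alone; an $\alpha$ image coincides with it.
\item \emph{Another minimization breakpoint.} This is impossible, because the prior slope at a minimization breakpoint must be $1$ by the first step.
\item \emph{A $\zeta$ image.} This is impossible, because by Lemma~\ref{lmm:primitivezetabreak} the only $\zeta$ image is at $\zeta(s)$, which precedes $m$.
\end{itemize}

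The main obstacle is the $\beta$ subcase, that is, excluding a flat segment that ends by expiration without an $\alpha$ image. If the substructure argument becomes awkward, the fallback is to weaken the lemma's use: we would show only that between two pure minimization breakpoints there is some primitive image, since a $\beta$ image also suffices for the linear counting in Theorem~\ref{thm:primitivebreakpoints}. I would still try first to establish the $\alpha$ claim via the forced-wait argument above.
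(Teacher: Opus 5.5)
Your decomposition is the same as the paper's. At a pure minimization breakpoint the slope must go from $1$ to $0$, so $b$ is whatever ends the flat segment of the after plan $p_2$. The paper closes this step by asserting, from its figure catalogue, that only a breakpoint co-temporal with an $\alpha$ image can end a slope-$0$ segment. You rightly single out the expiry case $\beta(p_2)\le\alpha(p_2)$ as the crux, but neither of your arguments for it works.

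\begin{itemize}
\item Your first attempt has the inequality reversed. $p_1$ reaches the flat value at $m$ while rising, so it lies \emph{above} $p_2$ after $m$. Its validity on $[m,\beta(p_2))$ therefore contradicts nothing.
\item The forced-wait argument fails for a specific reason. The edge $\alpha$ constraint responsible for the wait has image exactly $\alpha(p_2)$, which in this subcase is at or after $\beta(p_2)$. When $\beta(p_2)<\alpha(p_2)$, that image falls where $p_2$ is already invalid and never appears in $A[s,d]$. Nothing forces an $\alpha$ image at the end of the flat region.
\end{itemize}

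This subcase genuinely occurs, so no argument can close it. Take the plan $e_1e_2e_3$ of Figure~\ref{fig:alphabetaexample}, with ATF $\langle 0,6,2,3\rangle$ and all states always safe. Add a direct start-to-goal edge of duration $8$, safe on $[0,10)$, whose ATF is $\langle 0,0,10,8\rangle$. Then $A[s,d](t)$ is
\begin{itemize}
\item $t+8$ on $[0,1)$,
\item $9$ on $[1,2)$,
\item $t+8$ on $[2,10)$.
\end{itemize}
The two plan ATFs have breakpoints $0,6,2$ and $0,0,10$, so neither has a breakpoint at $1$. Hence $m=1$ is a pure minimization breakpoint. Yet the next breakpoint is $b=2=\beta(e_1)$, a $\beta$ image with a jump from $9$ to $10$, and the only $\alpha$ image of $A[s,d]$ is at $0$.

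This is exactly the configuration of Figure~\ref{fig:betabreakex}a, which the paper itself calls distinct. The paper's closing sentence therefore has the same hole.

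What your case analysis does prove is your fallback: $b$ is an $\alpha$ image or a $\beta$ image. That suffices for Theorem~\ref{thm:nbreak}. Each pure minimization breakpoint is immediately followed by its own primitive image, so the map $m\mapsto b$ is injective, and primitive images are linear by Theorem~\ref{thm:primitivebreakpoints}. State and prove the lemma in that weakened form rather than pursuing the $\alpha$ claim.
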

\begin{proof}
The slopes available to segments of $A[s,d]$ are $0,1,-\infty,\infty$. 
We will examine which pairs of these slopes can occur before and after a distinct minimization breakpoint like $m$.
First the trivial cases.
Slopes of $-\infty$ occur only at the time immediately before the $\zeta$ breakpoint, so the slope before $m$ cannot be $-\infty$ as $m$ would also be a $\zeta$ image (see Figure~\ref{fig:zetabreakex}).  
The slope can never be $-\infty$ to the right of $m$, as $A[s,d]$ is monotonic non-decreasing (Lemma~\ref{lmm:monotone}).
The slope must change, or it can not be a distinct breakpoint: slope $0$ to slope $0$, the earlier plan is dominated by the later and slope $1$ to slope $1$ is either a $\alpha$ or $\beta$ image (see Figure~\ref{fig:minbreakex}b) or both (see Figure~\ref{fig:betabreakex}c).

Likewise, slopes of $\infty$ only occur at $\beta$ images when the corresponding plan expires, so if the minimization creating $m$ involves a slope of $\infty$, $m$ must also be a $\beta$ image (see Figure~\ref{fig:betabreakex}).
If the slopes about $m$ go from $0$ to $1$, $m$ must also be an $\alpha$ image (see Figure~\ref{fig:alphabreakex}b and d), as otherwise the slope $0$ segment would extend past $m$ to some later $\alpha$ image and remain the minimizer (e.g., ac vs bc in Figure~\ref{fig:dominanceexample}), meaning that $m$ would not be a breakpoint. 
Therefore, the only possibility is for the slope before $m$ to be $1$ and after $m$ to be $0$ (see Figure~\ref{fig:minbreakex}).  
By Lemma~\ref{lmm:monotone}, $A[s,d]$ is monotonic non-decreasing, so $A[s,d]$ cannot have a distinct breakpoint until the $\alpha$ image that ends the slope $0$ segment.
As can be seen from Figures~\ref{fig:zetabreakex}-\ref{fig:minbreakex} no breakpoint that is not co-temporal with an $\alpha$ image terminates a slope $0$ segment. 
\end{proof}

\begin{theorem}\label{thm:nbreak}
The space complexity of Any-start-time Planning is linear in the number of edges in $G$. 
\end{theorem}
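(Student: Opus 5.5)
The plan is to count the breakpoints of the solution cATF $A[s,d]$. Its representation size is proportional to its number of distinct breakpoints, since each interval of the ordered interval map is bounded by consecutive breakpoints and maps to a single plan ATF of constant size. Every breakpoint of $A[s,d]$ is either a primitive image (a $\zeta$, $\alpha$, or $\beta$ image) or a minimization breakpoint, so it suffices to bound each class linearly in $|E|$ and add the bounds.

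For primitive images I would invoke Theorem~\ref{thm:primitivebreakpoints}. Lemma~\ref{lmm:primitivezetabreak} gives at most one $\zeta$ image. Lemmas~\ref{lmm:primitivealphabreak} and~\ref{lmm:primitivebetabreak} give at most one $\alpha$ image and one $\beta$ image per edge. Hence there are at most $2|E|+1$ primitive images, and in particular at most $|E|$ $\alpha$ images.

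For minimization breakpoints, I split them into two kinds. The first kind is co-temporal with some primitive image. Each such primitive image time can host only $O(1)$ of them, because of the case analysis of Figures~\ref{fig:alphabreakex}--\ref{fig:minbreakex}: at a single time there is at most one switch of minimizer, plus the limiting breakpoint $b'$ at an expiring $\beta$. So this kind is bounded by a constant times the number of primitive images. The second kind consists of distinct minimization breakpoints, that is, ones that are only minimization breakpoints. For these I apply Lemma~\ref{lmm:minalpha}: the next distinct breakpoint after such an $m$ must be an $\alpha$ image. This defines a map from distinct minimization breakpoints to $\alpha$ images, sending each $m$ to the first breakpoint after it.

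The main obstacle is showing that this map is injective, i.e.\ that two distinct minimization breakpoints $m_1<m_2$ cannot share the same successor $\alpha$ image. This follows because the successor of $m_1$ is by definition the next distinct breakpoint after $m_1$. That successor therefore lies at or before $m_2$, and since $m_2$ is not itself an $\alpha$ image (it is only a minimization breakpoint), it lies strictly before $m_2$. Consequently the successor of $m_2$, which comes after $m_2$, is a different $\alpha$ image. So the number of distinct minimization breakpoints is at most the number of $\alpha$ images, which is at most $|E|$.

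Summing the bounds gives $O(|E|)$ breakpoints in total, so the size of the any-start-time plan is linear in the number of edges of $G$. I will also note that, by Lemma~\ref{lmm:simple}, only simple plans are needed, so the number of plans stored in the cATF is likewise bounded by its number of intervals. This establishes the linear space complexity claimed.
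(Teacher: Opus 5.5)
Your proposal is correct and follows the same route as the paper: you split the breakpoints of $A[s,d]$ into primitive images, bounded by Theorem~\ref{thm:primitivebreakpoints}, and minimization breakpoints, each charged via Lemma~\ref{lmm:minalpha} to the $\alpha$ image that follows it. The paper's one-line proof leaves two steps implicit that you spell out: the minimization breakpoints that coincide with primitive images, and the injectivity of the charging map. Your closing appeal to Lemma~\ref{lmm:simple} is unnecessary, since each interval maps to a single plan regardless.
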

\begin{proof}
All breakpoints in $A[s,d]$ are either primitive images or minimization breakpoints.
Lemma~\ref{lmm:minalpha} shows that there must be an $\alpha$ image after each minimization breakpoint, limiting the number of minimization breakpoints to at most the number of $\alpha$ breakpoints, which as a subset of primitive breakpoints is linear in the number of edges by Theorem~\ref{thm:primitivebreakpoints}.
\end{proof}
This result follows directly from using the SIPP state space as the foundation of Any-start-time Planning, which resulted in the limited set of slopes we enumerated in Lemma~\ref{lmm:minalpha}.
In the following subsection, we follow up by showing that the theoretical time complexity of the RePEAT algorithm is quasi-quadratic in the size of the problem.
After that, we experimentally evaluate the runtime of RePEAT on grid path planning among moving obstacles.

\subsection{Time Complexity of RePEAT}  \label{sec:time}

The time complexity of RePEAT is determined by how many times the inner Augmented SIPP routine is called.  
We will show that RePEAT runs only a linear number of calls to Augmented SIPP.
In our setting, as a best-first search like Dijkstra's algorithm, the runtime of \aSIPPab\ is $O(|E|log(|V|))$ where $|E|$ is the number of edge ATFs in our problem, and $|V|$ is the number of SIPP states.

\begin{theorem}
    The runtime of RePEAT is $O((|E| + |V|)|E|log(|V|))$.
\end{theorem}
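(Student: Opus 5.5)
The runtime is the number of \aSIPPab\ calls made by RePEAT times the cost of one call, plus the bookkeeping per call. I will show that the number of calls is $O(|E|+|V|)$, and that each call, together with its reference-time update and cATF insertion, costs $O(|E|\log(|V|))$.

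\textbf{Cost of one iteration.} As stated above, \aSIPPab\ is a best-first search over the SIPP state space, so one call costs $O(|E|\log(|V|))$. Pruning with $\trianglelefteq_{ATF}$ instead of the total order could in principle keep several non-dominated nodes per state, so this needs a check. Each retained node per state differs in its $\alpha$ or $\beta$ image, and these are edge breakpoints. I will argue, reusing the counting of Lemmas~\ref{lmm:primitivealphabreak} and \ref{lmm:primitivebetabreak}, that this keeps the total number of generated nodes $O(|E|)$. If this fails, I will charge each extra node to a distinct edge breakpoint. The two per-iteration extras are:
\begin{itemize}
\item updateReferenceTime, which scans $\open$ once, giving $O(|\open|\log|\open|)=O(|E|\log(|V|))$;
\item compoundATFadd, which by the analysis of Algorithm~\ref{alg:insertplan} costs $O(|c|\log|c|)$. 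Here $|c|$ is $O(|E|)$ by Theorem~\ref{thm:nbreak}, so this is also within the bound.
\end{itemize}

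\textbf{Number of iterations.} Each call ends by setting \tref\ to one of three kinds of value:
\begin{itemize}
\item $\infty$, which ends the loop;
\item $\beta(\mathit{solution})$, which is the image of a $\beta$ edge breakpoint (Definition~\ref{def:betaconstraint});
\item an intersection point between the rising segment of the solution and the flat segment of some node on $\open$, which is a minimization breakpoint of the final $A[s,d]$, or else the point where the solution stops being known optimal.
\end{itemize}
By Corollary~\ref{cor:trefinc}, \tref\ strictly increases. Every solution added is optimal on $[\trefm,\trefm')$, so each new \tref\ coincides with a distinct breakpoint of $A[s,d]$. Theorem~\ref{thm:nbreak} bounds the breakpoints of $A[s,d]$ linearly in the number of edge breakpoints, giving $O(|E|)$ iterations. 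I add $O(|V|)$ to account for starting-interval boundaries, i.e.\ states' safe intervals that generate $\zeta$ images.

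\textbf{Main obstacle.} The delicate step is showing that a returned \tref\ is a genuine breakpoint of $A[s,d]$. When it comes from intersecting with a node on $\open$, the heuristic is only admissible, so the intersection may be with an optimistic $f$-ATF rather than the true next optimal plan. The next call could then return the same plan again and advance \tref\ only to a non-breakpoint.

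My first attempt at this is a charging argument. Each such ``spurious'' update is charged to the node $n$ on $\open$ that caused it. I would then argue that in the next call either $n$'s plan is expanded and its $\alpha$ image (plus $h$) becomes the new constraint, or $n$ is dominated and pruned, so the same edge breakpoint cannot cause two spurious updates. Each edge-breakpoint image is therefore charged $O(1)$ times, and the iteration count stays $O(|E|+|V|)$. Multiplying by the per-iteration cost gives $O((|E|+|V|)|E|\log(|V|))$.
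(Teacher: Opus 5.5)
Your proposal follows essentially the paper's argument: a Dijkstra-like $O(|E|\log(|V|))$ cost per \aSIPPab\ call, times a number of rounds bounded by the $O(|E|+|V|)$ breakpoints. Rounds that merely re-return the same plan are charged to the elimination of the $\alpha$ image of the open node that set \tref, which is exactly the paper's key charging step. The paper simply takes the per-call bound as given, so your extra node-count ``check'' is not needed; as sketched it also would not follow from Lemmas~\ref{lmm:primitivealphabreak} and~\ref{lmm:primitivebetabreak}, which count images in the optimal $A[s,d]$ rather than mutually non-dominated nodes within one search.
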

\begin{proof}
    Each time the reference time is updated, we are either including a plan ATF (with at least one associated primitive image) in the compound ATF or eliminating a primitive image from being in the compound ATF. 
    In the case where Augmented SIPP returns a plan that is not dominated by the existing compound ATF, then after the reference time update, that same plan will not be returned again, unless that round of search also expanded a node corresponding to the plan ATF that set the reference time, which eliminates the associated $\alpha$ image from future searches.
    The number of such breakpoints is $|E| + |V|$, so at most that many rounds of Augmented SIPP will be required, leading to an overall worst-case runtime of $O((|E| + |V|)|E|log(|V|))$ which is quasi-quadratic in the number of safe intervals in $G$ ($|E| + |V|$).   
\end{proof}

\section{Empirical Evaluation}\label{sec:empiricaleval}
It is natural to wonder how useful any of these techniques are in practice.
To evaluate their behavior concretely in a popular dynamic planning problem, we ran a series of experiments on path planning in 2D grids among time-dependent obstacles (the domain in which SIPP was originally demonstrated).
We address questions in three different categories: 
1) The practicality of Augmented SIPP on its own: What is the overhead of Augmented SIPP compared to original SIPP?
2) The adequacy of replanning: How extreme must problems get before Replanning SIPP becomes an inadequate solution?
3) The practicality of RePEAT: How complex are any-start-time plans in practice?  How long do they take to compute?  Can they be queried quickly?

\subsection{Experimental Set-Up}

We begin by describing our experimental set-up, our C++ implementation of RePEAT and the other algorithms, the hardware used to evaluate the performance, and how we generated the problem instances used in this evaluation.  

\subsubsection{Implementation}

We implemented the algorithms used in our experiments in a modern C++ code base.
For these experiments, we used GCC 15.2.1 and Boost 1.88.0 with -O3 optimization.
We use the Boost n\_ary\_heap data structure with an arity of $4$ to implement our open lists (we found this to outperform std::priority\_queue and a n\_ary\_heap with arity 2).
Where possible, the implementations of our algorithms share data structures and code.

\subsubsection{Problem Instance Generation}

\begin{figure}
\centering
\begin{subfigure}{.34\textwidth}
    \centering
    \caption{den520d (257x256)}
    \label{fig:den520d}
    \includegraphics[width=\linewidth]{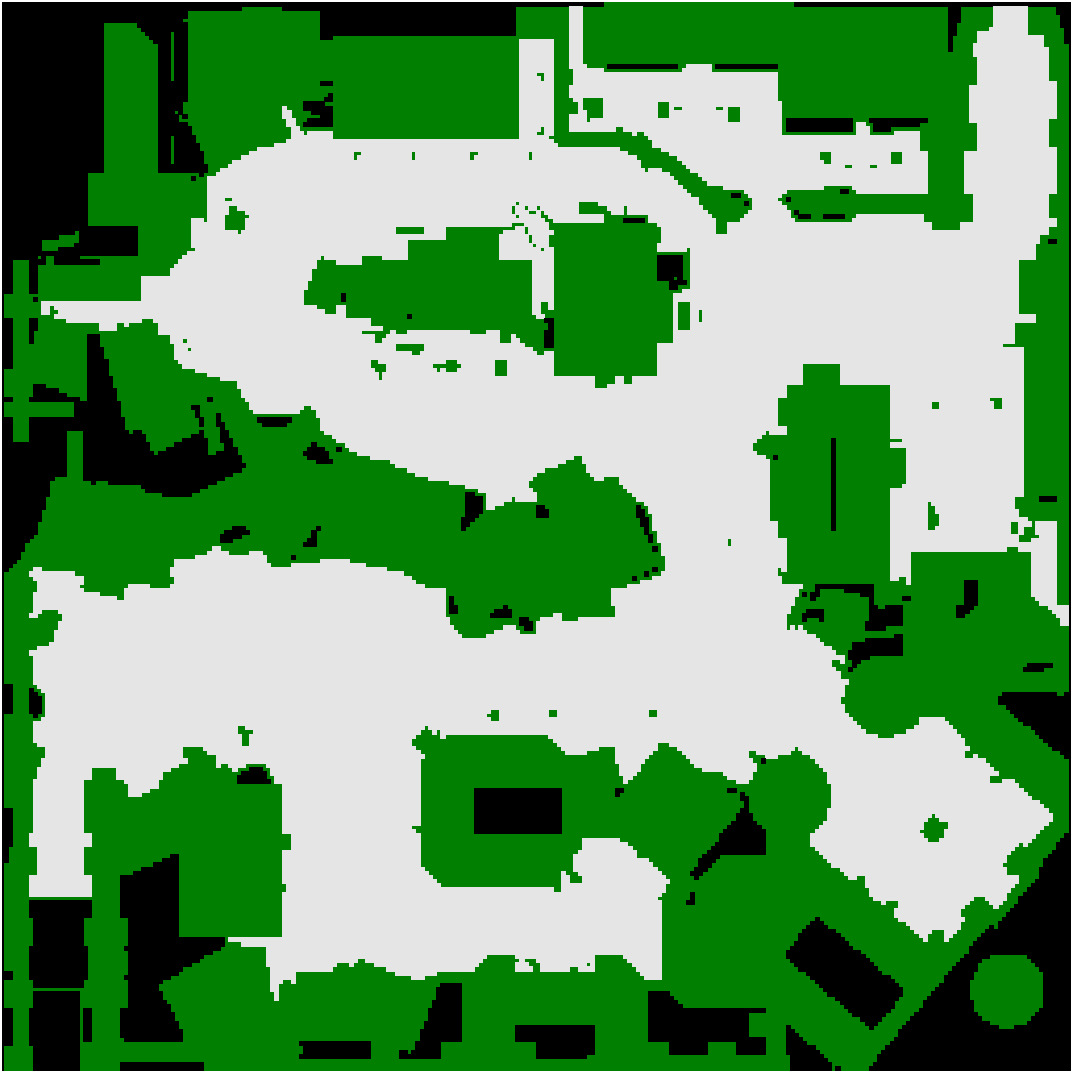}
\end{subfigure}
\begin{subfigure}{.34\textwidth}
    \centering
    \caption{den520d in detail}
    \label{fig:den520dzoom}
    \includegraphics[width=\linewidth]{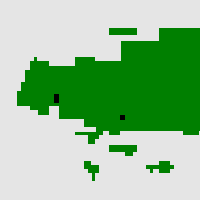}
\end{subfigure}\\
\begin{subfigure}{.34\textwidth}
    \centering
    \caption{random512-10-0 (512x512)}
    \label{fig:random}
    \includegraphics[width=\linewidth]{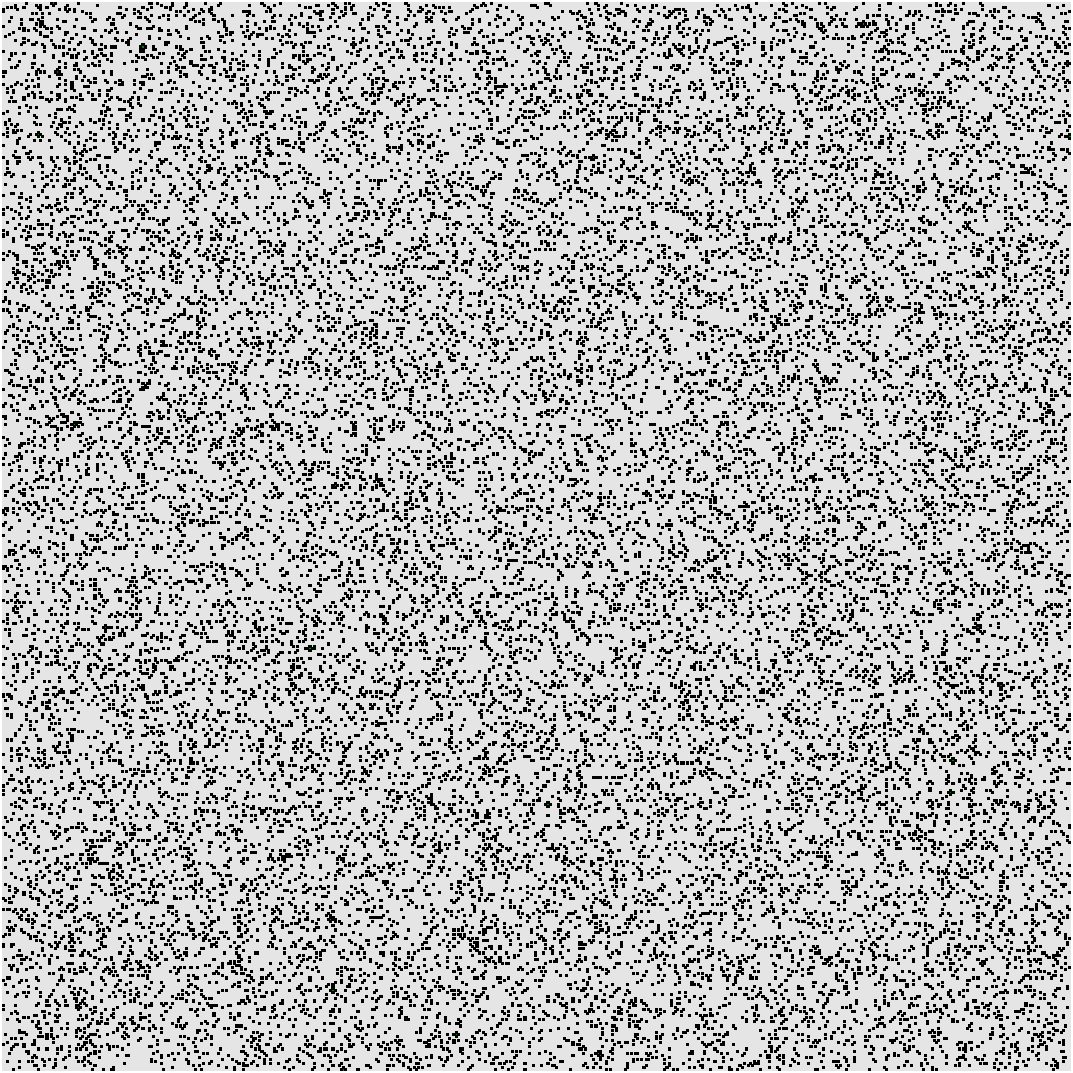}
\end{subfigure}
\begin{subfigure}{.34\textwidth}
    \centering
    \caption{random512-10-0 in detail}
    \label{fig:randomzoom}
    \includegraphics[width=\linewidth]{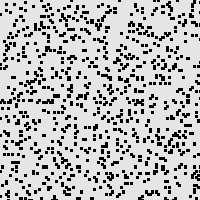}
\end{subfigure}\\
\begin{subfigure}{.34\textwidth}
    \centering
    \caption{64room\_006 (512x512)}
    \label{fig:rooms}
    \includegraphics[width=\linewidth]{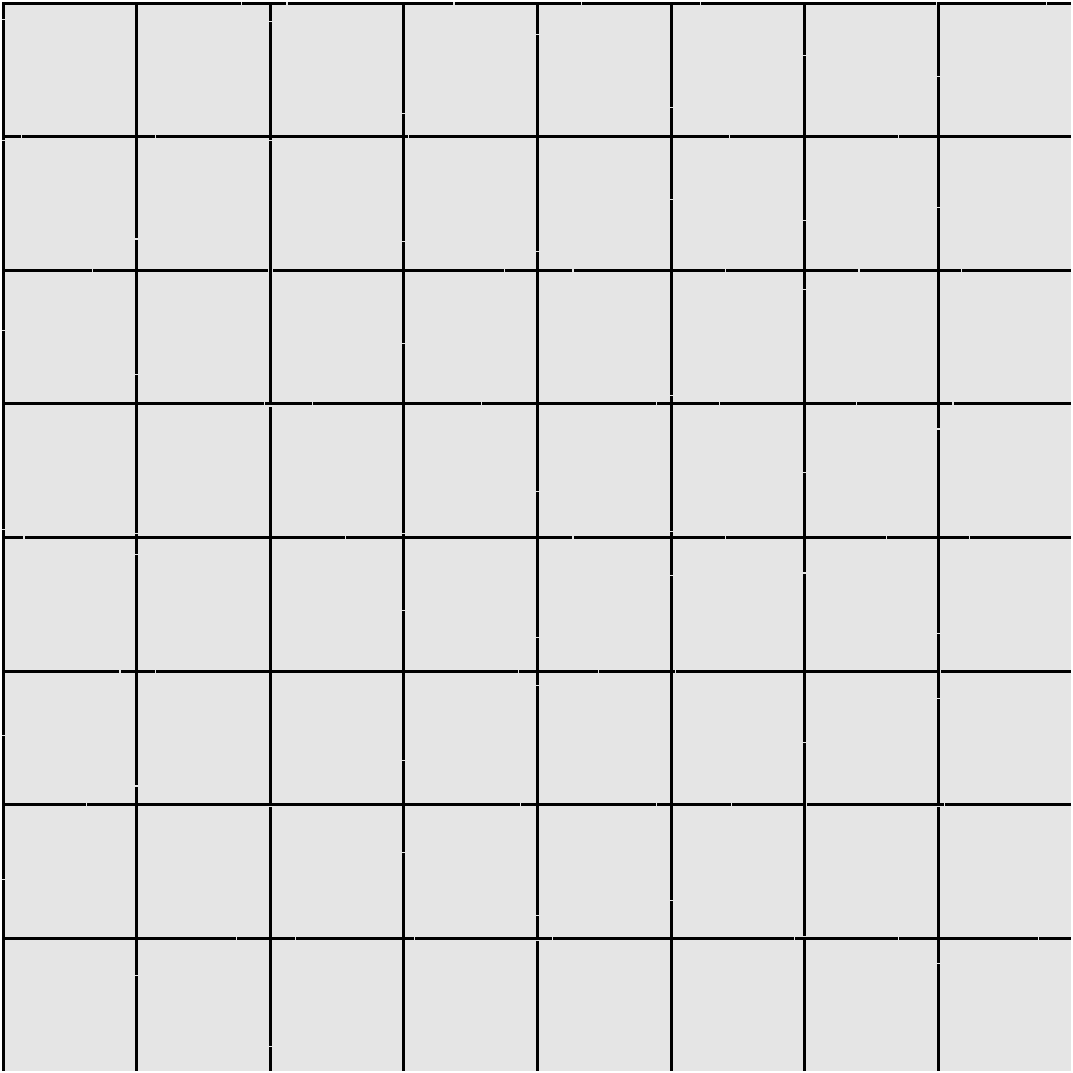}
\end{subfigure}
\begin{subfigure}{.34\textwidth}
    \centering
    \caption{64room\_006 in detail}
    \label{fig:roomszoom}
    \includegraphics[width=\linewidth]{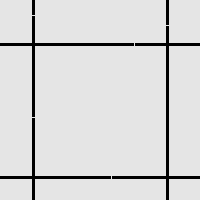}
\end{subfigure}\\
\caption{Benchmark grid maps.}
\end{figure}

Our test problems are adapted from maps and scenarios from the Moving AI Lab 2D Pathfinding Benchmarks~\cite{benchmarkmaps}.  
We selected three maps --- den520d (Figure~\ref{fig:den520d}, trees are shown as green and considered impassible), random (Figure~\ref{fig:random}), and rooms (Figure~\ref{fig:rooms}) --- to provide three contrasting static environments.
Rooms and den520d were selected, in part, to correspond with prior SIPP papers, which have used these or similar maps for their experiments.
Rooms provides a static environment with regularly shaped and spaced local minima to increase the challenge of the search, while den520d consists of larger irregular rooms connected by long hallways.
The random map contrasts with the other two, with small local minima but many near-optimal plans, providing an intermediate difficulty between the easier den520d and the more challenging rooms.  
The den520d map is 256x257, while the other two maps are 512x512.

\begin{figure}
    \centering
        \begin{subfigure}{.32\textwidth}
        \centering
        \caption{den520d [5, 50]}
        \includegraphics[width=\linewidth]{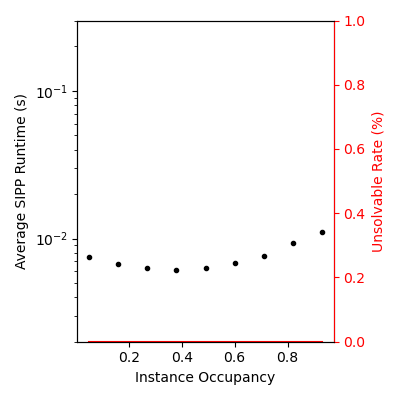}
        \end{subfigure}
        \begin{subfigure}{.32\textwidth}
        \centering
        \caption{den520d [10, 100]}
        \includegraphics[width=\linewidth]{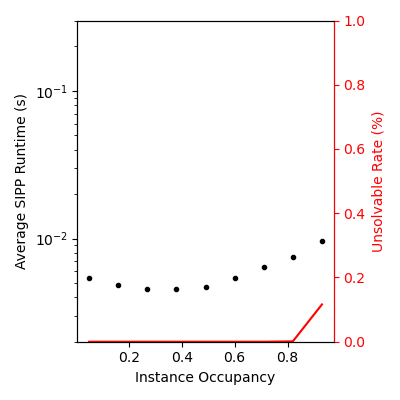}
        \end{subfigure}
        \begin{subfigure}{.32\textwidth}
        \centering
        \caption{den520d [50, 500]}
        \includegraphics[width=\linewidth]{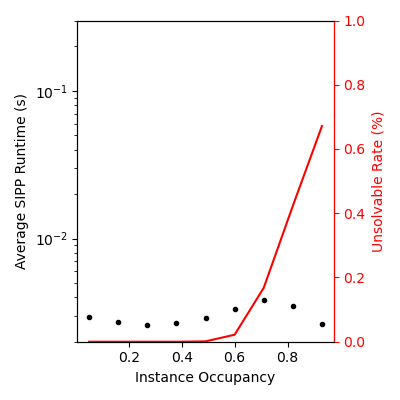}
        \end{subfigure}\\
        \begin{subfigure}{.32\textwidth}
        \centering
        \caption{random512 [5, 50]}
        \includegraphics[width=\linewidth]{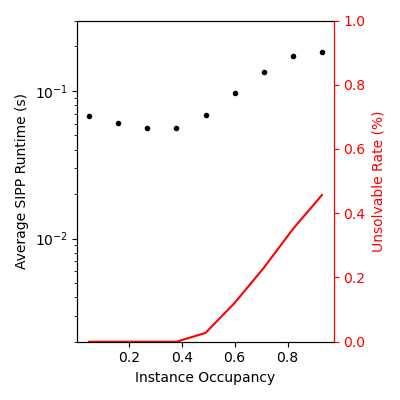}
        \end{subfigure}
        \begin{subfigure}{.32\textwidth}
        \centering
        \caption{random512 [10, 100]}
        \includegraphics[width=\linewidth]{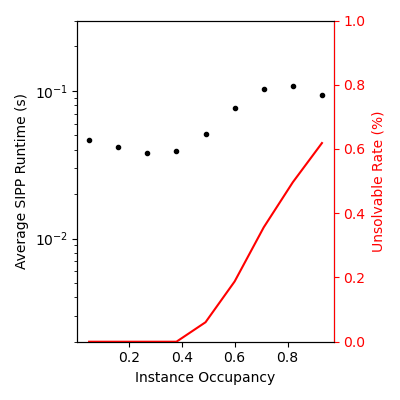}
        \end{subfigure}
        \begin{subfigure}{.32\textwidth}
        \centering
        \caption{random512 [50, 500]}
        \label{fig:random50500}
        \includegraphics[width=\linewidth]{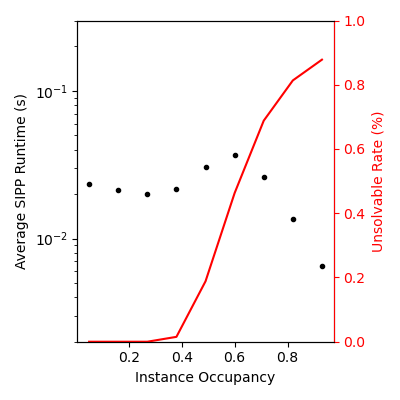}
        \end{subfigure}\\
        \begin{subfigure}{.32\textwidth}
        \centering
        \caption{64room [5, 50]}
        \includegraphics[width=\linewidth]{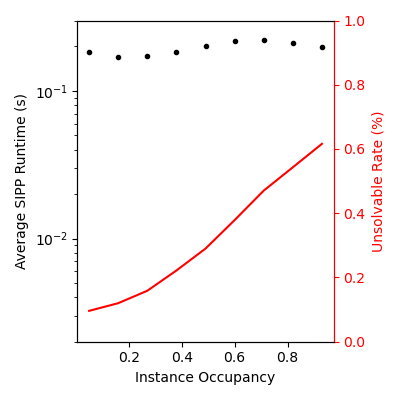}
        \end{subfigure}
        \begin{subfigure}{.32\textwidth}
        \centering
        \caption{64room [10, 100]}
        \includegraphics[width=\linewidth]{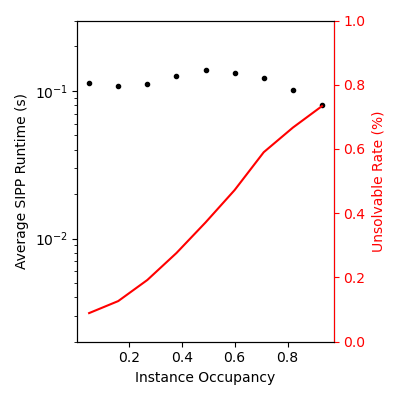}
        \end{subfigure}
        \begin{subfigure}{.32\textwidth}
        \centering
        \caption{64room [50, 500]}
        \includegraphics[width=\linewidth]{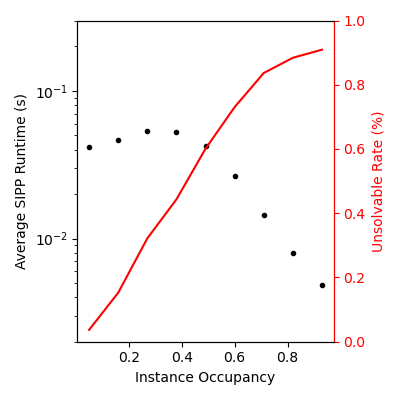}
        \end{subfigure}\\
        \caption{SIPP geometric mean runtime and unsolvable rate, for 2D grid pathfinding instances with random obstacles with (minDuration,~maxDuration)~$\in\{(5, 50),(10, 100),(50, 500)\}$, and occupancy $\in [0.05,0.93]$.}
        \label{fig:occupancy}
\end{figure}

For each problem instance, we populated the static map with random obstacles that make their locations unsafe for a random duration.
The instance generation is controlled by three parameters: minDuration, maxDuration, and occupancy.
Occupancy is the odds that a location will be unsafe at a particular time.
The sequence of states for each location are generated by appending a safe interval with duration drawn from the continuous uniform distribution from minDuration to maxDuration, then an unsafe interval with duration drawn from the continuous uniform distribution between 0 and  occupancy times the average safe duration ($\mathit{occupancy}*\frac{\mathit{maxDuration} - \mathit{minDuration}}{2}$).
So an occupancy of 0 yields an instance with no time-dependent obstacles, 1 yields an instance where every location is unsafe half of the time on average.

To simplify the generation of viable random instances, we set the start and goal to be always safe.
After 2000 time units, all states become unsafe to provide a finite horizon for the random safe interval generation.
We have taken care to use a pseudo-random number generator that, for the same random seed, generates the same sequence of random instances even when run on different machines or compiled with different compilers or Boost library versions, so the same start and goal location on the same static map with the same generation parameters should produce the same SIPP problem instance no matter what computer it is run on. 
For our experiments, we generate one instance for each scenario provided along with the static problem instances.
For den520d, there are 870 scenarios, for random512, there are 1780, and for 64room, there are 2300.

In general, the goal of our empirical tests is to characterize the performance of our algorithms on non-trivial problems.
Towards this, the objective of our first experiment is to determine what settings for the three parameters (minDuration, maxDuration, and occupancy) produce computationally challenging instances that still mostly have solutions (as any-start-time planning on an unsolvable instance finds the same `no solution' result as SIPP).
Towards this, we run SIPP on sets of instances generated with (minDuration,~maxDuration)~$\in\{(5, 50),(10, 100),(50, 500)\}$ and occupancy varying uniformly with $9$ values from $0.05$ to $0.93$.

Figure~\ref{fig:occupancy} shows the geometric mean runtime and the percentage of instances where Augmented SIPP proved that no feasible solution exists for the three maps and three sets of random interval durations.
In several (especially Figure~\ref{fig:random50500}), we see a similar `phase transition' that has been observed in related problems, such as the TSP with time windows\cite{rifki2025phase}.
The transition is from low occupancy, where there are few obstacles and many feasible solutions, to high occupancy, where there are few, if any, feasible solutions.
In both extremes, the search is relatively easy, with few obstacles it amounts to static grid pathplanning, and with many, there are so few viable successors that the search can quickly find the only solution if it exists, or eliminate all possible plans if it does not.
It is in the intermediate area where the search is the most challenging, which results in the peaks in the runtimes shown in Figure~\ref{fig:occupancy}.

The den520d instances are roughly an order of magnitude easier than the random512 and 64room instances.  
Examining what proportion of the instances are solvable, we find that the long random intervals result in many unsolvable instances with larger occupancies ($[50, 500]$), while $[5, 50]$ and $[10, 100]$ are more similar to each other.
Overall, den520d results in fewer unsolvable instances, with more for random512, and even more for 64room.
With this in mind, we select the settings: minDuration $= 10$, maxDuration $= 100$, and occupancy $= 0.35$ because these settings avoid all of the unsolvable instances for den520d and random512, along with most of the unsolvable instances for 64room.  
The average runtimes are sufficiently long to accurately measure, and the occupancy is high enough that we expect the agent to encounter temporal obstacles, leading different plans to be optimal for different start times, and any-start-time planning to be of use. 

\subsubsection{Infrastructure}
\begin{figure}[t]
    \centering
    \begin{subfigure}{0.49\textwidth}
    \includegraphics[width=\linewidth]{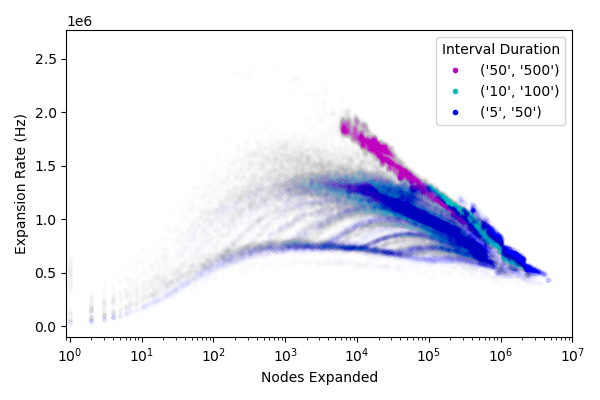}
    \end{subfigure}
    \begin{subfigure}{0.49\textwidth}
    \includegraphics[width=\linewidth]{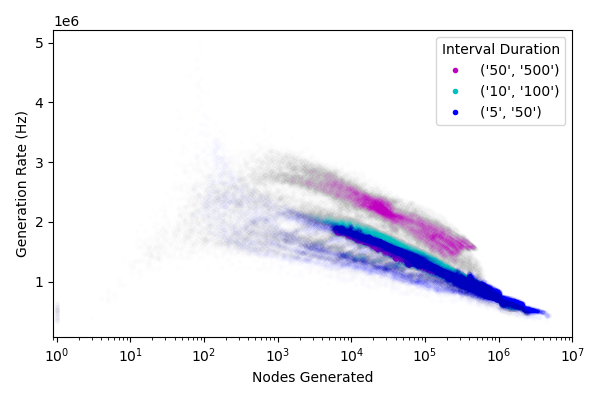}
    \end{subfigure}
    \caption{The distribution of SIPP expansion rates on randomly generated SIPP instances.}
    \label{fig:sippexpansionrate}
\end{figure}

The experiments were performed on a cluster of 15 identical desktop machines with Intel 4.3GHz i3-12100 quad-core CPUs and 64 GB RAM, with one experiment at a time run on each machine (and no other programs running on the same machine).
All timings measure only the search itself, excluding generation of safe intervals from random obstacles and I/O, which are shared by all the tested algorithms.  
We measured the expansion rate of our SIPP and Augmented SIPP implementations at about 1 MHz on non-trivial problem instances.
Specifically,
Figure~\ref{fig:sippexpansionrate} shows the expansion rate and generation rate of SIPP on each instance tested to set the parameters for the random instance generation (the same set of instances as in Figure~\ref{fig:occupancy}).
Over all instances requiring more than $10,000$ expansions, we measure the geometric mean expansion rate at $970,000$ Hz with a $95\%$ confidence interval of $968,400 - 971,800$ Hz.
Especially on smaller problems, larger random intervals can lead to a lower branching factor and a faster expansion rate.

\subsection{The Practicality of Augmented SIPP}\label{sec:asippoverhead}
In this section, we address our first research question: how practical is Augmented SIPP?
We answer this question by comparing the runtime of SIPP and Augmented SIPP on identical problem instances.  
If there is little to no overhead, then Augmented SIPP is at least as practical as SIPP.
We calculate the overhead of Augmented SIPP on a problem instance as the percent difference of the runtime of Augmented SIPP ($r_{ASIPP}$) and SIPP ($r_{SIPP}$) divided by their average runtime:
\begin{equation}\label{eqn:overhead}
    \mathit{Overhead} = \frac{r_{ASIPP} - r_{SIPP}}{r_{SIPP}}
\end{equation}

\begin{figure}[tp!]
    \centering
        \begin{subfigure}{.32\textwidth}
        \centering
        \includegraphics[width=\linewidth]{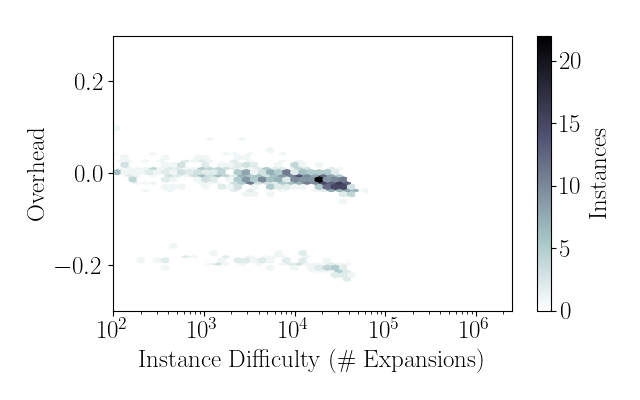}
        \caption{den520d}
        \label{fig:den520doverheaddif}
        \end{subfigure}
        \begin{subfigure}{.32\textwidth}
            \centering
            \includegraphics[width=\linewidth]{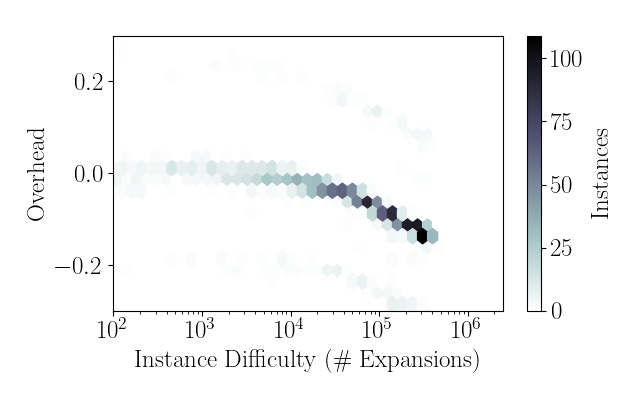}
            \caption{random512-10-0}
            \label{fig:randomoverheaddif}
        \end{subfigure}
        \begin{subfigure}{.32\textwidth}
            \centering
            \includegraphics[width=\linewidth]{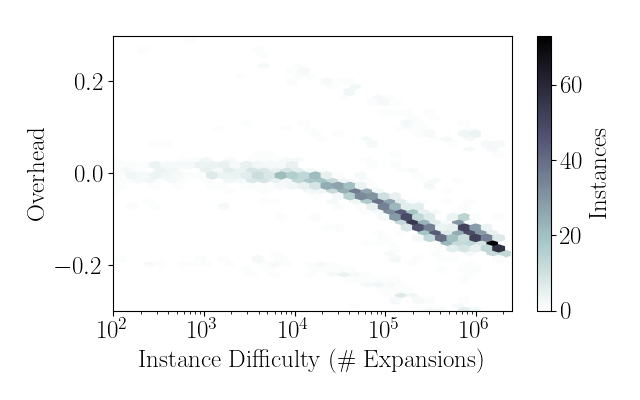}
            \caption{64room\_006 }
            \label{fig:roomoverheaddif}
        \end{subfigure}
        \caption{Augmented SIPP Overhead}
        \label{fig:overheadvdiff}
\end{figure}

Figure~\ref{fig:overheadvdiff} shows the distributions of overhead for Augmented SIPP on our random instances.  
We observe a median overhead of $-2\%, -8\%,$ and $-11\%$ respectively for den520d, random512, and 64room.
The central $95\%$ of the data ($2.5$th - $97.5$th percentile) spans $[-24.1\%, 0.9\%], [-30.0\%, 8.1\%]$, and $[-30.6\%, 7.9\%]$ respectively. 
Unexpectedly, the overhead is typically negative.

Our implementations of SIPP and Augmented SIPP use the same basic data structures for open and closed, and differ only as it relates to the calculation and representation of $g$ during successor generation.
They are both run on identical instances, and we have confirmed that both expand and generate the same number of nodes, confirming that our implementations are performing the same search (see Observation~\ref{obs:asippissipp}).
One difference that could be a possible source of this speedup is that Augmented SIPP only fetches a single object from memory to generate a successor, the edge ATF.
In contrast, SIPP may need to fetch the safe intervals for the source state of the edge, the destination state of the edge, and the edge itself.

Given these results, in situations where the path ATF can be useful, such as knowing for how long the SIPP plan is optimal or valid, Augmented SIPP appears to be a very practical algorithm in its own right. 

We also measure the overhead of \aSIPPab{} in the same manner as Augmented SIPP.
Compared to SIPP, \aSIPPab{} should be expected to need to expand more nodes, because it is able to do less pruning (dominance only vs SIPP pruning on $\leqfg$), this may be especially detrimental in a 2D grid setting where ties are common.
The runtime overhead of \aSIPPab{} over Augmented SIPP is shown in Figure~\ref{fig:overheadvdiffab}a-c.
The runtime can be over 100x that of Augmented SIPP with the usual node ordering and pruning.

To examine the cause of this, Figure~\ref{fig:overheadvdiffab}d-f shows the number of expansions done by \aSIPPab{} as a function of the number of expansions done by SIPP. 
For each map, we perform a log-linear least squares to fit a power function of the form $y = ax^b$ to the problem instances with more than $10^4$ SIPP expansions.
Qualitatively, all three maps appear to be well fit by power functions of a relatively low power. 
This indicates that our \aSIPPab{} searches are generally doing more expansions, but not exponentially more expansions than the corresponding SIPP search. 

\begin{figure}[t]
    \centering
        \begin{subfigure}{.32\textwidth}
        \centering
        \includegraphics[width=\linewidth]{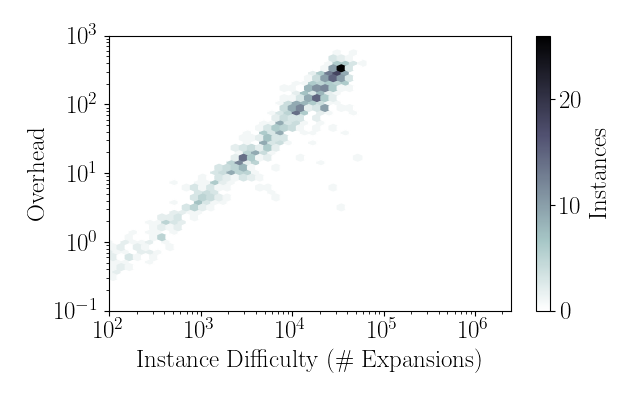}
        \caption{den520d}
        \label{fig:den520abdoverheaddif}
        \end{subfigure}
        \begin{subfigure}{.32\textwidth}
            \centering
            \includegraphics[width=\linewidth]{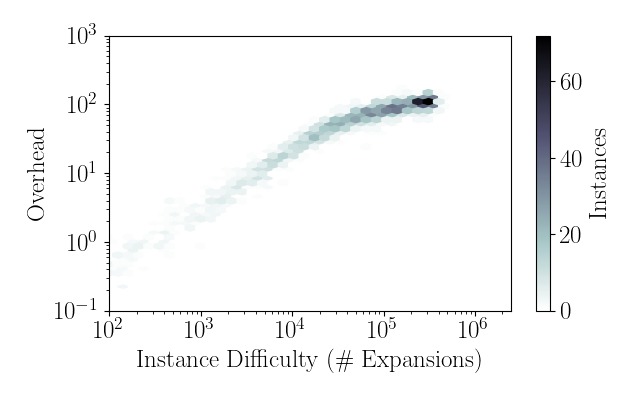}
            \caption{random512-10-0}
            \label{fig:randomaboverheaddif}
        \end{subfigure}
        \begin{subfigure}{.32\textwidth}
            \centering
            \includegraphics[width=\linewidth]{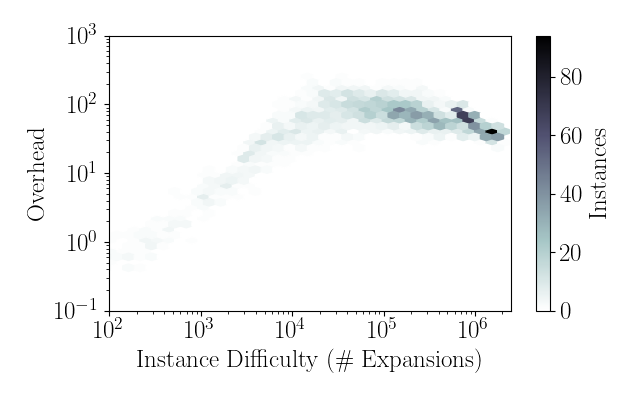}
            \caption{64room\_006 }
            \label{fig:roomaboverheaddif}
        \end{subfigure}
        \begin{subfigure}{.32\textwidth}
        \centering
        \includegraphics[width=\linewidth]{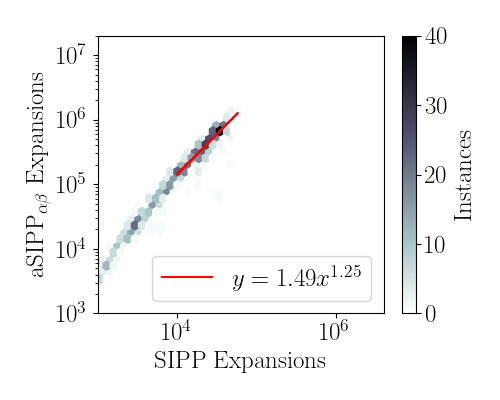}
        \caption{den520d}
        \label{fig:den520dexpdif}
        \end{subfigure}
        \begin{subfigure}{.32\textwidth}
            \centering
            \includegraphics[width=\linewidth]{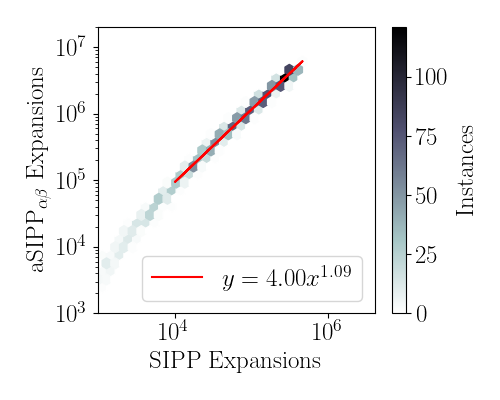}
            \caption{random512-10-0}
            \label{fig:randomexpdif}
        \end{subfigure}
        \begin{subfigure}{.32\textwidth}
            \centering
            \includegraphics[width=\linewidth]{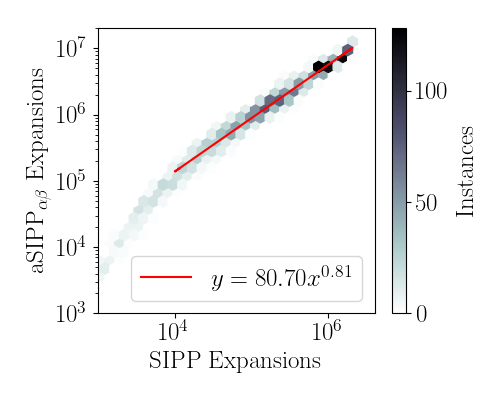}
            \caption{64room\_006 }
            \label{fig:roomexpdif}
        \end{subfigure}
        \caption{\aSIPPab Overhead}
        \label{fig:overheadvdiffab}
\end{figure}

\subsection{The Adequacy of Replanning}

\begin{figure}[b]
    \centering
        \begin{subfigure}{.32\textwidth}
        \centering
        \includegraphics[width=\linewidth]{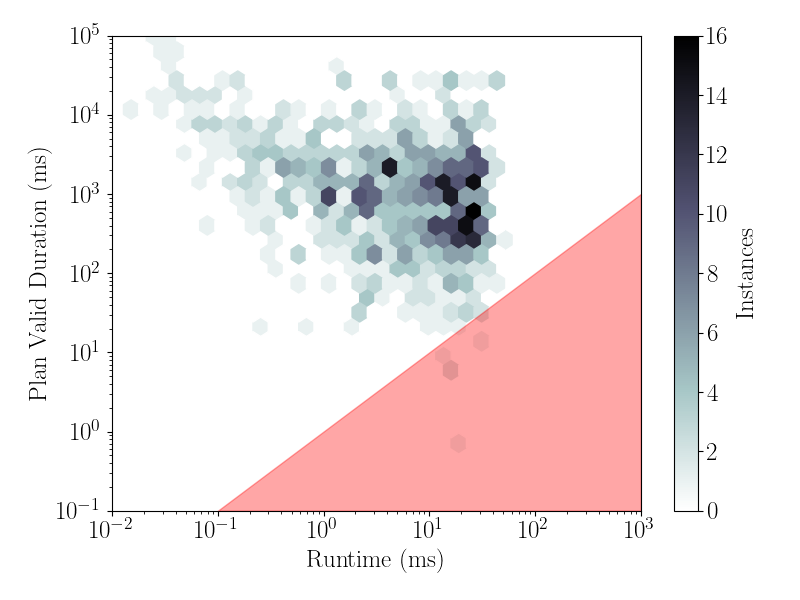}
        \caption{den520d}
        \label{fig:den520dinv}
        \end{subfigure}
        \begin{subfigure}{.32\textwidth}
            \centering
            \includegraphics[width=\linewidth]{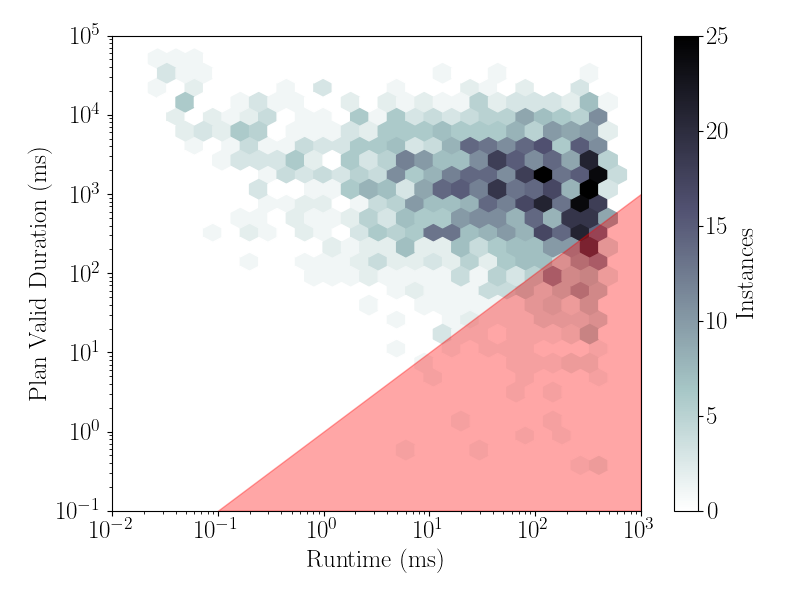}
            \caption{random512-10-0}
            \label{fig:randominv}
        \end{subfigure}
        \begin{subfigure}{.32\textwidth}
            \centering
            \includegraphics[width=\linewidth]{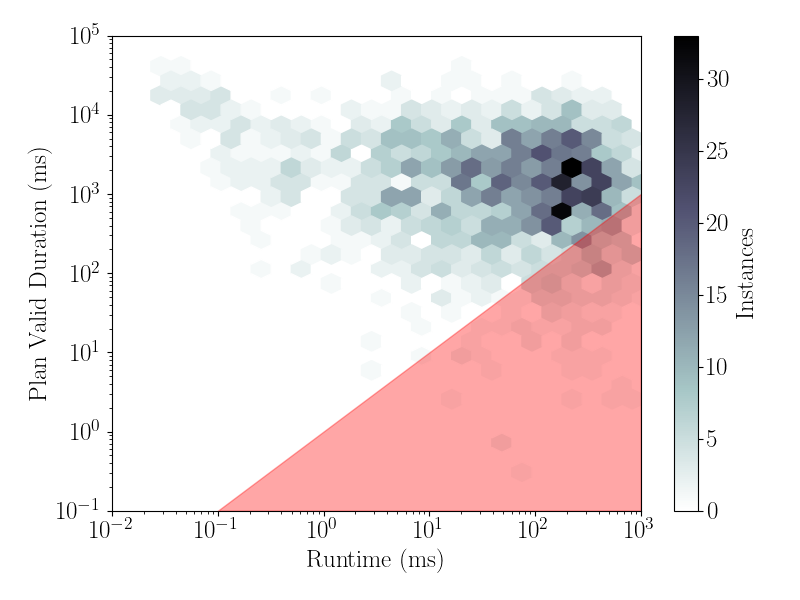}
            \caption{64room\_006 }
            \label{fig:roominv}
        \end{subfigure}\\
        \caption{The time for which SIPP plans are valid versus the time required to find them.}
        \label{fig:invalid}
\end{figure}

We measured the duration for which the SIPP plans remain valid (not necessarily optimal) and compared that to the runtime of the search used to find those plans. 
If the runtime of the search is longer than the duration that the plan remains valid, then the agent will not be able to initiate execution when planning completes and instead will need to immediately replan again. 
In fact, replanning may fail indefinitely, causing the agent to be stuck forever at the start state. 
Our objective is to see if any of our problem instances approach this region, where replanning is not as viable because of the cadence of the problem.

Figure~\ref{fig:invalid} shows the results.  In red is the region where the plan is valid for less time than it takes to compute.
In most of the tested instances, planning is faster than the plans become invalid.
However, for each map, there were some instances ($0.6\%, 10.8\%$, and $11.1\%$ for den520d, random512, and 64room respectively) where planning took longer than the resulting plan was valid for.
Therefore, especially in the two more challenging maps there are already many instances where replanning would be an inadequate substitute for any-start-time planning, even though we are in a setting where planning takes milliseconds.
If, we have the plan ATF produced by Augmented SIPP, we will know immediately that the execution will fail.  
Or, had we had planned an any-start-time plan offline, execution would have succeeded because the on-line planning time would have been negligible.
In these problems, an agent that relied on simple replanning SIPP could become paralyzed, repeatedly being unable to plan fast enough to enact the resulting plan.
Worse, a SIPP agent may not realize that the plan is invalid until it has executed most of the plan, potentially leaving it in an unsafe situation.

\subsection{The Practicality of RePEAT}
\begin{figure}[t]
    \centering
        \begin{subfigure}{.32\textwidth}
        \centering
        \includegraphics[width=\linewidth]{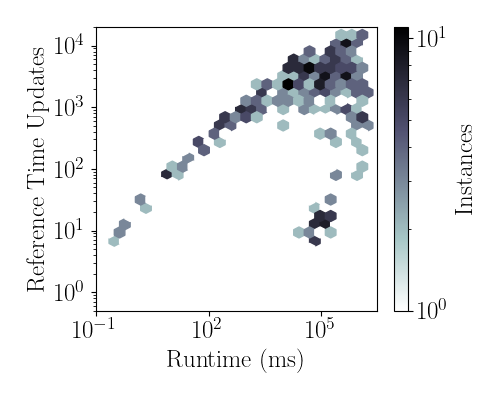}
        \caption{den520d}
        \label{fig:den520dupdates}
        \end{subfigure}
        \begin{subfigure}{.32\textwidth}
            \centering
            \includegraphics[width=\linewidth]{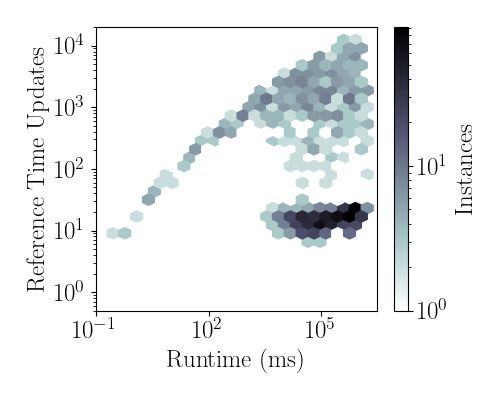}
            \caption{random512}
            \label{fig:randomupdates}
        \end{subfigure}
        \begin{subfigure}{.32\textwidth}
            \centering
            \includegraphics[width=\linewidth]{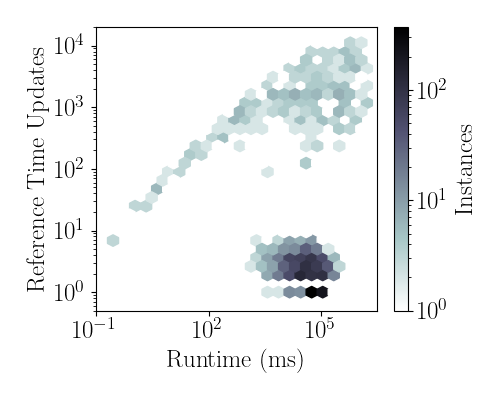}
            \caption{64room}
            \label{fig:roomupdates}
        \end{subfigure}\\
        \caption{The distribution of the number of reference time updates for RePEAT over the instances for our three different maps.}
        \label{fig:updates}
\end{figure}

The runtime performance of RePEAT depends on the speed of the \aSIPPab{} searches and how many searches were run.
Figure~\ref{fig:updates} shows the number of reference time updates (searches run) for RePEAT to generate any-start-time plans for the first 100 seconds on our benchmark instances.
In all three, we see two sorts of instances: one cluster of instances with high runtime but few reference time updates, which we will call the `low-level' hard instances because their time is spent on a few hard Augmented SIPP searches, and a distribution of instances with a number of reference time updates that scales proportionally to the runtime, which we will call the `high-level' hard instances because their difficulty comes from having a more complicated any-start-time plan, that requires many more (but less challenging) Augmented SIPP searches.
Note that the color scale of Figure~\ref{fig:updates} is logarithmic, as otherwise only the low-level hard instances are visible for random512 and 64room.

This difference in the frequency of low-level hard instances may be because random512 and 64room have many tight bottlenecks, which might result in simpler ATFs, with plans that change due mostly to the deadlines at those bottlenecks.
In contrast, in den520d where there are not particularly tight bottlenecks in the static graph, the optimal plan may change very frequently, resulting in performance that is closer to our theoretical bounds on the worst case performance of RePEAT.


\begin{table}[b]
\centering
\begin{tabular}{l|lll}
Map                 & den520d & random512 & 64room \\ \hline
\% Solved in 30 min & 60.3    & 77.8      & 96.3   \\
\% Solved in 100 s  & 34.5    & 36.1      & 80.5  
\end{tabular}
\caption{RePEAT Performance}\label{tbl:repeatperf}
\end{table}

Table~\ref{tbl:repeatperf} shows the percentage of instances for which RePEAT found a 100s any-start-time plan within the 30 min time limit and within 100s.  
We seem to have designed the instances to be sufficiently challenging to accurately measure RePEAT's performance.  
Even though the test instances are quite congested with changing obstacles, many instances still plan fast enough that RePEAT would be able to be used on-line in an anytime manner (because it takes less than 100 seconds to plan for the next 100 seconds).  
Interestingly, though den520d was the easiest map for SIPP, it is the most challenging for RePEAT.

\begin{figure}[t]
    \centering
    \includegraphics[width=0.75\linewidth]{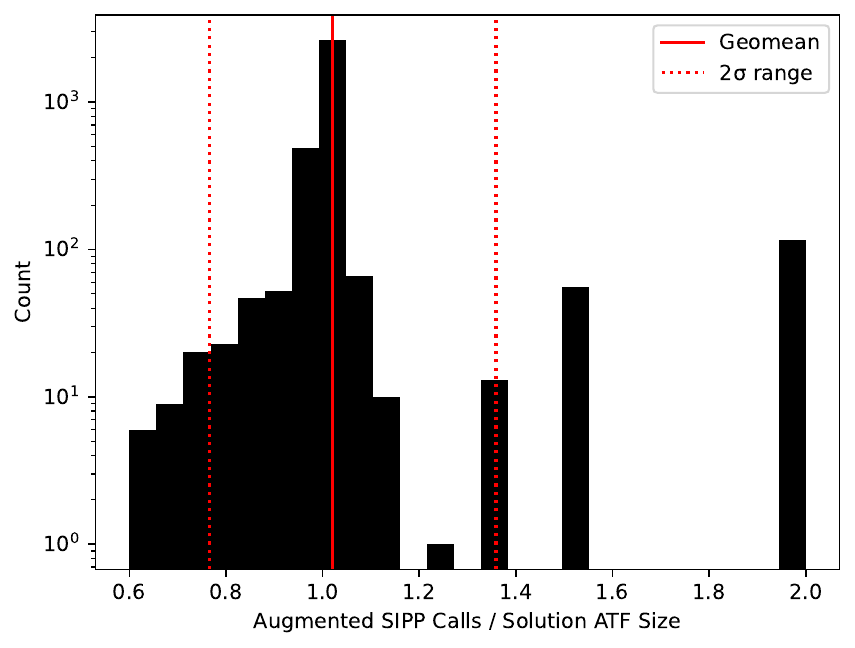}
    \caption{Ratio of Augmented SIPP calls to resulting solution ATF size for RePEAT instances that found a solution.}
    \label{fig:callsvssol}
\end{figure}
Figure~\ref{fig:callsvssol} shows the distribution of ratios of Augmented SIPP calls to size of the solution ATFs.  
The geometric mean of this ratio is $1.02$ (solid red) and the spread is $^*/_\div$ $1.15$ (dotted red), with a $95\%$ confidence interval in the geometric mean of $[1.015, 1.025]$.
The maximum ratio is exactly $2.0$, which lends empirical support to Theorem~\ref{thm:nbreak}.

\begin{figure}[t]
    \centering
    \includegraphics[width=.75\linewidth]{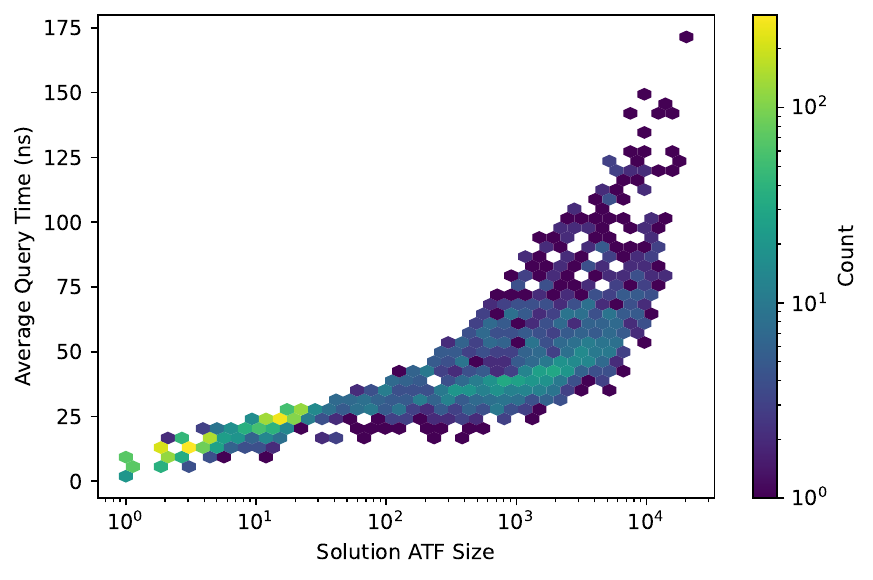}
    \caption{Average online query time for ATFs generated by the RePEAT experiments.}
    \label{fig:querytime}
\end{figure}
Figure~\ref{fig:querytime} shows the mean online query time for each solution ATF for an instance solved by RePEAT.  
For each solution ATF, we performed $1$ billion queries, drawn from the continuous uniformly distribution from $[0, 100)$.
As previously shown in railway networks \cite{Hanou_Thomas_trains}, online performance is effectively instant with  $<\mu s$ query times.
If the ATF is not already in the CPU cache, the single shot performance of the query can be slower due to the memory access overhead.

\section{Discussion}

We discuss applications of any-start-time planning as well as its relationship with related topics.

\subsection{Applications of Any-start-time Planning}

We motivated any-start-time planning using scenarios where an agent did not know the plan execution start time in advance, but that is only the most obvious application.  For example, any-start-time plans contain information that agents can use to assess trade-offs.  To continue our delivery robot example, suppose the robot, while it is waiting for the original meal to be ready, is offered the opportunity to deliver a second meal. If the robot has an any-start-time plan for the original delivery, it can use that to evaluate the potential tradeoff in arrival time for the first meal if it were to depart and deliver the second meal first instead of continuing to wait.

The trade-offs that an agent makes can occur inside the planner itself.  For example, a situated agent, one that seeks to minimize time-to-goal while planning online `as the clock ticks' \cite{clockticks, coles:paw}, needs to decide whether additional search will result in a plan whose superiority justifies delaying execution.  
By having an any-start-time plan to metareason over, it can quickly evaluate the opportunity cost of delaying execution. 

An any-start-time plan does not need to refer to the agent's primary activity --- it can be useful as an emergency back-up.
For example, consider a self-driving vehicle that maintains an any-start-time plan of avoidance maneuvers in case the car in front slams on the brakes.

Any-start-time plans can also be useful in a multi-agent setting.
If each agent has an any-start-time plan, a higher-level planner tasked with resolving conflicts can quickly query these plans to determine what the cost of delaying each agent would be.  A decentralized multi-agent application was explored by \textcite{Hanou_Thomas_trains}, who showed how any-start-time planning can be used to keep a railway network safe when a train is delayed.  Each train views the others' nominal plans as dynamic obstacles and maintains an any-start-time plan for how to reach its destination if it itself were to begin the next segment of its journey at a delayed time.  When a delay happens, the affected train queries its any-start-time plan and instantly has its new safe route.  It then informs the other trains of its new nominal plan, allowing them to adjust their any-start-time contingency plans in turn.

While our experimental evaluation focused on grid pathfinding, we emphasize that the methods we have discussed in this paper can apply to any planning problem in which the applicability or feasibility of states or actions varies over time and the objective is to achieve the goal as early as possible.
If a problem can be represented in these terms, then it is possible to efficiently calculate any-start-time plans.

Even more broadly, our use of ATFs illustrates one way to perform heuristic search with functions instead of scalar values.  We used a function as our cost-so-far $g$ value, accumulating cost through composition.
In the closely related real-time SIPP setting, where an online agent has a fixed time to plan before returning the next action to take,  \textcite{rtsipp} use ATFs to back up heuristic information from the search frontier, again accumulating cost through function composition, but this time representing the cost-to-go $h$.

\subsection{Related Work}

Our work can be seen as a specialization of TDSP problems or as a generalization of SIPP.
To our knowledge, while problems similar to @SIPP have been suggested in remarks by \textcite{halpern1977} and \textcite{foschini2014}, it has not been explored in prior work.
Early work on TDSP by \textcite{COOKE1966} and \textcite{dreyfus1969} provided a Dijkstra style algorithm for finding the quickest plan between nodes on a graph with time-dependent edge delays.
\textcite{halpern1974} introduced TDSP with arc closures, effectively safe intervals on edges.
Later, \textcite{halpern1977} introduced TDSP with parking bans on nodes, and time-dependent edge costs.
However, \textcite{halpern1977} and other more recent work are focused on the case where the node time windows are `parking bans': where a node can be temporarily closed for waiting, but may still be traversed.
They provide a manipulation to their graph that describes how to implement safe intervals on a node by replacing the node with a pair of nodes connected by a single edge corresponding to the safe interval.
The combination of the `Nodes with no-passing-through periods' i.e.,safe intervals described in \textcite{halpern1977} and the edge functions from \textcite{halpern1974} that alternate between a constant duration and infinite duration directly corresponds to our problem.

\textcite{sancho1992} provides a dynamic programming algorithm for the TDSP with time constraints on movement and parking described by \textcite{halpern1974}.
\textcite{ordarom1990} formalize the problem of \textcite{halpern1977} and provide algorithms for a variety of extension and modifications to the problem.
\textcite{ElSherbeny2014} give an A* algorithm for TDSP with time windows.
\textcite{foschini2014} study the complexity of the problems presented in \textcite{ordarom1990}; they prove that for a restricted family of edge slopes, which includes our case of $\{0, 1, \infty\}$ the complexity of the ATF, $A[s,d]$, is linear in the number of edges of the graph.
Modern work in TDSP has focused on rapidly answering repeated queries on a graph often using contraction hierarchies~\cite{contraction}, or shortcuts~\cite{tdsharc} as well as in logistics domains, for instance trucking with driving bans~\cite{vandertuin2018}.

In standard temporal planning~\cite{shperberg:stp}, time is considered, but there is no exogenous change.  In the SIPP-like domains, where TILs have limited effects (demarcating intervals), situated temporal planning is limited by its duplicate state detection, designed for the general case of task planning --- where TILs may have arbitrary state effects --- and degrades to keeping all plans to all states.  

In heuristic search, there are currently two popular ways to cope with dynamic worlds. 
Both are fundamentally based on replanning.  
The first is `incremental search' methods, such as D*lite \cite{koenig:dsl}, that speed-up replanning if only a small number of changes have been made to the state space graph since the original plan was found.  
Such methods are useful only if both the intitial planning and the replanning are fast enough to ensure that the found plans are viable when execution actually begins. 
The second is `real-time search' methods, such as LSS-LRTA* \cite{koenig:cri}, which plan only to a limited horizon.  
These methods iteratively plan and select a single action at a time for execution, rather than forming a complete plan to the goal. 
Thus, they are only applicable in forgiving domains in which a goal is always reachable from any state (and thus there are no `dead-ends').

\section{Conclusion}

This paper explores how a simple change to the typical offline planning problem --- making the start time unknown --- fundamentally changes how we address the problem.  In a dynamic world, it becomes natural to represent plan cost as a function of time.  By adapting results from work on the time-dependent shortest path problem, we have seen how these functions can be compactly represented and composed over plans, yielding new variations on best-first heuristic search.  We proved these algorithms correct and saw that the size of the resulting plans is at most linear in the problem size.  Experimental results illustrated that it is not hard to find SIPP problems where replanning utterly fails, while any-start-time planning appears to be fast enough for practical use.

  For those facing SIPP problems, these results show how any-start-time planning can yield insight into how long a plan is valid, while taking minimal additional overhead.  For those building agents that confront a wide variety of planning tasks, they show how to cope with dynamic environments even though planner runtime can be hard to predict.  And for those interested in combinatorial search, this work provides representations for heuristic search settings in which $g$ and $f$ are better represented as functions instead of scalars.

\section*{Acknowledgments}
We are very grateful for support for this work from the United States National Science Foundation via NSF grant 2008594 and the United States-Israel Binational Science Foundation via BSF grant 2019730.

\printbibliography

@inproceedings{coles:paw,
author = "Andrew Coles and Erez Karpas and Andrey Lavrinenko and Wheeler Ruml and Solomon Eyal Shimony and Shahaf S. Shperberg", 
title = "Planning and Acting While the Clock Ticks", 
booktitle = "Proceedings of the Thirty-fourth International Conference on Automated Planning and Scheduling (ICAPS-24)",
year = 2024
}

@article{ordarom1990,
  title={Shortest-path and minimum-delay algorithms in networks with time-dependent edge-length},
  author={Orda, Ariel and Rom, Raphael},
  journal={Journal of the {ACM}},
  volume={37},
  number={3},
  pages={607--625},
  year={1990},
  publisher={ACM New York, NY, USA}
}

@inproceedings{haehn2020,
  title={Freight Train Scheduling in Railway Systems},
  author={Haehn, Rebecca and {\'A}brah{\'a}m, Erika and Nie{\ss}en, Nils},
  booktitle={International Conference on Measurement, Modelling and Evaluation of Computing Systems},
  pages={225--241},
  year={2020},
  organization={Springer}
}

@article{zhang2020,
  title={Cold chain distribution: how to deal with node and arc time windows?},
  author={Zhang, Yi and Hua, Guowei and Cheng, TCE and Zhang, Juliang},
  journal={Annals of Operations Research},
  volume={291},
  number={1},
  pages={1127--1151},
  year={2020},
  publisher={Springer}
}

@inproceedings{vandertuin2018,
  title={Route planning with breaks and truck driving bans using time-dependent Contraction Hierarchies},
  author={van der Tuin, Marieke and de Weerdt, Mathijs and Batz, G},
  booktitle={Proceedings of ICAPS},
  pages={356--364},
  year={2018}
}

@article{ElSherbeny2014,
  title={The algorithm of the time-dependent shortest path problem with time windows},
  author={El-Sherbeny, Nasser A and others},
  journal={Applied Mathematics},
  volume={5},
  number={17},
  pages={2764},
  year={2014},
  publisher={Scientific Research Publishing}
}

@article{COOKE1966,
title = {The shortest route through a network with time-dependent internodal transit times},
journal = {Journal of Mathematical Analysis and Applications},
volume = {14},
number = {3},
pages = {493-498},
year = {1966},
issn = {0022-247X},
doi = {https://doi.org/10.1016/0022-247X(66)90009-6},
url = {https://www.sciencedirect.com/science/article/pii/0022247X66900096},
author = {Kenneth L Cooke and Eric Halsey}
}

@article{halpern1977,
  title={Shortest route with time dependent length of edges and limited delay possibilities in nodes},
  author={Halpern, Jonathan},
  journal={Zeitschrift fuer operations research},
  volume={21},
  number={3},
  pages={117--124},
  year={1977},
  publisher={Springer}
}

@article{halpern1974,
  title={Shortest path with time constraints on movement and parking},
  author={Halpern, J and Priess, I},
  journal={Networks},
  volume={4},
  number={3},
  year={1974}
}

@article{dreyfus1969,
  title={An appraisal of some shortest-path algorithms},
  author={Dreyfus, Stuart E},
  journal={Operations Research},
  volume={17},
  number={3},
  pages={395--412},
  year={1969},
  publisher={INFORMS}
}

@article{sancho1992,
  title={A dynamic programming solution of a shortest path problem with time constraints on movement and parking},
  author={Sancho, NGF},
  journal={Journal of Mathematical Analysis and Applications},
  volume={166},
  number={1},
  pages={192--198},
  year={1992},
  publisher={Elsevier}
}

@ARTICLE{benchmarkmaps,  author={Sturtevant, Nathan R.},  journal={IEEE Transactions on Computational Intelligence and AI in Games},   title={Benchmarks for Grid-Based Pathfinding},   year={2012},  volume={4},  number={2},  pages={144-148},  doi={10.1109/TCIAIG.2012.2197681}}

@article { hart:fbh,
author = "Peter E. Hart and Nils J. Nilsson and Bertram Raphael",
title = "A Formal Basis for the Heuristic Determination of Minimum Cost Paths",
journal = "IEEE Transactions on System Science and Cybernetics",
year = "1968",
volume = "SSC-4",
number = "2",
pages = "100--107",
month = "7"
}

@article{koenig:cri,
author = "Sven Koenig and Xiaoxun Sun",
year = 2008,
title =  "Comparing real-time and incremental heuristic search for real-time situated agents",
journal = "Journal of Autonomous Agents and Multi-Agent Systems",
volume = 18,
number = 3,
pages = "313-–341"
}

@inproceedings{phillips2011sipp,
  title={{SIPP}: Safe interval path planning for dynamic environments},
  author={Phillips, Mike and Likhachev, Maxim},
  booktitle={Proceedings of {ICRA}},
  pages={5628--5635},
  year={2011}
}

@inproceedings{shperberg:stp,
  title={Situated Temporal Planning Using Deadline-aware Metareasoning},
  author={Shperberg, Shahaf S and Coles, Andrew and Karpas, Erez and Ruml, Wheeler and Shimony, Solomon E},
  booktitle={Proceedings of {ICAPS}},
  pages={340--348},
  year={2021}
}

@inproceedings{clockticks,
  title={Temporal planning while the clock ticks},
  author={Cashmore, Michael and Coles, Andrew and Cserna, Bence and Karpas, Erez and Magazzeni, Daniele and Ruml, Wheeler},
  booktitle={{ICAPS}-28},
  year={2018}
}

@inproceedings{SIPP,
author = {Phillips, Mike and Likhachev, Maxim},
year = {2011},
month = {06},
pages = {5628 - 5635},
title = {{SIPP}: Safe interval path planning for dynamic environments},
journal = {Proceedings - IEEE International Conference on Robotics and Automation},
doi = {10.1109/ICRA.2011.5980306}
}

@INPROCEEDINGS{aSIPP,  author={Narayanan, Venkatraman and Phillips, Mike and Likhachev, Maxim},  booktitle={{IROS}-2012},   title={Anytime Safe Interval Path Planning for dynamic environments},   year={2012},  volume={},  number={},  pages={4708-4715},  doi={10.1109/IROS.2012.6386191}}

@article{dijkstra1959note,
  title={A Note on Two Problems in Connexion with Graphs},
  author={Dijkstra, Edsger W.},
  journal={Numerische Mathematik},
  volume={1},
  pages={269--271},
  year={1959}
}

@article{ginsberg1989universal,
  title={Universal planning: An (almost) universally bad idea},
  author={Ginsberg, Matthew L},
  journal={AI Magazine},
  volume={10},
  number={4},
  pages={40--40},
  year={1989}
}

@inproceedings{rtsipp,
  title={Real-time Safe Interval Path Planning},
  author={Thomas, Devin Wild and Ruml, Wheeler and Shimony, Solomon Eyal},
  year={2024},
  booktitle = "Proceedings of the International Symposium on Combinatorial Search (SoCS-24)"
}

@inproceedings{anystarttimesipp,
  title={Any-Start-Time Planning for {SIPP}},
  author={Thomas, Devin Wild and Shimony, Solomon Eyal and Ruml, Wheeler and Karpas, Erez and Shperberg, Shahaf S and Coles, Andrew},
  booktitle={Proceedings of the ICAPS-23 Workshop on Heuristics and Search for Domain-Independent Planning (HSDIP-23)},
  year={2023}
}

@inproceedings{edelkamp2005cost,
  title={Cost-algebraic heuristic search},
  author={Edelkamp, Stefan and Jabbar, Shahid and Lluch-Lafuente, Alberto},
  booktitle={AAAI},
  volume={5},
  pages={1362--1367},
  year={2005}
}

@article{dechteroptimalastar,
author = {Dechter, Rina and Pearl, Judea},
title = {Generalized best-first search strategies and the optimality of {A}*},
year = {1985},
issue_date = {July 1985},
publisher = {Association for Computing Machinery},
address = {New York, NY, USA},
volume = {32},
number = {3},
issn = {0004-5411},
url = {https://doi.org/10.1145/3828.3830},
doi = {10.1145/3828.3830},
journal = {Journal of the ACM},
month = {7},
pages = {505–536},
numpages = {32}
}

@article{Holte_Zilles_optimalitycostalgebra, title={On the Optimal Efficiency of Cost-Algebraic {A}*}, volume={33}, url={https://ojs.aaai.org/index.php/AAAI/article/view/4067}, DOI={10.1609/aaai.v33i01.33012288}, abstractNote={&lt;p&gt;Edelkamp et al. (2005) proved that A*, given an admissible heuristic, is guaranteed to return an optimal solution in any cost algebra, not just in the traditional shortest path setting. In this paper, we investigate cost-algebraic A*’s optimal efficiency: in the cost-algebraic setting, under what conditions is A* guaranteed to expand the fewest possible states? In the traditional setting, this question was examined in detail by Dechter &amp; Pearl (1985). They identified five different situations in which A* was optimally efficient. We show that three of them continue to hold in the cost-algebraic setting, but that one does not. We also show that one of them is false, it does not hold even in the traditional setting. We introduce an alternative that does hold in the cost-algebraic setting. Finally, we show that a well-known result due to Nilsson does not hold in the general cost-algebraic setting but does hold in a slightly less general setting.&lt;/p&gt;}, number={01}, journal={Proceedings of the AAAI Conference on Artificial Intelligence}, author={Holte, Robert C. and Zilles, Sandra}, year={2019}, month={7}, pages={2288-2295} }

@article{Hanou_Thomas_trains, title={Replanning in Advance for Instant Delay Recovery in Multi-Agent Applications: Rerouting Trains in a Railway Hub}, volume={34}, url={https://ojs.aaai.org/index.php/ICAPS/article/view/31483}, DOI={10.1609/icaps.v34i1.31483}, abstractNote={Train routing is sensitive to delays that occur in the network. When a train is delayed, it is imperative that a new plan be found quickly, or else other trains may need to be stopped to ensure safety, potentially causing cascading delays. In this paper, we consider this class of multi-agent planning problems, which we call Multi-Agent Execution Delay Replanning. We show that these can be solved by reducing the problem to an any-start-time safe interval planning problem. When an agent has an any-start-time plan, it can react to a delay by simply looking up the precomputed plan for the delayed start time. We identify crucial real-world problem characteristics like the agent’s speed, size, and safety envelope, and extend the any-start-time planning to account for them. Experimental results on real-world train networks show that any-start-time plans are compact and can be computed in reasonable time while enabling agents to instantly recover a safe plan.}, number={1}, journal={Proceedings of the International Conference on Automated Planning and Scheduling}, author={Hanou, Issa K. and Thomas, Devin Wild and Ruml, Wheeler and de Weerdt, Mathijs}, year={2024}, month={5}, pages={258-266} }

@article{dehne2012shortest,
  title={Shortest paths in time-dependent FIFO networks},
  author={Dehne, Frank and Omran, Masoud T and Sack, J{\"o}rg-R{\"u}diger},
  journal={Algorithmica},
  volume={62},
  number={1},
  pages={416--435},
  year={2012},
  publisher={Springer}
}

@inproceedings{contraction,
  title={Time-dependent contraction hierarchies},
  author={Batz, G Veit and Delling, Daniel and Sanders, Peter and Vetter, Christian},
  booktitle={ALENEX},
  pages={97--105},
  year={2009},
  organization={SIAM}
}

@article{tdsharc,
  title={Time-dependent SHARC-routing},
  author={Delling, Daniel},
  journal={Algorithmica},
  volume={60},
  number={1},
  pages={60--94},
  year={2011},
  publisher={Springer}
}

@inproceedings{thomas2025real,
  title={Real-time Cost-algebraic Heuristic Search},
  author={Thomas, Devin Wild and Ruml, Wheeler},
  booktitle={Proceedings of the International Symposium on Combinatorial Search (SoCS-25)},
  year={2025}
}

@article{andreychuk2022multi,
  title={Multi-agent pathfinding with continuous time},
  author={Andreychuk, Anton and Yakovlev, Konstantin and Surynek, Pavel and Atzmon, Dor and Stern, Roni},
  journal={Artificial Intelligence},
  volume={305},
  pages={103662},
  year={2022},
  publisher={Elsevier}
}

@article{asai2017tie,
  title={Tie-breaking strategies for cost-optimal best first search},
  author={Asai, Masataro and Fukunaga, Alex},
  journal={Journal of Artificial Intelligence Research},
  volume={58},
  pages={67--121},
  year={2017}
}

@inproceedings{stern2019multi,
  title={Multi-agent pathfinding: Definitions, variants, and benchmarks},
  author={Stern, Roni and Sturtevant, Nathan and Felner, Ariel and Koenig, Sven and Ma, Hang and Walker, Thayne and Li, Jiaoyang and Atzmon, Dor and Cohen, Liron and Kumar, TK and others},
  booktitle={Proceedings of the International Symposium on Combinatorial Search},
  volume={10},
  number={1},
  pages={151--158},
  year={2019}
}

@article{rifki2025phase,
  title={On the Phase Transition of the Euclidean Travelling Salesman Problem with Time Windows},
  author={Rifki, Omar and Solnon, Christine},
  journal={Journal of Artificial Intelligence Research},
  volume={82},
  pages={2167--2188},
  year={2025}
}

@article{foschini2014,
  title={On the Complexity of Time-Dependent Shortest Paths},
  author={Foschini, Luca and Hershberger, John and Suri, Subhash},
  journal={Algorithmica},
  volume={68},
  number={4},
  pages={1075--1097},
  year={2014},
  publisher={Springer}
}

@inproceedings{koenig:dsl,
author = {Koenig, Sven and Likhachev, Maxim},
title = {D*lite},
year = {2002},
isbn = {0262511290},
publisher = {American Association for Artificial Intelligence},
address = {USA},
booktitle = {Eighteenth National Conference on Artificial Intelligence},
pages = {476–483},
numpages = {8},
location = {Edmonton, Alberta, Canada}
}

@misc{boost_icl,
  author       = {Faulhaber, Joachim},
  title        = {{Boost.ICL}: Interval Container Library},
  howpublished = {\url{https://www.boost.org/doc/libs/latest/libs/icl/doc/html/index.html}},
  note         = {Part of the Boost C++ Libraries},
  year         = {2010}
}

@article{Thomas_LCCRTS_2026, 
title={Learning, Commitment, and Completeness in Real-Time Heuristic Search}, volume={19}, url={http://dx.doi.org/10.1609/socs.v19i1.43085}, 
DOI={10.1609/socs.v19i1.43085}, 
abstractNote={<jats:p>In real-time heuristic search, an agent must select its next action within a prespecified time bound. Amazingly, it has been possible to prove that, despite this myopia, certain real-time search algorithms are complete (guaranteed to eventually reach a goal) under a small set of assumptions. This hinges on the algorithms learning updated heuristic values. In order to elucidate the space of complete real-time algorithms, previous work has provided general algorithm frameworks, into which components that meet certain criteria can be plugged, and proved them complete. Recently, a cost-algebraic real-time framework was proposed that makes strong assumptions on its heuristic learning component but only weak assumptions on its action commitment component. In this paper, we show that the opposite is also possible: we present and prove complete an alternative framework that makes weaker assumptions about heuristic learning but stronger assumptions on action commitment. We demonstrate in the domain of Real-time SIPP how the new framework implies the completeness of previously-published algorithms This work expands our knowledge of the design space of complete real-time search algorithms.</jats:p>}, 
number={1}, journal={Proceedings of the International Symposium on Combinatorial Search}, 
publisher={Association for the Advancement of Artificial Intelligence (AAAI)}, 
author={Thomas, Devin Wild and Ruml, Wheeler}, year={2026}, month=aug, pages={166–174} }

@article{Hanou_flexsipp_2026, title={Precomputing Multi-Agent Path Replanning Using Temporal Flexibility}, volume={19}, url={http://dx.doi.org/10.1609/socs.v19i1.43072}, DOI={10.1609/socs.v19i1.43072}, abstractNote={<jats:p>Executing a multi-agent plan can be challenging when an agent is delayed, because this typically creates conflicts with other agents. So, we need to quickly find a new safe plan. Replanning only the delayed agent often does not yield an efficient plan, and sometimes cannot even yield a feasible one. On the other hand, replanning other agents may lead to a cascade of changes and delays, and it is computationally expensive. We show how to efficiently replan a single delayed agent by tracking and using the temporal flexibility of other agents while avoiding cascading delays. This flexibility is the maximum delay that the agent can take without changing the order with agents other than the initially delayed agent, or further delaying other agents. Our algorithm, FlexSIPP, precomputes all possible plans for the delayed agent and returns the changes to the other agents within the given scenario. We demonstrate our method in a real-world case study of replanning trains in the densely-used Dutch railway network and in the MovingAI MAPF benchmark set. Our experiments show that FlexSIPP provides effective solutions relevant to real-world adjustments, and within a reasonable timeframe.</jats:p>}, number={1}, journal={Proceedings of the International Symposium on Combinatorial Search}, publisher={Association for the Advancement of Artificial Intelligence (AAAI)}, author={Hanou, Issa and Kemmeren, Eric and Wild Thomas, Devin and de Weerdt, Mathijs}, year={2026}, month=aug, pages={47–55} }

@article{Andreychuk_improveCCBS_2021, title={Improving Continuous-time Conflict Based Search}, volume={35}, url={http://dx.doi.org/10.1609/aaai.v35i13.17338}, DOI={10.1609/aaai.v35i13.17338}, abstractNote={<jats:p>Conflict-Based Search (CBS) is a powerful algorithmic framework for optimally solving classical multi-agent path finding (MAPF) problems, where time is discretized into the time steps. Continuous-time CBS (CCBS) is a recently proposed version of CBS that guarantees optimal solutions without the need to discretize time. However, the scalability of CCBS is limited because it does not include any known improvements of CBS. In this paper, we begin to close this gap and explore how to adapt successful CBS improvements, namely, prioritizing conflicts (PC), disjoint splitting (DS), and high-level heuristics, to the continuous time setting of CCBS. These adaptions are not trivial, and require careful handling of different types of constraints, applying a generalized version of the Safe interval path planning (SIPP) algorithm, and extending the notion of cardinal conflicts. We evaluate the effect of the suggested enhancements by running experiments both on general graphs and 2^k-neighborhood grids. CCBS with these improvements significantly outperforms vanilla CCBS, solving problems with almost twice as many agents in some cases and pushing the limits of multi-agent path finding in continuous-time domains.</jats:p>}, number={13}, journal={Proceedings of the AAAI Conference on Artificial Intelligence}, publisher={Association for the Advancement of Artificial Intelligence (AAAI)}, author={Andreychuk, Anton and Yakovlev, Konstantin and Boyarski, Eli and Stern, Roni}, year={2021}, month=may, pages={11220–11227} }

@article{Li_mapfLNS2_2022, title={MAPF-LNS2: Fast Repairing for Multi-Agent Path Finding via Large Neighborhood Search}, volume={36}, url={http://dx.doi.org/10.1609/aaai.v36i9.21266}, DOI={10.1609/aaai.v36i9.21266}, abstractNote={<jats:p>Multi-Agent Path Finding (MAPF) is the problem of planning collision-free paths for multiple agents in a shared environment. In this paper, we propose a novel algorithm MAPF-LNS2 based on large neighborhood search for solving MAPF efficiently. Starting from a set of paths that contain collisions, MAPF-LNS2 repeatedly selects a subset of colliding agents and replans their paths to reduce the number of collisions until the paths become collision-free. We compare MAPF-LNS2 against a variety of state-of-the-art MAPF algorithms, including Prioritized Planning with random restarts, EECBS, and PPS, and show that MAPF-LNS2 runs significantly faster than them while still providing near-optimal solutions in most cases. MAPF-LNS2 solves 80% of the random-scenario instances with the largest number of agents from the MAPF benchmark suite with a runtime limit of just 5 minutes, which, to our knowledge, has not been achieved by any existing algorithms.</jats:p>}, number={9}, journal={Proceedings of the AAAI Conference on Artificial Intelligence}, publisher={Association for the Advancement of Artificial Intelligence (AAAI)}, author={Li, Jiaoyang and Chen, Zhe and Harabor, Daniel and Stuckey, Peter J. and Koenig, Sven}, year={2022}, month=june, pages={10256–10265} }

@article{Kerimov_manipsipp_2025, title={Safe Interval Randomized Path Planning For Manipulators}, volume={35}, url={http://dx.doi.org/10.1609/icaps.v35i1.36120}, DOI={10.1609/icaps.v35i1.36120}, abstractNote={<jats:p>Planning safe paths in 3D workspace for high DoF robotic systems, such as manipulators, is a challenging problem, especially when the environment is populated with the dynamic obstacles that need to be avoided. In this case the time dimension should be taken into account that further increases the complexity of planning. To mitigate this issue we suggest to combine safe-interval path planning (a prominent technique in heuristic search) with the randomized planning, specifically, with the bidirectional rapidly-exploring random trees (RRT-Connect) -- a fast and efficient algorithm for high-dimensional planning. Leveraging a dedicated technique of fast computation of the safe intervals, we end up with an efficient planner dubbed SI-RRT. We compare it with the state of the art and show that SI-RRT consistently outperforms the competitors both in runtime and solution cost.</jats:p>}, number={1}, journal={Proceedings of the International Conference on Automated Planning and Scheduling}, publisher={Association for the Advancement of Artificial Intelligence (AAAI)}, author={Kerimov, Nuraddin and Onegin, Aleksandr and Yakovlev, Konstantin}, year={2025}, month=sept, pages={213–217} }

@inproceedings{li2019safe,
  title={Safe and efficient high dimensional motion planning in space-time with time parameterized prediction},
  author={Li, Shen and Shah, Julie A},
  booktitle={2019 international conference on robotics and automation (ICRA)},
  pages={5012--5018},
  year={2019},
  organization={IEEE}
}

@inproceedings{li2021scalable,
  title={Scalable rail planning and replanning: Winning the 2020 flatland challenge},
  author={Li, Jiaoyang and Chen, Zhe and Zheng, Yi and Chan, Shao-Hung and Harabor, Daniel and Stuckey, Peter J and Ma, Hang and Koenig, Sven},
  booktitle={Proceedings of the international conference on automated planning and scheduling},
  volume={31},
  pages={477--485},
  year={2021}
}

@article{KUROIWADIDP,
title = {Domain-independent dynamic programming},
journal = {Artificial Intelligence},
volume = {354},
pages = {104506},
year = {2026},
issn = {0004-3702},
doi = {https://doi.org/10.1016/j.artint.2026.104506},
url = {https://www.sciencedirect.com/science/article/pii/S0004370226000329},
author = {Ryo Kuroiwa and J. Christopher Beck}
}

@article{wolffmulti,
author = {Wolff, Maya and Felner, Ariel and Salzman, Oren},
issn = {2832-9171},
journal = {Proceedings of the International Symposium on Combinatorial Search},
number = {1},
title = {Bridging Multi-Valued Heuristics and Dimensionality Reduction in Multi-Objective Search},
volume = {19},
year = {2026-08-14},
}

\end{document}